%% file: preprint.tex
\documentclass{article}

 \usepackage[preprint,nonatbib]{neurips_2026}

\usepackage[utf8]{inputenc} 
\usepackage[T1]{fontenc}    
\usepackage{url}            
\usepackage{booktabs}       
\usepackage{amsfonts}       
\usepackage{nicefrac}       
\usepackage{microtype}      
\usepackage[utf8]{inputenc} 
\usepackage[T1]{fontenc}    
\usepackage[dvipsnames]{xcolor}
\usepackage{microtype}
\usepackage{graphicx}
\usepackage{subcaption}
\usepackage{booktabs} 
\usepackage{tabularx}
\usepackage{multirow}
\usepackage{colortbl} 

\usepackage{float}

\usepackage{amsmath}
\usepackage{amssymb}
\usepackage{mathtools}
\usepackage{amsthm}
\usepackage{breqn}

\usepackage{booktabs}
\usepackage{caption}
\usepackage{array}

\usepackage{url}

\usepackage{marvosym}
\usepackage{nicefrac}
\usepackage{optidef}
\usepackage[hypertexnames=false]{hyperref}
\hypersetup{
    colorlinks=true,
    citecolor=blue,
    linkcolor=blue,
    urlcolor=teal,
    breaklinks=true
}
\input{math_commands}
\newtcolorbox{algbox}[2][]{
  enhanced,
  breakable,
  float=H, 
  colback=white,
  colframe=black!20,
  arc=3pt,
  width=0.48\columnwidth,
  right=2cm, 
  overlay unbroken and first={
    \node[anchor=north east] 
      at (frame.north east) {\textbf{Algorithm #2}};
  },
  #1
}
\definecolor{burgundy}{rgb}{0.5, 0.0, 0.13}
\tcbset{mytheo/.style={
    fonttitle=\sffamily\bfseries,
    enhanced,
    colframe=burgundy,
    colback=burgundy!2!white,
    colbacktitle=burgundy,
    boxrule=0pt,
    borderline south={2pt}{-2pt}{burgundy},
    left*=1em,  
    right*=1em, 
    theorem style=standard,
    sharp corners,
    before skip=8pt,
    after skip=10pt,
    before upper={\setlength{\parindent}{1em}}, 
    before lower={\setlength{\parindent}{1em}}  
}}
\theoremstyle{plain}
\newtheorem{theorem}{Theorem}[section]
\newtheorem{proposition}[theorem]{Proposition}
\newtheorem{lemma}[theorem]{Lemma}
\newtheorem{corollary}[theorem]{Corollary}

\theoremstyle{definition}
\newtheorem{definition}[theorem]{Definition}

\newtheorem{assumption}[theorem]{Assumption}

\theoremstyle{remark}
\newtheorem{remark}[theorem]{Remark}
\newtheorem{example}[theorem]{Example}

\tcolorboxenvironment{example}{exampleboxstyle}

\NewDocumentEnvironment{Example}{o}
  {%
    \begingroup
    \pushQED{\qed}%
    \IfNoValueTF{#1}
      {\begin{example}}
      {\begin{example}[#1]}%
  }
  {%
    \popQED
    \end{example}%
    \endgroup
  }

\zcRefTypeSetup{equation}{
  Name-sg = Equation,
  name-sg = equation,
  Name-pl = Equations,
  name-pl = equations,
  refbounds = {,(,),}
}

\zcRefTypeSetup{theorem}{
  Name-sg = Theorem,
  name-sg = theorem,
  Name-pl = Theorems,
  name-pl = theorems
}

\zcRefTypeSetup{definition}{
  Name-sg = Definition,
  name-sg = definition,
  Name-pl = Definitions,
  name-pl = definitions
}

\zcRefTypeSetup{problem}{
  Name-sg = Problem,
  name-sg = problem,
  Name-pl = Problems,
  name-pl = problems
}

\zcRefTypeSetup{fact}{
  Name-sg = Fact,
  name-sg = fact,
  Name-pl = Facts,
  name-pl = facts
}

\zcRefTypeSetup{proposition}{
  Name-sg = Proposition,
  name-sg = proposition,
  Name-pl = Propositions,
  name-pl = propositions
}

\zcRefTypeSetup{lemma}{
  Name-sg = Lemma,
  name-sg = lemma,
  Name-pl = Lemmas,
  name-pl = lemmas
}

\zcRefTypeSetup{corollary}{
  Name-sg = Corollary,
  name-sg = corollary,
  Name-pl = Corollaries,
  name-pl = corollaries
}

\zcRefTypeSetup{assumption}{
  Name-sg = Assumption,
  name-sg = assumption,
  Name-pl = Assumptions,
  name-pl = assumptions
}

\zcRefTypeSetup{claim}{
  Name-sg = Claim,
  name-sg = claim,
  Name-pl = Claims,
  name-pl = claims
}

\zcRefTypeSetup{remark}{
  Name-sg = Remark,
  name-sg = remark,
  Name-pl = Remarks,
  name-pl = remarks
}

\zcRefTypeSetup{example}{
  Name-sg = Example,
  name-sg = example,
  Name-pl = Examples,
  name-pl = examples
}

\zcRefTypeSetup{subsubsection}{
  Name-sg = Subsubsection,
  name-sg = subsubsection,
  Name-pl = Subsubsections,
  name-pl = subsubsections
}

\zcRefTypeSetup{subsection}{
  Name-sg = {\S},
  name-sg = {\S},
  Name-pl = {\S},
  name-pl = {\S}
}

\zcRefTypeSetup{section}{
  Name-sg = {Section},
  name-sg = {section},
  Name-pl = {Sections},
  name-pl = {sections}
}

\zcRefTypeSetup{chapter}{
  Name-sg = {Chapter},
  name-sg = {chapter},
  Name-pl = {Chapters},
  name-pl = {chapters}
}

\zcRefTypeSetup{appendix}{
  Name-sg = Appendix,
  name-sg = appendix,
  Name-pl = Appendices,
  name-pl = appendices
}

\zcRefTypeSetup{table}{
  Name-sg = Table,
  name-sg = table,
  Name-pl = Tables,
  name-pl = tables
}

\zcRefTypeSetup{figure}{
  Name-sg = Figure,
  name-sg = figure,
  Name-pl = Figures,
  name-pl = figures
}

\zcRefTypeSetup{algorithm}{
  Name-sg = Algorithm,
  name-sg = algorithm,
  Name-pl = Algorithms,
  name-pl = algorithms
}

\zcRefTypeSetup{item}{
  Name-sg = item,
  name-sg = item,
  Name-pl = items,
  name-pl = items
}

\newcounter{phase}
\renewcommand{\thephase}{\arabic{phase}}

\zcRefTypeSetup{phase}{
  Name-sg = Phase,
  name-sg = Phase,
  Name-pl = Phases,
  name-pl = Phases,
}
\newcommand{\zphasefromto}[2]{%
  \zcref[sort=false,pairsep={~to~}]{#1,#2}%
}
\newcommand{\PhaseHeading}[1]{%
  \textbf{Phase~\thephase\ifblank{#1}{}{~(#1)}:}%
}

\makeatletter
\newcount\phase@listdepth
\let\phase@saveditem\relax

\newenvironment{phases}[1][]{%
  \begin{itemize}[
    label=\textbullet,
    labelindent=1.5em,
    leftmargin=*,
    labelsep=.75em,
    itemsep=.35\baselineskip,
    topsep=.4\baselineskip,
    parsep=0pt,
    #1
  ]%
  \phase@listdepth=\@listdepth\relax
  \let\phase@saveditem\item
  \def\item{%
    \ifnum\@listdepth=\phase@listdepth
      \expandafter\phase@item
    \else
      \expandafter\phase@saveditem
    \fi
  }%
}{%
  \end{itemize}%
}

\newcommand{\phase@item}{%
  \@ifnextchar[%
    {\phase@@item}%
    {\phase@@item[]}%
}

\def\phase@@item[#1]{%
  \phase@saveditem
  \refstepcounter{phase}%
  \PhaseHeading{#1}\nobreakspace\ignorespaces
}
\makeatother

\AddToHook{env/theorem/begin}{%
  \zcsetup{countertype={theorem=theorem}}}

\AddToHook{env/proposition/begin}{%
  \zcsetup{countertype={theorem=proposition}}}

\AddToHook{env/lemma/begin}{%
  \zcsetup{countertype={theorem=lemma}}}

\AddToHook{env/corollary/begin}{%
  \zcsetup{countertype={theorem=corollary}}}

\AddToHook{env/claim/begin}{%
  \zcsetup{countertype={theorem=claim}}}

\AddToHook{env/conjecture/begin}{%
  \zcsetup{countertype={theorem=conjecture}}}

\AddToHook{env/definition/begin}{%
  \zcsetup{countertype={theorem=definition}}}

\AddToHook{env/problem/begin}{%
  \zcsetup{countertype={theorem=problem}}}

\AddToHook{env/assumption/begin}{%
  \zcsetup{countertype={theorem=assumption}}}

\AddToHook{env/remark/begin}{%
  \zcsetup{countertype={theorem=remark}}}

\AddToHook{env/example/begin}{%
  \zcsetup{countertype={theorem=example}}}
\usepackage[
  backend=biber,
  style=authoryear,
  sorting=ynt,        
  sortcites=true,     
  maxcitenames=2,
  mincitenames=1,
  uniquelist=false,
  maxbibnames=99,
  minbibnames=1,
  giveninits=true,
  isbn=false,
  doi=false,
  eprint=true,
  url=true,
  date=year,
  uniquename=init,
  backref=true
]{biblatex}
\DefineBibliographyStrings{english}{          
  andothers = {et al\adddot}
}
\DeclareNameAlias{author}{given-family}
\DeclareNameAlias{editor}{given-family}
\DeclareNameAlias{translator}{given-family}

\renewbibmacro*{byeditor+others}{}
\renewbibmacro*{publisher+location+date}{%
  \usebibmacro{date}%
}
\renewbibmacro*{in:}{}

\AtEveryBibitem{%
  \clearname{editor}%
  \clearlist{publisher}%
  \clearlist{location}%
}

\title{Training-Induced Escape from Token Clustering in a Mean-Field Formulation of Transformers}
\author{
Noboru Isobe$^{1}$\quad Daisuke Inoue$^{2}$\quad Masaaki Imaizumi$^{1,3,4}$
\vspace{1em}
\\
$^1$RIKEN Center for Advanced Intelligence Project \\ $^2$Imperial College London \quad $^3$The University of Tokyo \quad $^4$Kyoto University
\vspace{1em}
\\
\texttt{noboru.isobe@riken.jp, dinoue@ic.ac.uk,} \\ \texttt{imaizumi@g.ecc.u-tokyo.ac.jp}
}

\begin{document}

\maketitle

\input{contents/abst}

\input{contents/intro2}
\input{contents/setting}
\input{contents/theory}

\input{contents/numerical}
\input{contents/related_works}
\input{contents/concluding}

\begin{refcontext}[sorting=nyt]
\printbibliography
\end{refcontext}
\newpage
\appendix

\input{contents/proofs}
\input{contents/numerical_added}


\end{document}

%% file: math_commands.tex
\usepackage{physics2}
\usephysicsmodule{ab}   
\usephysicsmodule{ab.braket}    
\usephysicsmodule{diagmat}  
\usephysicsmodule[showleft=2,showtop=2]{xmat}   
\usepackage{fixdif}
\usepackage{derivative}
\usepackage{amsmath, amssymb, bm, mathrsfs, bbm}
\usepackage{mathtools}
\makeatletter
\@ifundefined{increment}{%
  \DeclareFontEncoding{LS1}{}{}%
  \DeclareFontSubstitution{LS1}{stix2}{m}{n}%
  \DeclareSymbolFont{stixTwoOperatorsForIncrement}{LS1}{stix2}{m}{n}%
  \DeclareMathSymbol{\increment}{\mathord}{stixTwoOperatorsForIncrement}{"CA}%
}{}
\makeatother
\usepackage{svg}
\usepackage{subcaption}
\usepackage{enumitem}
\DeclareMathAlphabet{\mathcal}{OMS}{cmsy}{m}{n}
\usepackage[most, breakable, skins, theorems, hooks,xparse]{tcolorbox}

\usepackage{amsthm}
\usepackage[linesnumbered,ruled,vlined,noend]{algorithm2e}
\SetAlgoNlRelativeSize{-1}
\usepackage{wrapfig,wrapstuff}
\mathtoolsset{showonlyrefs,showmanualtags}

\usepackage{zref-clever}
\zcsetup{
  cap,
  noabbrev,
  nameinlink=true,
  lastsep={, and\nobreakspace},
  tlastsep={, and\nobreakspace}
}

\usepackage{array}

\definecolor{olive}{rgb}{0.6, 0.6, 0.2}
\definecolor{sand}{rgb}{0.8666666666666667, 0.8, 0.4666666666666667}
\definecolor{wine}{rgb}{0.5333333333333333, 0.13333333333333333, 0.3333333333333333}
\definecolor{deblue}{RGB}{11,132,147}
\definecolor{ocra}{RGB}{204, 119, 34}
\definecolor{customColor}{HTML}{C0C6D4}
\definecolor{lightBlue}{HTML}{E0ECF8}
\definecolor{verylightgray}{gray}{0.95} 
\definecolor{mygreen}{rgb}{0.9, 1.0, 0.9} 

\tcbuselibrary{breakable, skins}
\newtcolorbox{CatchyBox}[2][]{
    lower separated=false,
    colback=white!80!sand!90!ocra,
    colframe=white, fonttitle=\bfseries,
    colbacktitle=white!50!sand!90!ocra,
    coltitle=black,
    enhanced,
    breakable,
    attach boxed title to top left={xshift=.02\linewidth,yshift=-4mm},
    title=#2,#1}
\newtcolorbox{CatchyBox2}[2][]{
    lower separated=false,
    colback=lightBlue,
    colframe=white, fonttitle=\bfseries,
    colbacktitle=lightBlue,
    coltitle=black,
    enhanced,
    breakable,
    attach boxed title to top left={xshift=.02\linewidth,yshift=-4mm},
    title=#2,#1}
\newtcolorbox{CatchyBox_green}[2][]{
    lower separated=false,
    colback=mygreen,
    colframe=white, fonttitle=\bfseries,
    colbacktitle=mygreen,
    coltitle=black,
    enhanced,
    breakable,
    attach boxed title to top left={xshift=.02\linewidth,yshift=-4mm},
    title=#2,#1}

\newtcolorbox{CatchyBox3}[2][]{
    lower separated=false,
    colback=verylightgray,
    colframe=white, fonttitle=\bfseries,
    colbacktitle=gray,
    coltitle=black,
    enhanced,
    breakable,
    attach boxed title to top left={xshift=.02\linewidth,yshift=-4mm},
    title=#2,#1}

\newtcolorbox{SketchOfProof}[1][]{
  blanker,
  breakable,
  left=1em,
  before skip=5pt,
  after skip=10pt,
  borderline west={2pt}{0pt}{gray},
  before upper={\textit{\textbf{Proof sketch of #1.}}~},
  after upper={\hfill\tikz[xscale=0.25,yscale=0.25]{%
    \draw (0,0)--(1,0)--(1,1)--(0,1)--cycle;}%
  },
}

\tcbset{
  exampleboxstyle/.style={
    blanker,
    breakable,
    left=1em,
    before skip=5pt,
    after skip=10pt,
    borderline west={2pt}{0pt}{black!15!white}
  }
}
\usepackage{fontawesome5}
\newtcolorbox{codecontribution}[1][]{
    lower separated=false,
    colback=gray!10!white,
    colframe=gray!50!black,
    fonttitle=\sffamily\bfseries,
    enhanced,
    title={\small Contributions},
    left=1mm, 
    right=10mm, 
    top=2mm, 
    bottom=2mm, 
    #1
}
\expandafter\let\expandafter\oldproof\csname\string\proof\endcsname
\let\oldendproof\endproof
\renewenvironment{proof}[1][\proofname]{%
  \oldproof[\bfseries\itshape #1] 
}{\oldendproof}


\def\1{\bm{1}}

\def\eps{{\varepsilon}}

\DeclareMathAlphabet{\mathsfit}{\encodingdefault}{\sfdefault}{m}{sl}
\SetMathAlphabet{\mathsfit}{bold}{\encodingdefault}{\sfdefault}{bx}{n}

\def\gC{{\mathcal{C}}}

\def\gU{{\mathcal{U}}}

\newcommand{\Laplacian}{\triangle}
\newcommand{\wLaplacian}{\Laplacian_{\barrho}}

\newcommand{\Var}{\mathrm{Var}}

\DeclareMathOperator{\softmax}{softmax}

\usepackage{bbm}
\newcommand*{\eval}[1]{\left.{#1}\right|}

\DeclareFontFamily{U}{matha}{\hyphenchar\font45}
\DeclareFontShape{U}{matha}{m}{n}{
<-6> matha5 <6-7> matha6 <7-8> matha7
<8-9> matha8 <9-10> matha9
<10-12> matha10 <12-> matha12
}{}
\DeclareSymbolFont{matha}{U}{matha}{m}{n}

\DeclareFontFamily{U}{mathx}{\hyphenchar\font45}
\DeclareFontShape{U}{mathx}{m}{n}{
<-6> mathx5 <6-7> mathx6 <7-8> mathx7
<8-9> mathx8 <9-10> mathx9
<10-12> mathx10 <12-> mathx12
}{}
\DeclareSymbolFont{mathx}{U}{mathx}{m}{n}

\DeclareMathDelimiter{\vvvert} {0}{matha}{"7E}{mathx}{"17}%

\DeclarePairedDelimiterX{\normiii}[1]
{\vvvert}
{\vvvert}
{\ifblank{#1}{\:\cdot\:}{#1}}
\usepackage{appendix}
\usepackage{cancel}

\SetMathAlphabet{\mathcal}{normal}{OMS}{cmsy}{m}{n} 
\SetMathAlphabet{\mathcal}{bold}{OMS}{cmsy}{m}{n} 
\DeclareSymbolFont{EulerExtension}{U}{euex}{m}{n}
\DeclareMathSymbol{\euintop}{\mathop} {EulerExtension}{"52}
\DeclareMathSymbol{\euointop}{\mathop} {EulerExtension}{"48}
\let\intop\euintop
\let\ointop\euointop
\makeatletter
\protected\def\int{\DOTSI\intop\ilimits@}
\protected\def\oint{\DOTSI\ointop\ilimits@}
\makeatother
\DeclareSymbolFont{cmletters}{OML}{cmm}{m}{it}
\DeclareSymbolFont{cmsymbols}{OMS}{cmsy}{m}{n}
\DeclareSymbolFont{cmlargesymbols}{OMX}{cmex}{m}{n}
\DeclareMathSymbol{\myjmath}{\mathord}{cmletters}{"7C}
\DeclareMathSymbol{\myamalg}{\mathbin}{cmsymbols}{"71}
\DeclareMathSymbol{\mycoprod}{\mathop}{cmlargesymbols}{"60}
\let\jmath\myjmath

\numberwithin{equation}{section}

\allowdisplaybreaks[4]

\usepackage{makecell}
\usepackage{tikz}
\usetikzlibrary{decorations,decorations.pathmorphing}
\usetikzlibrary{shapes, arrows.meta, positioning,bending}
\usetikzlibrary{calc}
\tikzset{
    ncbar angle/.initial=90,
    ncbar/.style={
        to path=(\tikztostart)
        -- ($(\tikztostart)!#1!\pgfkeysvalueof{/tikz/ncbar angle}:(\tikztotarget)$)
        -- ($(\tikztotarget)!($(\tikztostart)!#1!\pgfkeysvalueof{/tikz/ncbar angle}:(\tikztotarget)$)!\pgfkeysvalueof{/tikz/ncbar angle}:(\tikztostart)$)
        -- (\tikztotarget)
    },
    ncbar/.default=0.5cm,
}

\tikzset{square left brace/.style={ncbar=0.5cm}}
\tikzset{square right brace/.style={ncbar=-0.5cm}}

\tikzset{round left paren/.style={ncbar=0.5cm,out=120,in=-120}}
\tikzset{round right paren/.style={ncbar=0.5cm,out=60,in=-60}}
\usetikzlibrary{positioning, fit}   
\tikzset{block/.style={draw, thick, text width=2cm ,minimum height=1.3cm, align=center},   
line/.style={-latex}     
}  
\usepackage{pgfplots}
\pgfplotsset{compat=1.18}
\usepgfplotslibrary{fillbetween}
\definecolor{curveRed}{RGB}{210,20,35}
\definecolor{curveBlue}{RGB}{0,114,178}
\definecolor{bgSkyBlue}{RGB}{86,180,233}
\definecolor{bgGray}{RGB}{153,153,153}
\definecolor{bgOrange}{RGB}{230,159,0}
\definecolor{barRed}{RGB}{210,20,35}
\newcommand{\e}{\mathrm{e}}

\newcommand{\Pcal}{\mathcal{P}}
\newcommand{\Vocab}{\mathcal{V}}
\newcommand{\Normalize}{\mathcal{N}}

\newcommand{\R}{\mathbb{R}}
\newcommand{\Sphe}{\mathbb{S}}
\newcommand{\dSphe}{\Sphe^{d-1}}
\newcommand{\PdSphe}{\Pcal(\dSphe)}
\newcommand{\mubar}{\overline{\mu}}
\newcommand{\rhobar}{\overline{\rho}}
\newcommand{\Z}{\mathbb{Z}}

\newcommand{\Xred}{L^2_0}

\DeclareMathOperator{\KL}{KL}
\DeclareMathOperator{\Ent}{Ent}

\newcommand{\Rayleigh}{R_{\barrho}(g_\ell)}

\newcommand{\la}{\left\langle}
\newcommand{\ra}{\right\rangle}

\makeatletter

\newcommand{\pushright}[1]{\ifmeasuring@#1\else\omit\hfill$\displaystyle#1$\fi\ignorespaces}
\newcommand{\pushleft}[1]{\ifmeasuring@#1\else\omit$\displaystyle#1$\hfill\fi\ignorespaces}

\newcommand{\barmu}{\overline{\mu}}
\newcommand{\barrho}{\overline{\rho}}
\newcommand{\Energy}{{E}_{\varepsilon,A}}
\newcommand{\Energygap}{{\Delta}_{\varepsilon,A}}
\newcommand{\Dissipation}{D}
\newcommand{\regparam}{\lambda_{\mathrm{reg}}}
\newcommand{\frob}{\textup{F}}

\newcommand{\oneGD}{\textup{1-GD}}
\newcommand{\op}{\textup{op}}
\newcommand{\Proj}{P^\perp}
\newcommand{\bareps}{\overline{\varepsilon}}
\newcommand{\Nhd}{\mathcal{V}}
\newcommand{\Hop}{\mathsf{H}}
\newcommand{\Hbar}{\Hop}
\newcommand{\Kbar}{\mathsf{K}}

\newcommand{\Abar}{\mathsf{A}}
\newcommand{\Adbar}{\Abar^\dagger}

\newcommand{\Bbar}{\mathsf{B}}

\newcommand{\cGram}{\mathsf{W}^{\textup{c}}}

\newcommand{\Pbar}{\wLaplacian}

\newcommand{\Minimizers}{\Bab{\mubar^\pm}}
\makeatother
\DeclareMathOperator{\Crit}{Crit}
\DeclareMathOperator{\Div}{{div}}

\DeclareMathOperator*{\argmax}{{arg~max}}
\DeclareMathOperator*{\argmin}{{arg~min}}

\DeclareMathOperator*{\osc}{{osc}}

\DeclareMathOperator*{\minimize}{{minimize}}

\def\Set#1{\Setdef#1\Setdef}
\def\Setdef#1|#2\Setdef{\left\{#1\,\;\mathstrut\vrule\,\;#2\right\}}%
\renewcommand{\th}{%
    \ifmmode
        ^\mathrm{th}%
    \else%
        \textsuperscript{th}\xspace%
    \fi%
}
\makeatletter
\newcommand{\subalign}[1]{%
  \vcenter{%
    \Let@ \restore@math@cr \default@tag
    \baselineskip\fontdimen10 \scriptfont\tw@
    \advance\baselineskip\fontdimen12 \scriptfont\tw@
    \lineskip\thr@@\fontdimen8 \scriptfont\thr@@
    \lineskiplimit\lineskip
    \ialign{\hfil$\m@th\scriptstyle##$&$\m@th\scriptstyle{}##$\hfil\crcr
      #1\crcr
    }%
  }%
}
\makeatother

%% file: contents/abst.tex
\begin{abstract}
Transformers perform inference by iteratively transforming token representations across layers.
This layerwise computation has been studied empirically, and recent mean-field theories of Transformer dynamics explain how attention can drive token distributions toward clustering.
However, existing mean-field analyses largely treat model parameters as prescribed, leaving open how training reshapes this clustering picture.
We study this question in a noisy mean-field Transformer in which only a parameter-linear FFN is trained under $L^2$ regularization.
We find and analyze a training-induced phase in the dynamics: after initially following attention-driven clustering, the token distribution can leave the clustered regime near the final layers.
Our mathematical analysis is based on an entropy-regularized interaction energy that captures the clustering bias of attention.
More broadly, our results point toward a training-aware mean-field theory of Transformer dynamics, in which training and inference dynamics are treated together.
\end{abstract}

%% file: contents/intro2.tex
\section{Introduction}\label{sec:intro}
Modern language models built on the Transformer architecture~\parencite{vaswani2017attention} perform inference by iteratively transforming token representations across layers. 
This naturally suggests a dynamical viewpoint: along the depth axis, a Transformer defines a discrete-time dynamics of tokens.
Understanding the internal dynamics of Transformers is important both for post-hoc explanation and for mitigating risks arising from their lack of transparency as discussed in \parencite{ExplainablityLLM}.
Broad empirical literature has studied aspects of this layerwise computation and its mechanisms, including attention flow~\parencite{abnar-zuidema-2020-quantifying}, mechanistic interpretability~\parencite{bereska2024mechanistic}, and causal analysis and model editing~\parencite{Geiger21,Meng22locatingfactual}.
These approaches typically inspect particular trained models and their learned representations at a microscopic level.

Complementary to these empirical approaches, recent mathematical works in \parencite{lu2019understanding,sander22a,geshkovski2023clusters} seek macroscopic descriptions of Transformer dynamics by replacing the collection of token representations with their empirical distribution and studying its evolution across layers.
Within this line, model weights are typically prescribed rather than selected by training.
For prototypical self-attention models, fixed scalar or symmetric weights yield a gradient-flow structure for an attention-induced interaction energy and explain token clustering~\parencite{geshkovski2025perspective,burger2025analysis,kuehn2026spectralselectionsymmetricselfattention}.
Variants of this prescribed-parameter picture further analyze localization, synchronization, metastability, and multiscale transitions around clustering by adding MLP-type blocks with fixed weights, initialization randomness~\parencite{alvarezlopez2026perceptronslocalizationattentionsmeanfield,polyanskiy2025synchronizationmeanfieldmodelscircle,criscitiello2026synchronizationcirclesspheresnonlinear,agazzi2026stochasticscalinglimitssynchronization,geshkovski2024dynamicmetastabilityselfattentionmodel,bruno2025emergence,bruno2025a}.

The missing piece is therefore a macroscopic account of \emph{training-selected} dynamics:
\begin{center}
\textbf{Q.} \emph{How does training reshape the mean-field dynamics of token distributions in Transformers?}
\end{center}
\begin{figure}[t]
   \hspace*{-5em}%
  \centering
  \resizebox{\linewidth}{!}{\input{contents/theory_figs/bootstrap_like_curve_snippet_v34}}
  \caption{Overview of our results.
  The y-axis is an informal distance from the token distribution to a clustered state; in the analysis, the corresponding proxy is  \(\Energygap\) in \zcref{sec:main}.
    Training can induce a phase of escaping from the clustered regime. 
    The convergence rate from \zphasefromto{phase:cluster}{phase:turnpike} and the escape rate from \zphasefromto{phase:turnpike}{phase:escape} are approximately given in \zcref{rmk:rate_a,prop:one-step-escape}, respectively.
}
  \label{fig:overview}
\end{figure}
As a first step toward this question, we study a simple noisy mean-field Transformer in which each layer \(t\) contains a feedforward network (FFN) $u_{W_t}$ with trainable parameters \(W_t\).
Our central finding is that training can differentiate the clustering picture into the $3$ phases as shown in \zcref{fig:overview}:
\begin{phases}
    \item[Initial clustering]\label{phase:cluster}
    the attention interaction pulls the token distribution toward a clustered state;

    \item[Turnpike plateau]\label{phase:turnpike}
    the distribution remains near the clustered state through most intermediate layers, forming a plateau;

    \item[Terminal escape]\label{phase:escape}
    in the final layers, the trained FFN can drive the distribution away from the plateau in a direction that decreases a loss function $\ell$.
\end{phases}
Our contribution lies in theoretically elucidating the mechanisms behind these phenomena:
\begin{enumerate}
    \item In \zcref{sec:energy}, we identify the clustering state through an attention-induced interaction energy.
    We prove that this energy is of the double-well type, providing the background for the persistence of  \zcref{phase:turnpike} in the intermediate layer.
    \item \zcref{sec:adjoint} provides a lower bound representing \zcref{phase:escape} that occurs when the parameter $W$ is updated only once by gradient descent (GD).
    The proof illustrates that \zcref{phase:escape} can be caused by a dynamics of gradients between layers through backpropagation.
    \item \zcref{sec:exp_turnpike} provides an upper bound that supports \zcref{phase:cluster,phase:turnpike,phase:escape} globally in layers, assuming the FFN is optimally trained.
    Parameter optimality constrains the energy dissipation, which can trigger phase transitions.
\end{enumerate}
Numerical experiments in \zcref{sec:numerical} further show that the strength of the turnpike plateau and terminal escape depends quantitatively on the geometric relation between the terminal loss and the interaction energy, as well as on the spectrum of the matrix governing the attention interaction.
\paragraph{Notation.}
Let $\Z_+$ denote the set of positive integers.
For $N\in\Z_+$, $\bab{N}\coloneqq \Bab{1,2,\dots,N}$.
We denote by $\vab{\bullet}$ and $\aab{\bullet,\bullet}$ the Euclidean norm and inner product, respectively, and by $\Vab{\bullet}_{\frob}$ the Frobenius norm.
Let $\dSphe\coloneqq\Set{x\in\R^d | \vab{x}=1}\subset\R^d$ be the unit sphere with tangent space $T_x\dSphe\coloneqq\Set{v\in\R^d | \aab{v,x}=0}$ at $x\in\dSphe$.
Let $\nabla$, $\Div$, and $\Laplacian$ denote the gradient, divergence, and Laplace--Beltrami operator on $\dSphe$, respectively.
For an integer $k\ge 0$, $C^k(\dSphe)$ and $H^k(\dSphe)$ denote the standard spaces of $k$-times continuously differentiable and Sobolev functions on $\dSphe$, respectively. 
For a compact metric space $X$, $\Pcal(X)$ denotes the space of Borel probability measures on $X$, endowed with the narrow topology, which is metrized by the ($2$-)Wasserstein distance $W_2$.
For a Borel map $T\colon X\to X$ and $\mu\in\Pcal(X)$, $T_\#\mu\in\Pcal(X)$ denotes the pushforward of $\mu$ by $T$,

%% file: contents/theory_figs/bootstrap_like_curve_snippet_v34.tex
\providecolor{curveRed}{RGB}{210,20,35}
\providecolor{curveBlue}{RGB}{0,114,178}
\providecolor{bgSkyBlue}{RGB}{86,180,233}
\providecolor{bgGray}{RGB}{153,153,153}
\providecolor{bgOrange}{RGB}{230,159,0}
\providecolor{barRed}{RGB}{210,20,35}

\def\Tval{8}
\pgfmathsetmacro{\ToneThird}{\Tval/3}
\pgfmathsetmacro{\TtwoThird}{2*\Tval/3}
\pgfmathsetmacro{\TleftCenter}{\ToneThird/2}
\pgfmathsetmacro{\TmidCenter}{(\ToneThird+\TtwoThird)/2}
\pgfmathsetmacro{\TrightCenter}{(\TtwoThird+\Tval)/2}
\def\PhaseLabelY{3.70}
\def\PhaseImageY{0.72}
\def\PhaseImageWidth{2.15cm}
\def\PhaseTransitionWidth{1.25cm}
\def\SequenceArrowY{0.595}
\def\InnerTitleY{1.30}
\pgfmathsetmacro{\InnerLeftX}{2.55}
\pgfmathsetmacro{\InnerRightX}{5.50}
\pgfmathsetmacro{\TheoremCenterX}{\TmidCenter}
\def\TheoremTopY{1.68}
\def\TheoremBottomY{1.40}
\def\TheoremTwoTopY{1.05}
\def\TheoremTwoBottomY{0.82}
\def\ArrowTargetX{2.10}
\def\RedArrowTargetX{6.35}
\def\aval{1.05}
\def\ymin{0.055}
\def\amp{2.46}
\def\redampfactor{0.50}
\def\blueoffset{0.20}
\def\blueampfactor{1.50}
\pgfmathsetmacro{\ArrowTargetY}{\blueoffset + \ymin + \blueampfactor*\amp*(exp(-\aval*\ArrowTargetX) + exp(-\aval*(\Tval-\ArrowTargetX)))/(1 + exp(-\aval*\Tval))}
\pgfmathsetmacro{\RedArrowTargetY}{\ymin + \redampfactor*\amp*(exp(-\aval*\RedArrowTargetX) + exp(-\aval*(\Tval-\RedArrowTargetX)))/(1 + exp(-\aval*\Tval))}

\begin{tikzpicture}
\begin{axis}[
  width=15.5cm,
  height=7.8cm,
  scale only axis,
  xmin=0, xmax=\Tval,
  ymin=0, ymax=4.15,
  axis lines=box,
  clip=false,
  enlargelimits=false,
  domain=0:\Tval,
  samples=360,
  axis line style={black,line width=0.9pt},
  tick style={black,line width=0.9pt},
  major tick length=4.5pt,
  minor tick length=0pt,
  xmajorgrids=false,
  xminorgrids=false,
  ymajorgrids=true,
  yminorgrids=true,
  major grid style={black!22,line width=0.35pt},
  minor grid style={black!40,densely dotted,line width=0.25pt},
  minor x tick num=0,
  minor y tick num=4,
  xlabel={},
  ylabel={},
  xtick={0,\Tval},
  xticklabels={$0$,$T$},
  ytick=\empty,
  tick label style={font=\normalsize},
  xticklabel style={font=\Large,yshift=-1pt},
  yticklabel style={font=\normalsize},
]

\path[fill=bgSkyBlue,fill opacity=0.18,draw=none] (axis cs:0,0) rectangle (axis cs:\ToneThird,4.15);
\path[fill=bgGray,fill opacity=0.16,draw=none] (axis cs:\ToneThird,0) rectangle (axis cs:\TtwoThird,4.15);
\path[fill=bgOrange,fill opacity=0.18,draw=none] (axis cs:\TtwoThird,0) rectangle (axis cs:\Tval,4.15);

\addplot[black!22,line width=0.35pt] coordinates {(\ToneThird,0) (\ToneThird,4.15)};
\addplot[black!22,line width=0.35pt] coordinates {(\TtwoThird,0) (\TtwoThird,4.15)};


\addplot[
  name path=redLower,
  draw=none,
  domain=0:\Tval,
]
  {\ymin + \redampfactor*\amp*(exp(-\aval*x) + exp(-\aval*(\Tval-x)))/(1 + exp(-\aval*\Tval))};

\addplot[
  name path=blueUpper,
  draw=none,
  domain=0:\Tval,
]
  {\blueoffset + \ymin + \blueampfactor*\amp*(exp(-\aval*x) + exp(-\aval*(\Tval-x)))/(1 + exp(-\aval*\Tval))};

\addplot[
  draw=none,
  fill=black,
  fill opacity=0.10,
]
fill between[of=blueUpper and redLower];

\addplot[
  draw=curveRed,
  line width=1.5pt,
  line cap=round,
  line join=round,
  dash pattern=on 4pt off 3pt,
  domain=0:{0.5*\Tval},
]
  {\ymin + \redampfactor*\amp*(exp(-\aval*x) + exp(-\aval*(\Tval-x)))/(1 + exp(-\aval*\Tval))};

\addplot[
  draw=curveRed,
  line width=1.5pt,
  line cap=round,
  line join=round,
  domain={0.5*\Tval}:\Tval,
]
  {\ymin + \redampfactor*\amp*(exp(-\aval*x) + exp(-\aval*(\Tval-x)))/(1 + exp(-\aval*\Tval))};

\addplot[
  draw=curveBlue,
  line width=1.5pt,
  line cap=round,
  line join=round,
]
  {\blueoffset + \ymin + \blueampfactor*\amp*(exp(-\aval*x) + exp(-\aval*(\Tval-x)))/(1 + exp(-\aval*\Tval))};

\addplot[
  draw=black,
  line width=1.0pt,
  line join=round,
]
coordinates {
  (0.00,2.8278)
  (0.35,1.8589)
  (0.80,1.3696)
  (1.15,0.8546)
  (1.55,0.6752)
  (1.95,0.3932)
  (2.35,0.3576)
  (2.75,0.2256)
  (3.15,0.2174)
  (3.55,0.1638)
  (3.95,0.1823)
  (4.35,0.1669)
  (4.75,0.2260)
  (5.15,0.2225)
  (5.55,0.3461)
  (5.95,0.3867)
  (6.35,0.6313)
  (6.75,0.7672)
  (7.15,1.2847)
  (7.55,1.7239)
  (8.00,2.8012)
};

\node[
  anchor=north,
  font=\sffamily\fontsize{18}{21}\selectfont,
  align=center,
  inner sep=0pt
] at (axis cs:\TleftCenter,\PhaseLabelY) {\shortstack[c]{\zcref{phase:cluster}:\\ Clustering}};

\node[
  anchor=north,
  font=\sffamily\fontsize{18}{21}\selectfont,
  align=center,
  inner sep=0pt
] at (axis cs:\TmidCenter,\PhaseLabelY) {\shortstack[c]{\zcref{phase:turnpike}:\\ Turnpiking}};

\node[
  anchor=north,
  font=\sffamily\fontsize{18}{21}\selectfont,
  align=center,
  inner sep=0pt
] at (axis cs:\TrightCenter,\PhaseLabelY) {\shortstack[c]{\zcref{phase:escape}:\\ Escaping}};

\draw[
  draw=black!35,
  line width=6.0pt,
  line cap=rect,
  ->,>=latex
] (axis description cs:0.24,\SequenceArrowY) -- (axis description cs:0.43,\SequenceArrowY);

\draw[
  draw=black!35,
  line width=6.0pt,
  line cap=rect,
  ->,>=latex
] (axis description cs:0.57,\SequenceArrowY) -- (axis description cs:0.76,\SequenceArrowY);

\node[anchor=north,inner sep=0pt] (initialImg)
  at (axis description cs:0.1667,\PhaseImageY)
  {\includegraphics[width=\PhaseImageWidth]{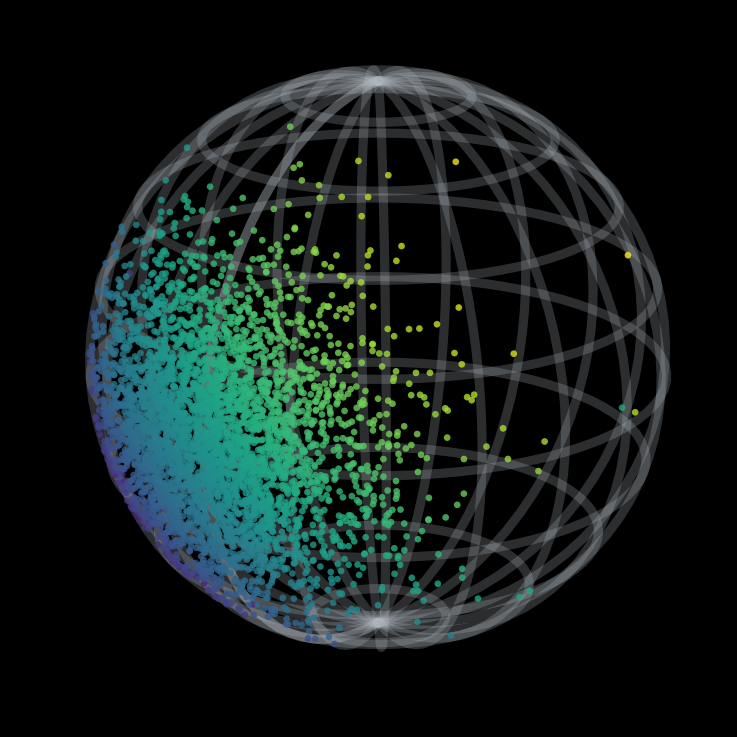}};

\node[anchor=center,inner sep=0pt] (initialToSteadyImg)
  at (axis description cs:0.3333,\SequenceArrowY)
  {\includegraphics[width=\PhaseTransitionWidth]{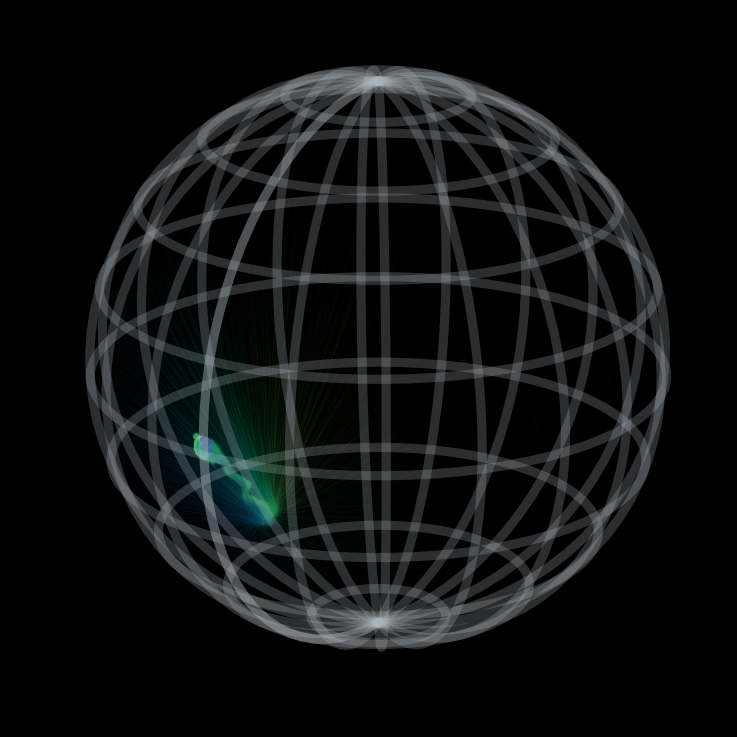}};

\node[anchor=north,inner sep=0pt] (steadyImg)
  at (axis description cs:0.5000,\PhaseImageY)
  {\includegraphics[width=\PhaseImageWidth]{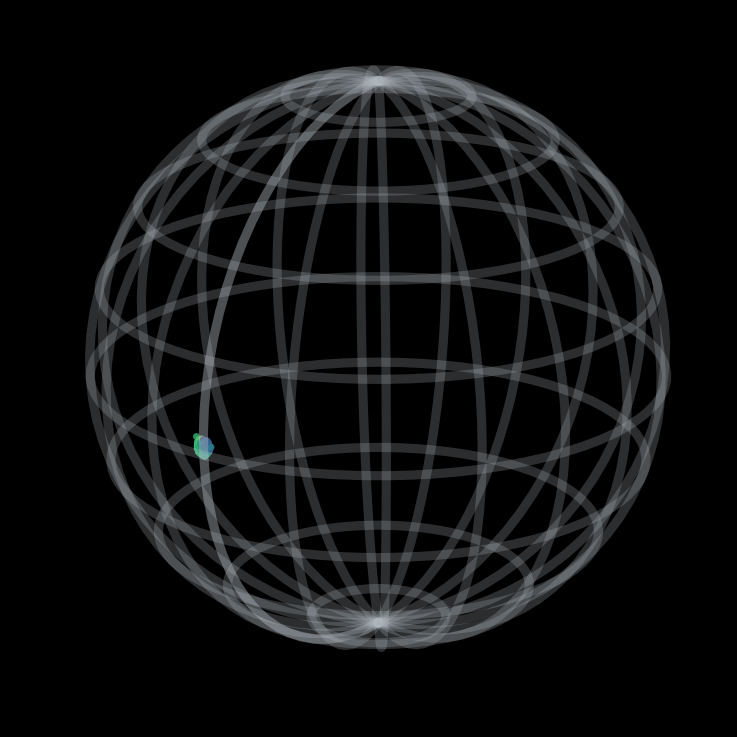}};

\node[anchor=center,inner sep=0pt] (steadyToTerminalImg)
  at (axis description cs:0.6667,\SequenceArrowY)
  {\includegraphics[width=\PhaseTransitionWidth]{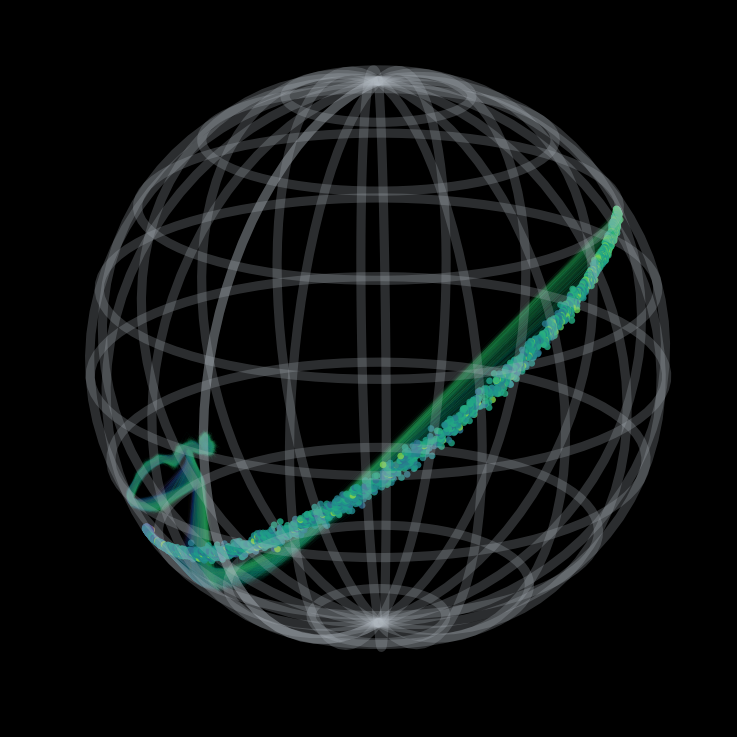}};

\node[anchor=north,inner sep=0pt] (terminalImg)
  at (axis description cs:0.8333,\PhaseImageY)
  {\includegraphics[width=\PhaseImageWidth]{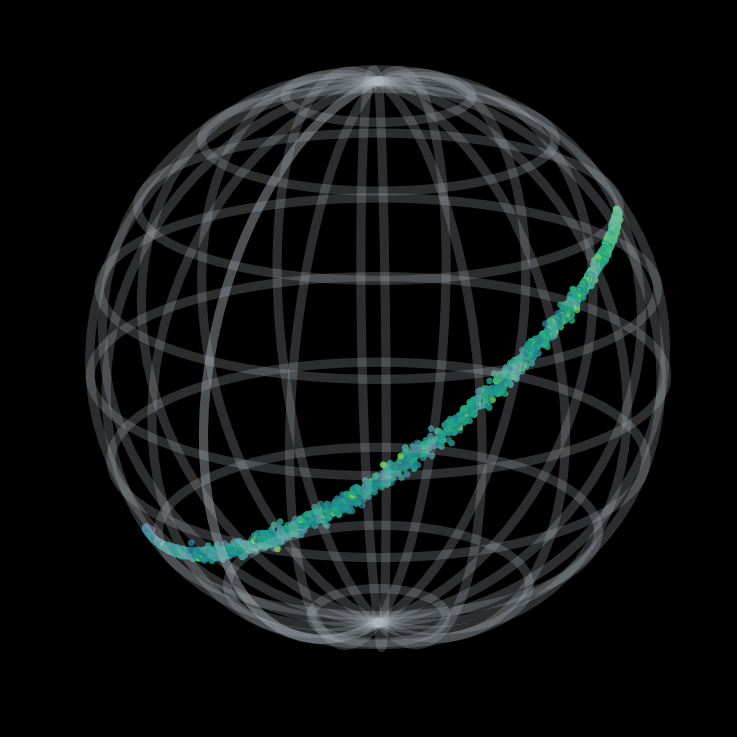}};

\node[
  anchor=center,
  font=\sffamily\large,
  inner sep=0pt,
  align=center
] (theoremInnerTitle) at (axis cs:\InnerLeftX,\InnerTitleY) {\zcref{thm:exp_turnpike}};

\node[
  anchor=north,
  font=\sffamily\scriptsize,
  inner sep=0pt,
  align=center
] (theoremInnerSubtitle) at ([yshift=-1pt] theoremInnerTitle.south) {Exponential turnpike-type bound};

\draw[
  ->,
  >=latex,
  draw=curveBlue,
  line width=1.1pt,
  line cap=round,
  line join=round
] ([yshift=-2pt] theoremInnerSubtitle.south)
  -- (axis cs:\ArrowTargetX,\ArrowTargetY);

\node[
  anchor=center,
  font=\sffamily\large,
  inner sep=0pt,
  align=center
] (propInnerTitle) at (axis cs:\InnerRightX,\InnerTitleY) {\zcref{prop:one-step-escape}};

\node[
  anchor=north,
  font=\sffamily\scriptsize,
  inner sep=0pt,
  align=center
] (propInnerSubtitle) at ([yshift=-1pt] propInnerTitle.south) {Adjoint analysis at 1-GD step};

\draw[
  ->,
  >=latex,
  draw=curveRed,
  line width=1.1pt,
  line cap=round,
  line join=round
] ([yshift=-2pt] propInnerSubtitle.south)
  -- (axis cs:\RedArrowTargetX,\RedArrowTargetY);

\node[
  anchor=north,
  font=\Large,
  inner sep=0pt
] at (axis description cs:0.5,-0.03) {Layer $t$};

\node[
  anchor=south,
  rotate=90,
  font=\Large,
  align=center,
  inner sep=0pt
] at (axis description cs:-0.03,0.5)
  {Distance$(\approx\Energygap)$};

\end{axis}
\end{tikzpicture}

%% file: contents/setting.tex
\section{Framework and problem setting}
\label{sec:framework}

For theoretical analysis, we model the mean-field dynamics of tokens in a Transformer by a non-local Fokker--Planck equation.
Training is then formulated as an optimization problem over the parameters appearing in the drift term, which corresponds to tuning the feedforward network inside each layer.

\subsection{Modeling token dynamics by a non-local Fokker--Planck equation}
\label{subsec:token-dynamics}

We first recall a discrete-time dynamical system representation of a Transformer and then introduce the mean-field model studied in this paper.
According to \parencite{geshkovski2025perspective}, we can regard the Transformer as mapping an input sequence of (embedded) tokens $(x_i)_{i=1}^N$ to the terminal values $(x_i(L))_{i=1}^N$ of a discrete-time dynamical sequence $((x_i(l))_{i=1}^N)_{l=1}^L$ satisfying $x_i(1)=x_i$ and
\begin{equation}
\begin{aligned}
\textstyle    x_i(l+1)=\Normalize\pab[Big]{x_i(l)+u_{W(l)}(x_i(l))+\sum\limits_{j=1}^N\e^{\beta\la{Q(l)x_i(l)},{K(l)x_j(l)}\ra}V(l)x_j(l)},&& l\in[L-1],
\end{aligned}
    \label{eq:discrete-transformer-dynamics}
\end{equation}
where $\Normalize\colon\R^d\to\dSphe$ is the norm normalization and $u_W$ is an FFN with weight matrices $W(l)$ for each layer $l$; for example, $u_W(x) = W_1 \tanh(W_2x)$ where $W = (W_1, W_2)$, $W_1, W_2 \in \R^{d \times d}$.
In the theoretical analysis, we specify the parametrization of $u_W$ in \zcref{ass:linear-ffn-parametrization} below.
The coefficient $\beta$ is a positive constant, and the matrices $Q(l)$, $K(l)$, $V(l)$ are weight matrices that define the query, key, and value, respectively.
Previous works in, e.g., \parencite{sander22a,Karagodin24causal,geshkovski2025perspective,bruno2025emergence,burger2025analysis} have analyzed the case of \emph{Symmetric unnormalized self-attention (S-USA)}, in which the weight matrices are shared across layer $l$ and satisfy $Q^\top K=V=V^\top$.
The S-USA model is introduced as a useful theoretical assumption for analyzing the token clustering phenomenon. 
Following the literature, we consider a noisy continuous-time counterpart of the model with a (continuous) depth $T>0$ of layer as follows:
\begin{equation}
\begin{aligned}
\textstyle
         \d X_i(t)
    = \Proj_{X_i(t)}\pab[Big]{u_{W(t)}(X_i(t))+\dfrac1N \sum\limits_{j=1}^N \e^{\la  X_i(t), A X_j(t)\ra} AX_j(t)}\d t
    + \sqrt{2\varepsilon}\d B_i(t),
    && t\in(0,T).
\end{aligned}
    \label{eq:noisy-particle-dynamics}
\end{equation}
Here $X_i$ is a stochastic process that models $x_i$, $\Proj_x\coloneqq I-x x^\top\in\R^{d\times d}$    is the projection onto  $T_x\dSphe$ at $x\in\dSphe$, $A\in \R^{d\times d}$ is a symmetric matrix denoting $Q^\top K$ in S-USA, $B_i$ are independent Brownian motions on $\dSphe$, and $\varepsilon>0$ controls the strength of the injected noise.
We set $\beta=1$ for simplicity.

By the standard argument of the propagation of chaos as in \parencite{Chaintron_2022}, the law $\mu_t$ of the tokens $X_t$ at each layer $t\in[0,T]$ follows the following non-local Fokker--Planck equation
\begin{equation}
    \left\{
    \begin{aligned}
    \partial_{t} \mu_t(x)+\Div\pab{\mu_t(x) P_x^\perp(\chi_A[\mu]+u_{W_t})(x)}&=\eps\Laplacian\mu_t(x),&& t\in(0,T),x\in\dSphe, \\ 
    \left.\mu_{t}\right|_{t=0}&=\mu_{0}, 
    \end{aligned}
    \label{eq:ODE-Net}
    \right.
\end{equation}
when the number $N$ of tokens is large and the input tokens $(x_i)$ are sampled i.i.d.~from a token distribution $\mu_0\in\PdSphe$.
Here $\chi_A[\mu]\colon\dSphe\to \R^d$ is the vector field given by
\begin{equation}
    \begin{aligned}
    \chi_A[\mu](x)\coloneqq \int_{\dSphe}\e^{\aab{x,Ay}}Ay\d\mu(y) ,&& x\in \dSphe.
    \end{aligned}
\end{equation}
Given a fixed input token distribution $\mu_0$, we denote by $\mu^W=(\mu_t^W)_{t\in[0,T]}\in C((0,T);\PdSphe)$ the (weak) solution to \eqref{eq:ODE-Net} with initial condition $\mu_0$, corresponding to the parameter $W=(W_t)_{t\in[0,T]}$.
We call the terminal distribution $\mu_T^W\coloneqq\eval{\mu_t^W}_{t=T}$ the output of the mean-field Transformer.

\begin{remark}[Role of the noise $\eps$]
Prior work on noisy Transformer dynamics has focused on simpler settings, such as scalar $Q,K,V$ without an FFN \parencite{balasubramanian2025structurestationarysolutionsmckeanvlasov,rigollet2026meanfielddynamicstransformers}.
Here, noise is included for two theoretical purposes: it provides parabolic regularization, which facilitates smooth solutions $\mu_t$, and it helps eliminate spurious stationary distributions that can arise in deterministic settings, cf. \parencite{chen2025quantitativeclusteringmeanfieldtransformer,agazzi2026stochasticscalinglimitssynchronization}.
\end{remark}

\subsection{Mean-field optimal control formulation of training}
\label{subsec:optimal-control}
We now formulate training as an optimal control problem for the mean-field dynamics introduced above.
Let $\ell\colon\Pcal(\dSphe)\to\R$ be a (narrowly-)continuous loss function with bounded oscillation $\osc\ell\coloneqq\sup\ell-\inf\ell<+\infty$.
We optimize the layer-dependent FFN parameters $W_t$ so as to minimize the terminal loss with an $L^2$ regularization parameter $\regparam>0$:
\begin{equation}
    \minimize_{W\in L^2(0,T)} J(W)\coloneqq\ell(\mu_T^W)+\frac{\regparam}{2}\Vab{W}^2_{L^2(0,T)},
    \label{eq:mf-optimal-control}
\end{equation}
where $L^2(0,T)$ is the function space of square integrable functions $t\mapsto W_t$ on $[0, T]$ that take values in the parameters of $u_{W_t}$ and $\Vab{W}_{L^2(0,T)}^2\coloneqq{\int_0^T\Vab{W_t}_\frob^2\d t}$.

\begin{Example}[Bag-of-words loss]\label{eg:next-token-prediction}
Consider predicting Bag-of-Words proposed in \parencite{ma-etal-2018-bag} using the output token distribution $\mu_T$ of a mean-field Transformer.
Let $\Vocab$ be a finite set of vocabulary words, and let $R\colon\R^{d} \to \mathbb{R}^{\Vocab}$ be an affine transformation that maps each output token to word logits; we fix this transformation.
A loss function $\ell = \ell_{\textup{BoW}}$ with the Bag-of-Words distribution $q \in \Pcal(\Vocab)$ as the target can be modeled in the following form
\begin{equation}
    \ell_{\textup{BoW}}(\mu) \coloneqq-\sum_{w\in\Vocab}q(w)\log\pab{\int\softmax(R(x))_w\d\mu(x)}.
    \label{eq:BoW-loss}
\end{equation}
For finite tokens $(x_i)_{i=1}^N$, this becomes $-\sum_w q(w)\log\pab{\frac1N\sum_i\softmax(R(x_i))_w}$.
\end{Example}

The above formulation is closely related to the optimal-control view of ResNets and Neural ODEs~\parencite{E2017,E2018,JiequnHan_Li_2022}.
\begin{remark}[Other possible formulations of training]
    \label{rmk:train_setting}
    More practical learning settings, such as the next-token prediction task, can also be addressed by reformulating \eqref{eq:ODE-Net} using a distribution that includes labels (or token positions), following the approach in \parencite{E2018,BONNET2023113161}.
    We adopt the formulation given in \eqref{eq:mf-optimal-control} to align with the literature on mean-field Transformers.
\end{remark}

%% file: contents/theory.tex
\section{Main mathematical results}
\label{sec:main}
In this section, we investigate what kind of dynamics, along the depth direction, are exhibited by the solution \(\mu^{W^\ast}\) generated by a layer-wise FFN parameter path $W^\ast=(W^\ast_t)_{t\in[0,T]}\in L^2(0,T)$ that minimizes the objective in \eqref{eq:mf-optimal-control}.

In the same spirit as the analyses of \parencite{geshkovski2025perspective,burger2025analysis}, one observes that \eqref{eq:ODE-Net} admits the following gradient-flow-type structure associated with the entropy-regularized interaction energy \(\Energy\):
\begin{equation}
    \begin{aligned}
        &\partial_t \mu
        + \Div\pab{
            \mu\Proj\pab{
                -\nabla\fdv{\Energy}{\mu}
                + u_{W_t}
            }
        }=0,
        &&\Energy(\mu)
        \coloneqq
        \eps\Ent(\mu)
        -\frac12\iint
            \e^{\aab{x,Ay}}\d\mu(x)\d\mu(y),
        \label{eq:gradient_flow_form}
    \end{aligned}
\end{equation}
where $\nabla\fdv{}{\mu}$ is the (formal) Wasserstein gradient operator, $\Ent\colon\PdSphe\to\R\cup\Bab{+\infty}$ is the entropy defined by $\Ent(\rho\d\omega)=\int\rho\log\rho\d\omega$ for a probability density $\rho$ with the uniform distribution $\omega$ on $\dSphe$, and by $+\infty$ otherwise.
If \(W=0\), the token distribution $\mu_t$ moves solely in the direction of decreasing the interaction energy \(\Energy\), and is expected to concentrate into a small number of clusters.

On the other hand, in the variational problem \eqref{eq:mf-optimal-control}, the parameter path \(W\) is optimized so that the terminal distribution \(\mu_T^W\) makes the loss \(\ell\) small.
Hence, there are two effects: the Attention, which decreases \(\Energy\) and drives clustering of the token distribution, and the FFN \(u_{W_t}\), which attempts to move \(\mu_T^W\) in a direction that decreases the loss \(\ell\).
From these two effects, we will find the following three phases to emerge in the trained dynamics:
\begin{itemize}
    \item[{\bf\zcref{phase:cluster}}] In the first layers \(t\in(0,\tau)\) with \(\tau\ll T\), the Attention $\chi_A$ is dominant, and the token distribution lowers \(\Energy\) while localizing near a small number of points.
    This phase helps us to save on the regularization cost $\int_0^\tau\Vab{W_t}^2_\frob\d t \approx 0$.
    \item[{\bf\zcref{phase:turnpike}}] In the middle layers \(t\in[\tau,T-\tau]\), the token distribution stays in a plateau state near the set of stationary points along which \(\Energy\) is almost constant.
    These dynamics are induced by the tendency to keep the cost
        \(
            \int_{\tau}^{T-\tau}\Vab{W_t}_\frob^2\d t
        \)
    relatively small.
    \item[{\bf\zcref{phase:escape}}]In terminal layers \(t\in[T-\tau,T]\), the token distribution can leave the plateau to decrease the loss \(\ell\).
    This can be achieved because the FFN \(u_{W_t}\) can dominate the attention \(\chi_A\) over the short layer interval \([T-\tau,T]\).
\end{itemize}
In this section, we mathematically formulate and verify the above dynamical three-phase structure in the regime of large depth, namely \(T\gg1\).
In \zcref{sec:energy}, we identify the clustered distribution $\barmu$ that appears in \zcref{phase:turnpike}. This allows us to formulate the situation in \zcref{phase:cluster,phase:escape} using the energy gap $\Energygap\coloneqq\Energy-\min\Energy$.
In \zcref{sec:adjoint}, we consider parameters obtained by applying the GD for a single step and demonstrate the mechanism by which \zcref{phase:escape} can be triggered.
\zcref{sec:exp_turnpike} provides an exponential turnpike-type bound for a minimizer of the variational problem \eqref{eq:mf-optimal-control}.

\input{contents/theorems/landscape}
\input{contents/theorems/adjoint}

\input{contents/theorems/exp_turnpike}

%% file: contents/theorems/landscape.tex
\subsection{Landscape of the energy \texorpdfstring{$\Energy$}{E} in the mean-field Transformer}\label{sec:energy}

In this subsection, we study the stationary points of the energy $\Energy$ and the landscape around them to formulate \zcref{phase:turnpike}.
In general, not only local minima but also saddle points of \(\Energy\) exist continuously.
For example, considering the case where $\varepsilon\searrow0$ and $A=I$, it is known that the minimizers of $\Energy$ are arbitrary Dirac delta distributions and that other saddle points may also exist continuously \parencite{geshkovski2025perspective, burger2025analysis,bruno2025a}.
To eliminate such spurious situations, we consider the following generic case:

\begin{assumption}[Non-degenerate positive attention matrix]
\label{ass:nondegenerate-A}
    The symmetric matrix \(A\in\R^{d\times d}\) in \eqref{eq:ODE-Net} is positive definite and the maximum eigenvalue is simple, i.e., the eigenvalues $(\lambda_i)_{i=1}^d\subset\R$ of $A$ sorted in descending order satisfy $\lambda_1\gneq\lambda_2\ge \cdots\ge\lambda_d\gneq 0$.
    Without loss of generality, one can assume that the standard orthonormal basis $(e_i)_{i=1}^d\subset\R^d$ diagonalizes $A$. 
\end{assumption}

The following statement asserts that, under \zcref{ass:nondegenerate-A}, there are only two minimizers, and no non-minimizing stationary point is path-connected to a minimizer within the stationary set:
\begin{proposition}[Landscape analysis of $\Energy$]
\label{prop:energy-landscape}
Under \zcref{ass:nondegenerate-A}, let $e_1\in\dSphe$ be the principal eigenvector corresponding to $\lambda_1$.
Then, there exists a constant $\bareps=\bareps(A,d)$ depending on $A$ and $d$ such that, for any $\eps\in(0,\bareps)$, there exists a neighborhood $\Nhd_+\subset\PdSphe$ of the Dirac delta $\delta_{e_1}\in\PdSphe$ such that the following holds:
\begin{itemize}
    \item  $\Energy$ has a unique minimizer $\mubar^+=\barrho^+\omega$ in $\Nhd_+$, which is a solution to the equation \(\barrho^+(x)\propto\e^{\frac1\eps\int\e^{\aab{x,Ay}}\barrho^+(y)\d\omega(y)}\) with density $\barrho^+$.
    In particular, $\barrho^+$ is positive and $C^\infty$.
    \item Let $S\colon\dSphe\to\dSphe$ be the antipodal map $ x \mapsto -x$, and let $\mubar^- = S_\#\mubar^+$.
    Then, the set of global minimizers of $\Energy$ consists only of $\mubar^+$ and $\mubar^-$, and the Hessian operator $\Hop$ at $\barmu\in\Bab{\mubar^+,\mubar^-}\eqqcolon\Minimizers$ is bounded for $h\in L^2_0(\barmu)\coloneqq\Set{h\in L^2(\mubar)|\int h \d\mubar=0}$ as
    \begin{equation}
    \begin{aligned}
        \varepsilon\Vab{h}^2_{L^2(\barmu)}\ge\aab{h,\Hop h}_{L^2(\barmu)}\ge\frac{\eps}{2}\pab{1-\frac{\lambda_2}{\lambda_1}}\Vab{h}^2_{L^2(\barmu)},
    \end{aligned}
    \label{eq:PD_Hessian}
    \end{equation}
    where
    \(
        \Hop h\coloneqq\eps h-\varPi_0\pab{\int\e^{\aab{x,Ay}}h(y)\d\mubar(y)}
    \)
    with the projection $\varPi_0$ onto $L^2_0(\barmu)$.

    \item Let $\Crit\Energy$ be the set of critical points of $\Energy$, which consists of solutions of the Gibbs-type equation, and let \(\Delta_{\textup{crit}}(\eps)\coloneqq\inf\Set{\Energygap(\mu)|\mu\in\Crit\Energy\setminus\Minimizers}\).
    Then
    \begin{equation}\label{eq:1st_energy_gap}
    \liminf_{\eps\searrow0}\Delta_{\mathrm{crit}}(\eps)
    \ge
    \Delta_0\coloneqq\frac{\min\Bab{\sinh\lambda_1,\e^{\lambda_1}-\e^{\lambda_2}}}{2}>0.
    \end{equation}
    \end{itemize}
\end{proposition}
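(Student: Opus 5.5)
The plan is a small-noise perturbation argument around the zero-noise picture, followed by a spectral analysis of the linearized Gibbs map.

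\textbf{Step 1 (the minimizers at $\eps=0$).} At $\eps=0$ the energy is $-\frac12\iint \e^{\aab{x,Ay}}\d\mu\d\mu$. Since $A\succ0$, expanding the exponential gives $\e^{\aab{x,Ay}}=\sum_k \aab{x,Ay}^k/k!$, and each term is a positive-semidefinite kernel. Hence the interaction is convex in $\mu$ along linear interpolations, and its maximum over the convex set $\PdSphe$ is attained at extreme points, namely Diracs $\delta_z$. The value there is $\e^{\aab{z,Az}}$, which is maximized exactly at $z=\pm e_1$ by simplicity of $\lambda_1$.

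\textbf{Step 2 (construction of $\mubar^+$).} For $\eps>0$, critical points satisfy the Gibbs equation $\rho=Z^{-1}\exp(\eps^{-1}\int \e^{\aab{x,Ay}}\rho(y)\d\omega(y))$. I would build $\mubar^+$ by a fixed-point argument in the neighbourhood $\Nhd_+$. For $\mu$ near $\delta_{e_1}$, the potential $\phi_\mu(x)=\int \e^{\aab{x,Ay}}\d\mu(y)$ has a nondegenerate maximum near $e_1$. The Gibbs measure $\propto \e^{\phi_\mu/\eps}$ is therefore a Laplace-type bump of width $\sqrt\eps$, so the map $\mu\mapsto \text{Gibbs}(\mu)$ sends $\Nhd_+$ into itself. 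Its linearization, which is the operator $h\mapsto \eps^{-1}\varPi_0\int \e^{\aab{x,Ay}}h\d\mubar$, is a contraction on $L^2_0(\mubar)$. This gives existence and uniqueness in $\Nhd_+$. Smoothness and positivity of $\barrho^+$ follow because the right-hand side is an exponential of a smooth function. By the symmetry $\Energy\circ S_\#=\Energy$, the measure $\mubar^-$ is also critical.

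\textbf{Step 3 (Hessian bounds, the main obstacle).} The upper bound in \eqref{eq:PD_Hessian} is immediate from PSD-ness of the kernel. For the lower bound, I write $\mubar$ via Laplace asymptotics as approximately a Gaussian centred at $e_1$ on the tangent plane. The tangent covariance is $\eps\,\e^{-\lambda_1}(\lambda_1-\lambda_j)^{-1}$ in direction $e_j$, coming from the Hessian of $\e^{\aab{x,Ae_1}}$. Expanding the kernel to second order, $\e^{\aab{x,Ay}}\approx \e^{\lambda_1}(1+\sum_{j\ge2}\lambda_j x_jy_j+\dots)$ plus curvature terms, the centred integral operator is then dominated by its rank-$(d-1)$ linear part. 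Its eigenvalue in direction $j$ is approximately $\e^{\lambda_1}\lambda_j\cdot\mathrm{Var}(x_j)=\eps\lambda_j/(\lambda_1-\lambda_j)$. This naive computation degenerates, so the precise claim $\ge \frac\eps2(1-\lambda_2/\lambda_1)$ must come from correctly accounting for the normalization. Concretely, $x_1\approx 1-|x'|^2/2$ shifts the effective quadratic form: the exponent's Hessian becomes $\lambda_1\e^{\lambda_1}(\lambda_1-\lambda_j)$-type while the kernel contribution is $\lambda_1\e^{\lambda_1}\lambda_j$-type. That gives an operator norm of at most roughly $\eps\lambda_2/\lambda_1$ for the linear modes, plus $O(\eps^{3/2})$ errors for higher modes. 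Choosing $\bareps$ small absorbs the error into the factor $\frac12$. Getting these Laplace-expansion constants and remainders right uniformly is the delicate part.

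\textbf{Step 4 (global minimizers and the energy gap).} Suppose a global minimizer $\mu_\eps$ lay outside $\Nhd_\pm$. By $\Gamma$-convergence as $\eps\to0$, with the entropy term $\eps\Ent=O(\eps\log\eps)$ for smoothed Diracs, its limit would be a maximizer of the interaction, hence $\delta_{\pm e_1}$, contradicting Step 2's uniqueness in the neighbourhoods. For \eqref{eq:1st_energy_gap}, limits of critical points $\mu_\eps\notin\Minimizers$ are stationary for the $\eps=0$ problem, supported on the set where $\phi_\mu$ is maximal. There are two cases.
\begin{itemize}
\item If the limit is a Dirac at a critical point of $\aab{x,Ax}$ other than $\pm e_1$, namely $\pm e_j$ with $j\ge2$, the gap is at least $(\e^{\lambda_1}-\e^{\lambda_2})/2$.
\item If the limit has mass split between antipodal or separated points, e.g.\ $\frac12(\delta_{e_1}+\delta_{-e_1})$, the interaction is $(\e^{\lambda_1}+\e^{-\lambda_1})/2$, giving a gap of $(\e^{\lambda_1}-\e^{-\lambda_1})/4=\sinh(\lambda_1)/2$. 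Non-Dirac limits are bounded similarly.
\end{itemize}
Taking the minimum yields $\Delta_0$.
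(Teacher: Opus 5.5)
Your overall architecture (zero-temperature separation, Laplace asymptotics at scale $\sqrt\eps$, kernel norm $\approx(\lambda_2/\lambda_1)\eps$, Gibbs limits supported on the argmax of $W_\nu=\int\mathrm{e}^{\langle\cdot,Ay\rangle}\,\mathrm{d}\nu(y)$, your $\phi_\nu$) matches the paper's. However, the steps that carry the weight are not established.

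\textbf{Step~3: wrong Laplace covariance.} The Gibbs potential is the first variation $W_{\mubar}(x)\to\mathrm{e}^{\lambda_1x_1}$, not the diagonal $\mathrm{e}^{\langle x,Ax\rangle}$. Its spherical Hessian at $e_1$ is therefore $-\lambda_1\mathrm{e}^{\lambda_1}I$, and the covariance is isotropic, $\sigma_\eps^2=\eps/(\lambda_1\mathrm{e}^{\lambda_1})$. With this, your ``naive'' computation is already the right one. The anisotropy enters only through the bilinear part $\mathrm{e}^{\lambda_1}\sum_{j\ge2}\lambda_jz_jw_j$ of the kernel. Bessel's inequality for the nearly orthonormal modes $z_j/\sigma_\eps$ then gives $\langle\mathsf{K}h,h\rangle\le\mathrm{e}^{\lambda_1}\lambda_2\sigma_\eps^2(1+o(1))\|h\|^2=(\lambda_2/\lambda_1+o(1))\eps\|h\|^2$. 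The constant and one-variable terms of the kernel are killed by $\int h\,\mathrm{d}\mubar=0$. What remains is a mixed remainder $O(|z||w|(|z|+|w|))$, worth $O(\eps^{3/2})\|h\|^2$ by the moment bounds. The quantities you actually wrote do not produce $\lambda_2/\lambda_1$; the ratio is asserted, not derived.

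\textbf{Step~2: uniqueness only near $\mubar$.} A contraction bound for the derivative of the Gibbs map at the single point $\mubar$ (which in any case presupposes Step~3) gives uniqueness only in a small density chart around $\mubar$. It does not give uniqueness in a narrow neighbourhood of $\delta_{e_1}$, which contains, e.g., slightly shifted bumps that are far from $\mubar$ in every density norm. What is needed is the bound along chords. Every low-energy critical point in the well obeys the same uniform Laplace asymptotics (centre $e_1+O(\eps)$, isotropic covariance, moment bounds). Hence so does every $\mu^s=(1-s)\mu^0+s\mu^1$, giving $\|\mathsf{K}_{\mu^s}\|_{\mathrm{op}}\le(\lambda_2/\lambda_1+o(1))\eps$. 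Now subtract the Euler--Lagrange equations of $\mu^0,\mu^1$ and integrate against $\nu=\mu^1-\mu^0$. With $h_s=\mathrm{d}\nu/\mathrm{d}\mu^s$ this yields $\eps\int_0^1\|h_s\|_{L^2(\mu^s)}^2\,\mathrm{d}s=\int_0^1\langle\mathsf{K}_{\mu^s}h_s,h_s\rangle_{L^2(\mu^s)}\,\mathrm{d}s$, which forces $\nu=0$. This is how the paper obtains ``only $\mubar^\pm$''. It is also what rules out non-minimal critical families converging to $\delta_{\pm e_1}$, a case missing from your Step~4.

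\textbf{Step~4: the energy gap is not proved.} ``Non-Dirac limits are bounded similarly'' is exactly what has to be shown. With $K_A(x,y)=\mathrm{e}^{\langle x,Ay\rangle}$ and $I_A(\nu)=\int W_\nu\,\mathrm{d}\nu$, the claim is $I_A(\nu)\le Q_A=\max\{\mathrm{e}^{\lambda_2},\cosh\lambda_1\}$ for every $\nu\ne\delta_{\pm e_1}$ with $\operatorname{supp}\nu\subset\argmax W_\nu$. Your two examples only show that this bound is sharp. The paper uses three ingredients.
(i) Since $W_\nu$ is constant on $\operatorname{supp}\nu$ and $K_A$ is positive definite, $I_A(\nu)\le I_A(\eta)$ for every $\eta\in\mathcal P(\operatorname{supp}\nu)$.
(ii) Suppose $\operatorname{supp}\nu\subset\{x_1\ge0\}$ and $\nu\ne\delta_{e_1}$. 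A support point $x$ of minimal height $c=x_1\in(0,1)$ would have $\partial_vW_\nu(x)>0$ for $v\propto e_1-cx$, using $y_1\ge c$ on the support and $\lambda_2<\lambda_1$; this contradicts maximality. Hence the support meets $\{x_1=0\}$, and (i) with $\eta=\delta_x$ gives $I_A(\nu)\le\mathrm{e}^{\lambda_2}$.
(iii) Otherwise pick $x,y\in\operatorname{supp}\nu$ with $x_1y_1\le0$ and weight them proportionally to $(|y_1|,|x_1|)$, so the $e_1$-component of the mean cancels. Then bound $I_A$ of this two-point measure by a convex combination of $\mathrm{e}^{\lambda_2}$ and $\cosh\lambda_1$, using convexity of $\exp$ together with $\langle x,Ay\rangle\le-\lambda_1|x_1y_1|+\lambda_2\sqrt{(1-x_1^2)(1-y_1^2)}$.

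\textbf{Minor point in Step~1.} Attainment of the maximum at extreme points does not show that $\delta_{\pm e_1}$ are the \emph{only} maximizers of $I_A$, which your Step~4 limit argument needs. Use instead $\langle x,Ay\rangle\le\lambda_1$, with equality iff $x=y=\pm e_1$.
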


Informally speaking, \zcref{prop:energy-landscape} means that unless the energy gap of the input token distribution $\mu_0$ is large enough to exceed $\Delta_0$, $\mu_0$ is in a region that is attracted to the set of minimizers $\Bab{\mubar^\pm}$. 
This observation will be useful in proving the key lemma in \zcref{sec:exp_turnpike}.

%% file: contents/theorems/adjoint.tex
\subsection{Adjoint analysis in the first step in gradient descent}\label{sec:adjoint}

In this section, we investigate the principles underlying \zcref{phase:escape}.
Specifically, we show that even for a model trained for just a single step of GD, an energy gap can grow near the final layer.

For simplicity, consider the following situation: the input token distribution is the minimizer $\mubar\coloneqq\mubar^+$ without loss of generality, and the loss function is sufficiently smooth; specifically, assume that the linear functional derivative \(g_\ell\coloneqq\fdv{\ell}{\mu}[\barmu]\colon\dSphe\to\R\), defined in the sense of \parencite{Carmona2018}, is $C^\infty$.
Suppose the parameters in GD are initialized to $W^{\boldsymbol{0}}\coloneqq0$.
Note that in this case $\mu^{W^{\boldsymbol{0}}}_t\equiv \mubar$.
Furthermore, for the FFN $u_W$, we assume the parameter-linear parameterization used in \parencite{Barboni22,Marion23,Scagliotti2023,IsobeOkumura24}:
\begin{assumption}[Linear-in-parameters FFN]
\label{ass:linear-ffn-parametrization}
    The FFN $u_W\colon\R^d\to\R^d$ has the form
    \(
                u_W(x)=W\sigma(x),
        W\in\R^{d\times p},
    \)
    where \(p\in\Z_+\) is the number of hidden features and \(\sigma\colon\R^d\to\R^p\) is a smooth function with a constant $C_\sigma\coloneqq\nicefrac{\Vab{\sigma}_{L^\infty(\dSphe)}^2}{2}<\infty$.
    Thus, the parameters to be optimized in \eqref{eq:mf-optimal-control} become an $\R^{d \times p}$-valued $L^2$ path $(W_t)_{t\in(0,T)}\in L^2(0,T;\R^{d \times p})$.
\end{assumption}
Let us consider the solution $\mu^{W^{\oneGD}}$ to equation \eqref{eq:ODE-Net} under the initial value $\mu_0=\mubar$ and the parameters $W^{\oneGD}\coloneqq W^{\boldsymbol{0}}-\alpha\nabla J(0)$ updated by one step with a learning rate $\alpha>0$.
Recall that the (Fréchet) gradient $\nabla J(0)\in L^2(0,T)$ of $J(W)$ with respect to $W\in L^2(0,T)$ can be described via the adjoint method, which is the continuous-time version of backpropagation (see \parencite{NeuralODE,Stefano20} and \zcref{sec:proof_adjoint}).
We obtain the following expression:
\begin{lemma}\label{lem:1GDparam}
Under \zcref{ass:linear-ffn-parametrization} and the regularity of $g_\ell$, $W^{\oneGD}\in L^2(0,T;\R^{d\times p})$ is given by 
\begin{equation}\label{eq:1GDparam}
\begin{aligned}
    W^{\oneGD}_t=-\alpha\int\nabla\phi_t(x)\sigma(x)^\top\d\bar{\mu}(x)\in\R^{d\times p}, &&t\in(0,T),
\end{aligned}
\end{equation}
where $(\phi_t)_{t\in[0,T]}\in C^1([0,T];C(\dSphe))\cap C([0,T];C^2(\dSphe))$ is the unique (classical) solution of the {backward weighted diffusion equation} given by
\begin{equation}\label{eq:bkwd_wDiffusion}
\begin{aligned}
     \partial_t\phi_t+\Hop\wLaplacian\phi_t=0, &&\phi_T=g_\ell, 
\end{aligned}
\end{equation}
with $\wLaplacian\phi\coloneq{\rhobar}^{-1}\Div(\rhobar\nabla\phi)$ for $\phi\in H^2(\dSphe)\cap L^2_0(\mubar)$.
\end{lemma}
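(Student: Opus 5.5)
The plan is to compute the Fréchet gradient $\nabla J(0)$ by the adjoint method, linearizing \eqref{eq:ODE-Net} around the stationary trajectory $\mu^{W^{\boldsymbol 0}}_t\equiv\mubar$. Then $W^{\oneGD}=-\alpha\nabla J(0)$ gives \eqref{eq:1GDparam}.

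At $W=0$ the regularization term contributes nothing to the gradient, since $\nabla\frac{\regparam}{2}\Vab{W}^2_{L^2}=\regparam W=0$. So only $\ell(\mu^W_T)$ needs to be differentiated.

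\textbf{Step 1: linearized state equation.} Take a perturbation $\delta W\in L^2(0,T;\R^{d\times p})$ and write $\mu_t^{s\delta W}=\mubar+s\nu_t+o(s)$. Under \zcref{ass:linear-ffn-parametrization}, the first variation $\nu_t$ solves
\begin{equation}
\partial_t\nu+\Div\pab{\nu\Proj\chi_A[\mubar]}+\Div\pab{\mubar\Proj\chi_A[\nu]}-\eps\Laplacian\nu=-\Div\pab{\mubar\Proj\delta W_t\sigma},\qquad \nu_0=0.
\end{equation}
Set $V(x)\coloneqq\int\e^{\aab{x,Ay}}\d\mubar(y)$. By the Gibbs equation of \zcref{prop:energy-landscape}, $\rhobar\propto\e^{V/\eps}$, and one checks $\Proj_x\chi_A[\mubar](x)=\nabla V(x)$. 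Therefore $\nabla\rhobar=\rhobar\nabla V/\eps$.

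To make this rigorous I would prove a stability estimate: $\sup_t\|\mu^W_t-\mubar-\nu_t\|=o(\|W\|_{L^2})$ in a weak norm such as $(C^2)^*$ or $H^{-1}$. This comes from Grönwall applied to the nonlinear equation, using smoothness of the kernel $\e^{\aab{x,Ay}}$ and of $\sigma$. Combined with the smoothness of $g_\ell$, it gives $\frac{\d}{\d s}\ell(\mu_T^{s\delta W})|_{s=0}=\int g_\ell\,\d\nu_T$.

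\textbf{Step 2: adjoint equation and identification with $\Hop\wLaplacian$.} Let $\phi$ solve the backward problem, with $\phi_T=g_\ell$, chosen so that $\frac{\d}{\d t}\int\phi_t\d\nu_t=\int\nabla\phi_t\cdot\delta W_t\sigma\,\d\mubar$. Then
\[
\int g_\ell\d\nu_T=\int_0^T\operatorname{tr}\pab[Big]{\delta W_t^\top\int\nabla\phi_t\sigma^\top\d\mubar}\d t,
\]
which identifies $\nabla J(0)_t=\int\nabla\phi_t\sigma^\top\d\mubar$.

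Pairing the linearized equation with $\phi$ and integrating by parts gives the following:
\begin{itemize}
\item The local terms give $\eps\Laplacian\phi+\aab{\nabla V,\nabla\phi}=\eps\rhobar^{-1}\Div(\rhobar\nabla\phi)=\eps\wLaplacian\phi$.
\item The nonlocal term gives $\int\nabla\phi\cdot\nabla_x\!\int\e^{\aab{x,Ay}}\d\nu(y)\,\d\mubar(x)$. Integrating by parts against $\mubar$ turns this into $-\int\!\int\e^{\aab{x,Ay}}\wLaplacian\phi(x)\d\mubar(x)\,\d\nu(y)$, i.e.\ a term $-K\wLaplacian\phi$ with $Kh\coloneqq\int\e^{\aab{\cdot,Ay}}h(y)\d\mubar(y)$.
\end{itemize}
So the adjoint equation reads $\partial_t\phi+\eps\wLaplacian\phi-K\wLaplacian\phi=c(t)$ for some function $c(t)$ of time only. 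Only $\nabla\phi$ enters the gradient, and $\wLaplacian\phi\in L^2_0(\mubar)$. Hence I may subtract a time-dependent constant and replace $K$ by $\varPi_0K$, which yields exactly $\partial_t\phi+\Hop\wLaplacian\phi=0$. The normalization is to work with $\phi-\int\phi\,\d\mubar$; this is why the statement places $\phi$ in $L^2_0(\mubar)$.

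I expect the careful sign and constant bookkeeping in this step to be the main error-prone point. The weighted integration by parts, $\int\nabla f\cdot\nabla\phi\,\d\mubar=-\int f\,\wLaplacian\phi\,\d\mubar$, is the key identity throughout.

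\textbf{Step 3: well-posedness and regularity of $\phi$.} Write $\Hop\wLaplacian=\eps\wLaplacian-\varPi_0K\wLaplacian$. The operator $K\wLaplacian$ is bounded on $C(\dSphe)$, and indeed smoothing. To see this, use the same integration by parts to get $K\wLaplacian\phi(x)=\int\wLaplacian^{(y)}\e^{\aab{x,Ay}}\phi(y)\d\mubar(y)$, whose kernel is smooth. Therefore the equation is the uniformly elliptic, smooth-coefficient operator $\eps\wLaplacian$ plus a bounded perturbation.

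Standard analytic-semigroup theory (bounded perturbation of the generator) then gives a unique classical solution for the smooth terminal datum $g_\ell$, with $\phi\in C^1([0,T];C(\dSphe))\cap C([0,T];C^2(\dSphe))$. Alternatively, one can use the $L^2(\mubar)$-structure: by \eqref{eq:PD_Hessian} $\Hop$ is positive and bounded, and $\Hop\wLaplacian$ is similar to the nonpositive self-adjoint operator $\Hop^{1/2}\wLaplacian\Hop^{1/2}$.

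Uniqueness follows from the same semigroup argument. This in turn justifies the duality computation of Step 2, and completes the derivation of \eqref{eq:1GDparam}.
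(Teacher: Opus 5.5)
Your proposal is correct and follows the paper's route: linearize at the stationary state $\mubar$, take the $\mubar$-mean-zero representative of $g_\ell$, and identify the adjoint generator as $\Hop\wLaplacian$. Since the regularizer has zero derivative at $W=0$, you then read off $\nabla J(0)_t=\int\nabla\phi_t\sigma^\top\d\mubar$. The paper works with the relative density $h=\nu/\mubar$, with $\partial_t h=\wLaplacian\Hop h+\Bbar W$, and the abstract adjoint $\Bbar^\ast$, where you use $\nu$ and explicit weighted integration by parts; your differentiability estimate and bounded-perturbation well-posedness argument fill in steps the paper only asserts.
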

Roughly speaking, $\phi_t$ and $\nabla\phi_t$ correspond to the linear functional and Wasserstein gradient of the loss $\ell$ at layer $t$, namely $\fdv{\ell}{\mu_t}$ and $\nabla\fdv{\ell}{\mu_t}$, respectively. The fact that this follows from \eqref{eq:bkwd_wDiffusion} implies that, as shown in \zcref{fig:token-dist-adjoint}, the gradient can increase exponentially as we approach the final layer.
However, this does not hold if $g_\ell=0$, i.e., if the loss $\ell$ is stationary at the minimizer $\mubar$ of $\Energy$.
Based on this analysis, the following estimate follows: the energy gap may increase near the final layer:
\begin{CatchyBox}{}

\begin{proposition}[Final-layer escape in one-step gradient descent]
\label{prop:one-step-escape}
Suppose \zcref{ass:nondegenerate-A,ass:linear-ffn-parametrization}.
Let $\mu^{W^{\oneGD}}$ be the solution to equation \eqref{eq:ODE-Net} with initial value $\mu_0=\mubar$ under the parameter $W^{\oneGD}$ in \eqref{eq:1GDparam}.
Then, there exist constants $\overline{\alpha}, r, C_R > 0$ given by \zcref{prop:small-alpha-nonlinear-response}, such that for any $\alpha \in (0, \overline{\alpha})$ and $t \in (T - r, T)$, it holds that
        \[
            \Energygap(\mu_{t}^{W^\oneGD})
            \ge
            \frac{1}{4}
            \pab{
            \alpha{\varOmega_{t,T}(g_\ell)}
            \Vab{g_\ell}_{\Hop^{-1}}
            \exp\pab{-(T-t)\Rayleigh}
            -\alpha^2{C_R}}^2_+.
        \]
Here, if \(g_\ell=0\), the right-hand side is understood as \(0\), $\Vab{g_\ell}_{\Hop^{-1}} \coloneqq \sqrt{\aab{g_\ell, \Hop^{-1}g_\ell}}$, and $\varOmega_{t,T},R_{\barrho}$ is defined for $g_\ell\neq0$ by
\begin{equation}
\label{eq:Rayleighratio}
\begin{aligned}
    \varOmega_{t,T}(g_\ell) \coloneqq\frac{\Vab{\nabla J(0)}_{L^2(0,t)}^2}{\Vab{\phi_{t}}^2_{\Hop^{-1}}},&&\Rayleigh \coloneqq \frac{\Vab{\nabla g_\ell}^2_{L^2(\mubar)}}{\Vab{g_\ell}_ {\Hop^{-1}}^2}=\left.{\Vab{\nabla\fdv{\ell}{\mu}[\mubar]}^2_{L^2(\mubar)}}\middle/ {\Vab{\fdv{\ell}{\mu}[\mubar]}_{\Hop^{-1}}^2}\right..
\end{aligned}
\end{equation}
\end{proposition}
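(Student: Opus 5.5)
We linearize the map $\alpha\mapsto\mu_t^{W^{\oneGD}}$ around $\alpha=0$ and control the energy gap from below by the Hessian quadratic form of \zcref{prop:energy-landscape}.

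\textbf{Step 1: expansion of the solution.} Write $\mu_t^{W^{\oneGD}}=(1+\alpha h_t+\alpha^2 q_t)\mubar$ with $h_t\in L^2_0(\mubar)$. The first-order response $h_t$ solves the linearized Fokker--Planck equation around the stationary state $\mubar$. Since $\mubar$ is a critical point of $\Energy$, this equation reads
\[
\partial_t h_t=\wLaplacian\Hop h_t+\rhobar^{-1}\Div\pab{\rhobar\Proj u_{\dot W_t}},
\qquad h_0=0,
\qquad \dot W_t=-\int\nabla\phi_t\,\sigma^\top\d\mubar .
\]
The remainder satisfies $\Vab{q_t}_{\Hop^{-1}}\le 2C_R$ uniformly for $\alpha<\overline{\alpha}$ and $t\in(T-r,T)$. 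I take this from the nonlinear-response estimate (\zcref{prop:small-alpha-nonlinear-response}) via a Gr\"onwall/parabolic-regularity argument, using the Lipschitz bound $C_\sigma$ on the forcing. This step is routine but technical.

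\textbf{Step 2: lower bound on $\Vab{h_t}_{\Hop^{-1}}$ by duality with the adjoint.} The operator $\Hop\wLaplacian$ is the $L^2(\mubar)$-adjoint of $\wLaplacian\Hop$. Hence for the backward solution $\phi$ of \eqref{eq:bkwd_wDiffusion},
\[
\aab{\phi_t,h_t}_{L^2(\mubar)}=\int_0^t\aab{\nabla\phi_s,u_{\dot W_s}}_{L^2(\mubar)}\d s
=\int_0^t\Vab{\dot W_s}_\frob^2\d s=\Vab{\nabla J(0)}^2_{L^2(0,t)} .
\]
The last identity uses \zcref{lem:1GDparam} and the fact that $\nabla J(0)=-W^{\oneGD}/\alpha$. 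Cauchy--Schwarz in the pairing $\Hop^{-1}$/$\Hop$ then gives
\[
\Vab{h_t}_{\Hop}\ge \Vab{\nabla J(0)}^2_{L^2(0,t)}\,/\,\Vab{\phi_t}_{\Hop^{-1}}
=\varOmega_{t,T}\Vab{\phi_t}_{\Hop^{-1}} .
\]

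\textbf{Step 3: backward decay of $\phi$.} It remains to bound $\Vab{\phi_t}_{\Hop^{-1}}\ge \Vab{g_\ell}_{\Hop^{-1}}\e^{-(T-t)\Rayleigh}$. Set $F(s)=\Vab{\phi_s}^2_{\Hop^{-1}}$. Then
\[
F'(s)=2\Vab{\nabla\phi_s}^2_{L^2(\mubar)}\ge0,
\]
so $F$ increases toward $T$. The Rayleigh quotient $R(s)=\Vab{\nabla\phi_s}^2/F(s)$ is monotone along this self-adjoint (in the $\Hop^{-1}$ geometry) backward flow, by the standard log-convexity argument. Its value at $s=T$ is $\Rayleigh$, so $F'\le 2\Rayleigh F$ on $[t,T]$, which yields
\[
F(t)\ge F(T)\,\e^{-2(T-t)\Rayleigh}.
\]
\textbf{This monotonicity of the quotient is the main obstacle.} It requires that $\wLaplacian$ and $\Hop$ combine into an operator that is self-adjoint in a suitable inner product. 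The natural choice is $L^2$ with weight $\Hop^{-1}$, where $-\Hop\wLaplacian$ is self-adjoint and nonnegative. The proof then reduces to the spectral-theorem convexity of $\log F$.

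\textbf{Step 4: conclusion.} Expand the energy gap at the minimizer:
\[
\Energygap(\mu_t)=\tfrac12\aab{\alpha h_t+\alpha^2q_t,\Hop(\alpha h_t+\alpha^2q_t)}+O(\text{cubic}).
\]
For small $\alpha$ this is $\ge\tfrac14\Vab{\alpha h_t+\alpha^2 q_t}^2_{\Hop}$, absorbing the cubic term into the constants via \eqref{eq:PD_Hessian}. The reverse triangle inequality gives
\[
\Vab{\alpha h_t+\alpha^2 q_t}_{\Hop}\ge\bigl(\alpha\Vab{h_t}_{\Hop}-\alpha^2C_R\bigr)_+ .
\]
Substituting Steps 2 and 3 yields the claimed bound. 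When $g_\ell=0$ we have $\phi\equiv0$ and $W^{\oneGD}=0$, so $\mu_t\equiv\mubar$ and the bound holds trivially.
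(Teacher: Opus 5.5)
Your proposal is correct and follows the paper's proof essentially step for step. The paper likewise invokes \zcref{prop:small-alpha-nonlinear-response} to reduce to the linear response $-\alpha\cGram_t\phi_t$; that proposition gives the remainder bound directly in the $\Hop$-norm you need in Step 4, not the $\Hop^{-1}$-norm you state in Step 1. It then gets $\Vab{\Hop^{1/2}\cGram_t\phi_t}_{L^2(\mubar)}\ge\varOmega_{t,T}(g_\ell)\Vab{\phi_t}_{\Hop^{-1}}$ by the same Cauchy--Schwarz argument in the $\Hop$/$\Hop^{-1}$ pairing, and your duality identity is exactly $\aab{\phi_t,\cGram_t\phi_t}=\Vab{\nabla J(0)}^2_{L^2(0,t)}$, up to a harmless sign slip in your forcing term. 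It concludes with the same chart bound $\Energygap\ge\frac14\Vab{h}_{\Hop}^2$. The only difference is Step 3: the paper applies Jensen's inequality to the spectral measure of $g_\ell$ for $L_\eps=-\Hop\wLaplacian$ in the $\Hop^{-1}$ geometry. That is precisely the tangent-line inequality at $s=T$ for your convex function $\log F$, so the monotone-Rayleigh-quotient step you flag as the main obstacle follows in one line from the spectral theorem.
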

\end{CatchyBox}
The proof is given in \zcref{sec:proof_adjoint}, where we use the Jensen inequality as a key step.
This result shows that, already in the initial steps of GD, energy can increase exponentially with respect to $t$ near the terminal layer.
The Rayleigh quotient $R(g_l)$ governs the rate of escape.
\begin{figure}[t]
  \centering

  \begin{minipage}[t]{0.49\linewidth}
    \centering
    \resizebox{\linewidth}{!}{\input{contents/theory_figs/adjint_state}}
    \captionof{figure}{Token distribution $\mu_t$ and adjoint gradient $\phi_t$.}
    \label{fig:token-dist-adjoint}
  \end{minipage}
  \hfill
  \begin{minipage}[t]{0.49\linewidth}
    \centering
    \includegraphics[width=\linewidth]{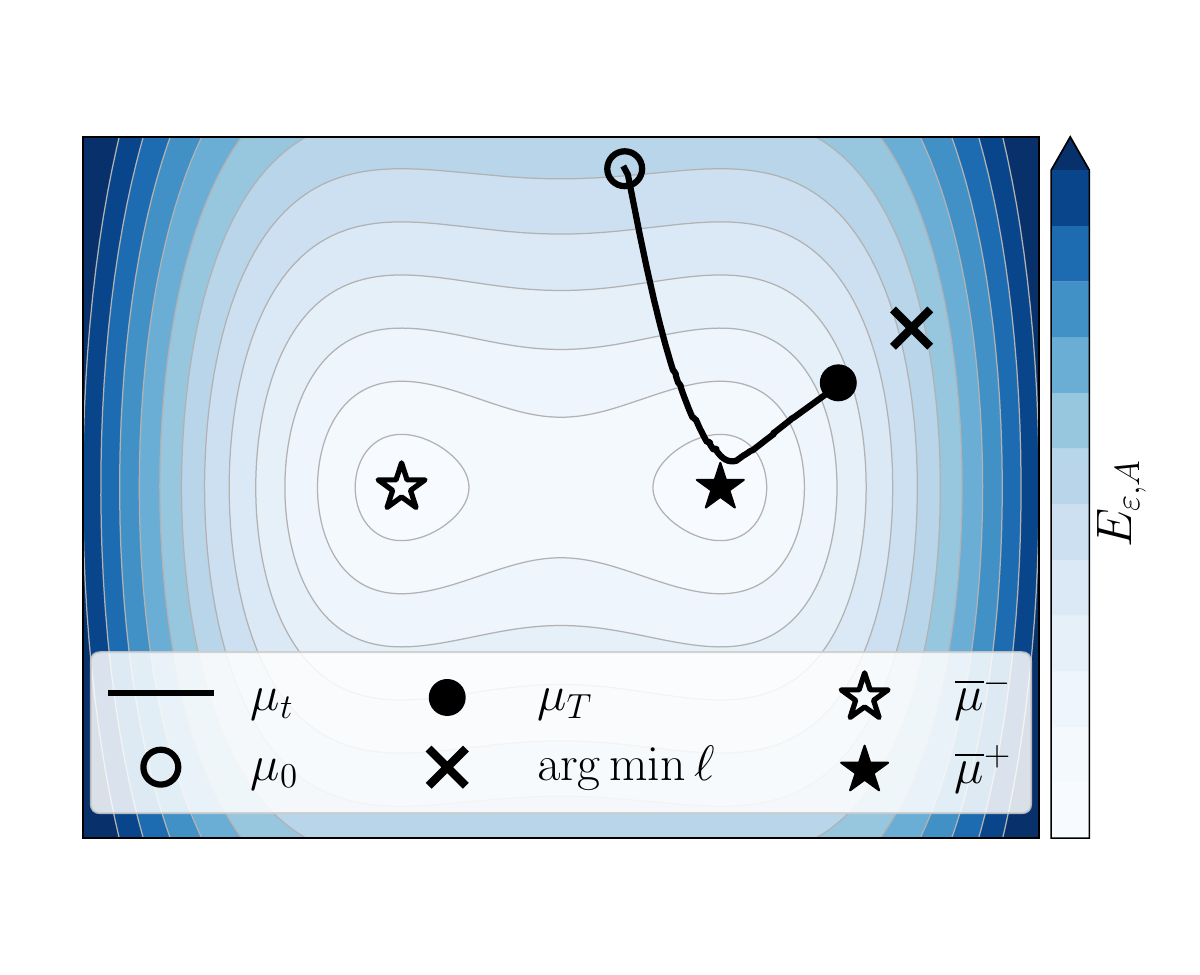}
    \captionof{figure}{Energy-landscape view of terminal escape.
    }
    \label{fig:double-well}
  \end{minipage}

\end{figure}


%% file: contents/theory_figs/adjint_state.tex

\definecolor{stateblue}{RGB}{0,116,188}
\definecolor{adjred}{RGB}{213,94,0}
\definecolor{softred}{RGB}{213,94,0}

\begin{tikzpicture}[
  x=1cm,y=1cm,
  >=Stealth,
  line cap=round,line join=round,
  every node/.style={font=\sffamily},
  statecurve/.style={black,line width=1.15pt},
  adjcurve/.style={adjred,line width=1.20pt,dash pattern=on 5pt off 3pt},
  dot/.style={circle,fill=black,inner sep=0pt,minimum size=7.0pt},
  rdot/.style={circle,fill=adjred,inner sep=0pt,minimum size=7.0pt},
  dashedloss/.style={stateblue,dashed,line width=1.05pt,-{Stealth[length=2.9mm,width=2.2mm]}},
  bluedash/.style={stateblue,dashed,line width=1.05pt},
  gradarrow/.style={stateblue,line width=2.4pt,-{Stealth[length=4.2mm,width=3.2mm]}},
  redarrow/.style={adjred,line width=1.20pt,dash pattern=on 5pt off 3pt,-{Stealth[length=3.6mm,width=2.8mm]}},
  axisline/.style={black,line width=0.70pt},
  tick/.style={black,line width=0.85pt},
  paneltext/.style={black,font=\sffamily\fontsize{19}{21}\selectfont},
  timetext/.style={black,font=\sffamily\fontsize{16}{18}\selectfont},
  mathnote/.style={black!65,font=\sffamily\fontsize{16}{18}\selectfont},
  dlabel/.style={stateblue,font=\sffamily\fontsize{16}{18}\selectfont},
  alabel/.style={softred,font=\sffamily\fontsize{16}{18}\selectfont}
]

\path[use as bounding box] (-0.30,-3.10) rectangle (13.20,4.30);

\coordinate (z0)  at (1.55,2.95);
\coordinate (zi)  at (4.85,1.75);
\coordinate (zip) at (7.45,1.39);
\coordinate (zN)  at (11.40,1.20);

\coordinate (a0)  at (1.55,1.20);
\coordinate (ai)  at (4.85,2.05);
\coordinate (aip) at (7.45,2.67);
\coordinate (aN)  at (11.40,2.95);

\draw[statecurve,-{Stealth[length=3.5mm,width=2.6mm]}] plot[smooth] coordinates {
  (1.55,2.95) (2.34,2.54) (3.13,2.22) (4.11,1.92) (4.85,1.75) (6.08,1.54) (7.45,1.39) (8.64,1.31) (9.82,1.25) (10.71,1.22) (11.40,1.20)
};

\foreach \p in {z0,zN}{\node[dot] at (\p) {};}

\node[mathnote,anchor=south] at ($(z0)+(0.00,0.28)$) {$\mu_0$};
\node[mathnote,anchor=south] at ($(zN)+(0.00,0.28)$) {$\mu_T$};

\draw[rounded corners=2pt,fill=white,draw=black!25] (1.25,-1.40) rectangle (11.95,-0.52);
\draw[statecurve] (1.70,-0.96) -- (2.70,-0.96);
\node[anchor=west,font=\sffamily\fontsize{16}{18}\selectfont] at (2.95,-0.96) {token dist. $\mu$};
\draw[adjcurve] (6.55,-0.96) -- (7.55,-0.96);
\node[anchor=west,text=adjred,font=\sffamily\fontsize{16}{18}\selectfont] at (7.80,-0.96) {gradient $\phi_t$};

\draw[redarrow] plot[smooth] coordinates {
  (11.40,2.95) (10.61,2.54) (9.82,2.22) (8.84,1.92) (7.66,1.67) (6.28,1.47) (4.90,1.35) (3.72,1.28) (2.73,1.23) (2.04,1.21) (1.55,1.20)
};

\foreach \p in {a0,aN}{\node[rdot] at (\p) {};}

\node[alabel,anchor=south] at ($(a0)+(0.00,0.28)$) {$\phi_0$};
\node[alabel,anchor=south] at ($(aN)+(0.00,0.30)$) {$\textstyle\phi_T=\frac{\delta\ell}{\delta\mu}$};


\draw[axisline,-{Stealth[length=3.4mm,width=2.6mm]}] (-0.25,0.35) -- (13.12,0.35);
\foreach \x in {1.55,4.85,7.45,11.40}{\draw[tick] (\x,0.04) -- (\x,0.65);}

\node[timetext,anchor=north] at (1.55,0.12) {$0$};
\node[timetext,anchor=north] at (11.40,0.12) {$T$};
\node[black,font=\sffamily\fontsize{22}{24}\selectfont] at (12.85,-0.22) {$t$};

\end{tikzpicture}

%% file: contents/theorems/exp_turnpike.tex
\subsection{Exponential turnpike bound for optimized dynamics}\label{sec:exp_turnpike}

Finally, we study the token distribution \(\mu_t^{\ast}\coloneqq\mu_t^{W^{\ast}}\) induced by a global minimizer \(W^{\ast}\) of the variational problem \eqref{eq:mf-optimal-control}.
We prove an exponential turnpike-type bound showing that \(\Energygap(\mu_t^{\ast})\) becomes exponentially small away from the two boundary layers. 
\zcref{fig:double-well} schematically shows an optimized trajectory \((\mu_t^\ast)_t\): after staying near \(\Minimizers\), it escapes toward lower \(\ell\) while retaining that clustering bias, and reaches the output \(\mu_T^\ast\).
The theorem below formalizes the corresponding plateau--boundary-layer structure.
As in \parencite{IsobeOkumura24}, the variational problem \eqref{eq:mf-optimal-control} admits a minimizer under \zcref{ass:linear-ffn-parametrization}.
To state the theorem, we impose that the initial energy gap and the oscillation of the terminal loss are not too large:
\begin{assumption}[energy-threshold conditions]\label{ass:nolargegap}
    The initial energy gap $\Energygap(\mu_0)$ and  $\osc\ell$ satisfy
    \(
       \mathfrak E_{\textup{max}}\coloneqq \Energygap (\mu_0)+\frac{2C_\sigma}{\regparam}\osc\ell<\Delta_0,
       \osc\ell<\frac{\regparam}{8C_\sigma}\Delta_0
    \)
    where $\Delta_0$ is given by \zcref{prop:energy-landscape}.
\end{assumption}

The next lemma formulates the main consequence of \zcref{ass:nolargegap}: for sufficiently long depth $T$, $\mu^\ast_t$ enters the sublevel set \(\Bab{\Energygap(\mu)\le\eta}\) at some time.
For a small $\eta>0$, the sublevel set has exactly two connected components.
We denote by \(\gC_+(\eta),\gC_-(\eta)\) the components containing \(\mubar^+\) and \(\mubar^-\), respectively.
The next lemma makes this precise.
\begin{CatchyBox3}{}
\begin{lemma}[entry into one low-energy component]\label{lem:same-component-entry}
Assume \zcref{ass:nolargegap}.
Then, there exists $\bareps>0$ such that for every \(\eps\in(0,\bareps)\), there exist \(\eta, r_\ast>0\) and \(L_\eps<\infty\) such that the following holds: For every sufficiently large \(T\) and every optimal pair \((\mu^{\ast},W^{\ast})\), there exist times \(s_-\in[0,L_\eps]\), \(s_+\in[T-L_\eps,T]\), and a sign \(\varsigma\in\{+,-\}\) such that \( \mu_{s_-}^{\ast},\mu_{s_+}^{\ast}\in\gU_{\varsigma}\cap\gC_{\varsigma}(\eta)\) where $\gU_\varsigma\subset\PdSphe$ is defined for each $\varsigma$ by
\begin{equation}\label{eq:local-chart-basins-def}
\gU_+\coloneqq\Set{(1+h)\mubar^+\in\PdSphe|h\in C^\infty(\dSphe), \int h\d\mubar^+=0, \Vab{h}_{L^\infty}<r_\ast},
\end{equation}
and set \(\gU_-\coloneqq S_\#\gU_+\).
\end{lemma}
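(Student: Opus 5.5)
Our plan is to combine an energy-dissipation identity along the controlled flow with the cost bound coming from optimality, and then to use the landscape of \zcref{prop:energy-landscape} to localize the trajectory.

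\textbf{Step 1 (cost bound from optimality).} Compare $W^\ast$ with the zero control. Since $J(W^\ast)\le J(0)$, we get
\[
\frac{\regparam}{2}\Vab{W^\ast}^2_{L^2(0,T)}\le \ell(\mu^0_T)-\ell(\mu^\ast_T)\le\osc\ell .
\]
This bound is uniform in $T$.

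\textbf{Step 2 (energy dissipation).} Using the gradient-flow form \eqref{eq:gradient_flow_form} and writing $v_t\coloneqq\nabla\fdv{\Energy}{\mu}[\mu_t]$, we have
\[
\frac{\d}{\d t}\Energy(\mu_t)=-\int\vab{v_t}^2\d\mu_t+\int\aab{v_t,\Proj u_{W_t}}\d\mu_t .
\]
By Young's inequality and $\vab{u_{W_t}}^2\le\Vab{W_t}_\frob^2\Vab{\sigma}^2_{L^\infty}=2C_\sigma\Vab{W_t}_\frob^2$, this gives
\[
\frac{\d}{\d t}\Energy(\mu_t)+\tfrac12 D(\mu_t)\le C_\sigma\Vab{W_t}^2_\frob ,\qquad D(\mu)\coloneqq\int\vab{v}^2\d\mu .
\]
Integrating and inserting Step 1, for all $t\in[0,T]$ we obtain
\[
\Energygap(\mu^\ast_t)+\tfrac12\int_0^tD\le\Energygap(\mu_0)+\tfrac{2C_\sigma}{\regparam}\osc\ell=\mathfrak E_{\max}<\Delta_0 .
\]
The same argument on any subinterval $[a,b]$ yields $\Energygap(\mu_b)\le\Energygap(\mu_a)+\frac{2C_\sigma}{\regparam}\osc\ell$. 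The second condition in \zcref{ass:nolargegap} keeps this increment below $\Delta_0/4$, and this is what we will use to keep the two entry times in the same component.

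\textbf{Step 3 (entry into the sublevel set).} The total dissipation $\int_0^T D\le2\mathfrak E_{\max}$ is bounded uniformly in $T$, so on $[0,L_\eps]$ there is a time with $D(\mu_t)\le 2\mathfrak E_{\max}/L_\eps$, and similarly on $[T-L_\eps,T]$. We then need a quantitative \emph{no-small-dissipation-away-from-critical-points} statement. On the compact set $\{\Energygap\le\mathfrak E_{\max}\}$, $D(\mu)\le\delta$ should force $\mu$ to be close to $\Crit\Energy$; we would prove this by compactness plus parabolic regularity of the Gibbs map. By \eqref{eq:1st_energy_gap} (for $\eps<\bareps$ we have $\Delta_{\rm crit}(\eps)>\mathfrak E_{\max}$), the only critical points in this sublevel set are $\mubar^\pm$. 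Hence $\mu_t$ lies within $W_2$-distance $o(1)$ of $\mubar^\pm$, and by lower semicontinuity and continuity of $\Energy$ near the smooth minimizers, $\Energygap(\mu_t)\le\eta$. We choose $L_\eps$ so that $\delta=2\mathfrak E_{\max}/L_\eps$ is small enough. \textbf{This step is the main obstacle.} Entropy is only lower semicontinuous, and small Fisher-information-type dissipation must be upgraded to closeness in $L^\infty$ of densities. For the upgrade, at the chosen time we would use parabolic smoothing of \eqref{eq:ODE-Net}, with the drift bounded in terms of $\Vab{W}$ on a unit interval, to get $C^k$ bounds on the density. Compactness in $C^k$ then gives density closeness $\Vab{h}_{L^\infty}<r_\ast$, i.e. membership in $\gU_\varsigma$, possibly after shifting the time slightly within the window.

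\textbf{Step 4 (same sign).} Take $\eta<\Delta_0/2$ so that $\{\Energygap\le\eta\}$ splits into $\gC_\pm(\eta)$, with $\eta$ also below the increment threshold. Suppose $\mu_{s_-}\in\gC_+$ and $\mu_{s_+}\in\gC_-$. By continuity of $t\mapsto\mu_t$, the path must cross the mountain pass between the two wells, whose height, by the landscape and \eqref{eq:1st_energy_gap}, is at least about $\Delta_0$ (a saddle critical point). But Step 2 gives $\sup_{[s_-,s_+]}\Energygap\le\eta+\frac{2C_\sigma}{\regparam}\osc\ell<\Delta_0/2+\Delta_0/4<\Delta_0$, a contradiction. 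A rigorous version needs the mountain-pass level to be at least $\Delta_{\rm crit}$; we would justify this through the Palais--Smale-type compactness used in Step 3. Hence both times lie in the same component $\gC_\varsigma(\eta)$, which completes the proof.
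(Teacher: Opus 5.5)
Your proposal is correct. Steps 1--3 follow the paper's route:
\begin{itemize}
\item the optimality budget $\frac{\regparam}{2}\Vab{W^\ast}_{L^2(0,T)}^2\le\osc\ell$;
\item the forced energy--dissipation inequality, giving $\int_0^T\Dissipation(\mu_t^\ast)\,dt\le 2\mathfrak E_{\max}$;
\item a uniform slope gap off $\gC_+(\eta)\sqcup\gC_-(\eta)$, from compactness plus the absence of non-minimal critical points with $\Energygap\le\mathfrak E_{\max}<\Delta_{\mathrm{crit}}(\eps)$ (\zcref{prop:slope-gap-from-critical-exclusion});
\item averaging, followed by a parabolic upgrade to the density chart (\zcref{lem:parabolic-chart-entry}).
\end{itemize}
The only difference there is how continuity of the entropy is obtained. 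The paper gets it statically, from the Fisher-information bound on $\sqrt{\rho}$ that small $\Dissipation$ provides, and uses the positive-time H\"older bound only for the $L^\infty$ upgrade. Using parabolic smoothing for both also works. In that case, average over $[\tau_{\mathrm{reg}},L_\eps]$ from the outset rather than shifting the time afterwards, since a shift loses the small-dissipation property. As you note yourself, $W_2$-closeness to $\mubar^\pm$ alone does not give small $\Energygap$; the density-level compactness is what supplies it.

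The genuine divergence is Step 4.
\begin{itemize}
\item \textbf{Your route.} You rule out switching wells by a mountain-pass argument at level $\Delta_{\mathrm{crit}}(\eps)$. This requires a deformation lemma on $\PdSphe$: narrow continuity of the uncontrolled gradient-flow semigroup in the initial datum on the sublevel set (e.g.\ by synchronous coupling of the McKean--Vlasov SDE), forward invariance of $\gC_\pm(\eta)$, and the slope gap at every level below $\Delta_{\mathrm{crit}}(\eps)$ as a Palais--Smale substitute. This can be made rigorous.
\item \textbf{The paper's route.} It avoids all of this with an explicit order parameter. The weak wells are placed inside $\{m_1>0\}$ and $\{m_1<0\}$, so continuity of $t\mapsto m_1(\mu_t^\ast)$ forces a crossing of $\{m_1=0\}$. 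On that set a purely static bound (convexity of $I_A$, two-point extreme points of the moment set, and the equator two-point estimate) gives $\Energygap\ge\Delta_0-C\eps|\log\eps|\ge\frac34\Delta_0$ (\zcref{cor:finite-temp-sign-changing-barrier}). Combined with $\eta\le\Delta_0/4$ and your increment bound $\frac{2C_\sigma}{\regparam}\osc\ell<\Delta_0/4$, this gives the contradiction. The paper phrases that increment bound as a zero-control competitor after $s_-$; the two are equivalent.
\end{itemize}
The paper's route is elementary and quantitative. Yours needs no order parameter and uses only critical-level information, so it would transfer to landscapes without the antipodal/first-moment structure, at the price of the dynamical deformation machinery.

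One small point: the splitting of $\{\Energygap\le\eta\}$ into two components for $\eta<\Delta_0/2$ is not automatic. It needs either $\eta<\eta_{\mathrm{sep}}$, as in the paper, or your deformation argument.
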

\end{CatchyBox3}

\begin{SketchOfProof}[\zcref{lem:same-component-entry}]
By \zcref{cor:slope-gap-from-explicit-barrier}, the first inequality in
\zcref{ass:nolargegap} gives a positive constant \(d_\eta>0\) such that
\[\textstyle
\Energygap(\mu)\le\mathfrak E_{\textup{max}},
\qquad
\mu\notin\gC_+(\eta)\sqcup\gC_-(\eta)
\quad\Longrightarrow\quad
\Dissipation(\mu)
\coloneqq
\int\vab{\nabla\fdv{\Energy}{\mu}[\mu]}^2\d\mu
\ge d_\eta .
\]
Moreover, \(J(W^{\ast})\le J(0)\) gives
\(
\frac{\regparam}{2}\Vab{W^{\ast}}_{L^2(0,T)}^2\le\osc\ell ,
\)
and the gradient-flow structure yields
\(
\int_0^T\Dissipation(\mu_t^{\ast})\d t
\le
2\Energygap(\mu_0)+\frac{4C_\sigma}{\regparam}\osc\ell .
\)
Hence \zcref{lem:regularized-set-entry} gives times
\(s_-\in[0,L_\eps]\) and
\(s_+\in[T-L_\eps,T]\) at which the trajectory enters \(\gC_+(\eta)\sqcup\gC_-(\eta)\).
The two entries cannot be in different components of $\gC_\pm(\eta)$.
Indeed,
\zcref{cor:finite-temp-sign-changing-barrier} gives
\(
\inf_{\int x_1\d\mu=0}\Energygap(\mu)\ge\frac34\Delta_0,
\)
while \(\eta\le\Delta_0/4\).
Therefore, switching components requires an energy increase of at least \(\Delta_0/2\), hence an \(L^2\)-cost of \(W^{\ast}\) larger than the terminal-loss improvement allowed by the second inequality in \zcref{ass:nolargegap}.  
Thus \zcref{lem:same-branch-from-sign-barrier} places the two entries in the same component.
Finally, \zcref{lem:parabolic-chart-entry} upgrades these entries to the neighborhoods of densities: \(\mu_{s_-}^{\ast},\mu_{s_+}^{\ast}\in\gU_{\varsigma}\cap\gC_{\varsigma}(\eta)\).
On \(\gU_{\varsigma}\), the Hessian coercivity in \zcref{prop:energy-landscape} gives the local PL inequality.
\end{SketchOfProof}

We next assume a local expressivity property of the FFN \(u_W\) inside the neighborhoods \(\gU_\pm\):

\begin{assumption}[local FFN expressivity]\label{ass:ctrl-samewell}
There exist $\tau_{\textup{ctr}}>0$ and $C_{\textup{ctr}}>0$ such that, for each sign $\varsigma\in\{+,-\}$ and every pair $\nu,\nu^\prime\in\gU_\varsigma$, there exist $\widetilde W\in L^2(0,\tau_{\textup{ctr}};\R^{d\times p})$ and a
corresponding solution \(\widetilde\mu_t\) of \eqref{eq:ODE-Net} such that \(\widetilde\mu_0=\nu, \widetilde\mu_{\tau_{\textup{ctr}}}=\nu^\prime\), and \(\|\widetilde W\|_{L^2(0,\tau_{\textup{ctr}})}^2\le C_{\textup{ctr}}\pab{W_2(\nu,\mubar^\varsigma)^2+W_2(\nu^\prime,\mubar^\varsigma)^2}\).
\end{assumption}
This corresponds to the controllability assumption in \parencite{Esteve-Yague_2022,Duprez22}.
Together with the local PL inequality on \(\gU_\pm\), this assumption allows us to apply the standard turnpike bootstrap argument of \parencite[Theorem 2.1]{Esteve-Yague_2022} to the estimate in \zcref{lem:same-component-entry}.
This yields the following exponential turnpike bound; the full proof is provided in \zcref{sec:proof_expturnpike}.
\begin{CatchyBox}{}
\begin{theorem}\label{thm:exp_turnpike}
    Fix \(\eps\in(0,\bareps)\).
    Suppose \zcref{ass:nondegenerate-A,ass:linear-ffn-parametrization,ass:nolargegap,ass:ctrl-samewell} hold.
    Then there exist \(T^\ast,C,a>0\) such that for every \(T>T^\ast\) and every optimal pair
    \((\mu^{\ast},W^{\ast})\),
    \[
    \begin{aligned}
            \Energygap(\mu_t^{\ast})
    \le
    C
    \pab{\e^{-a t}+\e^{-a(T-t)}},
    && t\in[0,T].
    \end{aligned}
    \]
    \end{theorem}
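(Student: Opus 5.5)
The plan is to follow the turnpike bootstrap of \parencite[Theorem 2.1]{Esteve-Yague_2022}, with the energy gap $\Energygap$ as Lyapunov function and \zcref{lem:same-component-entry} as the entry point.

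\textbf{Step 1: reduction to the interval $[s_-,s_+]$.} Fix $\eps\in(0,\bareps)$ and $T$ large. Take $s_-\le L_\eps$, $s_+\ge T-L_\eps$ and the sign $\varsigma$ from \zcref{lem:same-component-entry}, so that $\mu^\ast_{s_\pm}\in\gU_\varsigma\cap\gC_\varsigma(\eta)$.
\begin{itemize}
\item On the boundary intervals $[0,s_-]$ and $[s_+,T]$ it suffices to bound $\Energygap(\mu^\ast_t)$ by a constant. Indeed, $\e^{-at}\ge\e^{-aL_\eps}$ there (respectively $\e^{-a(T-t)}\ge\e^{-aL_\eps}$), so a uniform bound is absorbed into $C$.
\item The uniform bound comes from the energy identity of the gradient-flow form \eqref{eq:gradient_flow_form}. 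Young's inequality gives
\[
\tfrac{d}{dt}\Energygap(\mu_t)\le -\tfrac12\Dissipation(\mu_t)+C_\sigma\Vab{W_t}_\frob^2.
\]
Combined with $\tfrac{\regparam}{2}\Vab{W^\ast}^2_{L^2}\le\osc\ell$, this yields $\Energygap(\mu^\ast_t)\le\mathfrak E_{\max}$ for all $t$.
\end{itemize}

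\textbf{Step 2: local PL inequality.} On $\gU_\varsigma$, the Hessian coercivity \eqref{eq:PD_Hessian} of \zcref{prop:energy-landscape}, together with a Taylor expansion in the chart $(1+h)\mubar^\varsigma$ with $\Vab{h}_{L^\infty}<r_\ast$, should give two estimates:
\begin{itemize}
\item a PL inequality $\Dissipation(\mu)\ge 2\kappa\,\Energygap(\mu)$;
\item the quadratic comparisons $c\,W_2(\mu,\mubar^\varsigma)^2\le\Energygap(\mu)\le C'\,W_2(\mu,\mubar^\varsigma)^2$.
\end{itemize}
For the PL inequality I would use a weighted Poincaré inequality for $\rhobar$, which applies since $\rhobar$ is smooth and positive, with $\Hop$ bounded above and below. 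The $W_2$ comparison would come from linearizing $W_2$ near $\mubar$ via the $\dot H^{-1}(\mubar)$ norm.

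\textbf{Step 3: bootstrap on $[s_-,s_+]$.} Set $\tau=\tau_{\textup{ctr}}$ and compare $W^\ast$ with competitors built from \zcref{ass:ctrl-samewell}.
\begin{itemize}
\item \emph{Competitor.} On $[s_-,s_-+\tau]$, steer $\mu^\ast_{s_-}$ to $\mubar^\varsigma$. On the middle interval, use $W=0$, which keeps $\mubar^\varsigma$ stationary. On $[s_+-\tau,s_+]$, steer $\mubar^\varsigma$ to $\mu^\ast_{s_+}$. Then keep $W^\ast$ on the boundary intervals. This competitor produces the same terminal state, so it has the same $\ell$.
\item \emph{Cost bound.} Optimality of $W^\ast$ then gives
\[
\Vab{W^\ast}^2_{L^2(s_-,s_+)}\lesssim W_2(\mu^\ast_{s_-},\mubar^\varsigma)^2+W_2(\mu^\ast_{s_+},\mubar^\varsigma)^2.
\]
The same argument applies on any subinterval $[t_1,t_2]$ whose endpoints lie in $\gU_\varsigma$, again by optimality (dynamic programming).
\item \emph{Gronwall.} On such intervals the energy inequality and PL give $\tfrac{d}{dt}\Energygap\le-\kappa\Energygap+C_\sigma\Vab{W^\ast_t}^2$. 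Gronwall then bounds $\Energygap(\mu_t^\ast)$ by $\e^{-\kappa(t-t_1)}\Energygap(\mu_{t_1})$ plus a weighted cost integral.
\item \emph{Closing the estimate.} The weighted cost is estimated by optimality over the windows $[t,t_2]$, which brings in $\Energygap(\mu_{t_2})$. An ODE-comparison or induction argument, as in Esteve-Yagüe et al., then closes the bound. The result is $\Energygap(\mu^\ast_t)\le C(\e^{-a(t-s_-)}+\e^{-a(s_+-t)})$ with $a<\kappa$.
\item \emph{Conversion.} Since $s_-\le L_\eps$ and $T-s_+\le L_\eps$, this converts to the stated bound at the cost of the factor $\e^{aL_\eps}$ in $C$.
\end{itemize}

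\textbf{Main obstacle.} The main obstacle is showing that $\mu^\ast_t$ stays in $\gU_\varsigma$ for all $t\in[s_-,s_+]$, which Step 3 needs for PL to apply throughout. I would first try a continuity argument with a shrunken chart. The first-exit time from the smaller chart yields a cost lower bound via the PL/energy inequality, since leaving the chart requires energy growth or large dissipation. This should contradict the cost bound in Step 3, provided $\eta$ and $r_\ast$ are small. If the $L^\infty$ chart is not controlled by $\Energygap$ directly, I would add parabolic smoothing, in the spirit of \zcref{lem:parabolic-chart-entry}, to upgrade the energy and $W_2$ smallness to $L^\infty$ closeness of densities.
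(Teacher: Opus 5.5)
Your proposal follows the paper's proof closely. The boundary layers are absorbed via $\Energygap(\mu^\ast_t)\le\mathfrak E_{\max}$. The competitor (steer to $\mubar^\varsigma$, wait with $W=0$, steer back), combined with $W_2^2\lesssim\Energygap$, bounds the strip cost by the endpoint energies. Local PL gives strict dissipativity, and a bootstrap \`a la Esteve-Yag\"ue closes the estimate (\zcref{prop:samebranch-strip,prop:energy-bootstrap-samebranch}).

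\textbf{Closing step.} Use the induction option rather than forward Gronwall. Gronwall only propagates decay forward from the left endpoint $t_1$. Your window bound over $[t,t_2]$ then leaves an $O(1)\cdot\Energygap(\mu^\ast_{t_2})$ term with no decay in $t_2-t$. The paper gets decay at both ends differently. It uses $\int_I\Energygap\le\Gamma_{\mathrm{int}}(E_1+E_2)$ to pick, by averaging, new endpoints in both boundary layers of length $L$ with energy sum at most $(\Gamma_{\mathrm{int}}/L)(E_1+E_2)$. It then iterates the strip estimate with contraction factor $q(L)=2\Gamma_{\mathrm{pt}}\Gamma_{\mathrm{int}}/L<1$.

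\textbf{Chart retention.} Your first idea, a first-exit cost bound from a shrunken chart, does not work. Leaving the $L^\infty$ chart $\gU_\varsigma$ costs neither energy nor dissipation in general: small-mass perturbations exit the chart while staying close to $\mubar^\varsigma$ in energy (\zcref{rem:static-status}). Your parabolic fallback is the correct mechanism and is the paper's. It needs no chart along the way:
\begin{enumerate}
\item The global inequality $\frac{d}{dt}\Energygap\le-\frac12\Dissipation+C_\sigma\Vab{W_t}_{\frob}^2$ holds off the chart. Together with the strip cost bound it gives $\Energygap(\mu^\ast_t)\le\Gamma_{\mathrm{pt}}(E_1+E_2)<\eta_{\mathrm{eff}}$ on the whole strip.
\item Narrow continuity and the disjoint open sets $\mathcal V_\pm$ keep $\mu^\ast_t$ in $\gC_\varsigma(\eta_{\mathrm{eff}})$.
\item \zcref{lem:parabolic-chart-entry}, fed with the global budget $\int_{t-\tau_{\mathrm{reg}}}^t\Vab{W^\ast_s}_{\frob}^2\d s\le 2\osc\ell/\regparam$, then returns $\mu^\ast_t\in\gU_\varsigma$.
\end{enumerate}
This imposes two requirements you should state explicitly:
\begin{enumerate}
\item The left entry time must satisfy $s_-\ge\tau_{\mathrm{reg}}$, so that a past regularizing window exists (\zcref{lem:regularized-set-entry}); $s_-\in[0,L_\eps]$ alone is not enough.
\item The entry level $\eta$ must lie below $\eta_{\mathrm{eff}}/(2\Gamma_{\mathrm{pt}})$. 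This is possible because $d_{\mathrm{set}}(\eta)>0$ for every small $\eta$, at the price of a larger $L_\eps$ (\zcref{lem:admissible-parameter-selection}).
\end{enumerate}

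\textbf{False comparison.} Drop the upper bound $\Energygap(\mu)\le C'W_2(\mu,\mubar^\varsigma)^2$; it is false on $\gU_\varsigma$. Take $h$ of fixed small amplitude oscillating at frequency $k$. Then $\Energygap\gtrsim\eps\Vab{h}^2_{L^2(\mubar)}$, while $W_2^2\lesssim k^{-2}\Vab{h}^2_{L^2(\mubar)}$. Only the direction $W_2^2\lesssim\Energygap$ is ever used, so the argument is unaffected.
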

\end{CatchyBox}

\begin{remark}[Explicit bound for the rate $a$]\label{rmk:rate_a}
The rate \(a\) is controlled by the inverse of the layer scale \(L\). 
One contribution to \(L\) is the inverse local PL constant.
The \(A\)-dependence of this local PL constant contains the scale \((1-\nicefrac{\lambda_2}{\lambda_1})^2\lambda_1 \e^{-\lambda_1}\) up to fixed constants; see \zcref{rem:uniform-localization-time,rem:explicit-turnpike-constants}.

\end{remark}

%% file: contents/numerical.tex
\section{Numerical experiments}\label{sec:numerical}

We validate the turnpike behavior through particle simulations of
\eqref{eq:noisy-particle-dynamics}.
We simulate \(N=64\) particles up to horizon \(T=80\), and work in a projected
\(d=2\) setting with \(A=\operatorname{diag}(1,\lambda_2)\).  The particles
are initialized in a cap around \(e_1=(1,0)\), so the experiments track how the
learned control moves an initially clustered token distribution along depth.
Implementation details are given in
\zcref{app:numerical-details}.
We compare two terminal losses: an align-target loss and a Bag-of-Words (BoW)
loss as in \zcref{eg:next-token-prediction}.  The align-target loss is
\(\ell_{\textup{align}}(\mu)=1-\aab{\int x\d\mu(x),u_{\textup{tar}}}\), with
\(u_{\textup{tar}}=e_2=(0,1)\); it encourages the first moment to point toward a
target direction away from the initial cap
center.  For the BoW loss, we use the finite-vocabulary version of
\eqref{eq:BoW-loss} with logits \(R(x)_w=\aab{x,c_w}\), uniform target \(q\),
and candidate vectors $c_w\in\mathbb{S}^{1}$ sampled from a von Mises--Fisher distribution with
concentration \(\kappa_{\textup{cand}}\); this tests how the geometry of the readout candidates affects the terminal-layer escape.
Before the parameter sweeps below, \zcref{app:numerical-experiment0} shows
that the turnpike profile appears only after optimizing the controls.
We report three diagnostics.  First, we plot the signed angle
\(\theta\) relative to the reference direction, so
\(\theta\approx0\) indicates alignment with the interior plateau.  Second, we
compute the energy \(\Energy\) from \eqref{eq:gradient_flow_form} along the
simulated trajectories.  Third, we fit the terminal energy lift to
\(C\exp(-a(T-t))\) and compare the fitted rate \(a\) with the theoretical rate
computed numerically as the Rayleigh quotient \(\Rayleigh\) in
\zcref{prop:one-step-escape}.
\paragraph{Experiment 1: Attention matrix structure induces the turnpike.}
We fix the align-target loss and vary the eigenvalue \(\lambda_2\) of
\(A=\operatorname{diag}(1,\lambda_2)\).  In
\zcref{fig:numerical-lambda2}(a), the angular trajectories show that smaller
\(\lambda_2\) keeps the particles aligned near \(\theta=0\) through most of
the depth and then produces a sharper terminal displacement.  The
energy curves in \zcref{fig:numerical-lambda2}(b) show the same effect at the
level of \(\Energy\): smaller \(\lambda_2\) creates a deeper and more
persistent interior plateau before the terminal lift.  Finally,
\zcref{fig:numerical-lambda2}(c) compares the fitted terminal exponent \(a\)
with the numerically computed theoretical rate \(\Rayleigh\).  Their shared
dependence on \(\lambda_2\) indicates that the spectral gap of the
attention matrix controls the strength of the turnpike and the sharpness of
the final-layer escape.

\begin{figure}[t]
    \centering
    \begin{subfigure}[t]{0.33\linewidth}
        \centering
        \includegraphics[height=0.23\textheight,width=\linewidth,keepaspectratio]{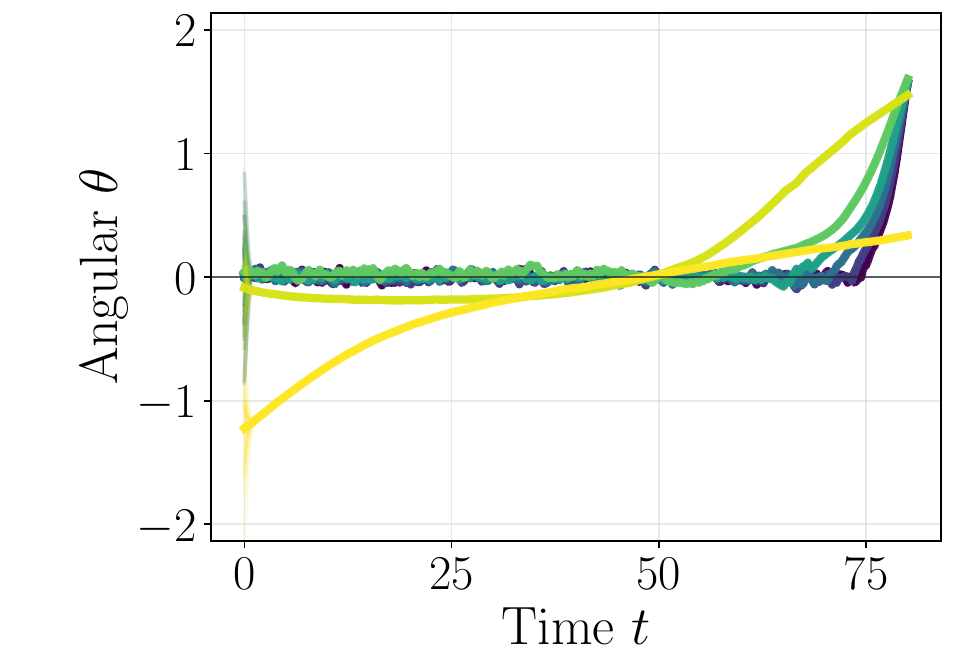}
        \caption{Angular trajectories.}
        \label{fig:numerical-lambda2-angular}
    \end{subfigure}\hfill
    \begin{subfigure}[t]{0.33\linewidth}
        \centering
        \includegraphics[height=0.23\textheight,width=\linewidth,keepaspectratio]{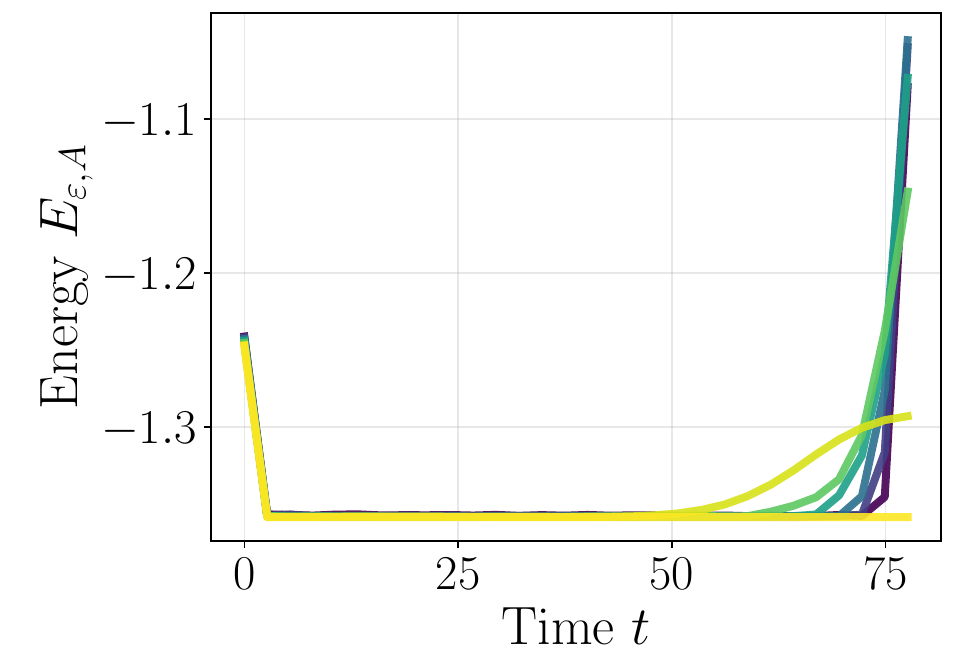}
        \caption{Energy trajectories.}
        \label{fig:numerical-lambda2-energy}
    \end{subfigure}\hfill
    \begin{subfigure}[t]{0.33\linewidth}
        \centering
        \includegraphics[height=0.23\textheight,width=\linewidth,keepaspectratio]{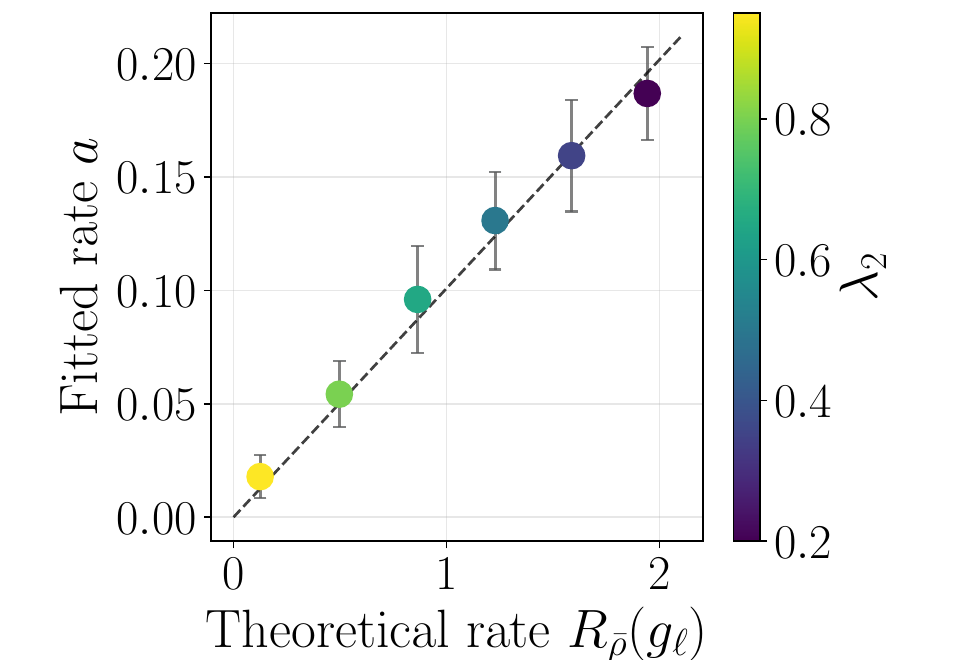}
        \caption{Fitted/theoretical rate.}
        \label{fig:numerical-lambda2-rate}
    \end{subfigure}
    \caption{Spectral dependence of the turnpike in the align-target experiment for various $\lambda_2$.}
    \label{fig:numerical-lambda2}
\end{figure}

\paragraph{Experiment 2: Target-distribution geometry modulates the turnpike.}
We fix \(\lambda_2=0.65\) and vary the concentration
\(\kappa_{\textup{cand}}\) of the von Mises--Fisher distribution used to sample
the BoW candidate vectors.  In \zcref{fig:numerical-token}(a), smaller
\(\kappa_{\textup{cand}}\) produces a sharper terminal angular
dispersion, indicating that a less concentrated readout geometry can destabilize the plateau more sharply near the terminal layer.
The energy trajectories in
\zcref{fig:numerical-token}(b) also show that smaller \(\kappa_{\textup{cand}}\)
produces a stronger terminal escape.
Across the different values of \(\kappa_{\textup{cand}}\), the fitted terminal
exponent \(a\) remains consistent with the numerically computed theoretical
bound based on \(\Rayleigh\), which indicates that the
target-distribution geometry modulates the strength of the final-layer escape
in line with the theoretical bound.

\begin{figure}[t]
    \centering
    \begin{subfigure}[t]{0.33\linewidth}
        \centering
        \includegraphics[height=0.23\textheight,width=\linewidth,keepaspectratio]{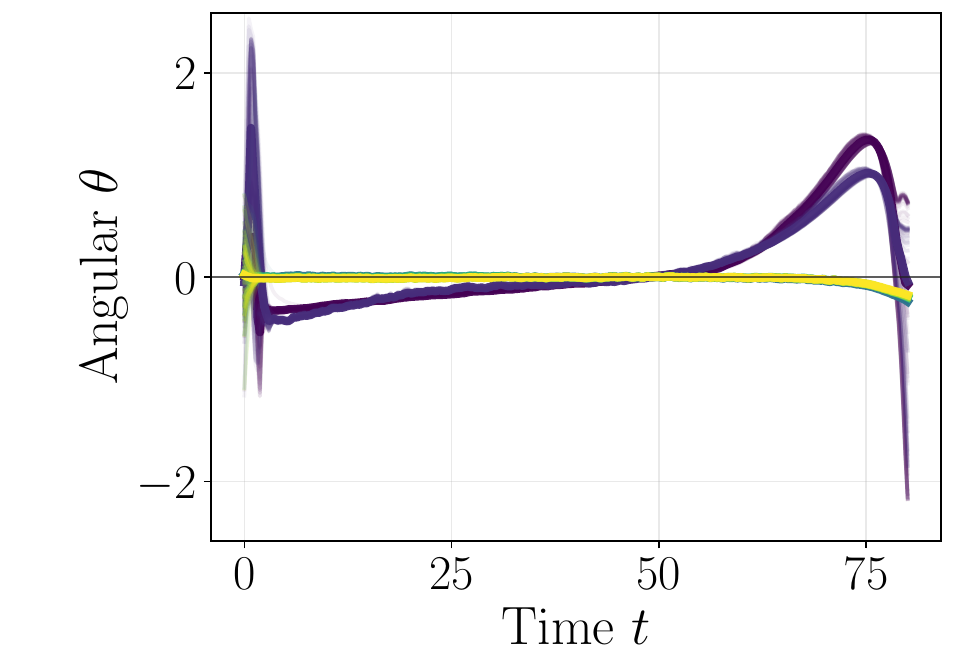}
        \caption{Angular trajectories.}
        \label{fig:numerical-token-angular}
    \end{subfigure}\hfill
    \begin{subfigure}[t]{0.33\linewidth}
        \centering
        \includegraphics[height=0.23\textheight,width=\linewidth,keepaspectratio]{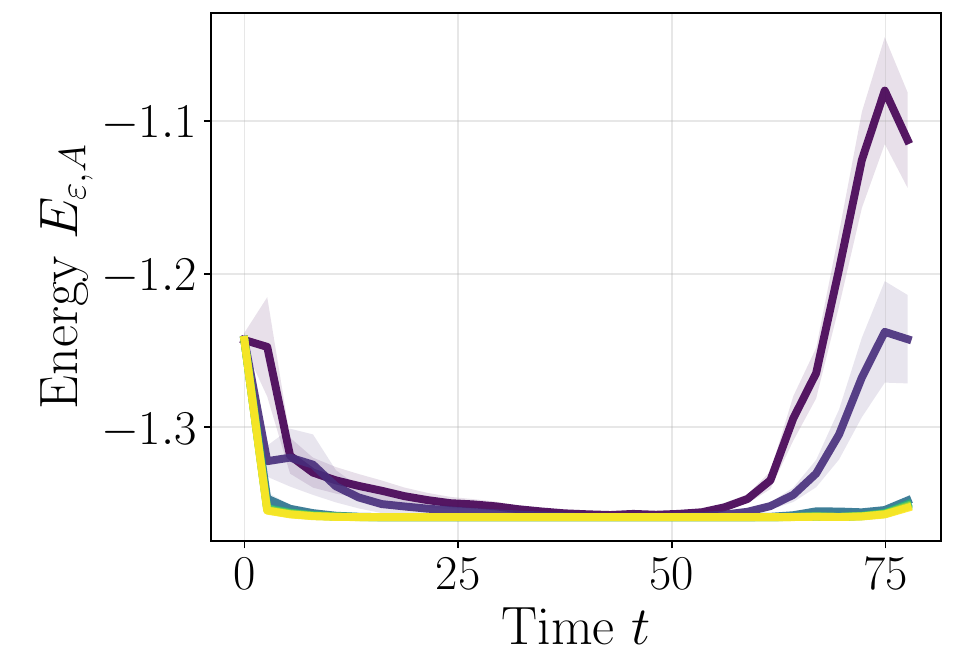}
        \caption{Energy trajectories.}
        \label{fig:numerical-token-energy}
    \end{subfigure}\hfill
    \begin{subfigure}[t]{0.33\linewidth}
        \centering
        \includegraphics[height=0.23\textheight,width=\linewidth,keepaspectratio]{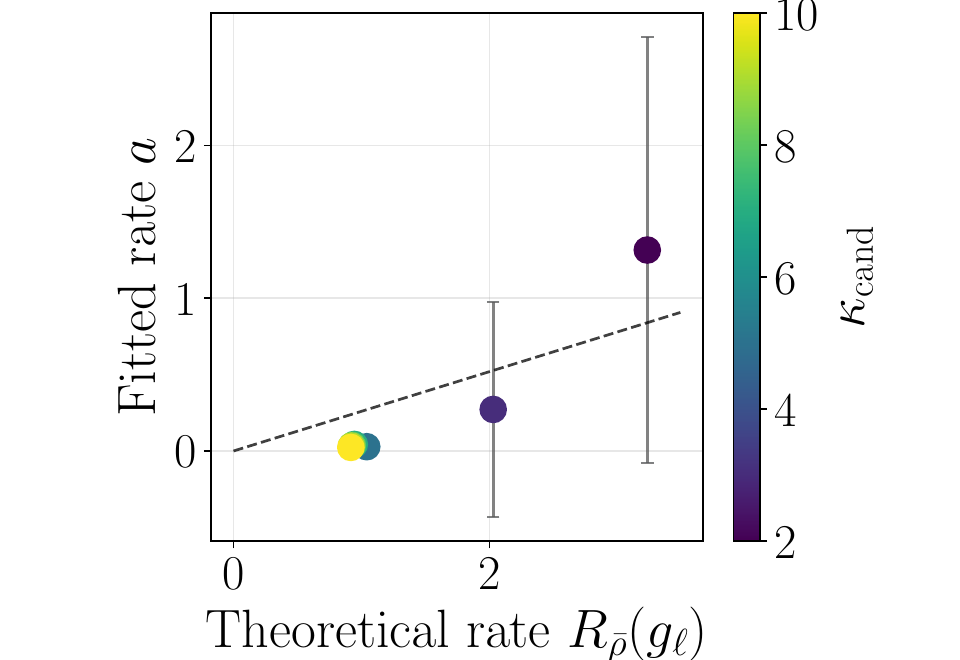}
        \caption{Fitted/theoretical rate.}
        \label{fig:numerical-token-rate}
    \end{subfigure}
    \caption{Target-geometry dependence of the turnpike in the BoW experiment for various $\kappa_{\textup{cand}}$.}
    \label{fig:numerical-token}
\end{figure}


%% file: contents/related_works.tex
\section{Related Work, Limitations, and Discussion}\label{sec:related}

\paragraph{Relation to prescribed-weight mean-field Transformer dynamics.}
Existing mean-field Transformer analyses reviewed in \zcref{sec:intro} enriched the prescribed-weight mean-field picture, but they do not identify the training-induced terminal escape from a clustered regime studied here.
The closest work in motivation to ours is that of \textcite{koubbi2024impactloraemergenceclusters}, which analyzes the effect of LoRA-type attention perturbations on token clustering. 
However, their theory treats the perturbed parameters as given, whereas our analysis studies how training-selected parameters reshape the dynamics.

\paragraph{Relation to turnpike theory.}
Turnpike phenomena are classical in optimal control, with related estimates for Neural ODEs~\parencite{Geshkovski_Zuazua_2022,Gugat25}, mean-field optimal control~\parencite{Herty_Zhou_2025,bonnetweill2026exponentialturnpiketheoremsnonlinear}, and broader optimal control problems~\parencite{trelat2025turnpikeoptimalcontrolbeyond}. Our estimate is related in spirit, but the plateau regime is generated by the attention-induced interaction rather than imposed as a target or representation regularizer; terminal escape can then be driven by the trained FFN parameters and terminal loss.
\paragraph{Practical implications for Transformer models.}
The terminal escape in our model echoes reported sharp final-layer representation shifts in pretrained multilingual models~\parencite{wendler-etal-2024-llamas,shibata2026suppressingfinallayerhidden}.
We do not claim that our FFN-only noisy mean-field model directly explains these phenomena in full-scale Transformers.
The practical implications of our model are instead task-dependent: for distributed-information objectives such as \(\ell_{\textup{BoW}}\), a final-layer jump may correct attention-induced clustering, whereas for clustering-compatible objectives \(\ell\), it may signal avoidable over-reliance on the final layers.

%% file: contents/concluding.tex
\section{Conclusion}
We studied how training can reshape token dynamics in Transformers by optimizing a parameter-linear FFN in a noisy mean-field model.
The main message is that training can qualitatively modify the clustering picture in the prescribed-parameter setting: the trained dynamics may exhibit early clustering, a long turnpike plateau, and terminal escape.
Our adjoint analysis and exponential turnpike estimate provide complementary views of this behavior, respectively capturing terminal-layer amplification after a gradient-descent update and the persistence of a clustered plateau for optimized parameters.
Extending this training-aware viewpoint beyond the present noisy S-USA setting with FFN-only training remains an important direction for future work.

%% file: contents/proofs.tex
\section{Additional proofs}\label{sec:proofs}
    \subsection{Proof for \zcref{sec:energy}}\label{sec:proof_landscape}
    \input{contents/proofs/landscape_reorg_sphere}
    \subsection{Proof for \zcref{sec:adjoint}}\label{sec:proof_adjoint}
    \input{contents/proofs/adjoint_reorg}
    \subsection{Proof for \zcref{sec:exp_turnpike}}\label{sec:proof_expturnpike}
    \input{contents/proofs/exp_turnpike_reorg_sphere}


%% file: contents/proofs/landscape_reorg_sphere.tex
\providecommand{\Minimizers}{\mathcal M}

The first static input is the following small-noise theorem.  It constructs the two entropic wells and gives the small-temperature critical-level gap used in \zcref{prop:energy-landscape}.  The chord estimate is kept inside the package because it is the finite-\(\eps\) mechanism behind one-well uniqueness.

\begin{theorem}[static small-noise package]
\label{thm:package}
Let
\begin{equation}\label{eq:static-min-value}
m_\eps\coloneqq \operatorname*{inf}_{\nu\in\mathcal P(\dSphe)}
\Energy(\nu).
\end{equation}
Set
\begin{equation}\label{eq:Delta0-static-theorem}
Q_A\coloneqq \max\{\e^{\lambda_2},\cosh\lambda_1\},
\qquad
\Delta_0\coloneqq \frac12(\e^{\lambda_1}-Q_A)
=\frac12\min\{\e^{\lambda_1}-\e^{\lambda_2},\sinh\lambda_1\}>0.
\end{equation}
There exist \(\eps_0>0\), disjoint narrow neighborhoods \(V_+\) of
\(\delta_{e_1}\) and \(V_-\) of \(\delta_{-e_1}\), with \(V_-=S_\#V_+\) for
\(S(x)=-x\), such that the following holds for \(0<\eps<\eps_0\).

\begin{enumerate}[label=(\roman*)]
\item Every global minimizer of \(\Energy\) lies in \(V_+\cup V_-\).
There is a global minimizer \( \mubar^+\in V_+\), and
\(\mubar^-\coloneqq S_\#\mubar^+\) is a global minimizer in \(V_-\).  We write
\(\mubar^+=\rhobar^+\omega\), \(\mubar^-=\rhobar^-\omega\), and, on the
selected positive branch, \(\mubar=\mubar^+\), \(\rhobar=\rhobar^+\).

\item Let \(\eta_\eps\downarrow0\).  Suppose
\(\mu_\eps=\rho_\eps\omega\) is any family of critical points satisfying
\begin{equation}\label{eq:low-energy-critical-main}
\Energy(\mu_\eps)
\le m_\eps+\eta_\eps,
\qquad
\mu_\eps\in V_+,
\end{equation}
and hence the Euler--Lagrange equation
\begin{equation}\label{eq:EL-critical-main}
\eps(\log\rho_\eps+1)-K_A*\mu_\eps
=\Lambda_\eps.
\end{equation}
Then there are constants \(r,c,C>0\), independent of the particular family,
and centers \(x_{\mu,\eps}\in\dSphe\) such that
\begin{equation}\label{eq:center-Oeps-main}
x_{\mu,\eps}=e_1+O(\eps),
\end{equation}
\begin{equation}\label{eq:tail-uniform-main}
\mu_\eps\bigl(\dSphe\setminus B_r(x_{\mu,\eps})\bigr)
\le C\e^{-c/\eps},
\end{equation}
and, with \(u_\mu(x)=\exp_{x_{\mu,\eps}}^{-1}(x)\) inside the cap and
\(u_\mu=0\) outside,
\begin{equation}\label{eq:sigmaeps-main}
\sigma_\eps^2=\frac{\eps}{\e^{\lambda_1}\lambda_1},
\end{equation}
the rescaled density converges to the standard Gaussian in weighted
\(L^1(\mathbb R^{d-1})\), uniformly over such families.  In particular, for
every unit tangent vector \(\tau\in T_{x_{\mu,\eps}}\dSphe\),
\begin{equation}\label{eq:uniform-first-moment-main}
\int_\dSphe  \aab{u_\mu(x),\tau}\d\mu_\eps(x)=O(\eps),
\end{equation}
\begin{equation}\label{eq:uniform-second-moment-main}
\int_\dSphe  \aab{u_\mu(x),\tau}^2\d\mu_\eps(x)
=\sigma_\eps^2+o(\eps),
\end{equation}
and, for every integer \(m\ge0\),
\begin{equation}\label{eq:uniform-momentm-main}
\int_\dSphe  |u_\mu(x)|^m\d\mu_\eps(x)=O(\eps^{m/2}).
\end{equation}
The same statement holds in \(V_-\) after applying the antipodal map.

For the selected branch \(\mubar=\mubar^+\) we write
\(x_\eps=x_{\mubar,\eps}\) and \(u(x)=\exp_{x_\eps}^{-1}(x)\) on the cap,
extended by zero outside.  Thus
\begin{equation}\label{eq:tail-main}
\mubar\bigl(\dSphe\setminus B_r(x_\eps)\bigr)\le C\e^{-c/\eps},
\end{equation}
\begin{equation}\label{eq:moment1-main}
\int_\dSphe  \aab{u(x),\tau}\d\mubar(x)=O(\eps),
\end{equation}
\begin{equation}\label{eq:moment2-main}
\int_\dSphe  \aab{u(x),\tau}^2\d\mubar(x)=\sigma_\eps^2+o(\eps),
\end{equation}
and
\begin{equation}\label{eq:momentm-main}
\int_\dSphe  |u(x)|^m\d\mubar(x)=O(\eps^{m/2}).
\end{equation}

\item Let \(\eta_\eps\downarrow0\), and let
\(\mu_\eps^0,\mu_\eps^1\in V_+\) be two low-energy critical point families
satisfying \eqref{eq:low-energy-critical-main}, with \(\mu_\eps\) replaced by each \(\mu_\eps^i\). Set
\[
\mu_\eps^s=(1-s)\mu_\eps^0+s\mu_\eps^1,
\qquad s\in[0,1].
\]
For \(\mu\in\mathcal P(\dSphe)\) define
\(\Pi_\mu f\coloneqq f-\int_\dSphe  f\d\mu\) and
\begin{equation}\label{eq:K-mu-def-main}
\mathsf K_\mu h
\coloneqq \Pi_\mu\left[\int_\dSphe  K_A(\cdot,y)h(y)\d\mu(y)\right],
\qquad h\in L^2_0(\mu).
\end{equation}
Then, uniformly in \(s\in[0,1]\) and in the pair
\(\mu_\eps^0,\mu_\eps^1\),
\begin{equation}\label{eq:chord-K-op-main}
\Vab{\mathsf K_{\mu_\eps^s}}_{\op,L^2_0(\mu_\eps^s)}
\le
\left(\frac{\lambda_2}{\lambda_1}+o(1)\right)\eps.
\end{equation}
The same estimate holds for chords of low-energy critical points in \(V_-\).

\item For every sufficiently small fixed \(\eps>0\), there exists
\(\eta_{\mathrm{crit}}=\eta_{\mathrm{crit}}(\eps)>0\) such that each well
contains at most one critical point \(\mu\) satisfying
\[
\Energy(\mu)\le m_\eps+\eta_{\mathrm{crit}}.
\]  In particular, the global minimizer set consists of
exactly two antipodal branches:
\begin{equation}\label{eq:package-two-minimizers}
\Minimizers
\coloneqq
\argmin_{\mu\in\mathcal P(\dSphe)}
\Energy(\mu)
=\left\{\mubar^+,\mubar^-\right\},
\qquad
\mubar^-=S_\#\mubar^+.
\end{equation}

\item Let
\[
\Crit\Energy
\coloneqq 
\left\{
\mu=\rho\omega:\ \rho>0,\ 
\eps(\log\rho+1)-K_A*\mu\text{ is constant on }\dSphe
\right\}.
\]
There exists \(r_\eps\to0\) such that every non-minimizing stationary point
satisfies
\begin{equation}\label{eq:package-first-stationary-level}
\mu\in\Crit\Energy\setminus\Minimizers
\quad\Longrightarrow\quad
\Energy(\mu)-m_\eps\ge \Delta_0-r_\eps.
\end{equation}
Equivalently, for every \(\gamma>0\), after decreasing \(\eps_0\),
\[
\mu\in\Crit\Energy\setminus\Minimizers
\quad\Longrightarrow\quad
\Energy(\mu)-m_\eps\ge \Delta_0-\gamma.
\]

\end{enumerate}
\end{theorem}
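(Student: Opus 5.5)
The plan is to derive everything from the zero-temperature limit of \(\Energy\) together with Laplace asymptotics of the Gibbs fixed-point equation \(\rho\propto\exp(\eps^{-1}K_A*\mu)\), where \(K_A(x,y)=\e^{\aab{x,Ay}}\). As \(\eps\searrow0\), \(\Energy\) narrowly \(\Gamma\)-converges to \(E_0(\mu)=-\frac12\iint K_A\d\mu\d\mu\); the recovery sequence uses smoothed caps, whose entropy is \(O(\eps\log\eps^{-1})\). Since \(\aab{x,Ay}\le\lambda_1\) with equality on \(\dSphe\times\dSphe\) iff \(x=y=\pm e_1\), the minimizers of \(E_0\) are exactly \(\delta_{\pm e_1}\), with value \(-\e^{\lambda_1}/2\).

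For (i), existence of a minimizer follows from narrow compactness and lower semicontinuity, and \(\Gamma\)-convergence forces every minimizer into \(V_+\cup V_-\). The antipodal symmetry \(K_A(Sx,Sy)=K_A(x,y)\) and \(\Ent(S_\#\mu)=\Ent(\mu)\) then give \(\mubar^-=S_\#\mubar^+\).

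For (ii), take a low-energy critical point in \(V_+\). The energy bound plus \(\Gamma\)-convergence give \(\mu_\eps\to\delta_{e_1}\) narrowly. Hence the potential \(V_\eps=K_A*\mu_\eps\) converges in \(C^2\) to \(\e^{\lambda_1x_1}\), uniformly over all such families. I define \(x_{\mu,\eps}\) as the maximizer of \(V_\eps\), and bootstrap to get \(x_{\mu,\eps}=e_1+O(\eps)\) from the first-moment bound. The key expansion is
\[
\e^{\lambda_1\cos\theta}=\e^{\lambda_1}\bigl(1-\tfrac{\lambda_1}{2}\theta^2+O(\theta^4)\bigr).
\]
With this, Laplace's method applied to \(\rho_\eps\propto \e^{V_\eps/\eps}\) gives:
\begin{itemize}
\item the exponential tail bound, because \(V_\eps\) has a strict maximum gap outside \(B_r\);
\item Gaussian convergence of the rescaled density, with variance \(\sigma_\eps^2=\eps/(\lambda_1\e^{\lambda_1})\);
\item the moment bounds, since odd terms cancel up to \(O(\eps)\) corrections coming from the cubic terms of \(V_\eps\).
\end{itemize}

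For (iii), I expand \(K_A(x,y)\) around \(x_\eps\) in the exponential chart. On mean-zero \(h\), the constant and single-variable terms of the expansion are annihilated by \(\Pi_\mu\) and by \(\int h\d\mu=0\). What remains is the bilinear term \(\e^{\lambda_1}\aab{u(x),A u(y)}\) restricted to \(T_{e_1}\dSphe\), where \(A\) has top eigenvalue \(\lambda_2\), plus higher-order terms. These higher-order terms are \(o(\eps)\) in operator norm, by Cauchy--Schwarz and the moment bounds of (ii). The bilinear part has norm at most \(\e^{\lambda_1}\lambda_2\,\sigma_\eps^2=\frac{\lambda_2}{\lambda_1}\eps\), which is the claimed bound \eqref{eq:chord-K-op-main}. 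The same estimate holds along chords \(\mu_\eps^s\), since mixtures inherit the tail and moment bounds and their centers differ by \(O(\eps)\).

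For (iv), suppose \(\mu^0\neq\mu^1\) are two low-energy critical points in the same well, and set \(h=(\rho^1-\rho^0)/\rho^s\in L^2_0(\mu^s)\). Differentiating twice along the chord gives
\[
\frac{\d^2}{\d s^2}\Energy(\mu^s)=\eps\Vab{h}^2_{L^2(\mu^s)}-\aab{h,\Kbar_{\mu^s}h}_{L^2(\mu^s)}\ge\eps\bigl(1-\tfrac{\lambda_2}{\lambda_1}-o(1)\bigr)\Vab{h}^2_{L^2(\mu^s)}>0.
\]
Both endpoints are critical points, so \(\frac{\d}{\d s}\Energy(\mu^s)\) vanishes at \(s=0\) and \(s=1\). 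This contradicts strict convexity along the chord, so each well contains at most one such critical point, and \eqref{eq:package-two-minimizers} follows.

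For (v), the main obstacle is the critical-level gap. I argue by contradiction. Take non-minimizing critical points \(\mu_\eps\) with \(\Energy(\mu_\eps)-m_\eps\le\Delta_0-\gamma\), and extract a narrow limit \(\mu_0\).
\begin{itemize}
\item Since \(\rho_\eps\propto\e^{V_\eps/\eps}\) and \(V_\eps\to V_0=K_A*\mu_0\) uniformly, \(\mu_0\) is supported on \(\argmax V_0\). Hence \(\iint K_A\d\mu_0\d\mu_0=\max V_0\eqqcolon c\).
\item Because \(\Energy(\mu_\eps)\ge E_0(\mu_\eps)+\eps\cdot\)(a lower bound on \(\Ent\)), the energy passes to the limit, and the assumption gives \(c>2\Delta_0+\gamma\)-close to \(\e^{\lambda_1}\), i.e. \(c> Q_A\) with a margin.
\item I then need the classification: every measure \(\mu_0\) supported on \(\argmax K_A*\mu_0\) with \(\mu_0\neq\delta_{\pm e_1}\) has \(c\le Q_A\). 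The model cases are instructive. A Dirac mass at \(x\) is self-consistent only when \(Ax\parallel x\), giving \(c=\e^{\lambda_k}\le\e^{\lambda_2}\) for \(k\ge2\). The balanced pair \(\frac12(\delta_{e_1}+\delta_{-e_1})\) gives \(c=\cosh\lambda_1\). For general supports I would first bound \(c\le\max_x\int K_A(x,y)\d\mu_0(y)\) using the first-moment vector \(m=\int Ay\d\mu_0\), and then split into the cases \(\mu_0\) symmetric versus non-symmetric.
\item Once \(c\le Q_A\) is established, the margin above is contradicted unless \(\mu_0=\delta_{\pm e_1}\). In that remaining case \(\mu_\eps\) lies in \(V_\pm\) with energy \(\to m_\eps\). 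By the uniqueness statement of (iv) it must then be a minimizer, which is again a contradiction.
\end{itemize}
I expect this classification step to be the hardest part of the argument.
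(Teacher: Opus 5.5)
\paragraph{Parts (i)--(iv) and the first half of (v).} These follow the paper's route.
\begin{itemize}
\item Part (i) uses the same test-cap separation, and (ii) the same uniform Laplace package.
\item Part (iii) is the same Taylor/Gram-matrix bound for the chord operator.
\item In (iv), your convexity-along-the-chord argument is exactly the paper's Euler--Lagrange difference identity \(0=f'(1)-f'(0)=\int_0^1 f''(s)\,\d s\). The chord bound is only asymptotic, so to get a fixed-\(\eps\) threshold \(\eta_{\mathrm{crit}}(\eps)\) you still need the short contradiction argument over sequences \(\eps_n\downarrow0\).
\item In (v), the paper also shows that Gibbs limits are supported on \(\argmax V_0\), so \(I_A(\mu_0)=\max V_0=c\). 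It also gets the margin \(c\ge Q_A+2\gamma\) and disposes of the Dirac case via one-well uniqueness.
\end{itemize}

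\paragraph{The gap.} The classification step is the heart of (v), and the route you sketch for it does not work.
\begin{itemize}
\item The relation \(c=\max_x\int K_A(x,y)\,\d\mu_0(y)\) is the definition of \(c\), not a bound.
\item The first-moment vector \(m=\int Ay\,\d\mu_0(y)\) only gives a lower bound via Jensen: \(\int \e^{\aab{x,Ay}}\d\mu_0(y)\ge \e^{\aab{x,m}}\).
\item A symmetric/non-symmetric split gives no control over general max-support states, such as asymmetric multi-point configurations or measures spread over subspheres of eigenspaces.
\end{itemize}

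Two ideas are missing.

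\emph{Support minimality.} Since \(V_0\equiv c\) on \(\operatorname{supp}\mu_0\), every \(\eta\in\mathcal P(\operatorname{supp}\mu_0)\) satisfies \(I_A(\eta)=I_A(\mu_0)+I_A(\eta-\mu_0)\ge I_A(\mu_0)\). This holds because \(K_A\) is a positive definite kernel: expand \(\e^{x^\top Ay}=\sum_n (x^\top Ay)^n/n!\) and note \(\iint (x^\top Ay)^n\d\sigma\d\sigma=\vab{\int (A^{1/2}x)^{\otimes n}\d\sigma}^2\). Hence \(c\) is bounded above by the interaction of any test measure built from support points.

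\emph{Hemisphere dichotomy.}
\begin{itemize}
\item Suppose \(\operatorname{supp}\mu_0\subset\{x_1\ge0\}\) and \(\mu_0\ne\delta_{e_1}\). Take a support point \(x\) minimizing \(x_1=\kappa\). If \(0<\kappa<1\), the tangential derivative of \(V_0\) at \(x\) in the direction \(e_1-\kappa x\) is strictly positive, using \(y_1\ge\kappa\) on the support and \(\lambda_1>\lambda_2\). This contradicts \(x\in\argmax V_0\). So \(\kappa=0\), and \(c\le I_A(\delta_x)\le \e^{\lambda_2}\).
\item The same holds for \(\{x_1\le0\}\) by symmetry.
\item Otherwise there are support points with \(x_1=\alpha\ge0\) and \(y_1=-\beta\le0\), with \(\alpha+\beta>0\). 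Take the weight \(t=\beta/(\alpha+\beta)\), which cancels the \(e_1\)-component. Combined with convexity of the exponential, this gives \(I_A(t\delta_x+(1-t)\delta_y)\le(1-\alpha\beta)\e^{\lambda_2}+\alpha\beta\cosh\lambda_1\le Q_A\).
\end{itemize}
Without an argument of this kind, the critical-level gap in (v) is not established.
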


For the local density-chart analysis we work on the positive branch supplied by
\zcref{thm:package} and set \(\mubar\coloneqq \mubar^+\),
\(\rhobar\coloneqq \rhobar^+\).  For fixed \(\eps\), we use the notation
\(\Minimizers=\{\mubar^+,\mubar^-\}\) from \zcref{prop:energy-landscape}.

\paragraph{Gibbs-limit proof of the critical-level gap.}

We record the Gibbs-limit facts needed for the non-minimal critical-level gap in \zcref{prop:energy-landscape}.  The point is that limits of positive-temperature Gibbs critical points satisfy a global max-support condition coming from the Laplace principle.

Throughout this appendix, we write $K_A(x,y)\coloneqq\e^{\aab{x,Ay}}$ and
\begin{equation}\label{eq:appendix-IA-W-def}
I_A(\mu)\coloneqq \iint_{\dSphe\times\dSphe}K_A(x,y)\d\mu(x)\d \mu(y),
\qquad
W_\mu(x)\coloneqq (K_A*\mu)(x)=\int_\dSphe  K_A(x,y)\d\mu(y).
\end{equation}

\begin{definition}[positive-temperature critical set and its zero-temperature limits]
\label{def:gibbs-limit-sets}
For the fixed noise level \(\eps\), define
\begin{equation}\label{eq:Crit-eps-A-def}
\Crit\Energy
\coloneqq 
\Set{\mu=\rho\omega\in\mathcal P(\dSphe)|
\begin{array}{l}
\rho>0,
\displaystyle
\rho(x)=
\frac{\exp(W_\mu(x)/\eps)}
{\int_\dSphe \exp(W_\mu(z)/\eps)\d\omega(z)}
\end{array}
}.
\end{equation}
For a varying noise level \(\eps_n\), the corresponding critical set is denoted by
\(\Crit E_{\eps_n,A}\).  Equivalently,
\[
\eps(\log\rho+1)-K_A*\mu
\quad\text{is constant on }\dSphe.
\]
We define the zero-temperature Gibbs-limit set by
\begin{equation}\label{eq:Glim-def}
\mathfrak G_A^{\mathrm{lim}}
\coloneqq 
\Set{
\nu\in\mathcal P(\dSphe)|
\begin{array}{l}
\exists\,\eps_n\downarrow0,\ \exists\,\mu_n\in\Crit E_{\eps_n,A}
\text{ such that }\mu_n\rightharpoonup\nu
\end{array}
}.
\end{equation}
We also define the explicit max-support outer class
\begin{equation}\label{eq:Gmax-def}
\mathfrak G_A^{\max}
\coloneqq 
\Set{
\nu\in\mathcal P(\dSphe)|
\operatorname{supp}\nu
\subset
\argmax_{x\in\dSphe} W_\nu(x)
}.
\end{equation}
Thus \(\mathfrak G_A^{\mathrm{lim}}\) is the intrinsic object, while
\(\mathfrak G_A^{\max}\) is a tractable necessary outer condition.
\end{definition}

If \(\nu\in\mathfrak G_A^{\max}\), then
\[
W_\nu(x)=\max_{\dSphe} W_\nu
\qquad \nu\text{-a.e.}
\]
and hence
\begin{equation}\label{eq:Gmax-height-equals-interaction}
I_A(\nu)=\int_\dSphe  W_\nu\d\nu=\max_{\dSphe} W_\nu.
\end{equation}
Equivalently,
\[
W_\nu\le I_A(\nu)\quad\text{on }\dSphe,
\qquad
W_\nu=I_A(\nu)\quad\nu\text{-a.e.}
\]
The converse also holds by continuity of \(W_\nu\).

\begin{lemma}[Gibbs limits are max-support states]
\label{lem:gibbs-limits-are-max-support}
One has
\begin{equation}\label{eq:Glim-subset-Gmax}
\mathfrak G_A^{\mathrm{lim}}
\subset
\mathfrak G_A^{\max}.
\end{equation}
Equivalently, if \(\eps_n\downarrow0\),
\(\mu_n\in\Crit E_{\eps_n,A}\), and
\(\mu_n\rightharpoonup\nu\), then
\begin{equation}\label{eq:gibbs-limit-max-support-conclusion}
\operatorname{supp}\nu
\subset
\argmax_{x\in\dSphe} W_\nu(x).
\end{equation}
\end{lemma}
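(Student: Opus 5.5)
The plan is a Laplace-principle argument applied to the Gibbs fixed-point equation. Fix $\eps_n\downarrow0$ and $\mu_n=\rho_n\omega\in\Crit E_{\eps_n,A}$ with $\mu_n\rightharpoonup\nu$. Write $W_n\coloneqq W_{\mu_n}=K_A*\mu_n$.

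\textbf{Step 1: uniform convergence of the potentials.} Since $K_A$ is smooth on the compact set $\dSphe\times\dSphe$, the family $\{K_A(x,\cdot)\}_{x}$ is uniformly bounded and equicontinuous in $x$, and so is $\{\nabla_x K_A(x,\cdot)\}_x$. Narrow convergence $\mu_n\rightharpoonup\nu$ then gives $W_n(x)\to W_\nu(x)$ pointwise. The maps $x\mapsto W_n(x)$ are uniformly Lipschitz, with constant bounded by $\sup|\nabla_xK_A|$. By Arzel\`a--Ascoli, $W_n\to W_\nu$ uniformly on $\dSphe$. In particular $M_n\coloneqq\max W_n\to M\coloneqq\max W_\nu$.

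\textbf{Step 2: exponential smallness away from the argmax.} Fix $x_0\notin\argmax W_\nu$ and set $\delta\coloneqq M-W_\nu(x_0)>0$. By continuity and uniform convergence there are a radius $r>0$ and an index $n_0$ such that, for $n\ge n_0$:
\begin{itemize}
\item $W_n\le M_n-\delta/2$ on the ball $B_r(x_0)$;
\item $W_n\ge M_n-\delta/4$ on a ball $B_{r'}(x_n^\ast)$ around a maximizer $x_n^\ast$ of $W_n$.
\end{itemize}
The radius $r'$ can be taken independent of $n$, by the uniform Lipschitz bound. Since the Gibbs form gives $\mu_n(B)=\int_B e^{W_n/\eps_n}\d\omega\,\big/\int e^{W_n/\eps_n}\d\omega$, we obtain
\[
\mu_n(B_r(x_0))\le \frac{e^{(M_n-\delta/2)/\eps_n}}{\omega(B_{r'})\,e^{(M_n-\delta/4)/\eps_n}}=\frac{e^{-\delta/(4\eps_n)}}{\omega(B_{r'})}\to0.
\]

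\textbf{Step 3: conclusion.} By the portmanteau inequality for open sets,
\[
\nu(B_r(x_0))\le\liminf_n\mu_n(B_r(x_0))=0,
\]
so $x_0\notin\operatorname{supp}\nu$. Hence $\operatorname{supp}\nu\subset\argmax W_\nu$, which is \eqref{eq:gibbs-limit-max-support-conclusion}.

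The only delicate point is Step 1: upgrading narrow convergence of the measures to uniform convergence of $W_n$. The maximum of the limit potential must control the maxima of $W_n$ uniformly, and the lower-bound ball in Step 2 must have a radius independent of $n$. The equicontinuity of the smooth kernel handles both.
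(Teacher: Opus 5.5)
Your proof is correct and follows essentially the same route as the paper's: you get uniform convergence $W_{\mu_n}\to W_\nu$ from narrow convergence and the smoothness of $K_A$, then use the Gibbs representation to show that $\mu_n$ assigns mass $O(\e^{-c/\eps_n})$ to a neighborhood of each $x_0\notin\argmax W_\nu$, and finish with the portmanteau inequality for open sets. The only difference is cosmetic: the paper lower-bounds the partition function on a fixed open set $U$ near $\argmax W_\nu$ rather than on balls of fixed radius around the moving maximizers $x_n^\ast$ of $W_n$, so it does not need the uniform Lipschitz bound for that step.
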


\begin{proof}
Set
\[
W_n\coloneqq W_{\mu_n},
\qquad
W\coloneqq W_\nu.
\]
Since \(K_A\in C(\dSphe\times\dSphe)\) and \(\dSphe\) is compact, narrow convergence gives
\begin{equation}\label{eq:Wn-uniform-to-W-app}
\|W_n-W\|_{L^\infty(\dSphe)}\to0.
\end{equation}
Indeed, uniform continuity of \(K_A\) in the first variable reduces the claim to
finitely many scalar narrow-convergence limits.

Let \(W_{\max}\coloneqq \max_{\dSphe} W\).  Fix a point \(x_0\notin\argmax_{\dSphe} W\).
Choose an open neighborhood \(O\) of \(x_0\) and a number \(\delta>0\) such that
\[
\sup_{\overline O} W\le W_{\max}-3\delta.
\]
Choose also an open set \(U\subset\dSphe\) with \(\omega(U)>0\) and
\[
\inf_U W\ge W_{\max}-\delta.
\]
For all sufficiently large \(n\), \eqref{eq:Wn-uniform-to-W-app} gives
\[
\sup_{\overline O} W_n\le W_{\max}-2\delta,
\qquad
\inf_U W_n\ge W_{\max}-\frac32\delta.
\]
Using the Gibbs representation of \(\mu_n\),
\[
\mu_n(O)
\le
\mu_n(\overline O)
\le
\frac{\omega(\dSphe)\exp((W_{\max}-2\delta)/\eps_n)}
{\omega(U)\exp((W_{\max}-\frac32\delta)/\eps_n)}
\le
C_U\exp\left(-\frac{\delta}{2\eps_n}\right)
\to0.
\]
By the Portmanteau theorem for the open set \(O\),
\[
\nu(O)\le \liminf_{n\to\infty}\mu_n(O)=0.
\]
Thus every point outside \(\argmax_{\dSphe} W\) has a neighborhood of
zero \(\nu\)-mass.  This proves
\(\operatorname{supp}\nu\subset\argmax_{\dSphe} W\).
\end{proof}

\begin{lemma}[support-minimality inside the max-support class]
\label{lem:support-minimality-Gmax}
Let \(\nu\in\mathfrak G_A^{\max}\).  Then, for every
\(\eta\in\mathcal P(\operatorname{supp}\nu)\),
\begin{equation}\label{eq:support-minimality-Gmax}
I_A(\nu)\le I_A(\eta).
\end{equation}
\end{lemma}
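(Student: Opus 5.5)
The plan is a first-order/second-order expansion of the quadratic functional $I_A$ around $\nu$ along the signed measure $\sigma\coloneqq\eta-\nu$, combined with positive semidefiniteness of the kernel $K_A$. Since $I_A$ is a quadratic form in the measure and $K_A$ is symmetric (because $A$ is symmetric), we have the exact identity
\[
I_A(\eta)=I_A(\nu)+2\int_{\dSphe}W_\nu\d\sigma+\iint_{\dSphe\times\dSphe}K_A(x,y)\d\sigma(x)\d\sigma(y).
\]
It then suffices to show that the linear term vanishes and the quadratic term is nonnegative.

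\emph{Linear term.} By definition of $\mathfrak G_A^{\max}$, every point of $\operatorname{supp}\nu$ lies in $\argmax_{\dSphe}W_\nu$. By \eqref{eq:Gmax-height-equals-interaction}, $\max_{\dSphe}W_\nu=I_A(\nu)$, so $W_\nu\equiv I_A(\nu)$ pointwise on $\operatorname{supp}\nu$ (no a.e.\ issue arises). Since $\eta\in\mathcal P(\operatorname{supp}\nu)$, we get $\int W_\nu\d\eta=I_A(\nu)=\int W_\nu\d\nu$, and hence $\int W_\nu\d\sigma=0$.

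\emph{Quadratic term.} This is the main step. I will show that $K_A$ is a positive semidefinite kernel on $\dSphe$. By \zcref{ass:nondegenerate-A}, $A$ is positive definite, so $A=B^\top B$ with $B=A^{1/2}$. Then $\aab{x,Ay}=\aab{Bx,By}$, and the power series gives
\[
K_A(x,y)=\sum_{k\ge0}\frac{1}{k!}\aab{(Bx)^{\otimes k},(By)^{\otimes k}},
\]
which converges uniformly on the compact set $\dSphe\times\dSphe$. Integrating term by term against the finite signed measure $\sigma$ yields
\[
\iint K_A\d\sigma\d\sigma=\sum_{k\ge0}\frac{1}{k!}\Bigl|\int(Bx)^{\otimes k}\d\sigma(x)\Bigr|^2\ge0.
\]
The interchange of sum and integral is justified by uniform convergence together with the finite total variation of $\sigma$.

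Combining the two terms gives $I_A(\eta)-I_A(\nu)\ge0$, which is \eqref{eq:support-minimality-Gmax}. The only subtle point is this positivity of $K_A$. It needs only positive semidefiniteness of $A$; equivalently, it follows from the Schur product theorem applied to powers of the PSD kernel $\aab{x,Ay}$. No use of $\eps$ or of the Gibbs structure is required.
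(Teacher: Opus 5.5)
Your proposal is correct and follows essentially the same route as the paper's proof. Both expand $I_A(\eta)=I_A(\nu)+2\int W_\nu\,\mathrm d\sigma+I_A(\sigma)$ with $\sigma=\eta-\nu$, kill the linear term using $W_\nu=I_A(\nu)$ on $\operatorname{supp}\nu$, and obtain $I_A(\sigma)\ge0$ from the power-series/tensor identity $\iint (x^\top Ay)^n\,\mathrm d\sigma\,\mathrm d\sigma=\bigl|\int (A^{1/2}x)^{\otimes n}\,\mathrm d\sigma\bigr|^2$.
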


\begin{proof}
Since \(\nu\in\mathfrak G_A^{\max}\), we have
\(W_\nu=I_A(\nu)\) on \(\operatorname{supp}\nu\).  Set
\(\sigma\coloneqq \eta-\nu\).  Then \(\sigma(\dSphe)=0\) and
\[
\int_\dSphe  W_\nu\d\sigma=0.
\]
Therefore
\[
I_A(\eta)=I_A(\nu+\sigma)
=I_A(\nu)+2\int_\dSphe  W_\nu\d\sigma+I_A(\sigma)
=I_A(\nu)+I_A(\sigma).
\]
It remains only to recall that \(K_A\) is positive definite.  Indeed, for every
finite signed measure \(\sigma\), the uniformly absolutely convergent expansion
on the compact sphere gives
\[
I_A(\sigma)
=
\sum_{n=0}^{\infty}\frac1{n!}
\iint (x^\top Ay)^n\d\sigma(x)\d \sigma(y),
\]
and
\[
\iint (x^\top Ay)^n\d\sigma(x)\d \sigma(y)
=
\left|
\int_\dSphe  (A^{1/2}x)^{\otimes n}\d\sigma(x)
\right|^2
\ge0.
\]
Thus \(I_A(\sigma)\ge0\), proving \eqref{eq:support-minimality-Gmax}.
\end{proof}

\begin{lemma}[two-point bound across the equator]
\label{lem:two-point-bound-across-equator}
Let \(x,y\in\dSphe\) satisfy \(x_1y_1\le0\), where \(x_1=x\cdot e_1\) and
\(y_1=y\cdot e_1\).  Then
\begin{equation}\label{eq:two-point-bound-across-equator}
\min_{t\in[0,1]} I_A\bigl(t\delta_x+(1-t)\delta_y\bigr)
\le
\max\{\e^{\lambda_2},\cosh\lambda_1\}.
\end{equation}
\end{lemma}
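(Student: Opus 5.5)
The plan is to reduce the statement to an elementary inequality for a quadratic polynomial in $t$. Set $a\coloneqq K_A(x,x)$, $b\coloneqq K_A(y,y)$ and $c\coloneqq K_A(x,y)$. Then
\[
f(t)\coloneqq I_A\bigl(t\delta_x+(1-t)\delta_y\bigr)=t^2a+(1-t)^2b+2t(1-t)c .
\]
The left-hand side of \eqref{eq:two-point-bound-across-equator} is at most each of the three test values
\[
f(1)=a,\qquad f(0)=b,\qquad f(\tfrac12)=\tfrac14(a+b+2c).
\]
So it suffices to show $\min\{a,b,\tfrac14(a+b+2c)\}\le Q_A$. We could also use the exact vertex value $(ab-c^2)/(a+b-2c)$ when $c<\min\{a,b\}$. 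I would try the three test values first and fall back on the vertex only if they turn out too lossy.

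\textbf{Spectral bounds.} Write $x=x_1e_1+x'$ and $y=y_1e_1+y'$ with $x',y'\perp e_1$, and put $s\coloneqq x_1^2$, $u\coloneqq y_1^2$. Under \zcref{ass:nondegenerate-A}, $A$ acts on $e_1^\perp$ with eigenvalues at most $\lambda_2$, and $0<\lambda_2<\lambda_1$. This gives
\[
\aab{x,Ax}\le\lambda_1s+\lambda_2(1-s),
\qquad
\aab{x,Ay}=\lambda_1x_1y_1+\aab{x',Ay'}\le-\lambda_1\sqrt{su}+\lambda_2\sqrt{(1-s)(1-u)} .
\]
The second bound uses the hypothesis $x_1y_1\le0$ together with Cauchy--Schwarz in $e_1^\perp$. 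By convexity of the exponential,
\[
a\le s\e^{\lambda_1}+(1-s)\e^{\lambda_2},\qquad b\le u\e^{\lambda_1}+(1-u)\e^{\lambda_2},\qquad c\le\exp\bigl(\lambda_2q-\lambda_1p\bigr),
\]
where $p\coloneqq\sqrt{su}$ and $q\coloneqq\sqrt{(1-s)(1-u)}$ satisfy $p+q\le1$.

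\textbf{The two extreme configurations.} Both show the constant $Q_A$ is sharp.
\begin{itemize}
\item If $s=u=0$, then $a,b\le\e^{\lambda_2}$.
\item If $s=u=1$, i.e.\ $y=-x=\mp e_1$, then $a=b=\e^{\lambda_1}$ and $c=\e^{-\lambda_1}$, so $f(\tfrac12)=\cosh\lambda_1$.
\end{itemize}

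\textbf{General case.} Assume without loss of generality that $m\coloneqq\min\{s,u\}=u$. The endpoint $t=0$ gives
\[
\min\{a,b\}\le m\e^{\lambda_1}+(1-m)\e^{\lambda_2}.
\]
This is $\le Q_A$ whenever $m$ is small. When $m$ is large, both tokens lie close to $\pm e_1$ on opposite hemispheres, so $p\ge m$ and $q\le 1-m$. The midpoint then gives
\[
f(\tfrac12)\le\tfrac12\e^{\lambda_1}+\tfrac12\exp\bigl(\lambda_2(1-m)-\lambda_1m\bigr).
\]
Here I would use the endpoint bound on $a$ (and on $b$) rather than the trivial $\e^{\lambda_1}$. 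The concrete step is to show that
\[
\min\Bigl\{m\e^{\lambda_1}+(1-m)\e^{\lambda_2},\;\tfrac14\bigl(a+b+2c\bigr)\Bigr\}\le Q_A
\]
for every $m\in[0,1]$. I would do this by taking a suitable convex combination of the two bounds, with weight depending on $m$, and checking that the resulting one-variable function of $m$ is maximized at $m\in\{0,1\}$, where it equals $\e^{\lambda_2}$ or $\cosh\lambda_1$. A possible route is convexity of $m\mapsto\exp(\lambda_2(1-m)-\lambda_1m)$ combined with linearity of the endpoint bound.

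\textbf{Main obstacle.} The hard part is this last one-parameter inequality in the intermediate regime of $m$. The crude bounds may fail to interpolate below $\max\{\e^{\lambda_2},\cosh\lambda_1\}$ there. If they do fail, I would switch to the exact minimum of the quadratic $f$, namely $(ab-c^2)/(a+b-2c)$, which is increasing in $c$ and in $a,b$ and is $\le\min\{a,b\}$. Substituting the upper bounds for $a,b,c$ turns the claim into a monotonicity statement in $(s,u)$, which I would verify by checking the two corners above and the boundary edges $s=1$ and $u=0$.
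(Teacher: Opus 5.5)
There is a genuine gap. Your primary route, testing only $t\in\{0,\tfrac12,1\}$, fails. The failure is in the true values of $a,b,c$, not in your upper bounds, so no sharpening of the spectral estimates can rescue it.

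Here is a counterexample. Take $d=2$, $\lambda_1=1$ and $\lambda_2=\log\cosh 1\approx0.434$, so that $Q_A=\e^{\lambda_2}=\cosh1\approx1.5431$. Let $x=e_1$ and $y=-\beta e_1+\sqrt{1-\beta^2}\,e_2$ with $\beta>0$ small. Then
\[
a=\e,\qquad b=\exp\bigl(\lambda_2+(1-\lambda_2)\beta^2\bigr)>\cosh1,\qquad c=\e^{-\beta}.
\]
As $\beta\downarrow0$, $f(\tfrac12)=\tfrac14(a+b+2c)\to\tfrac14(\e+\cosh1+2)\approx1.565$. Already at $\beta=0.03$ one gets $b\approx1.5439$ and $f(\tfrac12)\approx1.5508$, so all three test values exceed $Q_A$.

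Consequently your ``concrete step'' $\min\{m\e^{\lambda_1}+(1-m)\e^{\lambda_2},\tfrac14(a+b+2c)\}\le Q_A$ is false; here $m=\beta^2$. The bad configuration is one token at $e_1$ and the other just across the equator, where the optimal weight is far from $\tfrac12$. It is not the ``$m$ large'' regime you were worried about.

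Your fallback substitutes the bounds monotonically into the vertex value. That leads to a two-variable inequality in $(s,u)$ which is in fact true, but you give no proof of it. Checking the corners and the edges $s=1$, $u=0$ controls nothing in the interior unless you show the function is maximized on the boundary, and no such monotonicity is established. That unproved step is the whole content of the lemma.

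The missing idea is a configuration-dependent trial weight.

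1. After swapping $x$ and $y$ if needed, write $\alpha\coloneqq x_1\ge0$ and $\beta\coloneqq-y_1\ge0$ with $\alpha+\beta>0$.
2. Take $t_0=\beta/(\alpha+\beta)$. This cancels the $e_1$-component of $t_0x+(1-t_0)y$; the midpoint does so only when $\alpha=\beta$. Then $f(t_0)=(\beta^2a+\alpha^2b+2\alpha\beta c)/(\alpha+\beta)^2$.
3. Keep your convexity bounds on $a$ and $b$.
4. For $c$, use $\sqrt{(1-\alpha^2)(1-\beta^2)}\le1-\tfrac12(\alpha^2+\beta^2)$. The exponent bound $-\lambda_1\alpha\beta+\lambda_2\bigl(1-\tfrac12(\alpha^2+\beta^2)\bigr)$ is then the convex combination of $\lambda_2,\lambda_1,-\lambda_1$ with weights $1-\tfrac12(\alpha^2+\beta^2)$, $\tfrac14(\alpha-\beta)^2$, $\tfrac14(\alpha+\beta)^2$. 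Apply convexity of $\exp$ to this three-point combination.
5. Collecting coefficients gives exactly $f(t_0)\le(1-\alpha\beta)\e^{\lambda_2}+\alpha\beta\cosh\lambda_1\le Q_A$.

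No optimization over $(s,u)$ remains; this is the paper's proof.
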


\begin{proof}
After interchanging \(x\) and \(y\), we may write
\[
x_1=\alpha\ge0,
\qquad
y_1=-\beta\le0,
\qquad
\alpha,\beta\in[0,1].
\]
Set
\[
s\coloneqq \sqrt{1-\alpha^2},
\qquad
r\coloneqq \sqrt{1-\beta^2}.
\]
Then
\[
x^\top Ax\le \lambda_2+(\lambda_1-\lambda_2)\alpha^2=:a,
\qquad
y^\top Ay\le \lambda_2+(\lambda_1-\lambda_2)\beta^2=:c,
\]
and, by Cauchy--Schwarz in the orthogonal complement of \(e_1\),
\[
x^\top Ay\le -\lambda_1\alpha\beta+\lambda_2sr=:b.
\]
Hence
\[
I_A\bigl(t\delta_x+(1-t)\delta_y\bigr)
\le
 t^2\e^a+2t(1-t)\e^b+(1-t)^2\e^c.
\]
If \(\alpha=\beta=0\), then \(a,c,b\le\lambda_2\), so the claim follows.  Assume
\(\alpha+\beta>0\), and choose the trial weight
\[
t_0\coloneqq \frac{\beta}{\alpha+\beta},
\qquad
1-t_0=\frac{\alpha}{\alpha+\beta}.
\]
This weight cancels the top coordinate.  Thus it is enough to show
\begin{equation}\label{eq:two-point-trial-estimate}
\frac{\beta^2\e^a+\alpha^2\e^c+2\alpha\beta\e^b}{(\alpha+\beta)^2}
\le
(1-\alpha\beta)\e^{\lambda_2}+\alpha\beta\cosh\lambda_1.
\end{equation}
By convexity of the exponential,
\[
\e^a\le (1-\alpha^2)\e^{\lambda_2}+\alpha^2\e^{\lambda_1},
\qquad
\e^c\le (1-\beta^2)\e^{\lambda_2}+\beta^2\e^{\lambda_1}.
\]
Moreover \(sr\le(s^2+r^2)/2\), so
\[
b\le \bar b\coloneqq -\lambda_1\alpha\beta+\frac{\lambda_2}{2}(s^2+r^2).
\]
With
\[
\omega_0\coloneqq \frac{s^2+r^2}{2},
\qquad
\omega_+\coloneqq \frac{(\alpha-\beta)^2}{4},
\qquad
\omega_-\coloneqq \frac{(\alpha+\beta)^2}{4},
\]
one has \(\omega_0+\omega_++\omega_-=1\) and
\[
\bar b=\omega_0\lambda_2+\omega_+\lambda_1+\omega_-(-\lambda_1).
\]
Hence
\[
\e^b\le \omega_0\e^{\lambda_2}+\omega_+\e^{\lambda_1}
+\omega_-\e^{-\lambda_1}.
\]
Substituting these three estimates into the numerator in
\eqref{eq:two-point-trial-estimate}, the coefficient of \(\e^{\lambda_2}\) is
\[
\beta^2(1-\alpha^2)+\alpha^2(1-\beta^2)+\alpha\beta(s^2+r^2)
=(\alpha+\beta)^2(1-\alpha\beta),
\]
the coefficient of \(\e^{\lambda_1}\) is
\[
2\alpha^2\beta^2+2\alpha\beta\omega_+
=\frac{\alpha\beta}{2}(\alpha+\beta)^2,
\]
and the coefficient of \(\e^{-\lambda_1}\) is
\[
2\alpha\beta\omega_-=\frac{\alpha\beta}{2}(\alpha+\beta)^2.
\]
This proves \eqref{eq:two-point-trial-estimate}.  Since the right-hand side of
\eqref{eq:two-point-trial-estimate} is a convex combination of
\(\e^{\lambda_2}\) and \(\cosh\lambda_1\), it is bounded by
\(\max\{\e^{\lambda_2},\cosh\lambda_1\}\).  This proves the lemma.
\end{proof}

\begin{proposition}[sharp first-excited max-support bound]
\label{prop:sharp-first-excited-max-support-bound}
One has
\begin{equation}\label{eq:sharp-first-excited-max-support-bound}
\sup_{\nu\in\mathfrak G_A^{\max}\setminus\{\delta_{e_1},\delta_{-e_1}\}}
I_A(\nu)
=
Q_A\coloneqq \max\{\e^{\lambda_2},\cosh\lambda_1\}.
\end{equation}
In particular \(Q_A<\e^{\lambda_1}\).
\end{proposition}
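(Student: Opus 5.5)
The equality will follow from two matching bounds: explicit elements of \(\mathfrak G_A^{\max}\setminus\{\delta_{e_1},\delta_{-e_1}\}\) attaining \(\e^{\lambda_2}\) and \(\cosh\lambda_1\), and an upper bound that splits according to where \(\operatorname{supp}\nu\) lies relative to the equator \(\{x_1=0\}\).

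\emph{Lower bound.} First take \(\nu=\delta_{e_2}\). Then \(W_\nu(x)=\e^{\lambda_2x_2}\), which is maximized exactly at \(e_2\). Hence \(\nu\in\mathfrak G_A^{\max}\) and \(I_A(\nu)=\e^{\lambda_2}\).

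Next take \(\nu=\tfrac12(\delta_{e_1}+\delta_{-e_1})\). Then \(W_\nu(x)=\cosh(\lambda_1x_1)\), which is maximized exactly at \(\pm e_1\). Hence \(\nu\in\mathfrak G_A^{\max}\), and by \eqref{eq:Gmax-height-equals-interaction}, \(I_A(\nu)=\cosh\lambda_1\).

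So the supremum is at least \(Q_A\). The strict inequality \(Q_A<\e^{\lambda_1}\) holds because \(\lambda_2<\lambda_1\) by \zcref{ass:nondegenerate-A} and \(\cosh\lambda_1<\e^{\lambda_1}\).

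\emph{Upper bound, crossing case.} Let \(\nu\in\mathfrak G_A^{\max}\setminus\{\delta_{\pm e_1}\}\), and suppose \(\operatorname{supp}\nu\) contains two points \(x,y\) with \(x_1y_1\le 0\); this includes the case of a support point on the equator, by taking \(x=y\). Apply \zcref{lem:support-minimality-Gmax} with \(\eta=t\delta_x+(1-t)\delta_y\in\mathcal P(\operatorname{supp}\nu)\) for every \(t\in[0,1]\). This gives \(I_A(\nu)\le\min_t I_A(t\delta_x+(1-t)\delta_y)\), which is at most \(Q_A\) by \zcref{lem:two-point-bound-across-equator}.

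\emph{Upper bound, one-hemisphere case.} Otherwise, after applying the antipodal map \(S\) (which preserves \(K_A\) and hence \(\mathfrak G_A^{\max}\)), we may assume \(\operatorname{supp}\nu\subset\{x_1>0\}\). In this case I will show that \(\nu=\delta_{e_1}\), which is excluded, so the case is vacuous.

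The plan is to use the max-support condition as a first-order condition. Since \(\operatorname{supp}\nu\) is compact and lies in \(\{x_1>0\}\), pick \(x^\ast\in\operatorname{supp}\nu\) minimizing \(x_1\). Suppose \(x^\ast\neq e_1\). Because \(x^\ast\) is a maximizer of \(W_\nu\) on the sphere, the tangential gradient vanishes:
\[
P^\perp_{x^\ast}\int \e^{\aab{x^\ast,Ay}}Ay\d\nu(y)=0 .
\]
Test this identity against the tangent vector \(v=P^\perp_{x^\ast}e_1\neq0\). This produces
\[
\int\e^{\aab{x^\ast,Ay}}\bigl(\lambda_1y_1-x_1^\ast\aab{x^\ast,Ay}\bigr)\d\nu(y)=0 .
\]
Using the spectral gap \(\lambda_1>\lambda_2\), the extremality \(y_1\ge x^\ast_1>0\) for \(y\in\operatorname{supp}\nu\), and Cauchy--Schwarz on the \(e_1^\perp\) components, I would show that the integrand is strictly positive. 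That contradicts the identity and forces \(x^\ast=e_1\). Then every support point satisfies \(y_1\ge x^\ast_1=1\), so \(\operatorname{supp}\nu=\{e_1\}\), i.e.\ \(\nu=\delta_{e_1}\).

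\emph{Main obstacle.} Establishing the sign of this integrand uniformly is the step I expect to be hardest; choosing the minimal \(x_1\) may not be the right extremal point for the sign argument. If the first-order test fails, I would fall back on a second-order (Hessian) condition at \(x^\ast\). Alternatively, I would compare \(W_\nu(e_1)\le I_A(\nu)\) with \(I_A(\nu)\le I_A(\delta_{x^\ast})\), the latter from \zcref{lem:support-minimality-Gmax}, to squeeze \(\nu\) onto \(\delta_{e_1}\).
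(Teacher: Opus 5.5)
Your proposal is correct and matches the paper's proof. It uses the same attaining measures \(\delta_{e_2}\) and \(\tfrac12(\delta_{e_1}+\delta_{-e_1})\), the same hemisphere-crossing case handled by support-minimality plus the two-point equator bound, and the same first-order test at the support point \(x^\ast\) of minimal \(x_1\) in the direction \(e_1-x_1^\ast x^\ast\). The only difference is bookkeeping: the paper treats an equatorial support point inside the one-hemisphere branch via \(I_A(\nu)\le I_A(\delta_x)\le \e^{\lambda_2}\), while you absorb it into the crossing case with \(x=y\), which is equivalent.

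The sign step you were worried about goes through with the minimal-\(x_1\) choice, so no second-order fallback is needed. For \(c=x_1^\ast\in(0,1)\), Cauchy--Schwarz on the \(e_1^\perp\) components gives \(\langle x^\ast,Ay\rangle\le\lambda_1 c y_1+\lambda_2\sqrt{1-c^2}\sqrt{1-y_1^2}\). Then \(y_1\ge c\) yields \(\lambda_1 y_1-c\langle x^\ast,Ay\rangle\ge c(1-c^2)(\lambda_1-\lambda_2)>0\) for every \(y\in\operatorname{supp}\nu\), which is the contradiction you need.
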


\begin{proof}
We first prove the upper bound.  Let
\(\nu\in\mathfrak G_A^{\max}\setminus\{\delta_{e_1},\delta_{-e_1}\}\).

Suppose first that \(\operatorname{supp}\nu\subset\{x_1\ge0\}\).  Let
\(x\in\operatorname{supp}\nu\) minimize \(x_1\) on the support and set
\(c\coloneqq x_1\).  If \(0<c<1\), define
\[
v\coloneqq \frac{e_1-cx}{\sqrt{1-c^2}}\in T_x\dSphe.
\]
For every \(y\in\operatorname{supp}\nu\), one has \(y_1\ge c\).  Also
\[
x^\top Ay\le \lambda_1cy_1+
\lambda_2\sqrt{1-c^2}\sqrt{1-y_1^2}.
\]
Consequently
\[
(e_1-cx)^\top Ay
\ge
\lambda_1(1-c^2)y_1
-c\lambda_2\sqrt{1-c^2}\sqrt{1-y_1^2}>0,
\]
where the strict inequality follows from \(y_1\ge c>0\) and
\(\lambda_1>\lambda_2\).  Hence
\[
\partial_v W_\nu(x)
=
\int_\dSphe  \e^{x^\top Ay}v^\top Ay\d\nu(y)>0,
\]
contradicting \(x\in\argmax_{\dSphe} W_\nu\).  Therefore either
\(c=1\), in which case \(\nu=\delta_{e_1}\), or \(c=0\).  Since
\(\nu\ne\delta_{e_1}\), we must have \(c=0\).  Thus there is
\(x\in\operatorname{supp}\nu\) with \(x_1=0\).  By
\zcref{lem:support-minimality-Gmax},
\[
I_A(\nu)\le I_A(\delta_x)=\e^{x^\top Ax}\le \e^{\lambda_2}.
\]
The case \(\operatorname{supp}\nu\subset\{x_1\le0\}\) is identical and gives
\(I_A(\nu)\le\e^{\lambda_2}\), unless \(\nu=\delta_{-e_1}\), which is excluded.

It remains to consider the case where the support meets both closed hemispheres.
Then there are \(x,y\in\operatorname{supp}\nu\) such that \(x_1y_1\le0\).  By
\zcref{lem:support-minimality-Gmax}, for every \(t\in[0,1]\),
\[
I_A(\nu)
\le
I_A\bigl(t\delta_x+(1-t)\delta_y\bigr).
\]
Taking the minimum over \(t\) and applying
\zcref{lem:two-point-bound-across-equator} yields
\[
I_A(\nu)\le Q_A.
\]
This proves the upper bound.

For sharpness, \(\delta_{e_2}\) and \(\delta_{-e_2}\) belong to
\(\mathfrak G_A^{\max}\), and
\[
I_A(\delta_{\pm e_2})=\e^{\lambda_2}.
\]
Also
\[
\sigma_1\coloneqq \frac12(\delta_{e_1}+\delta_{-e_1})
\in\mathfrak G_A^{\max},
\qquad
I_A(\sigma_1)=\cosh\lambda_1.
\]
Thus the supremum is exactly \(Q_A\).  Finally,
\(\e^{\lambda_2}<\e^{\lambda_1}\) and \(\cosh\lambda_1<\e^{\lambda_1}\), so
\(Q_A<\e^{\lambda_1}\).
\end{proof}

\begin{proposition}[explicit small-temperature excited critical gap]
\label{prop:explicit-small-temperature-excited-critical-gap}
Let
\[
\Delta_{\mathrm{crit}}(\eps)
\coloneqq 
\inf\left\{
\Energygap(\mu):
\mu\in\Crit\Energy\setminus\Minimizers
\right\}.
\]
Then
\begin{equation}\label{eq:explicit-small-temperature-excited-critical-gap}
\liminf_{\eps\downarrow0}\Delta_{\mathrm{crit}}(\eps)
\ge
\Delta_0\coloneqq \frac12(\e^{\lambda_1}-Q_A).
\end{equation}
Equivalently, for every \(\gamma>0\), there exists \(\eps_\gamma>0\) such that,
for \(0<\eps<\eps_\gamma\),
\begin{equation}\label{eq:explicit-small-temperature-excited-critical-gap-finite}
\mu\in\Crit\Energy\setminus\Minimizers
\quad\Longrightarrow\quad
\Energygap(\mu)\ge \Delta_0-\gamma.
\end{equation}
\end{proposition}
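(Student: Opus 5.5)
We argue by contradiction and compactness, reducing the statement to the zero-temperature classification in \zcref{prop:sharp-first-excited-max-support-bound}. Suppose the claim fails. Then there are \(\gamma>0\), \(\eps_n\downarrow0\) and \(\mu_n=\rho_n\omega\in\Crit E_{\eps_n,A}\setminus\Minimizers\) with \(E_{\eps_n,A}(\mu_n)-m_{\eps_n}\le\Delta_0-\gamma\).

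The first step is two elementary bounds. For the upper bound on the minimum, take a trial density concentrated in an \(\eps\)-dependent cap around \(e_1\) (e.g.\ a cap of radius \(\sqrt\eps\)). Its entropy is \(O(\log(1/\eps))\), so \(m_\eps\le-\frac12\e^{\lambda_1}+o(1)\). For the lower bound on the critical energy, note \(\Ent\ge0\), hence \(E_{\eps_n,A}(\mu_n)\ge-\frac12 I_A(\mu_n)\). Combining the two,
\[
I_A(\mu_n)\ge \e^{\lambda_1}-2(\Delta_0-\gamma)-o(1)=Q_A+2\gamma-o(1).
\]
By narrow compactness of \(\PdSphe\), pass to a subsequence \(\mu_n\rightharpoonup\nu\). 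Then \(\nu\in\mathfrak G_A^{\mathrm{lim}}\subset\mathfrak G_A^{\max}\) by \zcref{lem:gibbs-limits-are-max-support}. Since \(K_A\) is continuous, \(I_A\) is narrowly continuous, so \(I_A(\nu)\ge Q_A+2\gamma>Q_A\). \zcref{prop:sharp-first-excited-max-support-bound} then forces \(\nu\in\{\delta_{e_1},\delta_{-e_1}\}\). By symmetry take \(\nu=\delta_{e_1}\); then \(\mu_n\in V_+\) for large \(n\).

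The second step upgrades this to a genuinely low-energy family, i.e.\ \(E_{\eps_n,A}(\mu_n)-m_{\eps_n}\to0\). Write \(W_n=W_{\mu_n}\). The Gibbs form of \(\mu_n\) gives
\[
\eps_n\Ent(\mu_n)=\int W_n\d\mu_n-\eps_n\log\int \e^{W_n/\eps_n}\d\omega=I_A(\mu_n)-\eps_n\log Z_n .
\]
The family \(W_n\) is uniformly Lipschitz, so the Laplace principle gives \(\eps_n\log Z_n\ge\max W_n-o(1)\). Also \(\max W_n\to\max W_{\delta_{e_1}}=\e^{\lambda_1}\) and \(I_A(\mu_n)\to\e^{\lambda_1}\). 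Hence \(0\le\eps_n\Ent(\mu_n)\le o(1)\), and therefore \(E_{\eps_n,A}(\mu_n)\to-\frac12\e^{\lambda_1}\). The same computation applied to \(\mubar^+\), together with the upper bound on \(m_\eps\), gives \(m_\eps\to-\frac12\e^{\lambda_1}\). Thus \(\mu_n\) has gap \(\eta_n\downarrow0\) and lies in \(V_+\), so it satisfies \eqref{eq:low-energy-critical-main}.

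The main obstacle is to conclude \(\mu_n=\mubar^+\), which contradicts \(\mu_n\notin\Minimizers\). Item (iv) of \zcref{thm:package} is stated with a threshold \(\eta_{\mathrm{crit}}(\eps)\) for each fixed \(\eps\), and that threshold need not dominate \(\eta_n\). We therefore avoid (iv) and use the uniform chord estimate (iii) directly on the pair of low-energy families \(\mu_n^0=\mubar^+\) (at \(\eps_n\)) and \(\mu_n^1=\mu_n\).

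Along the chord \(\mu^s=(1-s)\mubar^++s\mu_n\), set \(f(s)=E_{\eps_n,A}(\mu^s)\). Then
\[
f''(s)=\eps_n\int\frac{(\rho_n-\rhobar^+)^2}{\rho^s}\d\omega-\iint K_A\,\d(\mu_n-\mubar^+)^{\otimes2}.
\]
Write \(h=(\rho_n-\rhobar^+)/\rho^s\in L^2_0(\mu^s)\). Since \(K_A\) is positive definite, the interaction term is controlled by \(\aab{h,\Kbar_{\mu^s}h}\). By \eqref{eq:chord-K-op-main} this gives
\[
f''(s)\ge \eps_n\bigl(1-\tfrac{\lambda_2}{\lambda_1}-o(1)\bigr)\Vab{h}^2_{L^2(\mu^s)},
\]
so \(f\) is strictly convex when \(\mu_n\ne\mubar^+\). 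Both endpoints are critical points, so \(f'(0)=f'(1)=0\), which is impossible for a strictly convex \(f\) unless the endpoints coincide. Hence \(\mu_n=\mubar^+\) for all large \(n\), the desired contradiction.

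The delicate points are justifying the differentiation of \(\Ent\) along the chord, which is fine because all densities are positive and continuous, and checking that the \(o(1)\) in (iii) is uniform along our specific sequence. The latter is exactly what (iii) asserts for arbitrary low-energy families.
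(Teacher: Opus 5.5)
Your proof is correct and follows essentially the same route as the paper. You pass to a narrow limit, use \zcref{lem:gibbs-limits-are-max-support} and \zcref{prop:sharp-first-excited-max-support-bound} to force a Dirac limit at $\pm e_1$ whenever the gap stays below $\Delta_0-\gamma$, show that the gap then tends to zero, and invoke one-well uniqueness of low-energy critical points. Your caution about the fixed-$\eps$ threshold in item (iv) is well taken. Your convexity-along-the-chord argument is exactly the asymptotic uniqueness proved in Step~5 of the proof of \zcref{thm:package}, which is the form the paper's proof implicitly relies on.
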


\begin{proof}
Let \(\eps_n\downarrow0\) and let
\(\mu_n\in\Crit E_{\eps_n,A}\setminus\mathcal{M}_{E_{\eps_n}}\).  Passing to a subsequence, assume \(\mu_n\rightharpoonup\nu\).  By
\zcref{lem:gibbs-limits-are-max-support},
\(\nu\in\mathfrak G_A^{\max}\).

If \(\nu=\delta_{e_1}\) or \(\nu=\delta_{-e_1}\), then the Gibbs representation
and standard Laplace estimates imply
\(E_{\eps_n,A}(\mu_n)\to-\frac12\e^{\lambda_1}\).  Since
\(m_{\eps_n}\to-\frac12\e^{\lambda_1}\), we get \(\Delta_{\eps_n,A}(\mu_n)\to0\).  The
one-well low-energy critical uniqueness in \zcref{thm:package} then
forces \(\mu_n\) to be the corresponding minimizer for all large \(n\), contrary
to \(\mu_n\notin\mathcal{M}_{E_{\eps_n}}\).  Thus a non-minimal critical sequence
cannot converge to either ground Dirac.

Therefore \(\nu\notin\{\delta_{e_1},\delta_{-e_1}\}\).  By
\zcref{prop:sharp-first-excited-max-support-bound},
\(I_A(\nu)\le Q_A\).  The test-cap construction in the proof of
\zcref{thm:package} gives
\[
m_{\eps_n}\le -\frac12\e^{\lambda_1}+C\eps_n|\log\eps_n|.
\]
Using \(\eps_n\KL(\mu_n\mid\omega)\ge0\), we obtain
\[
\Delta_{\eps_n,A}(\mu_n)
=
E_{\eps_n,A}(\mu_n)-m_{\eps_n}
\ge
\frac12(\e^{\lambda_1}-I_A(\mu_n))-C\eps_n|\log\eps_n|.
\]
Since \(I_A(\mu_n)\to I_A(\nu)\),
\[
\liminf_{n\to\infty}\Delta_{\eps_n,A}(\mu_n)
\ge
\frac12(\e^{\lambda_1}-I_A(\nu))
\ge
\frac12(\e^{\lambda_1}-Q_A)=\Delta_0.
\]
Taking the infimum over all such sequences proves \eqref{eq:explicit-small-temperature-excited-critical-gap}.
\end{proof}

\paragraph{Proof of the static small-noise package.}\label{app:static}

We give the finite-\(\eps\) argument behind
\zcref{thm:package}.
The zero-temperature interaction only supplies a separation estimate; the actual one-well uniqueness comes from an Euler--Lagrange difference identity and the same-well chord estimate
\eqref{eq:chord-K-op-main}.

\begin{lemma}[uniform one-well Laplace package for low-energy critical points]
\label{lem:uniform-one-well-laplace}
Let \(\eta_\eps\downarrow0\), and let
\(\mu_\eps=\rho_\eps\omega\) be a family of critical points of
\(\Energy\) satisfying
\[
\Energy(\mu_\eps)\le m_\eps+\eta_\eps,
\qquad
\eps(\log\rho_\eps+1)-K_A*\mu_\eps=\Lambda_\eps.
\]
Assume, in addition, that \(\mu_\eps\rightharpoonup\delta_{e_1}\); for a class
of such families, assume this convergence is uniform in the usual contradiction
sense.  Then the following estimates hold uniformly over that class.  There are
constants \(r,c,C>0\), a center \(x_{\mu,\eps}\in\dSphe\), and the scale
\[
\sigma_\eps^2=\frac{\eps}{\e^{\lambda_1}\lambda_1}
\]
such that
\[
x_{\mu,\eps}=e_1+O(\eps),
\qquad
\mu_\eps(\dSphe\setminus B_r(x_{\mu,\eps}))\le C\e^{-c/\eps}.
\]
With \(u_\mu(x)=\exp_{x_{\mu,\eps}}^{-1}(x)\) on the cap and
\(u_\mu=0\) outside, the rescaled density converges to the standard Gaussian in
weighted \(L^1(\mathbb R^{d-1})\).  In particular, for every unit tangent vector
\(\tau\in T_{x_{\mu,\eps}}\dSphe\),
\[
\int_\dSphe \aab{u_\mu(x),\tau}\d\mu_\eps(x)=O(\eps),
\]
\[
\int_\dSphe \aab{u_\mu(x),\tau}^2\d\mu_\eps(x)
=\sigma_\eps^2+o(\eps),
\]
and, for every integer \(m\ge0\),
\[
\int_\dSphe  |u_\mu(x)|^m\d\mu_\eps(x)=O(\eps^{m/2}).
\]
The antipodal statement holds for families converging to \(\delta_{-e_1}\).
\end{lemma}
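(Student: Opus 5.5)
The plan is a Laplace-method analysis of the Gibbs fixed-point equation, made uniform by a compactness/contradiction argument. For a critical point $\mu_\eps=\rho_\eps\omega$ the Euler--Lagrange equation gives $\rho_\eps\propto\exp(W_{\mu_\eps}/\eps)$. Narrow convergence $\mu_\eps\rightharpoonup\delta_{e_1}$ implies $W_{\mu_\eps}\to W_{\delta_{e_1}}=\e^{\aab{\cdot,Ae_1}}=\e^{\lambda_1x_1}$ uniformly, and in fact in $C^k$ for every $k$, since $K_A$ is smooth and we may differentiate under the integral. The limit potential $\e^{\lambda_1x_1}$ has a unique nondegenerate maximum at $e_1$. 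Its Riemannian Hessian there is $-\lambda_1\e^{\lambda_1}I$ on $T_{e_1}\dSphe$: write $x_1=\cos|u|\approx1-|u|^2/2$, so that $\e^{\lambda_1x_1}\approx\e^{\lambda_1}(1-\lambda_1|u|^2/2)$. This yields the variance $\eps/(\e^{\lambda_1}\lambda_1)$, which matches $\sigma_\eps^2$.

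The key steps, in order, are as follows.
\begin{enumerate}
\item By $C^2$-closeness, for small $\eps$ the potential $W_\eps\coloneqq W_{\mu_\eps}$ has a unique maximizer $x_{\mu,\eps}$ in a fixed cap $B_{2r}(e_1)$. Outside $B_r(x_{\mu,\eps})$ it is at least a fixed $\delta>0$ below its maximum. Uniformity over the class comes from the contradiction formulation: otherwise extract a sequence violating the bound, pass to the limit $\delta_{e_1}$, and contradict the uniform $C^2$ convergence.
\item Take $x_{\mu,\eps}$ as the center. The tail bound $\mu_\eps(\dSphe\setminus B_r)\le C\e^{-c/\eps}$ then follows exactly as in the proof of \zcref{lem:gibbs-limits-are-max-support}: compare the numerator on the complement with the denominator on a ball of radius $\sqrt\eps$ around the maximizer, which has $\omega$-mass of order $\eps^{(d-1)/2}$. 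The polynomial factor is absorbed into $\e^{-c/\eps}$.
\item Rescale $u=\sqrt\eps\,z$ in exponential coordinates and Taylor expand: $W_\eps(\exp_{x}(u))=W_\eps(x)+\tfrac12\aab{u,H_\eps u}+R_\eps(u)$ with $|R_\eps|\le C|u|^3$. Here $H_\eps\to-\lambda_1\e^{\lambda_1}I$, and the volume Jacobian is $1+O(|u|^2)$. Dominated convergence, using a uniform Gaussian majorant from the cubic remainder on the cap, gives weighted $L^1$ convergence of the rescaled density to the Gaussian with covariance $\sigma_\eps^2/\eps\cdot I$, normalized as stated. The moment bounds $O(\eps^{m/2})$ and the second-moment asymptotics $\sigma_\eps^2+o(\eps)$ follow directly.
\item The first moment is $O(\eps)$ rather than $O(\sqrt\eps)$. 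Because the expansion is about the exact maximizer, the gradient term vanishes, and the odd Gaussian moments cancel. The leading odd contribution comes from the cubic term, $\eps^{-1}\cdot\eps^{3/2}\cdot\sqrt\eps=O(\eps)$, together with the odd part of the Jacobian. This gives $\int\aab{u_\mu,\tau}\d\mu_\eps=O(\eps)$.
\item For $x_{\mu,\eps}=e_1+O(\eps)$, use self-consistency. The maximizer satisfies $\nabla W_\eps(x_{\mu,\eps})=0$. Write $W_\eps$ as an integral against $\mu_\eps$ concentrated at scale $\sqrt\eps$, and expand $\nabla W_\eps(x)=P^\perp_x\int \e^{\aab{x,Ay}}Ay\,\d\mu_\eps(y)$ around $y=x_{\mu,\eps}$. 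First-order terms vanish up to the $O(\eps)$ first moment, and second-order terms are $O(\eps)$. So $P^\perp_{x}Ax=O(\eps)$ at $x=x_{\mu,\eps}$, meaning $x_{\mu,\eps}$ is an $O(\eps)$-approximate eigenvector. Simplicity of $\lambda_1$ (\zcref{ass:nondegenerate-A}) and proximity to $e_1$ then give $|x_{\mu,\eps}-e_1|=O(\eps)$ by the implicit function theorem applied to $x\mapsto P_x^\perp Ax$, whose linearization at $e_1$ is invertible with gap $\lambda_1-\lambda_2$.
\end{enumerate}

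The antipodal case follows by applying $S$, since $K_A(-x,-y)=K_A(x,y)$. The low-energy hypothesis is used only to guarantee the convergence to $\delta_{e_1}$.

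The main obstacle is uniformity together with the first-moment and center precision. The Laplace expansion must be controlled with constants independent of the family, which needs uniform $C^3$ bounds on $W_\eps$ near the center; these follow from smoothness of $K_A$ and compactness. The $O(\eps)$ claims additionally require the cancellation at the exact maximizer. If there is trouble, I would first try defining $x_{\mu,\eps}$ as the exact maximizer of $W_\eps$, rather than as a barycenter, precisely so that the linear term vanishes identically.
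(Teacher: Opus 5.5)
Your proposal is correct and follows essentially the same route as the paper. The shared steps are: uniform $C^k$ convergence of $W_{\mu_\eps}$ to $\e^{\lambda_1x_1}$; the exact maximizer $x_{\mu,\eps}$ as center, with the centered Gibbs representation and the Laplace scaling $\sigma_\eps^2=\eps/(\e^{\lambda_1}\lambda_1)$; parity of the leading Gaussian for the $O(\eps)$ first moment; and the mode equation $P^\perp_{x_{\mu,\eps}}Ax_{\mu,\eps}=O(\eps)$ combined with the gap $\lambda_1-\lambda_2$ to locate the center. The differences are only cosmetic: you bound the tail by comparing against a $\sqrt{\eps}$-ball rather than through the paper's peak-height estimate, and you use an inverse-function argument where the paper computes explicitly in coordinates.
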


\begin{proof}
Since all \(x\)-derivatives of \(K_A(x,y)\) are continuous and bounded on
\(\dSphe\times\dSphe\), the uniform narrow convergence to \(\delta_{e_1}\) gives
\begin{equation}\label{eq:W-C4-static-proof}
W_{\mu_\eps}\to W_0,
\qquad
W_0(x)=\e^{\lambda_1 x_1},
\end{equation}
in \(C^4(\dSphe)\), uniformly over the family.  The function \(W_0\) has a unique
strict global maximum at \(e_1\), and
\[
-\operatorname{Hess}_{\dSphe}W_0(e_1)=a_0 I_{T_{e_1}\dSphe},
\qquad
 a_0\coloneqq \e^{\lambda_1}\lambda_1.
\]
Therefore, for a fixed small cap and all small \(\eps\), \(W_{\mu_\eps}\) has a
unique maximizer \(x_{\mu,\eps}\) in that cap, this maximizer is global, and
\(x_{\mu,\eps}\to e_1\).  Moreover, with
\(u\in T_{x_{\mu,\eps}}\dSphe\),
\begin{equation}\label{eq:potential-expansion-static-proof}
W_{\mu_\eps}(x_{\mu,\eps})
-W_{\mu_\eps}(\exp_{x_{\mu,\eps}}u)
=\frac12\aab{A_{\mu,\eps}u,u}+R_{\mu,\eps}(u),
\end{equation}
where
\[
A_{\mu,\eps}=a_0 I+o(1),
\qquad
|R_{\mu,\eps}(u)|\le C|u|^3
\]
uniformly.  In particular, for some fixed \(c_1,c_2>0\),
\begin{equation}\label{eq:quadratic-gap-static-proof}
c_1|u|^2
\le
W_{\mu_\eps}(x_{\mu,\eps})
-W_{\mu_\eps}(\exp_{x_{\mu,\eps}}u)
\le c_2|u|^2
\end{equation}
inside the cap, and there is a fixed gap outside the cap.

Subtracting the Euler--Lagrange equation at the maximizer and at
\(\exp_{x_{\mu,\eps}}u\) gives the centered Gibbs representation
\begin{equation}\label{eq:centered-Gibbs-static-proof}
\rho_\eps(\exp_{x_{\mu,\eps}}u)
=\rho_\eps(x_{\mu,\eps})
\exp\left(-\frac{
W_{\mu_\eps}(x_{\mu,\eps})
-W_{\mu_\eps}(\exp_{x_{\mu,\eps}}u)}{\eps}\right).
\end{equation}
The quadratic bounds imply the peak-height estimate
\[
C^{-1}\eps^{-(d-1)/2}
\le \rho_\eps(x_{\mu,\eps})
\le C\eps^{-(d-1)/2},
\]
and the fixed outside gap gives
\[
\mu_\eps(\dSphe\setminus B_r(x_{\mu,\eps}))\le C\e^{-c/\eps}.
\]
After the scaling \(u=\sigma_\eps\xi\),
\(\sigma_\eps^2=\eps/a_0\), \eqref{eq:potential-expansion-static-proof} and
\eqref{eq:centered-Gibbs-static-proof} give convergence of the scaled density
to the standard Gaussian in weighted \(L^1(\mathbb R^{d-1})\).  The usual
Laplace expansion also yields the first-moment refinement
\begin{equation}\label{eq:first-moment-Oeps-static-proof}
\int u\d\mu_\eps=O(\eps),
\end{equation}
because the leading Gaussian is even and the first non-even contribution is of
order \(\sqrt\eps\) in the scaled variables.  This proves the stated moment
estimates.

It remains to record the sharper location of the center.  Since
\(x_{\mu,\eps}\) maximizes \(W_{\mu_\eps}\),
\begin{equation}\label{eq:mode-equation-static-proof}
0=P_{x_{\mu,\eps}}^\perp
\int_\dSphe  \e^{x_{\mu,\eps}^\top Ay}Ay\d\mu_\eps(y).
\end{equation}
Using the just-proved moments in coordinates centered at \(x_{\mu,\eps}\), the
integral in \eqref{eq:mode-equation-static-proof} equals
\[
\e^{x_{\mu,\eps}^\top A x_{\mu,\eps}}
A x_{\mu,\eps}+O(\eps),
\]
after tangent projection.  Hence
\begin{equation}\label{eq:PAx-Oeps-static-proof}
P_{x_{\mu,\eps}}^\perp A x_{\mu,\eps}=O(\eps).
\end{equation}
Write \(x_{\mu,\eps}=(\sqrt{1-|z|^2},z)\) in the eigenbasis of \(A\), with
\(z\in\mathbb R^{d-1}\).  The tangent components of \(P_x^\perp Ax\) near
\(e_1\) are
\[
((\lambda_2-\lambda_1)z_2,\dots,(\lambda_d-\lambda_1)z_d)+O(|z|^2).
\]
Since \(\lambda_1>\lambda_2\), \eqref{eq:PAx-Oeps-static-proof} gives
\(z=O(\eps)\).  Thus \(x_{\mu,\eps}=e_1+O(\eps)\).  The antipodal well is
identical after applying \(S\).
\end{proof}

\begin{proof}[Proof of \zcref{thm:package}]
We write
\[
I_A(\mu)\coloneqq \iint_{\dSphe\times\dSphe}K_A(x,y)\d\mu(x)\d\mu(y),
\qquad
W_\mu(x)\coloneqq (K_A*\mu)(x)=\int_\dSphe  K_A(x,y)\d\mu(y).
\]

\smallskip
\noindent\textbf{1. Direct method and Euler--Lagrange equation.}
Since \(\dSphe\) is compact, \(\mathcal P(\dSphe)\) is narrowly compact.  The entropy is
narrowly lower semicontinuous and \(I_A\) is narrowly continuous because
\(K_A\) is continuous and bounded.  Hence \(\Energy\) admits a global
minimizer.  Every finite-energy minimizer, and more generally every interior
critical point, has a smooth positive density \( \rho\) and satisfies
\begin{equation}\label{eq:EL-static-proof}
\eps(\log\rho(x)+1)-W_\mu(x)=\Lambda_\mu.
\end{equation}
Equivalently,
\begin{equation}\label{eq:Gibbs-static-proof}
\rho(x)=\frac{\exp(W_\mu(x)/\eps)}
{\int_\dSphe  \exp(W_\mu(z)/\eps)\d\omega(z)}.
\end{equation}
Smoothness and positivity follow immediately from the smooth positive kernel in
this fixed point formula.

\smallskip
\noindent\textbf{2. Zero-temperature separation.}
For \(x,y\in\dSphe\),
\[
x^\top Ay
\le \lambda_1.
\]
Because the top eigenvalue is simple, equality holds only for
\((x,y)=(e_1,e_1)\) or \((x,y)=(-e_1,-e_1)\).  Thus
\begin{equation}\label{eq:IA-sharp-static-proof}
I_A(\mu)\le \e^{\lambda_1},
\end{equation}
and equality holds only for \(\mu=\delta_{e_1}\) or
\(\mu=\delta_{-e_1}\).

Let \(r_\eps=\sqrt{\eps|\log\eps|}\) and let \(\nu_\eps\) be the normalized
surface measure on \(B_{r_\eps}(e_1)\).  Then
\[
\KL(\nu_\eps\mid\omega)=O(|\log\eps|),
\qquad
I_A(\nu_\eps)=\e^{\lambda_1}+O(\eps|\log\eps|),
\]
and hence
\begin{equation}\label{eq:min-upper-static-proof}
m_{\eps_n}\le -\frac12\e^{\lambda_1}+C\eps_n|\log\eps_n|.
\end{equation}
If a family \(\mu_\eps\) satisfies
\(\Energy(\mu_\eps)\le m_\eps+\eta_\eps\) with
\(\eta_\eps\to0\), then \(\KL\ge0\) and \eqref{eq:min-upper-static-proof} give
\begin{equation}\label{eq:interaction-near-max-static-proof}
I_A(\mu_\eps)\ge \e^{\lambda_1}-C\eps|\log\eps|-2\eta_\eps.
\end{equation}
Every narrow limit point of \(\mu_\eps\) must therefore be one of
\(\delta_{e_1}\), \(\delta_{-e_1}\).  Fix disjoint narrow neighborhoods
\(V_+\) and \(V_-\) of these two measures, with \(V_-=S_\#V_+\).  Then, for
small \(\eps\), every global minimizer lies in \(V_+\cup V_-\).  By the direct
method there is at least one minimizer; reflecting by \(S\) if necessary gives
a minimizer \(\mubar^+\in V_+\), and \(\mubar^-\coloneqq S_\#\mubar^+\in V_-\) is also
a minimizer.

The same argument applies to any low-energy critical family satisfying
\eqref{eq:low-energy-critical-main}: after restricting to \(V_+\), it converges
narrowly to \(\delta_{e_1}\).  This convergence is uniform over all such
families, in the usual contradiction sense.

\smallskip
\noindent\textbf{3. Uniform one-well Laplace estimates for critical points.}
The last paragraph of Step~2 gives uniform narrow convergence to
\(\delta_{e_1}\) for all low-energy critical families in \(V_+\).  Therefore
\zcref{lem:uniform-one-well-laplace} applies and proves the uniform
one-well Laplace estimates in \zcref{thm:package}(ii).  The statement in
\(V_-\) follows by applying the antipodal map.

\smallskip
\noindent\textbf{4. Chord operator estimate.}
Let \(\mu_\eps^0,\mu_\eps^1\in V_+\) be two low-energy critical point families
satisfying \eqref{eq:low-energy-critical-main}, with \(\mu_\eps\) replaced by each \(\mu_\eps^i\),
and set \(\mu_\eps^s=(1-s)\mu_\eps^0+s\mu_\eps^1\).  By the previous step,
the centers of \(\mu_\eps^0\) and \(\mu_\eps^1\) are both \(e_1+O(\eps)\).
Consequently, if \(z(x)=\exp_{e_1}^{-1}(x)\) on a fixed cap and \(z=0\)
outside, then uniformly in \(s\in[0,1]\),
\begin{equation}\label{eq:chord-moments-static-proof}
\mu_\eps^s(\dSphe\setminus B_r(e_1))\le C\e^{-c/\eps},
\qquad
\int_\dSphe  z_i z_j\d\mu_\eps^s
=\sigma_\eps^2\delta_{ij}+o(\eps),
\end{equation}
\[
\int_\dSphe  z_i\d\mu_\eps^s=O(\eps),
\qquad
\int_\dSphe  |z|^m\d\mu_\eps^s=O(\eps^{m/2}).
\]
Here \(i,j\) range over the tangent coordinates corresponding to
\(e_2,\dots,e_d\).

Let \(h\in L^2_0(\mu_\eps^s)\).  Since \(\Pi_{\mu_\eps^s}\) only removes
constants,
\[
\aab{\mathsf K_{\mu_\eps^s}h, h}_{L^2(\mu_\eps^s)}
=\iint K_A(x,y)h(x)h(y)\d\mu_\eps^s(x)\d\mu_\eps^s(y).
\]
On the fixed cap, with \(x=\exp_{e_1}z\), \(y=\exp_{e_1}w\), Taylor expansion
at \((e_1,e_1)\) gives
\begin{equation}\label{eq:kernel-expansion-chord-static-proof}
K_A(x,y)
=\e^{\lambda_1}
+\e^{\lambda_1}\sum_{j=2}^d \lambda_j z_j w_j
+q_1(z)+q_2(w)+R(z,w),
\end{equation}
where \(q_1\) and \(q_2\) depend on only one variable, and
\[
|R(z,w)|\le C\bigl((|z|+|w|)^3+|z|^2|w|^2\bigr)
\]
for small \(z,w\).  The constant term and the one-variable terms vanish in the
quadratic form because \(\int h\d \mu_\eps^s=0\).  The tail contribution and
the remainder in \eqref{eq:kernel-expansion-chord-static-proof} are
\(o(\eps)\Vab{h}_{L^2(\mu_\eps^s)}^2\) by \eqref{eq:chord-moments-static-proof}
and Cauchy--Schwarz.  Therefore
\begin{equation}\label{eq:chord-reduction-static-proof}
\aab{\mathsf K_{\mu_\eps^s}h, h}_{L^2(\mu_\eps^s)}
=\e^{\lambda_1}\sum_{j=2}^d \lambda_j
\left(\int_\dSphe  z_j h\d\mu_\eps^s\right)^2
+o(\eps)\Vab{h}_{L^2(\mu_\eps^s)}^2.
\end{equation}
By \eqref{eq:chord-moments-static-proof}, the Gram matrix of
\(\sigma_\eps^{-1}z_j\) in \(L^2(\mu_\eps^s)\) is \(I+o(1)\).  Hence
\begin{equation}\label{eq:chord-bessel-static-proof}
\sum_{j=2}^d
\left(\sigma_\eps^{-1}\int_\dSphe  z_jh\d\mu_\eps^s\right)^2
\le (1+o(1))\Vab{h}_{L^2(\mu_\eps^s)}^2.
\end{equation}
Combining \eqref{eq:chord-reduction-static-proof} and
\eqref{eq:chord-bessel-static-proof}, and using
\(\sigma_\eps^2=\eps/(\e^{\lambda_1}\lambda_1)\), gives
\[
\aab{\mathsf K_{\mu_\eps^s}h, h}
\le
\left(\frac{\lambda_2}{\lambda_1}+o(1)\right)
\eps\Vab{h}_{L^2(\mu_\eps^s)}^2.
\]
The kernel \(e^{x^\top Ay}\) is positive definite because it is
\(e^{\aab{A^{1/2}x,A^{1/2}y}}\).  Thus \(\mathsf K_{\mu_\eps^s}\) is
nonnegative self-adjoint on \(L^2_0(\mu_\eps^s)\), and the last Rayleigh
bound is exactly \eqref{eq:chord-K-op-main}.  The negative well is obtained by
applying the antipodal map.

\smallskip
\noindent\textbf{5. One-well uniqueness of critical points.}
Let \(\mu^0=\rho^0\omega\) and \(\mu^1=\rho^1\omega\) be two low-energy
critical points in the same positive well.  Their Euler--Lagrange equations
imply
\[
\eps(\log\rho^1-\log\rho^0)-K_A*(\mu^1-\mu^0)=c
\]
for a constant \(c\).  Put \(\nu=\mu^1-\mu^0\),
\(\mu^s=(1-s)\mu^0+s\mu^1\), and
\(h_s=d\nu/d\mu^s\).  Since \(\nu(\dSphe)=0\), multiplying the last equation by
\(\nu\) gives
\begin{equation}\label{eq:EL-difference-static-proof}
0=\eps\int_\dSphe (\log\rho^1-\log\rho^0)\d\nu
-\iint K_A(x,y)\d\nu(x)\d\nu(y).
\end{equation}
Moreover,
\[
\int_\dSphe (\log\rho^1-\log\rho^0)\d\nu
=\int_0^1\int_\dSphe  h_s^2\d \mu^s\d s,
\]
and
\[
\iint K_A\d\nu\d\nu
=\aab{\mathsf K_{\mu^s}h_s,h_s}_{L^2(\mu^s)}
\qquad (0\le s\le1).
\]
Integrating the latter identity in \(s\) and using the chord estimate yields,
for small \(\eps\),
\[
0
\ge
\left(1-\frac{\lambda_2}{\lambda_1}+o(1)\right)
\eps\int_0^1\Vab{h_s}_{L^2(\mu^s)}^2\d s.
\]
Since \(\lambda_2<\lambda_1\), this forces \(h_s=0\) for a.e. \(s\), hence
\(\nu=0\) and \(\mu^0=\mu^1\).  The same proof applies in \(V_-\).

The preceding argument is an asymptotic uniqueness statement for critical
families with \(\Energy\le m_\eps+o(1)\).  Equivalently, after fixing
\(\eps\) sufficiently small, there is a positive threshold
\(\eta_{\mathrm{crit}}(\eps)\) such that each static well contains at most one
critical point satisfying
\[
\Energy(\mu)\le m_\eps+\eta_{\mathrm{crit}}(\eps).
\]
Indeed, otherwise one could choose a sequence \(\eps_n\downarrow0\) and two
distinct same-well critical points with excess energy tending to zero, and the
argument above would force them to coincide for large \(n\).

Finally, Step 2 gives at least one minimizer in one of the wells and symmetry
gives its antipodal partner.  Since minimizers have zero excess energy, the
one-well low-energy critical uniqueness gives at most one minimizer in each
well.  Therefore the global minimizer set is exactly
\(\{\mubar^+,\mubar^-\}\), as claimed.

The first non-minimizing stationary level in
\eqref{eq:package-first-stationary-level} is
\zcref{prop:explicit-small-temperature-excited-critical-gap}; the
form with \(r_\eps\to0\) follows by setting
\[
r_\eps\coloneqq \max\{0,\Delta_0-\Delta_{\mathrm{crit}}(\eps)\}.
\]
\end{proof}

\paragraph{Hessian operator.}
For the selected branch define the mean-zero projection
\(\Pi_0^\mubar f\coloneqq f-\int_\dSphe f\d\mubar\) and
\begin{equation}\label{eq:linear-objects}
(\Kbar h)(x)\coloneqq \Pi_0^\mubar\!\left[\int_\dSphe  \e^{x^\top Ay}h(y)\d\mubar(y)\right],
\qquad
\Hbar=\eps I-\Kbar.
\end{equation}

\paragraph{Hessian operator-norm asymptotics.}

The next lemma is the exact reduction behind the top scale.

\begin{lemma}[first reduction of the kernel form]\label{lem:reduction}
Assume \zcref{thm:package}. Define the tangent compression of $A$ at $x_\eps$ by
\[
B_\eps\coloneqq P_{x_\eps}AP_{x_\eps}\big|_{T_{x_\eps}\dSphe},
\]
and write its eigenpairs as
\[
\nu_{\eps,1}\ge\cdots\ge\nu_{\eps,d-1}>0,
\qquad
B_\eps\tau_{\eps,m}=\nu_{\eps,m}\tau_{\eps,m}.
\]
Then $\nu_{\eps,m}\to\lambda_{m+1}$ for $m=1,\dots,d-1$. Moreover, if
\[
\mathfrak m_{\eps,m}(h)\coloneqq \sigma_\eps^{-1}\int_\dSphe  \aab{u(x),\tau_{\eps,m}}h(x)\d\mubar(x),
\qquad h\in\Xred,
\]
then, uniformly for $\Vab{h}_{L^2(\mubar)}\le1$,
\begin{equation}\label{eq:reduction}
\frac1\eps\aab{\Kbar h ,h}_{L^2(\mubar)}
=
\frac1{\lambda_1}\sum_{m=1}^{d-1}\nu_{\eps,m}\,\mathfrak m_{\eps,m}(h)^2+o(1).
\end{equation}
\end{lemma}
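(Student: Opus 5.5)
The plan is to repeat the chord computation from Step~4 of the proof of \zcref{thm:package}. This time we work at the single measure $\mubar$ but in coordinates centred at $x_\eps$ rather than $e_1$, so that the tangent quadratic form is exactly $B_\eps$.

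\textbf{Convergence of the eigenvalues.} Since $x_\eps=e_1+O(\eps)$ by \eqref{eq:center-Oeps-main}, we have $P_{x_\eps}\to P_{e_1}$. Hence $B_\eps$ converges to $A|_{e_1^\perp}=\operatorname{diag}(\lambda_2,\dots,\lambda_d)$, with the identification of $T_{x_\eps}\dSphe$ and $T_{e_1}\dSphe$ made by parallel transport, which is $O(\eps)$-close to the identity. Continuity of ordered eigenvalues of symmetric matrices then gives $\nu_{\eps,m}\to\lambda_{m+1}$.

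\textbf{Reduction of the kernel form.} Because $h\in\Xred$ and $\Pi_0^\mubar$ only removes constants,
\[
\aab{\Kbar h,h}_{L^2(\mubar)}=\iint K_A(x,y)h(x)h(y)\d\mubar(x)\d\mubar(y).
\]
First split off the region outside $B_r(x_\eps)$. By \eqref{eq:tail-main} and Cauchy--Schwarz, with $K_A$ bounded, this contributes $O(\e^{-c/(2\eps)})\Vab{h}^2=o(\eps)$. Inside the cap, write $x=\exp_{x_\eps}u$ and $y=\exp_{x_\eps}w$, and Taylor expand $K_A$ at $(x_\eps,x_\eps)$:
\[
K_A=\kappa_\eps+q_1(u)+q_2(w)+\kappa_\eps\aab{u,B_\eps w}+R(u,w),
\qquad \kappa_\eps\coloneqq \e^{x_\eps^\top Ax_\eps}.
\]
The mixed bilinear term comes from $\partial_u\partial_w(x^\top Ay)=P A P$ together with the product of the first derivatives of $x^\top A y$. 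That product is $\aab{P Ax_\eps,u}\aab{PAx_\eps,w}$, and it is $O(\eps^2)|u||w|$ by \eqref{eq:PAx-Oeps-static-proof}, so it is negligible. The constant term and the one-variable terms $q_1,q_2$ vanish against the mean-zero $h$.

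\textbf{Controlling the remainder.} The remainder satisfies $|R|\le C((|u|+|w|)^3+|u|^2|w|^2)$. Terms of the form $f(u)g(w)$ are bounded by $\Vab{f}_{L^2(\mubar)}\Vab{g}_{L^2(\mubar)}\Vab{h}^2$. Using \eqref{eq:momentm-main}, each such term is $O(\eps^{3/2})$ or $O(\eps^2)$, hence $o(\eps)$.

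One caveat is that a pure cubic term such as $|u|^3$ paired with $w^0$ must be split via its Taylor structure. It decomposes into a one-variable piece, which is killed by the mean-zero condition, plus genuinely mixed pieces such as $|u|^2|w|$, which are $O(\eps^{3/2})$. This bookkeeping of mixed versus one-variable terms is where I expect the main care to be needed. The first attempt would be to write $R$ explicitly via the exponential series, so that every term is a finite sum of products of functions of $u$ and functions of $w$.

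\textbf{Conclusion.} Diagonalising $B_\eps$ in the basis $\tau_{\eps,m}$ turns the main term into
\[
\kappa_\eps\sum_m\nu_{\eps,m}\pab{\int\aab{u,\tau_{\eps,m}}h\d\mubar}^2=\kappa_\eps\sigma_\eps^2\sum_m\nu_{\eps,m}\,\mathfrak m_{\eps,m}(h)^2 .
\]
Since $\kappa_\eps=\e^{\lambda_1}+O(\eps)$ and $\sigma_\eps^2=\eps/(\e^{\lambda_1}\lambda_1)$, we have $\kappa_\eps\sigma_\eps^2/\eps=1/\lambda_1+O(\eps)$. By \eqref{eq:moment2-main}, each $\mathfrak m_{\eps,m}(h)$ is uniformly bounded for $\Vab{h}\le1$. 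Dividing by $\eps$ therefore yields \eqref{eq:reduction}, uniformly in $h$.
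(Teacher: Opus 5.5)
Your proposal is correct and is essentially the paper's argument: operator-norm convergence of $B_\eps$ for the eigenvalues, then tail removal via \eqref{eq:tail-main}, a Taylor expansion of $K_A$ at $(x_\eps,x_\eps)$ in exponential coordinates, cancellation of the constant and one-variable terms by the mean-zero condition, and moment bounds for the remainder. Your caveat about pure cubic terms needs no splitting, because your own product bound with the second factor $\equiv 1$ already handles them: $\int|u|^3|h|\d\mubar\cdot\int|h|\d\mubar\le C\eps^{3/2}\Vab{h}_{L^2(\mubar)}^2$. This is exactly how the $(|u|+|v|)^3$ part of the paper's remainder is absorbed.
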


\begin{proof}
Since $x_\eps\to e_1$, the compressed operators $B_\eps$ converge in operator norm to $A_\perp\coloneqq P_{e_1}AP_{e_1}|_{T_{e_1}\dSphe}$, whose eigenvalues are $\lambda_2,\dots,\lambda_d$. Hence $\nu_{\eps,m}\to\lambda_{m+1}$.

For $h\in\Xred$, the projection disappears inside the quadratic form because $\int h\d\mubar=0$, so
\begin{equation}\label{eq:qfK}
\aab{\Kbar h,h}_{L^2(\mubar)}=\iint_{\dSphe\times\dSphe}\e^{x^\top Ay}h(x)h(y)\d\mubar(x)\d \mubar(y).
\end{equation}
By \eqref{eq:tail-main}, the part of the integral where at least one of $(x,y)$ lies outside $B_r(x_\eps)$ is $o(\eps)\Vab{h}_{L^2(\mubar)}^2$. On the cap, write
\[
x=\exp_{x_\eps}(u),\qquad y=\exp_{x_\eps}(v),\qquad u,v\in T_{x_\eps}\dSphe.
\]
A Taylor expansion at $(u,v)=(0,0)$ gives
\[
x^\top Ay=x_\eps^\top Ax_\eps+u^\top B_\eps v+R_\eps^{(1)}(u,v),
\]
where $R_\eps^{(1)}$ is a sum of terms depending only on $u$ or only on $v$, a term of size $|P_{x_\eps}Ax_\eps|\,|u|\,|v|$, and terms of total degree at least $3$ in $(u,v)$. Since $x_\eps\to e_1$, one has $P_{x_\eps}Ax_\eps\to0$.
Exponentiating once more yields
\[
\e^{x^\top Ay}=\e^{x_\eps^\top Ax_\eps}\Bigl(1+u^\top B_\eps v\Bigr)+R_\eps(u,v),
\]
with
\[
|R_\eps(u,v)|\le C\Bigl(|P_{x_\eps}Ax_\eps|\,|u|\,|v|+(|u|+|v|)^3+|u|^2|v|^2\Bigr).
\]
The constant term disappears in \eqref{eq:qfK} because $h$ has mean zero; the terms depending only on one variable disappear for the same reason. Therefore
\[
\aab{\Kbar h,h}_{L^2(\mubar)}
=
\e^{x_\eps^\top Ax_\eps}
\sum_{m=1}^{d-1}\nu_{\eps,m}
\left(\int_\dSphe  \aab{u(x),\tau_{\eps,m}}h(x)\d\mubar(x)\right)^2
+o(\eps)\Vab{h}_{L^2(\mubar)}^2.
\]
Finally, $\e^{x_\eps^\top Ax_\eps}=\e^{\lambda_1}+o(1)$ and $\sigma_\eps^2=\eps/(\e^{\lambda_1}\lambda_1)$, so dividing by $\eps$ gives \eqref{eq:reduction}.
\end{proof}

\begin{theorem}[exact Hessian top scale]\label{thm:K-op}
Assume \zcref{thm:package}. Then
\begin{equation}\label{eq:K-op}
\Vab{\Kbar}_{\op}=\frac{\lambda_2}{\lambda_1}\,\eps+o(\eps),
\qquad \eps\downarrow0,
\end{equation}
as an operator on $\Xred=L^2_0(\mubar)$. Consequently,
\begin{equation}\label{eq:H-coercive}
\aab{\Hbar h,{h}}_{L^2(\mubar)}
\ge
\left(\frac{\lambda_1-\lambda_2}{\lambda_1}\eps+o(\eps)\right)
\Vab{h}_{L^2(\mubar)}^2,
\qquad h\in\Xred.
\end{equation}
In particular, strict reduced minimality is a theorem, not an assumption.
\end{theorem}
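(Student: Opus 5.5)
The estimate \eqref{eq:K-op} follows from \zcref{lem:reduction} by splitting into an upper and a lower bound.

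\emph{Upper bound.} By \eqref{eq:qfK} and the positive definiteness of $\e^{\aab{A^{1/2}x,A^{1/2}y}}$ (see the proof of \zcref{lem:support-minimality-Gmax}), $\Kbar$ is nonnegative and self-adjoint on $\Xred$. Hence $\Vab{\Kbar}_{\op}=\sup_{\Vab{h}\le1}\aab{\Kbar h,h}$. In \eqref{eq:reduction} we bound $\nu_{\eps,m}\le\nu_{\eps,1}=\lambda_2+o(1)$. Next, by \eqref{eq:moment1-main} and \eqref{eq:moment2-main}, together with the analogous mixed moments $\int\aab{u,\tau_{\eps,m}}\aab{u,\tau_{\eps,n}}\d\mubar=\sigma_\eps^2\delta_{mn}+o(\eps)$, which come from the Gaussian limit in \zcref{thm:package}(ii), the Gram matrix of the centered functions $\sigma_\eps^{-1}\Pi_0^{\mubar}\aab{u,\tau_{\eps,m}}$ in $L^2(\mubar)$ is $I+o(1)$. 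Note that centering changes each entry only by $O(\eps^2)/\sigma_\eps^2=O(\eps)$. Since $h$ has mean zero, $\mathfrak m_{\eps,m}(h)$ equals the pairing of $h$ with these centered functions. A Bessel-type inequality for an almost orthonormal family then gives $\sum_m\mathfrak m_{\eps,m}(h)^2\le(1+o(1))\Vab{h}^2$, exactly as in \eqref{eq:chord-bessel-static-proof}. Therefore $\aab{\Kbar h,h}\le(\lambda_2/\lambda_1+o(1))\eps\Vab{h}^2$.

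\emph{Lower bound.} We use the test function $h_\eps\coloneqq\Pi_0^{\mubar}\aab{u,\tau_{\eps,1}}/\Vab{\Pi_0^{\mubar}\aab{u,\tau_{\eps,1}}}_{L^2(\mubar)}$. The denominator equals $\sigma_\eps(1+o(1))$, so $\mathfrak m_{\eps,1}(h_\eps)=1+o(1)$. All the remaining terms in \eqref{eq:reduction} are nonnegative, so \eqref{eq:reduction} yields $\eps^{-1}\aab{\Kbar h_\eps,h_\eps}\ge\nu_{\eps,1}/\lambda_1+o(1)=\lambda_2/\lambda_1+o(1)$. This matches the upper bound and proves \eqref{eq:K-op}.

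\emph{Coercivity.} Since $\Hbar=\eps I-\Kbar$, \eqref{eq:H-coercive} is immediate: $\aab{\Hbar h,h}\ge(\eps-\Vab{\Kbar}_{\op})\Vab{h}^2$.

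\emph{Main obstacle.} The delicate point is the uniformity of the $o(1)$ in \eqref{eq:reduction} over the unit ball, and in particular the mixed second-moment estimate $o(\eps)$ off the diagonal. The latter is not stated verbatim in \zcref{thm:package}. I would derive it from the weighted $L^1$ Gaussian convergence of the rescaled density. Since $\aab{u,\tau_m}\aab{u,\tau_n}/\sigma_\eps^2$ is a quadratic polynomial in the scaled variable $\xi$, weighted $L^1$ convergence (with weight at least $1+|\xi|^2$) gives convergence to the identity Gram matrix. Alternatively, one can apply \eqref{eq:moment2-main} to the unit vectors $(\tau_m\pm\tau_n)/\sqrt2$ and polarize.
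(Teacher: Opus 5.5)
Your proposal is correct and follows essentially the same route as the paper: an upper bound from \zcref{lem:reduction} combined with a Bessel inequality for the almost-orthonormal transverse modes, a lower bound from the test function built on the top tangent mode $\tau_{\eps,1}$, and coercivity from $\Hbar=\eps I-\Kbar$. The differences are minor. You center the modes before forming the Gram matrix. You use positivity of the $\nu_{\eps,m}$ instead of showing $\mathfrak m_{\eps,m}(h_{\eps,1})=o(1)$ for $m\neq1$. You also make explicit, by polarizing \eqref{eq:moment2-main}, the off-diagonal second moments that the paper uses without comment.
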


\begin{proof}
Define the normalized transverse linear modes
\[
\psi_{\eps,m}(x)\coloneqq \sigma_\eps^{-1}\aab{u(x),\tau_{\eps,m}},
\qquad m=1,\dots,d-1.
\]
By \eqref{eq:moment1-main}--\eqref{eq:momentm-main},
\[
\aab{\psi_{\eps,m},\psi_{\eps,n}}_{L^2(\mubar)}=\delta_{mn}+o(1).
\]
Hence the Gram matrix of $\{\psi_{\eps,1},\dots,\psi_{\eps,d-1}\}$ is $I+o(1)$, and therefore
\begin{equation}\label{eq:bessel}
\sum_{m=1}^{d-1}\mathfrak m_{\eps,m}(h)^2\le (1+o(1))\Vab{h}_{L^2(\mubar)}^2
\end{equation}
uniformly for all $h\in\Xred$.
Applying \zcref{lem:reduction} gives
\[
\frac{1}{\eps}\aab{\Kbar h,h}_{L^2(\mubar)}
\le
\frac{\nu_{\eps,1}}{\lambda_1}(1+o(1))\Vab{h}_{L^2(\mubar)}^2,
\]
so
\[
\Vab{\Kbar}_{\op}
\le
\frac{\nu_{\eps,1}}{\lambda_1}\eps+o(\eps)
=
\frac{\lambda_2}{\lambda_1}\eps+o(\eps).
\]
For the matching lower bound, choose
\[
h_{\eps,1}\coloneqq \psi_{\eps,1}-\int_\dSphe  \psi_{\eps,1}\d\mubar\in\Xred.
\]
Then $\Vab{h_{\eps,1}}_{L^2(\mubar)}=1+o(1)$, $\mathfrak m_{\eps,1}(h_{\eps,1})=1+o(1)$, and $\mathfrak m_{\eps,m}(h_{\eps,1})=o(1)$ for $m\neq1$. Plugging this test function into \eqref{eq:reduction} yields the reverse bound and proves \eqref{eq:K-op}. Since $\Hbar=\eps I-\Kbar$ and $\Kbar\ge0$, \eqref{eq:H-coercive} follows immediately.
\end{proof}

\begin{proof}[Proof of \zcref{prop:energy-landscape}]
The Gibbs fixed-point equation and smooth positivity of the one-well minimizer follow from the Euler--Lagrange equation in the direct-method part of the proof of \zcref{thm:package}.  The one-well uniqueness and antipodal symmetry give
\[
\Minimizers
=\argmin_{\mu\in\PdSphe}\Energy(\mu)
=\{\mubar^+,\mubar^-\},
\qquad \mubar^-=S_\#\mubar^+.
\]
The Hessian at either branch is \(\Hbar=\eps I-\Kbar\) on the mean-zero space.  By \zcref{thm:K-op},
\[
\Vab{\Kbar}_{\op}=\frac{\lambda_2}{\lambda_1}\eps+o(\eps).
\]
After decreasing the small-noise threshold, this gives
\[
\eps\Vab{h}_{L^2(\mubar)}^2
\ge
\aab{h,\Hbar h}_{L^2(\mubar)}
\ge
\frac{\eps}{2}\left(1-\frac{\lambda_2}{\lambda_1}\right)
\Vab{h}_{L^2(\mubar)}^2,
\qquad h\in L^2_0(\mubar).
\]
Finally, \zcref{prop:explicit-small-temperature-excited-critical-gap} gives
\[
\liminf_{\eps\downarrow0}\Delta_{\mathrm{crit}}(\eps)
\ge
\frac12(\e^{\lambda_1}-Q_A)
=
\frac12\min\{\e^{\lambda_1}-\e^{\lambda_2},\sinh\lambda_1\}.
\]
This is exactly the asserted landscape statement.
\end{proof}

%% file: contents/proofs/adjoint_reorg.tex
\subsubsection{Operator notation and sign convention}

Throughout this appendix we work on the positive minimizer branch
\(\mubar=\rhobar\,\omega\) supplied by \zcref{prop:energy-landscape}.  We use 
\[
\varPi_0^\mubar f\coloneqq f-\int_{\dSphe}f\d\mubar
\]
the orthogonal projection onto this space.  When no confusion is possible we write
\(\varPi_0\) for \(\varPi_0^\mubar\).

The Hessian kernel and the reduced Hessian at \(\mubar\) are
\begin{equation}\label{eq:adjoint-KH-def}
\begin{aligned}
    (\Kbar h)(x)
    &\coloneqq
    \varPi_0\pab{\int_{\dSphe}\e^{\aab{x,Ay}}h(y)\d\mubar(y)},
    \\
    \Hop h&\coloneqq \eps h-\Kbar h,
    \qquad h\in L^2_0(\mubar).
\end{aligned}
\end{equation}
By \zcref{prop:energy-landscape}, after decreasing the small-noise threshold if necessary,
\begin{equation}\label{eq:adjoint-H-coercive}
    \frac{\eps}{2}\pab{1-\frac{\lambda_2}{\lambda_1}}
    \Vab{h}_{L^2(\mubar)}^2
    \le
    \aab{h,\Hop h}_{L^2(\mubar)}
    \le
    \eps\Vab{h}_{L^2(\mubar)}^2,
    \qquad h\in L^2_0(\mubar).
\end{equation}
We use the equivalent Hilbert norm
\begin{equation}\label{eq:Hop-norms-adjoint}
    \Vab{h}_{\Hop}^2\coloneqq\aab{h,\Hop h}_{L^2(\mubar)},
    \qquad
    \Vab{z}_{\Hop^{-1}}^2\coloneqq\aab{\Hop^{-1}z,z}_{L^2(\mubar)}.
\end{equation}

We use the sign convention fixed in the main text:
\begin{equation}\label{eq:wlap-sign-adjoint}
    \wLaplacian\phi
    \coloneqq
    \rhobar^{-1}\Div\pab{\rhobar\nabla\phi}.
\end{equation}
Thus \(\wLaplacian\) agrees with the Laplace--Beltrami operator when
\(\rhobar\equiv1\), and it is non-positive on \(L^2_0(\mubar)\):
\begin{equation}\label{eq:wlap-form-adjoint}
    \aab{\wLaplacian f,g}_{L^2(\mubar)}
    =-
    \int_{\dSphe}\nabla f\cdot\nabla g\d\mubar.
\end{equation}
The forward linearized generator and its \(L^2(\mubar)\)-adjoint are
\begin{equation}\label{eq:A-Ad-adjoint}
    \Abar\coloneqq\wLaplacian\Hop,
    \qquad
    \Adbar\coloneqq\Hop\wLaplacian.
\end{equation}
The positive backward generator is therefore
\begin{equation}\label{eq:L-positive-adjoint}
    L_\eps\coloneqq-\Adbar=-\Hop\wLaplacian.
\end{equation}
Viewed on the Hilbert space \((L^2_0(\mubar),\aab{\cdot,\cdot}_{\Hop^{-1}})\),
\(L_\eps\) is self-adjoint and non-negative, with closed quadratic form
\begin{equation}\label{eq:q-adjoint-def}
    q_{L_\eps}[z,w]
    \coloneqq
    \int_{\dSphe}\nabla z\cdot\nabla w\d\mubar,
    \qquad
    D(q_{L_\eps})=H^1(\dSphe)\cap L^2_0(\mubar).
\end{equation}
Indeed, for smooth \(z,w\),
\begin{equation}\label{eq:q-operator-identity-adjoint}
\begin{aligned}
    \aab{L_\eps z,w}_{\Hop^{-1}}
    &=\aab{\Hop^{-1}\pab{-\Hop\wLaplacian z},w}_{L^2(\mubar)}  \\
    &=-\aab{\wLaplacian z,w}_{L^2(\mubar)}
    =\int_{\dSphe}\nabla z\cdot\nabla w\d\mubar.
\end{aligned}
\end{equation}
For nonzero \(g_\ell\in H^1(\dSphe)\cap L^2_0(\mubar)\), this gives the Rayleigh quotient
\begin{equation}\label{eq:Rayleigh-adjoint-proof}
    \Rayleigh
    \coloneqq
    \frac{q_{L_\eps}[g_\ell,g_\ell]}{\Vab{g_\ell}_{\Hop^{-1}}^2}
    =
    \frac{\displaystyle\int_{\dSphe}\Vab{\nabla g_\ell}^2\d\mubar}
    {\displaystyle\Vab{g_\ell}_{\Hop^{-1}}^2}.
\end{equation}

The control operator generated by the parameter-linear FFN is
\begin{equation}\label{eq:Bbar-adjoint-proof}
    (\Bbar W)(x)
    \coloneqq
    -\rhobar(x)^{-1}\Div\pab{\rhobar(x)P_x^\perp(W\sigma(x))},
    \qquad W\in\R^{d\times p}.
\end{equation}
Its adjoint is taken with respect to the \(L^2(\mubar)\)-inner product on
\(L^2_0(\mubar)\) and the Frobenius inner product on \(\R^{d\times p}\).  For smooth
\(z\), integration by parts gives
\begin{equation}\label{eq:Bstar-adjoint-proof}
    \Bbar^\ast z
    =
    \int_{\dSphe}\nabla z(x)\sigma(x)^\top\d\mubar(x).
\end{equation}

\subsubsection{Proof of \zcref{lem:1GDparam}}

The controlled gradient-flow form of \zcref{eq:ODE-Net} is, in density variables,
\begin{equation}\label{eq:controlled-density-adjoint-proof}
    \partial_t\rho_t
    =
    \Div\pab{\rho_t\nabla\frac{\delta\Energy}{\delta\mu}(x;\rho_t\omega)}
    -\Div\pab{\rho_tP_x^\perp(W_t\sigma(x))}.
\end{equation}
At \(W\equiv0\), the initial datum \(\mu_0=\mubar\) is stationary because
\(\delta\Energy/\delta\mu(\mubar)\) is constant.  If
\(\rho_t=\rhobar(1+h_t)\), then the first variation of \zcref{eq:controlled-density-adjoint-proof}
at \((h,W)=(0,0)\) is
\begin{equation}\label{eq:linearized-state-adjoint-proof}
    \partial_t h_t=\Abar h_t+\Bbar W_t,
    \qquad h_0=0,
\end{equation}
with \(\Abar=\wLaplacian\Hop\).

The functional derivative \(g_\ell=\fdv{\ell}{\mu}[\mubar]\) is defined only up to an
additive constant.  We take its \(\mubar\)-mean-zero representative, still denoted
\(g_\ell\).  Let \(\phi_t\) solve the adjoint equation
\begin{equation}\label{eq:adjoint-equation-proof}
    \partial_t\phi_t+\Adbar\phi_t=0,
    \qquad
    \phi_T=g_\ell.
\end{equation}
Since \(\Adbar=\Hop\wLaplacian\), this is exactly
\zcref{eq:bkwd_wDiffusion}.  Equivalently,
\begin{equation}\label{eq:costate-semigroup-adjoint-proof}
    \phi_t=\e^{(T-t)\Adbar}g_\ell
    =\e^{-(T-t)L_\eps}g_\ell.
\end{equation}
The coefficients are smooth and \(g_\ell\in C^\infty(\dSphe)\), so the solution is classical.

Let \(V\in L^2(0,T;\R^{d\times p})\), and let \(h^V\) be the linearized state response
from \zcref{eq:linearized-state-adjoint-proof}.  Then
\begin{equation}\label{eq:loss-differential-adjoint-proof}
\begin{aligned}
    D\ell(\mubar)[h_T^V\mubar]
    &=\aab{g_\ell,h_T^V}_{L^2(\mubar)}                                      \\
    &=\int_0^T\aab{\Bbar^\ast\phi_t,V_t}_{\frob}\d t.
\end{aligned}
\end{equation}
Thus the unregularized terminal-loss gradient at the zero initialization is
\begin{equation}\label{eq:gradient-zero-adjoint-proof}
    G_t\coloneqq\Bbar^\ast\phi_t
    =
    \int_{\dSphe}\nabla\phi_t(x)\sigma(x)^\top\d\mubar(x).
\end{equation}
The quadratic regularizer in \zcref{eq:mf-optimal-control} has zero derivative at
\(W\equiv0\).  With the regularization-scaled step convention used in the main text,
\(\alpha\coloneqq\eta/\regparam\), the one-step update is therefore
\begin{equation}\label{eq:oneGD-adjoint-proof}
    W_t^{\oneGD}
    =-\alpha G_t
    =-\frac{\eta}{\regparam}
    \int_{\dSphe}\nabla\phi_t(x)\sigma(x)^\top\d\mubar(x),
\end{equation}
which is \zcref{eq:1GDparam}.

\subsubsection{Finite-time Gramian and the visibility factor}

For \(0\le t\le T\), define the finite-time control Gramian
\begin{equation}\label{eq:Gramian-adjoint-proof}
    \cGram_t
    \coloneqq
    \int_0^t \e^{u\Abar}\Bbar\Bbar^\ast\e^{u\Adbar}\d u .
\end{equation}
The linearized state generated by the one-step parameter \(W^{\oneGD}\) satisfies
\begin{equation}\label{eq:linearized-oneGD-state-adjoint-proof}
    \partial_t h_t^{\mathrm{lin},\alpha}
    =\Abar h_t^{\mathrm{lin},\alpha}-\alpha\Bbar\Bbar^\ast\phi_t,
    \qquad
    h_0^{\mathrm{lin},\alpha}=0.
\end{equation}
Using \(\phi_s=\e^{(t-s)\Adbar}\phi_t\) for \(0\le s\le t\), Duhamel's formula gives
\begin{equation}\label{eq:linearized-oneGD-gramian-adjoint-proof}
    h_t^{\mathrm{lin},\alpha}=-\alpha\cGram_t\phi_t.
\end{equation}

For \(z\in L^2_0(\mubar)\), define
\begin{equation}\label{eq:Obs-adjoint-proof}
    \mathcal O_t(z)
    \coloneqq
    \int_0^t\Vab{\Bbar^\ast\e^{u\Adbar}z}_{\frob}^2\d u.
\end{equation}
Then
\begin{equation}\label{eq:Obs-Gramian-adjoint-proof}
    \mathcal O_t(z)=\aab{z,\cGram_tz}_{L^2(\mubar)}.
\end{equation}
For \(z=\phi_t\), this is precisely the squared norm of the unregularized loss-gradient
signal on \([0,t]\):
\begin{equation}\label{eq:Omega-main-appendix-match}
    \mathcal O_t(\phi_t)
    =
    \int_0^t\Vab{G_s}_{\frob}^2\d s
    =
    \Vab{\nabla J(0)}_{L^2(0,t)}^2,
\end{equation}
where the last equality uses the notation of \zcref{sec:adjoint}.  Hence the visibility factor
in \zcref{eq:Rayleighratio} is
\begin{equation}\label{eq:Omega-adjoint-proof}
    \varOmega_{t,T}(g_\ell)
    =
    \frac{\mathcal O_t(\phi_t)}{\Vab{\phi_t}_{\Hop^{-1}}^2}.
\end{equation}
By Cauchy--Schwarz in the \(\Hop^{-1}\)-geometry,
\begin{equation}\label{eq:Omega-Gramian-comparison-adjoint-proof}
\begin{aligned}
    \mathcal O_t(z)
    &=\aab{\Hop^{-1/2}z,\Hop^{1/2}\cGram_tz}_{L^2(\mubar)}             \\
    &\le
    \Vab{z}_{\Hop^{-1}}\Vab{\Hop^{1/2}\cGram_tz}_{L^2(\mubar)}.
\end{aligned}
\end{equation}
Consequently,
\begin{equation}\label{eq:Omega-lower-Gramian-adjoint-proof}
    \Vab{\Hop^{1/2}\cGram_tz}_{L^2(\mubar)}
    \ge
    \frac{\mathcal O_t(z)}{\Vab{z}_{\Hop^{-1}}}
    \quad(z\ne0).
\end{equation}

\subsubsection{Local Taylor lower bound for the energy gap}

\begin{lemma}[local energy lower bound in the density chart]\label{lem:local-energy-lower-adjoint}
There exists a chart radius \(r_{\mathrm{ch}}>0\) such that, whenever
\(\mu=(1+h)\mubar\), \(h\in L^2_0(\mubar)\), and
\(\Vab{h}_{L^\infty(\dSphe)}\le r_{\mathrm{ch}}\), one has
\begin{equation}\label{eq:local-energy-lower-adjoint}
    \Energygap(\mu)
    \ge
    \frac14\Vab{h}_{\Hop}^2.
\end{equation}
\end{lemma}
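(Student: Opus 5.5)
We will prove the bound by an exact second-order Taylor expansion of \(\Energy\) along the segment from \(\mubar\) to \(\mu=(1+h)\mubar\). Since \(\mubar\) is a global minimizer, \(\Energygap(\mu)=\Energy(\mu)-\Energy(\mubar)\). Write \(\mu_s\coloneqq(1+sh)\mubar\) for \(s\in[0,1]\), which is a probability measure with positive density as long as \(\Vab{h}_{L^\infty}<1\). The function \(f(s)\coloneqq\Energy(\mu_s)\) splits into two parts, and we expand each.

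\emph{Entropy part.} We have
\[
\eps\int(1+sh)\rhobar\log\bigl((1+sh)\rhobar\bigr)\d\omega .
\]
\emph{Interaction part.} We have
\[
-\tfrac12\iint \e^{\aab{x,Ay}}(1+sh(x))(1+sh(y))\d\mubar(x)\d\mubar(y),
\]
which is an exact quadratic polynomial in \(s\).

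\textbf{First order.} We have \(f'(0)=\int\bigl(\eps(\log\rhobar+1)-K_A*\mubar\bigr)h\d\mubar\). By the Euler--Lagrange equation in the proof of \zcref{thm:package}, the bracket equals the constant \(\Lambda\). Since \(\int h\d\mubar=0\), this gives \(f'(0)=0\).

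\textbf{Second order.} We have
\[
f''(s)=\eps\int\frac{h^2}{1+sh}\d\mubar-\iint\e^{\aab{x,Ay}}h(x)h(y)\d\mubar\d\mubar .
\]
In the notation of \eqref{eq:adjoint-KH-def}, and using that \(\varPi_0\) disappears against the mean-zero \(h\), the second term equals \(\aab{\Kbar h,h}_{L^2(\mubar)}\). Hence
\[
f''(s)=\aab{h,\Hop h}_{L^2(\mubar)}-\eps\int\frac{s h^3}{1+sh}\d\mubar .
\]
By Taylor's formula with integral remainder,
\[
\Energygap(\mu)=\int_0^1(1-s)f''(s)\d s=\tfrac12\Vab{h}_{\Hop}^2-\eps\int_0^1(1-s)\int\frac{s h^3}{1+sh}\d\mubar\d s .
\]

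\textbf{Controlling the remainder.} For \(\Vab{h}_{L^\infty}\le r\le\tfrac12\), the error term is bounded in absolute value by \(\eps\cdot\frac{r}{1-r}\cdot\tfrac16\cdot\Vab{h}_{L^2(\mubar)}^2\le\tfrac{\eps r}{3}\Vab{h}_{L^2(\mubar)}^2\). This is the only genuine step: the cubic error is measured in \(L^2\), not in the \(\Hop\)-norm, so we must use the coercivity \eqref{eq:adjoint-H-coercive}, i.e.\ \(\Vab{h}_{L^2(\mubar)}^2\le\frac{2}{\eps(1-\lambda_2/\lambda_1)}\Vab{h}_{\Hop}^2\). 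The factor \(\eps\) cancels, and the error is at most
\[
\frac{2r}{3(1-\lambda_2/\lambda_1)}\Vab{h}_{\Hop}^2 .
\]

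\textbf{Choice of radius.} We choose
\[
r_{\mathrm{ch}}\coloneqq\min\Bigl\{\tfrac12,\;\tfrac38\bigl(1-\tfrac{\lambda_2}{\lambda_1}\bigr)\Bigr\}.
\]
Then the error is at most \(\tfrac14\Vab{h}_{\Hop}^2\), which yields \(\Energygap(\mu)\ge\tfrac14\Vab{h}_{\Hop}^2\). The radius is independent of \(\eps\) once \(\eps\) lies below the threshold where \eqref{eq:adjoint-H-coercive} holds.

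\textbf{Justifying the differentiation.} Differentiating under the integral sign is routine: since \(1+sh\ge\tfrac12\) and the sphere is compact, all integrands are bounded uniformly in \(s\). For merely \(L^\infty\) densities, the entropy is finite because \(\rhobar\) is smooth and positive.
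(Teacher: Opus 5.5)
Your proof is correct and follows the paper's argument. It makes an exact second-order expansion around $\mubar$, using the Euler--Lagrange equation and $\int h\,\d\mubar=0$. It then absorbs the cubic entropy remainder, of size $\eps\Vab{h}_{L^\infty}\Vab{h}_{L^2(\mubar)}^2$, via the coercivity of $\Hop$, so that the factor $\eps$ cancels. Your integral-remainder bookkeeping only makes the constants explicit, yielding the $\eps$-independent radius $r_{\mathrm{ch}}=\min\{\tfrac12,\tfrac38(1-\lambda_2/\lambda_1)\}$.
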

\begin{proof}
The Euler--Lagrange equation at \(\mubar\) and \(\int h\d\mubar=0\) give the exact expansion
\begin{equation}\label{eq:energy-taylor-adjoint}
\begin{aligned}
    \Energy((1+h)\mubar)-\Energy(\mubar)
    &=\eps\int_{\dSphe}\pab{(1+h)\log(1+h)-h}\d\mubar       \\
    &\quad
    -\frac12\iint_{\dSphe\times\dSphe}
    \e^{\aab{x,Ay}}h(x)h(y)\d\mubar(x)\d\mubar(y).
\end{aligned}
\end{equation}
Since \((1+z)\log(1+z)-z=\frac12z^2+O(|z|^3)\) for \(|z|\le r_{\mathrm{ch}}\),
\begin{equation}\label{eq:energy-taylor-remainder-adjoint}
    \Energy((1+h)\mubar)-\Energy(\mubar)
    =\frac12\Vab{h}_{\Hop}^2+R_E(h),
    \qquad
    |R_E(h)|\le C\eps\Vab{h}_{L^\infty}\Vab{h}_{L^2(\mubar)}^2.
\end{equation}
Using the lower bound in \zcref{eq:adjoint-H-coercive}, we choose
\(r_{\mathrm{ch}}>0\) so small that the remainder is bounded by
\(\frac14\Vab{h}_{\Hop}^2\).  Since \(\mubar\) is a global minimizer,
\(\Energygap((1+h)\mubar)=\Energy((1+h)\mubar)-\Energy(\mubar)\), and
\zcref{eq:local-energy-lower-adjoint} follows.
\end{proof}

We also record the corresponding first-variation expansion, used below in the nonlinear response.
For \(\mu=(1+h)\mubar\), set
\begin{equation}\label{eq:Phi-def-adjoint-proof}
    \Phi(h)
    \coloneqq
    \varPi_0\pab{\fdv{\Energy}{\mu}((1+h)\mubar)-\fdv{\Energy}{\mu}(\mubar)}.
\end{equation}
Then
\begin{equation}\label{eq:Phi-H-remainder-adjoint-proof}
    \Phi(h)=\Hop h+R_F(h),
    \qquad
    R_F(h)=\eps\varPi_0\pab{\log(1+h)-h},
\end{equation}
and for every integer \(s>\dim(\dSphe)/2+2\),
\begin{equation}\label{eq:firstvar-remainder-adjoint-proof}
    \Vab{R_F(h)}_{H^s(\dSphe)}
    \le
    C_s\eps\Vab{h}_{H^s(\dSphe)}^2
\end{equation}
whenever \(\Vab{h}_{L^\infty}\le r_{\mathrm{ch}}\).

\subsubsection{Small-learning-rate nonlinear response}

\begin{proposition}[small-learning-rate nonlinear response]\label{prop:small-alpha-nonlinear-response}
Fix a terminal-layer thickness \(r>0\) and an integer
\(s>\dim(\dSphe)/2+2\).  Assume \(g_\ell\in H^{s+1}(\dSphe)\cap L^2_0(\mubar)\).
Then there exist constants \(\overline\alpha>0\) and \(C_R<\infty\) such that, for every
\(0<\alpha\le\overline\alpha\), the one-step-GD trajectory
\(\mu_t^{W^{\oneGD}}\) remains in the selected local density chart on \([0,T]\).  Moreover,
for all \(t\in[T-r,T]\), if
\[
    \mu_t^{W^{\oneGD}}=(1+h_t^\alpha)\mubar,
\]
then
\begin{equation}\label{eq:nonlinear-response-estimate-main}
    h_t^\alpha=-\alpha\cGram_t\phi_t+R_t^\alpha,
    \qquad
    \Vab{R_t^\alpha}_{\Hop}\le C_R\alpha^2.
\end{equation}
\end{proposition}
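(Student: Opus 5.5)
We write the nonlinear state equation in the density chart and treat it as a perturbation of the linearized system \eqref{eq:linearized-state-adjoint-proof}. Writing \(\mu_t=(1+h_t)\mubar\), the controlled equation \eqref{eq:controlled-density-adjoint-proof} takes the form
\[
\partial_t h_t=\Abar h_t+\Bbar W_t+\mathcal N(h_t,W_t),
\]
where the nonlinearity \(\mathcal N\) collects three contributions. The first is \(\wLaplacian R_F(h)\), coming from \eqref{eq:Phi-H-remainder-adjoint-proof}. The second is the transport cross term \(\rhobar^{-1}\Div(\rhobar h\nabla\Phi(h))\). The third is the control cross term \(-\rhobar^{-1}\Div(\rhobar hP^\perp_x W\sigma)\). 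Each piece is at least quadratic in \((h,W)\). With \(W=W^{\oneGD}=-\alpha\Bbar^\ast\phi\), the control is \(O(\alpha)\) uniformly on \([0,T]\). Since \(L_\eps\ge0\) is self-adjoint in the \(\Hop^{-1}\)-geometry, \(\phi_t=\e^{-(T-t)L_\eps}g_\ell\) is uniformly bounded in \(H^{s+1}\), using \(g_\ell\in H^{s+1}\) and parabolic regularity. Hence \(\sup_t\Vab{W_t}_\frob\le C\alpha\).

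\textbf{Step 1: linear bounds.} The plan is to show that \(\e^{t\Abar}\) is a contraction-type semigroup on \(L^2_0(\mubar)\) equipped with the \(\Hop\)-norm. Indeed, \(\frac{\d}{\d t}\Vab{h}_\Hop^2=2\aab{\Hop h,\wLaplacian\Hop h}\le0\), so the semigroup is non-expansive. We then upgrade this to \(H^s\) bounds with a parabolic smoothing gain \(\Vab{\e^{t\Abar}f}_{H^s}\lesssim (1+t^{-1/2})\Vab{f}_{H^{s-1}}\), which absorbs the one lost derivative in the divergence-form nonlinearity. Applying Duhamel to the linearized part then gives \(h^{\mathrm{lin}}_t=-\alpha\cGram_t\phi_t\) with \(\sup_t\Vab{h^{\mathrm{lin}}_t}_{H^s}\le C\alpha\).

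\textbf{Step 2: bootstrap.} Set \(R^\alpha_t=h_t-h^{\mathrm{lin}}_t\), which satisfies
\[
\partial_tR=\Abar R+\mathcal N(h^{\mathrm{lin}}+R,W),\qquad R_0=0.
\]
We make the continuation hypothesis \(\Vab{h_t}_{H^s}\le 2C\alpha\), which is smaller than \(r_{\mathrm{ch}}\) by Sobolev embedding since \(s>\dim/2\). Under this hypothesis, \eqref{eq:firstvar-remainder-adjoint-proof} and the Moser algebra property of \(H^s\) give \(\Vab{\mathcal N}_{H^{s-1}}\le C\alpha^2\). Duhamel's formula with the smoothing estimate then yields \(\Vab{R_t}_{H^s}\le C'\alpha^2\), and the hypothesis closes once \(\alpha\le\overline\alpha\) is small. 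Finally, \(\Vab{\cdot}_\Hop\le\sqrt\eps\Vab{\cdot}_{L^2}\) gives \(\Vab{R_t}_\Hop\le C_R\alpha^2\). Positivity of the density and mass conservation are automatic inside the chart.

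\textbf{Main obstacle.} The hard part is uniformity in \(T\), because the Duhamel integral runs over \([0,T]\) with \(T\) large. Mere non-expansiveness does not make \(\int_0^t\Vab{\e^{(t-u)\Abar}\mathcal N_u}\d u\) bounded independently of \(T\). I would first try to exploit a spectral gap: by the Poincaré inequality for the log-concave-like \(\mubar\) together with the coercivity \eqref{eq:adjoint-H-coercive}, \(\Abar\) should have spectrum bounded by \(-\kappa<0\) on the mean-zero space, so \(\e^{t\Abar}\) decays exponentially in \(H^s\). That makes the convolution uniformly bounded. If the decay cannot be established in \(H^s\), the fallback is to allow the constants to depend on \(T\) and \(r\). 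This is acceptable, since the statement fixes \(T\) and only asks for the existence of \(\overline\alpha\) and \(C_R\).
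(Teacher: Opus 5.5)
Your architecture is the paper's: the same chart equation with the same three-term remainder, subtraction of the linear response $-\alpha\cGram_t\phi_t$ (the paper's $\alpha\zeta_t$), an $H^s$ bootstrap at scale $\alpha$ with Sobolev embedding for chart retention, and the final comparison $\Vab{\cdot}_{\Hop}\le\sqrt{\eps}\Vab{\cdot}_{L^2(\mubar)}$.

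\textbf{The gap: the derivative count in Step~2.} The step that fails as you justify it is the bound $\Vab{\mathcal N}_{H^{s-1}}\le C\alpha^2$. In your splitting, $\wLaplacian R_F(h)$ and the transport cross term are each genuinely second order in $h$. With $R_F(h)=\eps\varPi_0(\log(1+h)-h)$ one computes
\[
\wLaplacian R_F(h)=-\eps\frac{h}{1+h}\wLaplacian h-\eps\frac{\vab{\nabla h}^2}{(1+h)^2},
\]
while $\rhobar^{-1}\Div(\rhobar h\nabla\Phi(h))$ contains $+\eps\frac{h}{1+h}\wLaplacian h$. So \eqref{eq:firstvar-remainder-adjoint-proof} plus the algebra property only puts $\mathcal N$ in $H^{s-2}$ when $h\in H^s$, not in $H^{s-1}$. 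With an $H^{s-2}$ forcing your Duhamel step breaks: the smoothing factor from $H^{s-2}$ to $H^s$ is $(t-u)^{-1}$, which is not integrable, so the bootstrap does not close.

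\textbf{The fix: an exact cancellation.} Since $\rho\nabla\log\rho=\nabla\rho$, one has $\rhobar^{-1}\Div\bigl(\rhobar(1+h)\nabla[\eps\log(1+h)]\bigr)=\eps\wLaplacian h$ exactly. Hence the two second-order pieces cancel identically. Working directly from the density form of \eqref{eq:ODE-Net}, and using stationarity of $\rhobar$ and linearity of $\chi_A$ in the measure, the nonlinearity is
\[
\mathcal N(h,W)=-\rhobar^{-1}\Div\Bigl(\rhobar h\,P_x^\perp\bigl(\chi_A[h\mubar]+W\sigma\bigr)\Bigr).
\]
Here $\chi_A[h\mubar]$ is $C^\infty$, with every $C^k$ norm controlled by $\Vab{h}_{L^1(\mubar)}$. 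So
\[
\Vab{\mathcal N}_{H^{s-1}}\lesssim\Vab{h}_{H^s}\bigl(\Vab{h}_{L^2(\mubar)}+\Vab{W}_{\frob}\bigr).
\]
The equation is therefore semilinear with a single derivative loss, and your $(t-u)^{-1/2}$ Duhamel argument closes as written. The paper states its remainder bound only in $H^{s-2}$ and then invokes a parabolic stability estimate; the same cancellation is the natural way to justify that step as well.

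\textbf{Uniformity in $T$.} The paper integrates an $O(\alpha^2)$ forcing over $[0,T']$, so its constants tacitly depend on $T$, exactly as in your fallback. Your spectral-gap route does work and would remove this dependence. By \zcref{prop:poincare} and \eqref{eq:adjoint-H-coercive},
\[
\aab{\Abar h,\Hop h}_{L^2(\mubar)}=-\int\vab{\nabla\Hop h}^2\d\mubar\le-\kappa_P(\eps)\frac{\eps}{2}\Bigl(1-\frac{\lambda_2}{\lambda_1}\Bigr)\Vab{h}_{\Hop}^2 .
\]
This gives exponential decay of $\e^{t\Abar}$ in the $\Hop$-norm. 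Combined with parabolic smoothing, it makes $\overline\alpha$ and $C_R$ independent of $T$.
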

\begin{proof}
Let \(r_{\mathrm{ch}}\) be a chart radius for which
\zcref{lem:local-energy-lower-adjoint} and \zcref{eq:firstvar-remainder-adjoint-proof} hold.
Inside the chart write
\[
    \mu_t^{W^{\oneGD}}=(1+h_t^\alpha)\mubar,
    \qquad
    h_t^\alpha\in L^2_0(\mubar).
\]
Using \zcref{eq:controlled-density-adjoint-proof}, \zcref{eq:Phi-def-adjoint-proof}, and
\zcref{eq:Phi-H-remainder-adjoint-proof}, the chart equation is
\begin{equation}\label{eq:chart-equation-adjoint-proof}
    \partial_t h_t
    =
    \Abar h_t+\Bbar W_t+\mathcal R(h_t,W_t),
\end{equation}
where
\begin{equation}\label{eq:nonlinear-remainder-adjoint-proof}
\begin{aligned}
    \mathcal R(h,W)
    &\coloneqq
    \wLaplacian R_F(h)
    +\rhobar^{-1}\Div\pab{h\rhobar\nabla\Phi(h)}
    -\rhobar^{-1}\Div\pab{h\rhobar P_x^\perp(W\sigma(x))}.
\end{aligned}
\end{equation}
The Sobolev algebra property and smoothness of \(\rhobar\) and \(\sigma\) imply
\begin{equation}\label{eq:nonlinear-remainder-tame-adjoint-proof}
    \Vab{\mathcal R(h,W)}_{H^{s-2}(\dSphe)}
    \le
    C\pab{
    \Vab{h}_{H^s(\dSphe)}^2
    +\Vab{h}_{H^s(\dSphe)}\Vab{W}_{\frob}}
\end{equation}
for \(\Vab{h}_{L^\infty}<r_{\mathrm{ch}}\).

Let
\begin{equation}\label{eq:zeta-adjoint-proof}
    \zeta_t\coloneqq-\cGram_t\phi_t.
\end{equation}
A direct differentiation using \zcref{eq:Gramian-adjoint-proof} and
\(\partial_t\phi_t+\Adbar\phi_t=0\) gives
\begin{equation}\label{eq:zeta-equation-adjoint-proof}
    \partial_t\zeta_t=\Abar\zeta_t-\Bbar\Bbar^\ast\phi_t,
    \qquad
    \zeta_0=0.
\end{equation}
Thus \(\alpha\zeta_t\) is the first-order response forced by
\(W_t^{\oneGD}=-\alpha\Bbar^\ast\phi_t\).  By standard parabolic estimates for smooth
coefficients and \(g_\ell\in H^{s+1}\),
\begin{equation}\label{eq:zeta-Hs-bound-adjoint-proof}
    \sup_{0\le t\le T}\Vab{\zeta_t}_{H^s(\dSphe)}\le C_\zeta.
\end{equation}
Moreover,
\begin{equation}\label{eq:oneGD-W-bound-adjoint-proof}
    \sup_{0\le t\le T}\Vab{W_t^{\oneGD}}_{\frob}
    \le C_W\alpha.
\end{equation}

On the maximal chart interval set
\[
    Z_t^\alpha\coloneqq h_t^\alpha-\alpha\zeta_t
    =h_t^\alpha+\alpha\cGram_t\phi_t.
\]
Subtracting \(\alpha\) times \zcref{eq:zeta-equation-adjoint-proof} from
\zcref{eq:chart-equation-adjoint-proof} gives
\begin{equation}\label{eq:Z-equation-adjoint-proof}
    \partial_t Z_t^\alpha
    =\Abar Z_t^\alpha+\mathcal R(h_t^\alpha,W_t^{\oneGD}),
    \qquad
    Z_0^\alpha=0.
\end{equation}
Local parabolic stability gives, for every \(T'\) contained in the chart interval,
\begin{equation}\label{eq:Z-stability-adjoint-proof}
    \sup_{0\le t\le T'}\Vab{Z_t^\alpha}_{H^s(\dSphe)}
    \le
    C_{\mathrm{par}}\int_0^{T'}
    \Vab{\mathcal R(h_\tau^\alpha,W_\tau^{\oneGD})}_{H^{s-2}(\dSphe)}\d\tau.
\end{equation}
Bootstrap
\(\sup_{0\le t\le T'}\Vab{h_t^\alpha}_{H^s}\le M_{\mathrm{boot}}\alpha\), with
\(M_{\mathrm{boot}}>2C_\zeta\).  Then \zcref{eq:nonlinear-remainder-tame-adjoint-proof} and
\zcref{eq:oneGD-W-bound-adjoint-proof} give
\[
    \sup_{0\le t\le T'}
    \Vab{\mathcal R(h_t^\alpha,W_t^{\oneGD})}_{H^{s-2}}
    \le C\alpha^2.
\]
Therefore \zcref{eq:Z-stability-adjoint-proof} yields
\[
    \sup_{0\le t\le T'}\Vab{Z_t^\alpha}_{H^s(\dSphe)}\le C_Z\alpha^2.
\]
For \(\alpha\le\overline\alpha\) small, this improves the bootstrap bound and, by Sobolev
embedding, keeps \(\Vab{h_t^\alpha}_{L^\infty}<r_{\mathrm{ch}}\).  The standard continuation
criterion therefore gives chart retention on the whole interval \([0,T]\).
Finally, \zcref{eq:adjoint-H-coercive} gives
\(\Vab{f}_{\Hop}\lesssim\Vab{f}_{L^2(\mubar)}\lesssim\Vab{f}_{H^s(\dSphe)}\), so
\[
    \Vab{Z_t^\alpha}_{\Hop}\le C_R\alpha^2.
\]
Since \(Z_t^\alpha=h_t^\alpha+\alpha\cGram_t\phi_t\), this is
\zcref{eq:nonlinear-response-estimate-main}.
\end{proof}

\subsubsection{Proof of \zcref{prop:one-step-escape}}

\begin{lemma}[Rayleigh--Jensen lower bound for the costate]\label{lem:jensen-costate-adjoint-proof}
Assume \(g_\ell\ne0\) and \(g_\ell\in H^1(\dSphe)\cap L^2_0(\mubar)\).  Then, for every
\(r\ge0\),
\begin{equation}\label{eq:Jensen-used-adjoint-proof}
    \Vab{\e^{r\Adbar}g_\ell}_{\Hop^{-1}}
    \ge
    \Vab{g_\ell}_{\Hop^{-1}}
    \exp\pab{-r\Rayleigh}.
\end{equation}
\end{lemma}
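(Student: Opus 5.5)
The plan is to view the semigroup \(\e^{r\Adbar}=\e^{-rL_\eps}\) through the spectral theorem on the Hilbert space \(\mathcal H\coloneqq(L^2_0(\mubar),\aab{\cdot,\cdot}_{\Hop^{-1}})\). There, by \eqref{eq:q-operator-identity-adjoint}, \(L_\eps\) is self-adjoint and non-negative with closed form \(q_{L_\eps}\). Let \(E(\d\lambda)\) be its spectral resolution, and define the probability measure
\[
\nu(\d\lambda)\coloneqq\frac{\aab{E(\d\lambda)g_\ell,g_\ell}_{\Hop^{-1}}}{\Vab{g_\ell}_{\Hop^{-1}}^2}
\]
on \([0,\infty)\); this is well defined because \(g_\ell\neq0\). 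Then
\[
\Vab{\e^{-rL_\eps}g_\ell}_{\Hop^{-1}}^2=\Vab{g_\ell}_{\Hop^{-1}}^2\int_0^\infty\e^{-2r\lambda}\d\nu(\lambda).
\]
Since \(g_\ell\in H^1\cap L^2_0(\mubar)=D(q_{L_\eps})\), the first moment is finite:
\[
\int\lambda\d\nu(\lambda)=\frac{q_{L_\eps}[g_\ell,g_\ell]}{\Vab{g_\ell}_{\Hop^{-1}}^2}=\Rayleigh,
\]
by \eqref{eq:Rayleigh-adjoint-proof}.

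The key step is Jensen's inequality for the convex function \(\lambda\mapsto\e^{-2r\lambda}\) with respect to \(\nu\):
\[
\int\e^{-2r\lambda}\d\nu\ge\exp\pab{-2r\int\lambda\d\nu}=\exp\pab{-2r\Rayleigh}.
\]
Taking square roots gives \eqref{eq:Jensen-used-adjoint-proof}. The case \(r=0\) is trivial.

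The main point to check carefully is the functional-analytic setup. First, \(\Hop\) is bounded and coercive on \(L^2_0(\mubar)\) by \eqref{eq:adjoint-H-coercive}, so the \(\Hop^{-1}\)-norm is equivalent to the \(L^2(\mubar)\)-norm and \(\mathcal H\) is a genuine Hilbert space. Second, the form \(q_{L_\eps}\) is closed on \(H^1\cap L^2_0\) in this norm, by the weighted Poincaré inequality together with the positivity and smoothness of \(\rhobar\). It follows that the operator associated with \(q_{L_\eps}\) coincides with \(-\Hop\wLaplacian\) on smooth functions, and that \(\e^{r\Adbar}\) in \eqref{eq:costate-semigroup-adjoint-proof} is exactly the spectral semigroup \(\e^{-rL_\eps}\). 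Finally, the identity \(\int\lambda\d\nu=q_{L_\eps}[g_\ell,g_\ell]/\Vab{g_\ell}^2_{\Hop^{-1}}\) is the standard representation \(q[g,g]=\int\lambda\,\d\aab{E(\lambda)g,g}\), valid for \(g\) in the form domain.

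A more elementary alternative avoids the spectral theorem: show that \(r\mapsto\log\Vab{\e^{-rL_\eps}g_\ell}_{\Hop^{-1}}^2\) is convex and that its right derivative at \(0\) equals \(-2\Rayleigh\). The claim then follows because a convex function lies above its tangent line at \(0\). The Jensen route is cleaner, so I would use it.
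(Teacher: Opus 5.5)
Your proposal is correct and follows the paper's proof essentially step for step. The paper also takes the spectral measure of \(L_\eps=-\Adbar\) in the \(\Hop^{-1}\)-geometry, normalized by \(\Vab{g_\ell}_{\Hop^{-1}}^2\), identifies its first moment with \(\Rayleigh\), applies Jensen to the convex map \(\lambda\mapsto\e^{-2r\lambda}\), and takes square roots; your remarks on closedness of the form and on identifying the semigroup just make explicit what the paper takes for granted.
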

\begin{proof}
Since \(L_\eps=-\Adbar\) is self-adjoint and non-negative on
\((L^2_0(\mubar),\aab{\cdot,\cdot}_{\Hop^{-1}})\), the spectral theorem gives a probability
measure \(\nu_{g_\ell}\), normalized by \(\Vab{g_\ell}_{\Hop^{-1}}^2\), such that
\[
    \Vab{\e^{r\Adbar}g_\ell}_{\Hop^{-1}}^2
    =
    \Vab{g_\ell}_{\Hop^{-1}}^2
    \int_0^\infty \e^{-2r\lambda}\d\nu_{g_\ell}(\lambda).
\]
Moreover,
\[
    \int_0^\infty\lambda\d\nu_{g_\ell}(\lambda)=\Rayleigh.
\]
The map \(\lambda\mapsto\e^{-2r\lambda}\) is convex, so Jensen's inequality gives
\[
    \int_0^\infty \e^{-2r\lambda}\d\nu_{g_\ell}(\lambda)
    \ge
    \e^{-2r\Rayleigh}.
\]
Taking square roots proves the claim.
\end{proof}

\begin{proof}[Proof of \zcref{prop:one-step-escape}]
If \(g_\ell=0\), then the one-step parameter vanishes and the statement is understood with
zero right-hand side.  Assume \(g_\ell\ne0\).  Let \(t\in[T-r,T]\) and set
\(z_t\coloneqq\phi_t=\e^{(T-t)\Adbar}g_\ell\).  By
\zcref{prop:small-alpha-nonlinear-response},
\[
    h_t^\alpha=-\alpha\cGram_tz_t+R_t^\alpha,
    \qquad
    \Vab{R_t^\alpha}_{\Hop}\le C_R\alpha^2.
\]
Therefore
\begin{equation}\label{eq:Hnorm-lower-adjoint-proof}
    \Vab{h_t^\alpha}_{\Hop}
    \ge
    \alpha\Vab{\Hop^{1/2}\cGram_tz_t}_{L^2(\mubar)}-C_R\alpha^2.
\end{equation}
Using \zcref{eq:Omega-lower-Gramian-adjoint-proof} and the definition
\zcref{eq:Omega-adjoint-proof},
\begin{equation}\label{eq:Omega-Hnorm-lower-adjoint-proof}
    \Vab{\Hop^{1/2}\cGram_tz_t}_{L^2(\mubar)}
    \ge
    \varOmega_{t,T}(g_\ell)\Vab{z_t}_{\Hop^{-1}}.
\end{equation}
Finally, \zcref{lem:jensen-costate-adjoint-proof} gives
\begin{equation}\label{eq:costate-lower-final-adjoint-proof}
    \Vab{z_t}_{\Hop^{-1}}
    \ge
    \Vab{g_\ell}_{\Hop^{-1}}\exp\pab{-(T-t)\Rayleigh}.
\end{equation}
Combining \zcref{eq:Hnorm-lower-adjoint-proof}--\zcref{eq:costate-lower-final-adjoint-proof} and
then taking positive parts yields
\[
    \Vab{h_t^\alpha}_{\Hop}
    \ge
    \pab{
    \alpha\varOmega_{t,T}(g_\ell)\Vab{g_\ell}_{\Hop^{-1}}
    \exp\pab{-(T-t)\Rayleigh}
    -C_R\alpha^2}_+.
\]
The trajectory remains in the density chart by \zcref{prop:small-alpha-nonlinear-response}; hence
\zcref{lem:local-energy-lower-adjoint} applies and gives
\[
    \Energygap(\mu_t^{W^{\oneGD}})
    \ge
    \frac14\Vab{h_t^\alpha}_{\Hop}^2.
\]
This proves the asserted lower bound.
\end{proof}

%% file: contents/proofs/exp_turnpike_reorg_sphere.tex
\providecommand{\Minimizers}{\mathcal M}

\subsection*{Weighted Laplacian and one-well Poincaré estimates}
\begin{equation}\label{eq:linear-objects-2}
\Pbar\phi\coloneqq \rhobar^{-1}\Div\bigl(\rhobar\nabla\phi\bigr).
\end{equation}

\begin{proposition}[closure on the mean-zero space]\label{prop:closure}
The space $\Xred=L^2_0(\mubar)$ is invariant under the Hessian kernel $\Kbar$ from \eqref{eq:linear-objects} and under $\Pbar$.
\end{proposition}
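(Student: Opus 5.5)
The claim is a direct verification, so I will check the two operators separately. I read ``invariant'' as: each operator, applied to a mean-zero function in its natural domain (all of \(\Xred\) for \(\Kbar\), and \(H^2(\dSphe)\cap\Xred\) for \(\Pbar\)), returns a mean-zero function.

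\textbf{The operator \(\Kbar\).} By its definition in \eqref{eq:linear-objects}, \(\Kbar h=\varPi_0^\mubar[\,\cdot\,]\), and the projection \(f\mapsto f-\int f\d\mubar\) always lands in the mean-zero space. So the only point is that the bracket is a legitimate element of \(L^2(\mubar)\). For \(h\in L^2(\mubar)\) the function
\[
x\mapsto\int_\dSphe \e^{\aab{x,Ay}}h(y)\d\mubar(y)
\]
is bounded, and in fact smooth, because the kernel is smooth and bounded on the compact set \(\dSphe\times\dSphe\) and \(\mubar\) is a probability measure. Hence \(\Kbar\) maps \(\Xred\) into \(\Xred\). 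Moreover \(\Kbar\) is bounded there: by Cauchy--Schwarz its operator norm is at most \(\sup_{x,y}\e^{\aab{x,Ay}}\le\e^{\lambda_1}\).

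\textbf{The operator \(\Pbar\).} Let \(\phi\in H^2(\dSphe)\cap\Xred\). The density \(\rhobar\) is smooth and strictly positive by \zcref{prop:energy-landscape}, so \(\Pbar\phi=\rhobar^{-1}\Div(\rhobar\nabla\phi)\in L^2(\mubar)\). Its mean is
\[
\int \Pbar\phi\,\d\mubar=\int \rhobar^{-1}\Div(\rhobar\nabla\phi)\,\rhobar\,\d\omega=\int\Div(\rhobar\nabla\phi)\,\d\omega=0,
\]
by the divergence theorem on the closed manifold \(\dSphe\), which has no boundary. Thus \(\Pbar\) maps its domain into \(\Xred\) with no projection needed. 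The same integration by parts gives the symmetric form \eqref{eq:wlap-form-adjoint}, which I will record for later use.

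\textbf{Possible obstacle and consequences.} The one subtlety is the meaning of ``invariant'' for the unbounded operator \(\Pbar\); I will state it as invariance of its domain intersected with \(\Xred\). If smooth invariance is wanted instead, elliptic regularity shows \(\Pbar\) maps \(C^\infty\cap\Xred\) into itself. As a corollary, the compositions \(\Hop\Pbar\) and \(\Pbar\Hop\) also preserve \(\Xred\), since \(\Hop=\eps I-\Kbar\) is a combination of invariant operators.
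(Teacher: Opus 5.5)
Your proof is correct and follows the paper's argument. The built-in projection $\varPi_0^\mubar$ gives $\Kbar(\Xred)\subset\Xred$, and the divergence theorem on the boundaryless sphere gives $\int\Pbar\phi\,\d\mubar=\int\Div(\rhobar\nabla\phi)\,\d\omega=0$. Your extra checks (the bound $\Vab{\Kbar}_{\op}\le\e^{\lambda_1}$ and the domain bookkeeping for $\Pbar$) are sound additions that the paper leaves implicit. One small correction: smooth invariance of $\Pbar$ needs no elliptic regularity, since it is a differential operator with smooth coefficients.
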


\begin{proof}
The projection is built into $\Kbar$, so $\Kbar(\Xred)\subset\Xred$. Also
\[
\int_\dSphe  \Pbar\phi\d\mubar=\int_\dSphe \Div(\rhobar\nabla\phi)\d\omega=0
\]
because $\dSphe$ is compact without boundary. Hence $\Pbar$ preserves $\Xred$.
\end{proof}

\begin{proposition}[weighted Poincare gap]\label{prop:poincare}
For every fixed $\eps>0$, there exists $\kappa_P(\eps)>0$ such that
\begin{equation}\label{eq:poincare}
-\aab{\Pbar z,z}_{L^2(\mubar)}\ge \kappa_P(\eps)\Vab{z}_{L^2(\mubar)}^2,
\qquad z\in D(\Pbar)\cap\Xred.
\end{equation}
\end{proposition}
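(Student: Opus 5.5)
The weighted Poincaré inequality for a smooth, strictly positive density on a compact manifold follows from a comparison with the uniform measure, so I would reduce \eqref{eq:poincare} to the classical spectral gap of the Laplace--Beltrami operator on $\dSphe$.

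\textbf{Step 1: energy identity.} By \zcref{prop:energy-landscape} (equivalently the Gibbs fixed-point formula in the proof of \zcref{thm:package}), $\rhobar$ is $C^\infty$ and strictly positive on the compact sphere. Hence there are constants $0<m_\eps\le M_\eps<\infty$ with $m_\eps\le\rhobar\le M_\eps$. For $z\in D(\Pbar)\cap\Xred$, integration by parts as in \eqref{eq:wlap-form-adjoint} gives
\[
-\aab{\Pbar z,z}_{L^2(\mubar)}=\int_\dSphe \vab{\nabla z}^2\rhobar\d\omega\ \ge\ m_\eps\int_\dSphe\vab{\nabla z}^2\d\omega .
\]

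\textbf{Step 2: unweighted Poincaré inequality.} Let $c\coloneqq\int z\d\omega$. The first nonzero eigenvalue of $-\Laplacian$ on $\dSphe$ is $d-1$, so
\[
\int\vab{\nabla z}^2\d\omega\ge (d-1)\int (z-c)^2\d\omega .
\]

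\textbf{Step 3: change of centering and measure.} Since $\int z\d\mubar=0$, the constant $0$ minimizes $a\mapsto\int(z-a)^2\d\mubar$. Therefore
\[
\Vab{z}_{L^2(\mubar)}^2\le\int(z-c)^2\d\mubar\le M_\eps\int(z-c)^2\d\omega .
\]
Chaining Steps 1--3 yields \eqref{eq:poincare} with $\kappa_P(\eps)=(d-1)\,m_\eps/M_\eps>0$.

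\textbf{Main obstacle.} The only delicate point is justifying the integration by parts on $D(\Pbar)$, since the proof above treats $z$ as smooth. Because $\rhobar$ is smooth and positive, $\Pbar$ is a uniformly elliptic operator with smooth coefficients. Its domain is therefore $H^2(\dSphe)$, and I would obtain the identity for general $z$ by density of smooth functions in $H^2$.

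The constant $\kappa_P(\eps)$ degenerates as $\eps\to0$, since $m_\eps/M_\eps$ is exponentially small in $1/\eps$. The statement, however, only requires positivity for each fixed $\eps$, so no uniformity in $\eps$ is needed.
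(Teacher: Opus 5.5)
Your argument is correct, but it takes a different route from the paper's. The paper argues abstractly: since $\rhobar$ is smooth and bounded below, $\Pbar$ is uniformly elliptic and self-adjoint on $L^2(\mubar)$, and its kernel is the constants. Hence on the mean-zero subspace its quadratic form is bounded below by the first positive eigenvalue of $-\Pbar$. This tacitly uses that the spectrum is discrete, i.e.\ that the resolvent is compact on the compact sphere.

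You instead run a bounded-perturbation (Holley--Stroock-type) comparison with the uniform measure. This reduces everything to the classical gap $d-1$ of the round sphere, and the variance-minimization trick handles the change of centering. Your route is more elementary, avoids the spectral/compactness step, and gives the explicit constant $\kappa_P(\eps)=(d-1)\,m_\eps/M_\eps$. The paper's route gives the sharp constant, namely the true spectral gap of $-\Pbar$, but no quantitative information about it.

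As you note, your constant is of order $\exp(-\osc W^\ast_\eps/\eps)$ with $W^\ast_\eps\approx\e^{\lambda_1x_1}$. This is far from the scale $c_{\mathrm{1w}}/\eps$ that the paper later needs for an $\eps$-uniform localization rate. The paper obtains that scale separately in \zcref{prop:one-well-poincare-scale} via a Lyapunov--Bakry--\'Emery argument; neither proof of the present proposition provides it. For the fixed-$\eps$ statement here, both proofs suffice.
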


\begin{proof}
Since $\rhobar\in C^\infty(\dSphe)$ and $\inf_{\dSphe}\rhobar>0$, the operator $\Pbar$ is uniformly elliptic and self-adjoint on the compact sphere $\dSphe$. Its kernel consists of the constants, so the first positive eigenvalue of $-\Pbar$ gives \eqref{eq:poincare} on the mean-zero subspace.
\end{proof}

For the selected branch define
\begin{equation}\label{eq:branch-potential-def}
W_\eps^*(x)\coloneqq \int_\dSphe  K_A(x,y)\d\mubar(y),
\qquad
U_\eps(x)\coloneqq -W_\eps^*(x),
\qquad
U_0(x)\coloneqq -\e^{\lambda_1 x_1}.
\end{equation}
The Euler--Lagrange equation gives
\begin{equation}\label{eq:gibbs-selected-branch}
\rhobar(x)=Z_\eps^{-1}\exp\left(-\frac{U_\eps(x)}{\eps}\right).
\end{equation}

\begin{lemma}[\(C^3\)-one-well convergence of the branch potential]
\label{lem:C3-one-well-potential}
Assume \zcref{thm:package}.  Then there exists
\(\delta_3(\eps;A,d)\downarrow0\) such that
\begin{equation}\label{eq:C3-one-well-potential}
\Vab{W_\eps^*-K_A(\cdot,e_1)}_{C^3(\dSphe)}
+
\Vab{U_\eps-U_0}_{C^3(\dSphe)}
\le \delta_3(\eps;A,d).
\end{equation}
More concretely, after decreasing the small-noise threshold if
necessary,
\begin{equation}\label{eq:C3-one-well-rate}
\delta_3(\eps;A,d)\le C_{3,A,d}\eps^{1/2}.
\end{equation}
\end{lemma}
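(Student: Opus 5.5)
The proof rests on one structural identity. Since $\mubar$ is a critical point, $W_\eps^*$ is the Gibbs potential of $\mubar$, so $W_\eps^* - K_A(\cdot,e_1)$ can be written as an integral against $\mubar$. Concretely,
\[
W_\eps^*(x)-K_A(x,e_1)=\int_\dSphe\bigl(K_A(x,y)-K_A(x,e_1)\bigr)\d\mubar(y).
\]
Every $x$-derivative of $K_A(x,y)=\e^{\aab{x,Ay}}$ up to order three is smooth in $(x,y)$ on the compact set $\dSphe\times\dSphe$. This holds both for ambient derivatives and for covariant derivatives on the sphere, e.g.\ computed via the projections $P_x^\perp$. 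Hence there is a constant $C_{3,A,d}$ with
\[
\sup_{x}\bigl|\nabla_x^k K_A(x,y)-\nabla_x^k K_A(x,e_1)\bigr|\le C\,|y-e_1|,\qquad k\le 3.
\]
Differentiating under the integral therefore gives
\[
\Vab{W_\eps^*-K_A(\cdot,e_1)}_{C^3}\le C\int_\dSphe|y-e_1|\d\mubar(y).
\]
Since $U_\eps-U_0=-(W_\eps^*-K_A(\cdot,e_1))$, it suffices to bound the first term. Note that $K_A(x,e_1)=\e^{\lambda_1x_1}$ because $Ae_1=\lambda_1e_1$.

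The task thus reduces to a first absolute moment estimate for $\mubar$ about $e_1$, and I would split the integral into the cap $B_r(x_\eps)$ and its complement. On the complement, $|y-e_1|\le 2$ and the tail bound \eqref{eq:tail-main} gives a contribution $\le 2C\e^{-c/\eps}=o(\eps^{1/2})$. On the cap, use the triangle inequality $|y-e_1|\le|y-x_\eps|+|x_\eps-e_1|$. The geodesic comparison gives $|y-x_\eps|\le|u(y)|$, and \zcref{thm:package}(ii) gives $x_\eps=e_1+O(\eps)$. For the remaining term, combine \eqref{eq:momentm-main} with $m=2$ and Cauchy--Schwarz:
\[
\int|u|\d\mubar\le\Bigl(\int|u|^2\d\mubar\Bigr)^{1/2}=O(\eps^{1/2}).
\]
Summing the three contributions yields $\int|y-e_1|\d\mubar\le C\eps^{1/2}$ for $\eps$ below a threshold. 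This gives \eqref{eq:C3-one-well-rate}, with $\delta_3\coloneqq C_{3,A,d}\eps^{1/2}\downarrow0$, and hence \eqref{eq:C3-one-well-potential}.

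There is no deep obstacle. The step requiring care is the uniform Lipschitz-in-$y$ bound on third covariant derivatives of $K_A(\cdot,y)$ on the sphere. I would handle it by extending $K_A$ to a smooth function on a neighborhood of $\dSphe\times\dSphe$ in $\R^d\times\R^d$ and expressing the spherical derivatives as fixed smooth combinations of ambient derivatives. Their $y$-Lipschitz constants are then bounded by a $C^4$ norm on the compact neighborhood, which depends only on $A$ and $d$.
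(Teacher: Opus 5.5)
Your proposal is correct and follows the paper's proof essentially step for step. The uniform Lipschitz dependence in $y$ of $\nabla_x^jK_A(x,y)$ for $j\le 3$ reduces the claim to the first moment $\int \mathrm{dist}(y,e_1)\d\mubar(y)$. That moment is then bounded by $C\eps^{1/2}$ using the tail estimate, $x_\eps=e_1+O(\eps)$, and Cauchy--Schwarz on the second moment of $u$. One small correction: the integral representation of $W_\eps^*-K_A(\cdot,e_1)$ needs only the definition of $W_\eps^*$ and $\mubar(\dSphe)=1$, not the criticality of $\mubar$.
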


\begin{proof}
For \(0\le j\le3\), the map
\((x,y)\mapsto \nabla_x^jK_A(x,y)\) is smooth on the compact set
\(\dSphe\times\dSphe\).  Hence it is uniformly Lipschitz in the second variable.  By
\zcref{thm:package}, the selected branch satisfies
\(x_\eps=e_1+O(\eps)\), the tail estimate \eqref{eq:tail-main}, and the moment
bounds \eqref{eq:momentm-main}.  Since \(u=\exp_{x_\eps}^{-1}(y)\) on the good
cap and is set to zero outside,
\[
\int_\dSphe  \mathrm{dist}_\dSphe (y,e_1)\d\mubar(y)
\le
C\left(\int_\dSphe  |u(y)|^2\d\mubar(y)\right)^{1/2}
+C\mubar(\dSphe\setminus B_r(x_\eps))+C \mathrm{dist}_\dSphe (x_\eps,e_1)
\le C_{A,d}\eps^{1/2}.
\]
Therefore, for \(0\le j\le3\),
\[
\sup_{x\in\dSphe}
\left|\nabla_x^j W_\eps^*(x)-\nabla_x^jK_A(x,e_1)\right|
\le
C_{j,A,d}\int_\dSphe \mathrm{dist}_\dSphe (y,e_1)\d\mubar(y)
\le C_{j,A,d}\eps^{1/2}.
\]
Summing over \(j\) proves the claim.  Since
\(K_A(x,e_1)=\e^{\lambda_1x_1}\), the statement for \(U_\eps\) follows by
changing sign.
\end{proof}

\begin{proposition}[one-well scale of the weighted Poincare gap]
\label{prop:one-well-poincare-scale}
Assume \zcref{thm:package} and \(d\ge2\).  There exist
\(\bar\eps_{\mathrm{1w}}(A,d)>0\) and
\(c_{\mathrm{1w}}(A,d)>0\) such that, for every sufficiently small \(0<\eps<\bar\eps_{\mathrm{1w}}(A,d)\),
the weighted Poincare constant in \zcref{prop:poincare} satisfies
\begin{equation}\label{eq:one-well-poincare-scale}
\kappa_P(\eps)\ge \frac{c_{\mathrm{1w}}(A,d)}{\eps}.
\end{equation}
Moreover, one may choose the constant so that, with
\begin{equation}\label{eq:a-plus-minus-def}
a_+\coloneqq \lambda_1\e^{\lambda_1},
\qquad
a_-\coloneqq \lambda_1\e^{-\lambda_1},
\qquad
m\coloneqq d-1,
\end{equation}
there is a dimensional constant \(c_d>0\) for which
\begin{equation}\label{eq:c1w-explicit-rough}
c_{\mathrm{1w}}(A,d)
\ge
c_d\min\{a_+,ma_-\}
\ge
c_d\lambda_1\e^{-\lambda_1}.
\end{equation}
\end{proposition}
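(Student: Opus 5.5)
The bound is proved in two steps: a lower bound on the Dirichlet form of the weighted Laplacian $\Pbar$ with weight $\rhobar=Z_\eps^{-1}\e^{-U_\eps/\eps}$, and a perturbation from $U_0$ to $U_\eps$ controlled by \zcref{lem:C3-one-well-potential}. The quadratic form is $-\aab{\Pbar z,z}_{L^2(\mubar)}=\int\vab{\nabla z}^2\d\mubar$. We therefore need the Poincar\'e constant of the Gibbs measure $\mubar\propto\e^{-U_\eps/\eps}\omega$ on $\dSphe$ to be of order $1/\eps$.

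The natural tool is the Bakry--\'Emery/Witten-Laplacian route at small temperature. Conjugating by $\rhobar^{1/2}$ turns $-\Pbar$ into the Schr\"odinger operator
\[
-\Laplacian+V_\eps,\qquad V_\eps=\tfrac{1}{4\eps^2}\vab{\nabla U_\eps}^2-\tfrac{1}{2\eps}\Laplacian U_\eps .
\]
Its ground state is $\rhobar^{1/2}$ with eigenvalue $0$, and we need the second eigenvalue to be at least $c/\eps$.

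For the potential $U_0=-\e^{\lambda_1x_1}$, the critical points on $\dSphe$ are exactly $\pm e_1$. The minimum at $e_1$ has Hessian $a_+ I$, and the maximum at $-e_1$ has Hessian $-a_-I$ with index $m=d-1$. By \zcref{lem:C3-one-well-potential}, $U_\eps$ is $C^3$-close to $U_0$, with error $O(\eps^{1/2})$. Hence $U_\eps$ is a Morse function with a single local minimum near $e_1$ and the same Hessians up to $o(1)$. Away from fixed small balls around $\pm e_1$, we have $\vab{\nabla U_\eps}^2\ge c>0$, so $V_\eps\ge c/(4\eps^2)-C/\eps\gg 1/\eps$ there.

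Next, an IMS partition of unity at scale $\eps^{1/2}$ (or a fixed scale) localizes the problem to harmonic-oscillator models near $e_1$ and $-e_1$.

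Near $e_1$, the model is $-\Laplacian+\frac{a_+^2}{4\eps^2}\vab{u}^2-\frac{(d-1)a_+}{2\eps}$. Its spectrum is $\frac{a_+}{\eps}\cdot\{0,1,2,\dots\}$, so its second eigenvalue is $a_+/\eps$.

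Near $-e_1$, $U_\eps$ has a nondegenerate maximum, so the Laplacian term contributes $+\frac{ma_-}{2\eps}$. The model is $-\Laplacian+\frac{a_-^2}{4\eps^2}\vab{u}^2+\frac{ma_-}{2\eps}$, whose lowest eigenvalue is $\frac{ma_-}{2\eps}+\frac{ma_-}{2\eps}=\frac{ma_-}{\eps}$.

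The localization errors are $O(1)$ or $O(\eps^{-1/2})$, hence negligible. Since there is a single local minimum, only one low-lying eigenvalue survives. This gives $\kappa_P(\eps)\ge c_d\min\{a_+,ma_-\}/\eps$, and the final inequality in \eqref{eq:c1w-explicit-rough} follows from $a_+\ge a_-$ and $m\ge1$.

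The main obstacle is making the semiclassical localization rigorous and uniform, with constants depending only on $d$ and the stated scales, while $U_\eps$ is only $C^3$-close to $U_0$. Along the way one must also check that no spurious critical points of $U_\eps$ appear near the equator, where $\nabla U_0$ is merely bounded below. I would first try a cheaper argument that avoids the spectral analysis.

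That argument rests on a Lyapunov-function criterion (Bakry--Barthe--Cattiaux--Guillin). On the set $\{x_1>-1+\delta\}$ outside a small cap around $e_1$, take $\Psi=\e^{U_0/(2\eps)}$-type or $\Psi=\e^{\gamma x_1/\eps}$-type test functions. Then $\Pbar\Psi\le-(c/\eps)\Psi$ holds because $\vab{\nabla U_\eps}\ge c$ there. This is combined with the local Poincar\'e inequality on the cap near $e_1$, where $\operatorname{Hess}U_\eps\ge(a_+/2)I$ and Bakry--\'Emery gives the constant $a_+/(2\eps)$. Near $-e_1$, the term $-\Laplacian U_\eps\ge ma_-/2$ supplies the drift for the Lyapunov inequality, which is where the $ma_-$ term comes from. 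If the Lyapunov bookkeeping fails near the equator, I fall back on the full Witten-Laplacian harmonic approximation described above.
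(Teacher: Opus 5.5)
The Lyapunov--Poincar\'e argument you say you would try first is the paper's proof. The paper takes $V_\eps=\exp(G_\eps/(2\eps))$ with $G_\eps=U_\eps-\min U_\eps$, so that $\Pbar V_\eps/V_\eps=\frac{\Laplacian G_\eps}{2\eps}-\frac{|\nabla G_\eps|^2}{4\eps^2}$. The negative drift comes from $\Laplacian G_\eps\le-\frac12 ma_-$ near the maximum $x_{\eps,-}$, and from $|\nabla G_\eps|\ge g_A$ away from the two critical points. Bakry--\'Emery gives a local gap of order $a_+/\eps$ on a ball around $x_{\eps,+}$, and the pieces are combined through $\kappa_P(\eps)\ge(\Lambda_A/\eps)\rho_{\mathcal U_\eps}/(B_A/\eps+\rho_{\mathcal U_\eps})$, as you outline.

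The paper's ball is $B_{R\sqrt\eps}(x_{\eps,+})$, with the annulus out to a fixed radius handled by $|\nabla G_\eps|\gtrsim a_+|u|$ and $R$ large. Your fixed cap works just as well. The criterion is used in the form $\Pbar V_\eps\le(-\Lambda/\eps+(B/\eps)\mathbf 1_U)V_\eps$, which rests only on $\int(-\Pbar V/V)f^2\,\d\mubar\le\int|\nabla f|^2\,\d\mubar$ for positive $V$, so the size of $V_\eps$ on the cap is irrelevant.

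One caution: put $G_\eps$, not $U_0$, in the exponent. With only the $O(\eps^{1/2})$ closeness of \zcref{lem:C3-one-well-potential}, the cross term $\nabla(U_\eps-U_0)\cdot\nabla U_0/(2\eps)$ near $-e_1$ can contribute a positive $O(1)$ amount. That competes directly with the $\frac12 ma_-$ you need there.

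Your Witten-Laplacian route is genuinely different and would give more. The harmonic approximation yields $\eps\kappa_P(\eps)\to\min\{a_+,ma_-\}$, and this is sharp: test the linear mode near $e_1$ and the mode $\e^{-a_-|u|^2/(2\eps)}$ near $-e_1$. So it would show that the $\min\{a_+,ma_-\}$ structure in \eqref{eq:c1w-explicit-rough} is optimal rather than an artifact of the method.

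The price is uniform localization, and there your error count is off. IMS cutoffs at scale $\eps^{1/2}$ cost $|\nabla J|^2\sim\eps^{-1}$, not $O(\eps^{-1/2})$, and $\eps^{-1}$ is the same order as the gap. Instead take scale $R\sqrt\eps$ with $R$ large, the standard scale $\eps^{2/5}$, or a fixed scale combined with quadratic lower bounds on the potential.
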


\begin{proof}
We use a standard Lyapunov--Poincare argument combined with the
Bakry--Emery local convexity criterion.  The estimates required for the present
one-well Gibbs measure are recorded explicitly below.

Set
\[
G_\eps\coloneqq U_\eps-\min_{\dSphe}U_\eps,
\qquad
L_\eps\coloneqq \eps\Laplacian-\nabla G_\eps\cdot\nabla.
\]
Then \(\mubar=Z_\eps^{-1}\exp(-G_\eps/\eps)\omega\), and
\(\Pbar=\eps^{-1}L_\eps\).  Thus \(\kappa_P(\eps)\) is the usual Poincare constant for the Gibbs measure
\(\exp(-G_\eps/\eps)\omega\), with Dirichlet form
\(\int_\dSphe |\nabla f|^2\d\mubar\).

For the limiting potential \(U_0=-\e^{\lambda_1x_1}\), the only critical points
on \(\dSphe\) are \(e_1\) and \(-e_1\).  The point \(e_1\) is a nondegenerate
minimum and \(-e_1\) is a nondegenerate maximum, with
\begin{equation}\label{eq:U0-hessians-wells}
\operatorname{Hess}_{\dSphe}U_0(e_1)=a_+I_{T_{e_1}\dSphe},
\qquad
\operatorname{Hess}_{\dSphe}U_0(-e_1)=-a_-I_{T_{-e_1}\dSphe}.
\end{equation}
By \zcref{lem:C3-one-well-potential} and Morse stability, after decreasing
\(\bar\eps_{\mathrm{1w}}\), the potential \(U_\eps\) has exactly two critical
points: a minimum \(x_{\eps,+}\) near \(e_1\) and a maximum \(x_{\eps,-}\) near
\(-e_1\).  Moreover,
\begin{equation}\label{eq:eps-well-hessian-bounds}
\operatorname{Hess}_{\dSphe}G_\eps(x_{\eps,+})\ge \frac12a_+I,
\qquad
\Laplacian G_\eps(x_{\eps,-})\le -\frac12ma_-,
\end{equation}
and, outside fixed small neighborhoods of these two points,
\begin{equation}\label{eq:eps-gradient-away-critical}
|\nabla G_\eps|\ge g_A>0.
\end{equation}

Choose \(R=R(A,d)\) large and define
\[
\mathcal U_\eps\coloneqq B_{R\sqrt\eps}(x_{\eps,+}).
\]
For \(\eps\) small, this ball is geodesically convex and contained in a region
where \(\operatorname{Hess}_{\dSphe}G_\eps\ge a_+I/4\).  The Bakry--Emery criterion applied to the
restricted Gibbs measure on \(\mathcal U_\eps\), with Neumann boundary condition,
gives the local Poincare estimate
\begin{equation}\label{eq:local-pi-Ueps}
\Var_{\mubar^{\mathcal U_\eps}}(f)
\le
\frac{4\eps}{a_+}
\int_{\mathcal U_\eps}|\nabla f|^2\d\mubar^{\mathcal U_\eps},
\end{equation}
where \(\mubar^{\mathcal U_\eps}\) is \(\mubar\) conditioned on
\(\mathcal U_\eps\).  Hence the restricted Poincare constant satisfies
\begin{equation}\label{eq:rho-U-lower}
\rho_{\mathcal U_\eps}\ge \frac{a_+}{4\eps}.
\end{equation}

Now take the Lyapunov function
\[
V_\eps\coloneqq \exp\left(\frac{G_\eps}{2\eps}\right)\ge1.
\]
A direct computation gives
\begin{equation}\label{eq:Lyap-computation-onewell}
\frac{L_\eps V_\eps}{\eps V_\eps}
=
\frac{\Laplacian G_\eps}{2\eps}
-
\frac{|\nabla G_\eps|^2}{4\eps^2}.
\end{equation}
In the annulus around \(x_{\eps,+}\) outside \(\mathcal U_\eps\), the quadratic
lower bound on \(|\nabla G_\eps|\) and the choice of \(R\) make the negative term
in \eqref{eq:Lyap-computation-onewell} dominate the Laplacian term.  Near
\(x_{\eps,-}\), the second estimate in \eqref{eq:eps-well-hessian-bounds} gives a
negative contribution of order \(ma_-/\eps\).  Away from the two critical
neighborhoods, \eqref{eq:eps-gradient-away-critical} gives a stronger negative
bound for small \(\eps\).  Consequently there exist constants
\(\Lambda_A,B_A>0\), independent of \(\eps\), such that
\begin{equation}\label{eq:Lyap-condition-onewell}
\frac{L_\eps V_\eps}{\eps V_\eps}
\le
-\frac{\Lambda_A}{\eps}
+
\frac{B_A}{\eps}\mathbf 1_{\mathcal U_\eps},
\end{equation}
with
\begin{equation}\label{eq:Lambda-B-rough}
\Lambda_A\ge c_d\min\{a_+,ma_-\},
\qquad
B_A\le C_d a_+.
\end{equation}

The Lyapunov--Poincare criterion now yields
\begin{equation}\label{eq:Lyap-PI-combine-onewell}
\kappa_P(\eps)
\ge
\frac{(\Lambda_A/\eps)\rho_{\mathcal U_\eps}}
{B_A/\eps+\rho_{\mathcal U_\eps}}.
\end{equation}
Using \eqref{eq:rho-U-lower} and \eqref{eq:Lambda-B-rough}, we obtain
\[
\kappa_P(\eps)
\ge
\frac{\Lambda_Aa_+}{4B_A+a_+}\frac1\eps
\ge
\frac{c_d}{\eps}\min\{a_+,ma_-\}.
\]
This proves \eqref{eq:one-well-poincare-scale}--\eqref{eq:c1w-explicit-rough}.
\end{proof}

\begin{remark}[why the rough global constant contains \(\e^{-\lambda_1}\)]
\label{rem:e-minus-lambda-global-gap}
The estimate in \zcref{prop:one-well-poincare-scale} is a global
\(L^2(\mubar)\) Poincare estimate.  It is therefore sensitive not only to the
curvature \(a_+=\lambda_1\e^{\lambda_1}\) at the bottom of the selected well, but
also to the behavior near the antipodal high-energy point.  At \(-e_1\), the
limiting Gibbs Hamiltonian has
\[
\Laplacian  U_0(-e_1)=-(d-1)\lambda_1\e^{-\lambda_1}.
\]
Thus a uniform global spectral-gap lower bound cannot be expected to see only the
bulk curvature \(a_+\).  The factor \(\lambda_1\e^{-\lambda_1}\) is a safe
worst-case global relaxation scale; it may be pessimistic for bulk observables or
for estimates in which the exponentially small antipodal tail is discarded.
\end{remark}

\subsection*{Local Taylor estimates in the density chart}

We use the mean-zero projection \(\Pi_0^\mubar f=f-\int_\dSphe f\d\mubar\) from
\eqref{eq:linear-objects}.  Fix $r_0\in(0,1/2)$ and consider measures of the form
\[
\mu=\rho\,\omega,
\qquad
\rho=(1+h)\rhobar,
\qquad
h\in\Xred,
\qquad
\Vab{h}_{L^\infty(\dSphe)}\le r_0.
\]
The bound on $\Vab{h}_{L^\infty}$ is only the definition of the local chart; it is not a model assumption. Set
\[
\Energygap(\mu)\coloneqq \Energy(\mu)-\Energy(\mubar).
\]
Also define the \emph{centered first-variation increment}
\begin{equation}\label{eq:Phi-mu}
\Phi_\mu
\coloneqq 
\Pi_0^\mubar\left(
\frac{\delta\Energy}{\delta\mu}(\mu)
-
\frac{\delta\Energy}{\delta\mu}(\mubar)
\right)
\in \Xred.
\end{equation}
Because $\fdv{\Energy}{\mu}(\mubar)$ is constant on $\dSphe$, this is the same as $\Pi_0^\mubar(\fdv{\Energy}{\mu}(\mu))$.

\begin{lemma}[free-energy Taylor expansion]\label{lem:energy-taylor}
For $\mu=(1+h)\mubar$ with $\Vab{h}_{L^\infty}\le r_0$,
\begin{equation}\label{eq:E-taylor}
\Energygap(\mu)=\frac12\aab{\Hbar h,{h}}_{L^2(\mubar)}+R_E(h),
\end{equation}
where
\begin{equation}\label{eq:RE-bound}
|R_E(h)|\le C_E\eps\Vab{h}_{L^\infty(\dSphe)}\Vab{h}_{L^2(\mubar)}^2.
\end{equation}
\end{lemma}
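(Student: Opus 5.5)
We expand $\Energy$ directly in the density chart around $\mubar$, essentially repeating the computation behind \zcref{lem:local-energy-lower-adjoint} but keeping track of the exact remainder. Write $\mu=(1+h)\mubar$ with $h\in\Xred$ and $\Vab{h}_{L^\infty}\le r_0<1/2$. For the entropy part, $\rho\log\rho=(1+h)\rhobar\log\rhobar+(1+h)\rhobar\log(1+h)$, so
\[
\Ent(\mu)-\Ent(\mubar)=\int h\log\rhobar\d\mubar+\int(1+h)\log(1+h)\d\mubar .
\]
For the interaction part, bilinearity gives
\[
-\tfrac12 I_A(\mu)+\tfrac12 I_A(\mubar)=-\int W^*_\eps h\d\mubar-\tfrac12\iint K_A(x,y)h(x)h(y)\d\mubar(x)\d\mubar(y).
\]

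Next we use the Euler--Lagrange equation $\eps(\log\rhobar+1)-W^*_\eps=\Lambda$ from \eqref{eq:EL-static-proof}. Combined with $\int h\d\mubar=0$, it shows that the linear terms $\eps\int h\log\rhobar\d\mubar-\int W^*_\eps h\d\mubar$ cancel. We can also subtract $\eps\int h\d\mubar=0$ from the entropy term. Together this yields the exact identity \eqref{eq:energy-taylor-adjoint}:
\[
\Energygap(\mu)=\eps\int\bigl((1+h)\log(1+h)-h\bigr)\d\mubar-\tfrac12\iint K_A h\otimes h\d\mubar\d\mubar.
\]

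Since $h$ has mean zero, the projection $\varPi_0$ in $\Kbar$ does not change the quadratic form. Hence $\iint K_A h\otimes h\d\mubar\d\mubar=\aab{\Kbar h,h}_{L^2(\mubar)}$ and
\[
\tfrac{\eps}{2}\Vab{h}^2_{L^2(\mubar)}-\tfrac12\aab{\Kbar h,h}=\tfrac12\aab{\Hbar h,h}.
\]
This leaves
\[
R_E(h)=\eps\int\psi(h)\d\mubar,\qquad \psi(z)\coloneqq(1+z)\log(1+z)-z-\tfrac{z^2}{2}.
\]

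The only estimate needed is an elementary Taylor bound on $\psi$. We have $\psi(0)=\psi'(0)=\psi''(0)=0$ and $\psi'''(z)=-(1+z)^{-2}$. For $|z|\le r_0<1/2$ this gives $|\psi'''|\le 4$, hence $|\psi(z)|\le\tfrac{4}{6}|z|^3\le\tfrac23\Vab{h}_{L^\infty}z^2$. Integrating against $\mubar$ gives \eqref{eq:RE-bound} with $C_E=2/3$, or any constant depending only on $r_0$.

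There is no real obstacle; the one point needing care is justifying the cancellation of the linear terms. We must make sure $\int h\log\rhobar\d\mubar$ is finite, which holds since $\rhobar$ is smooth and positive by \zcref{prop:energy-landscape}. We must also make sure $\Energy(\mubar)$ is the reference value, so that $\Energygap$ here agrees with $\Energy-\min\Energy$ because $\mubar$ is a global minimizer.
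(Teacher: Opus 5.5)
Your proposal is correct and follows the paper's proof essentially line for line. You derive the same exact identity from the Euler--Lagrange equation at $\mubar$ and the mean-zero condition, identify its quadratic part with $\frac12$ times the quadratic form of $\Hbar$, and bound the cubic remainder of $(1+z)\log(1+z)-z$ by Taylor's theorem. Your explicit constant $C_E=2/3$, which uses $r_0<1/2$, simply makes concrete the unspecified constant $C$ in the paper.
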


\begin{proof}
Using the Euler--Lagrange equation at $\mubar$ and the identity $\int h\d\mubar=0$, one finds
\[
\Energygap(\mu)
=
\eps\int_\dSphe  \bigl[(1+h)\log(1+h)-h\bigr] \d\mubar
-
\frac12\iint_{\dSphe\times\dSphe}K_A(x,y)h(x)h(y)\d\mubar(x)\d \mubar(y).
\]
Write
\[
(1+z)\log(1+z)-z=\frac12 z^2+r(z),
\qquad |r(z)|\le C|z|^3\quad (|z|\le r_0).
\]
The interaction term is exactly quadratic in $h$. Thus
\[
\Energygap(\mu)=\frac12\Bigl(\eps\int_\dSphe  h^2\d\mubar-\iint K_A(x,y)h(x)h(y)\d\mubar(x)\d \mubar(y)\Bigr)+\eps\int_\dSphe  r(h)\d\mubar,
\]
which is exactly \eqref{eq:E-taylor} with $R_E(h)=\eps\int_\dSphe  r(h)\d\mubar$. The bound \eqref{eq:RE-bound} follows from $|r(h)|\le C|h|^3$ and the prefactor \(\eps\).
\end{proof}

\begin{lemma}[centered first-variation Taylor expansion]\label{lem:firstvar-taylor}
For $\mu=(1+h)\mubar$ with $\Vab{h}_{L^\infty}\le r_0$,
\begin{equation}\label{eq:Phi-taylor}
\Phi_\mu=\Hbar h+R_F(h),
\end{equation}
where
\begin{equation}\label{eq:RF-bound}
\Vab{R_F(h)}_{L^2(\mubar)}\le C_F\eps\Vab{h}_{L^\infty(\dSphe)}\Vab{h}_{L^2(\mubar)}.
\end{equation}
Moreover, for every integer $s>\frac{d-1}{2}+1$ there exists $C_{F,s}$ such that
\begin{equation}\label{eq:RF-Hs}
\Vab{R_F(h)}_{H^s(\dSphe)}\le C_{F,s}\eps\Vab{h}_{H^s(\dSphe)}^2
\end{equation}
for $\Vab{h}_{L^\infty(\dSphe)}\le r_0$.
\end{lemma}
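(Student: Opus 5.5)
The plan is to compute $\Phi_\mu$ explicitly and then estimate the entropy nonlinearity. The first variation of $\Energy$ at $\mu=(1+h)\mubar=(1+h)\rhobar\,\omega$ is
\[
\fdv{\Energy}{\mu}(\mu)=\eps\bigl(\log\rho+1\bigr)-K_A*\mu,
\qquad
\rho=(1+h)\rhobar .
\]
Subtracting the same expression at $\mubar$ gives
\[
\fdv{\Energy}{\mu}(\mu)-\fdv{\Energy}{\mu}(\mubar)
=\eps\log(1+h)-\int_\dSphe K_A(\cdot,y)h(y)\d\mubar(y).
\]
Applying $\Pi_0^\mubar$ and adding and subtracting $\eps h$, we use $\Pi_0^\mubar h=h$ (since $h\in\Xred$) and the definition \eqref{eq:linear-objects} of $\Kbar$. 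This yields exactly \eqref{eq:Phi-taylor} with
\[
R_F(h)=\eps\,\Pi_0^\mubar\bigl(\log(1+h)-h\bigr).
\]
This matches \eqref{eq:Phi-H-remainder-adjoint-proof}. The interaction part is linear in $h$, so it produces no remainder.

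For the $L^2$ bound \eqref{eq:RF-bound}, set $f(z)=\log(1+z)-z$. For $|z|\le r_0<1/2$ we have $|f(z)|\le C z^2\le C r_0'|z|\cdot|z|$, hence pointwise $|f(h)|\le C\Vab{h}_{L^\infty}|h|$. Since $\Pi_0^\mubar$ is an orthogonal projection in $L^2(\mubar)$, it is a contraction, so
\[
\Vab{R_F(h)}_{L^2(\mubar)}\le \eps\Vab{f(h)}_{L^2(\mubar)}\le C_F\eps\Vab{h}_{L^\infty}\Vab{h}_{L^2(\mubar)} .
\]

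For the $H^s$ bound \eqref{eq:RF-Hs}, write $f(z)=z^2 g(z)$ with $g$ smooth on $[-r_0,r_0]$. Because $s>\frac{d-1}{2}$, $H^s(\dSphe)$ is a Banach algebra embedded in $L^\infty$. The Moser-type composition estimate gives $\Vab{g(h)}_{H^s}\le C(1+\Vab{h}_{H^s})^{s}$, or, more carefully, $\Vab{g(h)-g(0)}_{H^s}\le C(r_0)\Vab{h}_{H^s}$ using only the $L^\infty$ bound on $h$. Hence
\[
\Vab{h^2 g(h)}_{H^s}\le C\Vab{h}_{H^s}^2\bigl(|g(0)|+\Vab{g(h)-g(0)}_{H^s}\bigr).
\]
Subtracting the mean $\int f(h)\d\mubar$ costs at most $C\Vab{f(h)}_{L^1(\mubar)}\le C\Vab{h}_{L^2}^2\le C\Vab{h}_{H^s}^2$ in $H^s$, since constants are smooth and $\rhobar$ is bounded.

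The main obstacle is obtaining a purely quadratic bound $C_{F,s}\eps\Vab{h}_{H^s}^2$ without higher powers of $\Vab{h}_{H^s}$ from the composition estimate. The fix is the tame Moser estimate
\[
\Vab{F(h)}_{H^s}\le C(\Vab{h}_{L^\infty})\Vab{h}_{H^s}
\qquad\text{for smooth }F\text{ with }F(0)=0,
\]
applied to $F(z)=f(z)/z$, which is smooth with $F(0)=0$. We then multiply by $h$ using the algebra property. The constant depends only on $r_0$ and $s$, since $\Vab{h}_{L^\infty}\le r_0$.
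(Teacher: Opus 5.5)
Your proposal is correct and follows the paper's proof. It uses the same explicit computation $\fdv{\Energy}{\mu}(\mu)-\fdv{\Energy}{\mu}(\mubar)=\eps\log(1+h)-\int K_A(\cdot,y)h(y)\d\mubar(y)$, giving $R_F(h)=\eps\,\Pi_0^\mubar(\log(1+h)-h)$. It then uses the pointwise bound $|\log(1+z)-z|\le Cz^2$ together with contractivity of $\Pi_0^\mubar$ for the $L^2(\mubar)$ estimate, and a composition estimate for the $H^s$ bound. Where the paper only invokes a ``standard composition estimate,'' you make it explicit through the tame Moser bound for $F(z)=(\log(1+z)-z)/z$ and the algebra property of $H^s$, which validly fills in that step. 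One refinement for the mean term: bound $\int h^2\d\mubar\le\Vab{h}_{L^\infty}^2\le C_s\Vab{h}_{H^s}^2$ by Sobolev embedding rather than through $\Vab{\rhobar}_{L^\infty}$, since the latter grows as $\eps\downarrow0$ and would make $C_{F,s}$ depend on $\eps$.
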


\begin{proof}
From the explicit formula for $\fdv{\Energy}{\mu}$,
\[
\fdv{\Energy}{\mu}(\mu)-\fdv{\Energy}{\mu}(\mubar)
=
\eps\log(1+h)-\int_\dSphe  K_A(\cdot,y)h(y)\d\mubar(y).
\]
After projecting onto mean-zero functions,
\[
\Phi_\mu
=
\eps h-\Pi_0^\mubar\Bigl[\int_\dSphe  K_A(\cdot,y)h(y)\d\mubar(y)\Bigr]
+
\eps\Pi_0^\mubar\bigl[\log(1+h)-h\bigr].
\]
The first two terms are exactly $\Hbar h$, hence
\[
R_F(h)=\eps\Pi_0^\mubar\bigl[\log(1+h)-h\bigr].
\]
Since $|\log(1+z)-z|\le C|z|^2$ for $|z|\le r_0$, \eqref{eq:RF-bound} follows from the prefactor \(\eps\). The Sobolev bound \eqref{eq:RF-Hs} is the standard composition estimate for $C^\infty$ functions on the compact sphere $\dSphe$.
\end{proof}

\begin{corollary}[local quadratic equivalence]\label{cor:quadratic-equivalence}
There exists \(r_1\in(0,r_0]\) such that, whenever \(\Vab{h}_{L^\infty}\le r_1\),
\begin{equation}\label{eq:E-equivalence}
c_1\Vab{h}_{L^2(\mubar)}^2\le \Energygap(\mu)\le c_2\Vab{h}_{L^2(\mubar)}^2,
\end{equation}
and
\begin{equation}\label{eq:Phi-lower}
\Vab{\Phi_\mu}_{L^2(\mubar)}\ge c_3\Vab{h}_{L^2(\mubar)}.
\end{equation}
After decreasing the small-noise threshold and the chart radius if necessary,
one may take
\begin{equation}\label{eq:c3-explicit-eps-scale}
c_3=\frac{\gamma_A}{4}\eps,
\end{equation}
and there is a constant \(C_{\mathrm{up}}=C_{\mathrm{up}}(A,d,r_1)\), independent
of \(\eps\) in the small-noise range, such that
\begin{equation}\label{eq:c2-upper-eps-scale}
c_2\le C_{\mathrm{up}}\eps.
\end{equation}
\end{corollary}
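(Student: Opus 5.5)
The plan is to combine the two Taylor expansions, \zcref{lem:energy-taylor} and \zcref{lem:firstvar-taylor}, with the two-sided Hessian bound \eqref{eq:PD_Hessian} from \zcref{prop:energy-landscape}, and then take the chart radius \(r_1\) small enough that the cubic remainders are absorbed by the quadratic terms. Write \(\gamma_A\coloneqq\frac12(1-\nicefrac{\lambda_2}{\lambda_1})\). I take this to be the constant meant in \eqref{eq:c3-explicit-eps-scale}, since the lower bound in \eqref{eq:PD_Hessian} reads \(\aab{\Hbar h,h}_{L^2(\mubar)}\ge\gamma_A\eps\Vab{h}_{L^2(\mubar)}^2\) on \(\Xred\). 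The upper bound in \eqref{eq:PD_Hessian} reads \(\aab{\Hbar h,h}_{L^2(\mubar)}\le\eps\Vab{h}_{L^2(\mubar)}^2\). Both hold after decreasing the small-noise threshold, which the statement allows. The point to watch is that the remainder constants \(C_E,C_F\) do not depend on \(\eps\). This holds because each remainder carries an explicit prefactor \(\eps\), and the bounds \(|(1+z)\log(1+z)-z-\frac12z^2|\le C|z|^3\) and \(|\log(1+z)-z|\le C|z|^2\) on \(|z|\le r_0\) are universal. It is also why \(c_2\) and \(c_3\) scale linearly in \(\eps\).

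\emph{Energy equivalence.} By \eqref{eq:E-taylor}--\eqref{eq:RE-bound}, for \(\Vab{h}_{L^\infty}\le r_1\) we have \(|R_E(h)|\le C_E\eps r_1\Vab{h}_{L^2(\mubar)}^2\). Combining this with the Hessian bounds gives
\[
\pab{\tfrac{\gamma_A}{2}-C_Er_1}\eps\Vab{h}_{L^2(\mubar)}^2\le\Energygap(\mu)\le\pab{\tfrac12+C_Er_1}\eps\Vab{h}_{L^2(\mubar)}^2 .
\]
Choosing \(r_1\le\gamma_A/(4C_E)\) yields \eqref{eq:E-equivalence} with \(c_1=\gamma_A\eps/4\) and \(c_2\le\eps\). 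In particular \eqref{eq:c2-upper-eps-scale} holds with \(C_{\mathrm{up}}=1\), or any larger constant.

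\emph{Gradient lower bound.} The kernel \(K_A\) is symmetric, so \(\Kbar\) is self-adjoint on \(\Xred\), and hence so is \(\Hbar\). Cauchy--Schwarz then gives \(\Vab{\Hbar h}_{L^2(\mubar)}\ge\aab{\Hbar h,h}/\Vab{h}\ge\gamma_A\eps\Vab{h}_{L^2(\mubar)}\). By \eqref{eq:Phi-taylor}--\eqref{eq:RF-bound} and the triangle inequality,
\[
\Vab{\Phi_\mu}_{L^2(\mubar)}\ge(\gamma_A-C_Fr_1)\eps\Vab{h}_{L^2(\mubar)} .
\]
Imposing in addition \(r_1\le 3\gamma_A/(4C_F)\) gives \eqref{eq:Phi-lower} with \(c_3=\gamma_A\eps/4\), which is \eqref{eq:c3-explicit-eps-scale}. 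The radius \(r_1=\min\{r_0,\gamma_A/(4C_E),3\gamma_A/(4C_F)\}\) depends only on \(A\) (and \(d\), through the constants) and not on \(\eps\).

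The only real obstacle is the \(\eps\)-uniformity just discussed. Checking it means confirming that the proofs of \zcref{lem:energy-taylor,lem:firstvar-taylor} produce the factor \(\eps\) explicitly and use no \(\eps\)-dependent norm equivalence. They do: they use only \(L^\infty\) and \(L^2(\mubar)\) norms of \(h\) and pointwise scalar Taylor bounds.
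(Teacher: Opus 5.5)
Your proof is correct and follows essentially the paper's route. Like the paper, you use the Taylor lemmas \zcref{lem:energy-taylor,lem:firstvar-taylor}, whose remainders carry an explicit factor \(\eps\) with \(\eps\)-independent constants, together with the coercive and positive Hessian bounds, and you choose an \(\eps\)-independent \(r_1\) so that the cubic and quadratic remainders are absorbed. One bookkeeping point: in the paper \(\gamma_A=1-\lambda_2/\lambda_1\), the coefficient in \eqref{eq:H-coercive}, so that \eqref{eq:PD_Hessian} reads \(\langle\Hbar h,h\rangle_{L^2(\mubar)}\ge\frac{\gamma_A}{2}\eps\|h\|_{L^2(\mubar)}^2\); in that normalization your choice \(r_1\le 3\gamma_A^{\text{(yours)}}/(4C_F)\) only yields \(c_3=\gamma_A\eps/8\), and imposing instead \(C_Fr_1\le\gamma_A/4\) recovers \(c_3=\gamma_A\eps/4\) exactly as stated.
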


\begin{proof}
By \eqref{eq:H-coercive}, after decreasing the small-noise threshold if
necessary,
\[
\aab{\Hbar h,{h}}_{L^2(\mubar)}
\ge \frac{\gamma_A}{2}\eps\Vab{h}_{L^2(\mubar)}^2,
\qquad h\in\Xred.
\]
Since \(\Hbar\) is self-adjoint and nonnegative on \(\Xred\), the same spectral
lower bound gives
\[
\Vab{\Hbar h}_{L^2(\mubar)}
\ge \frac{\gamma_A}{2}\eps\Vab{h}_{L^2(\mubar)}.
\]
Apply \zcref{lem:energy-taylor} and \zcref{lem:firstvar-taylor}, and
choose \(r_1\) so that their remainder terms are absorbed by
\(\gamma_A\eps/8\).  This gives the lower energy bound and
\eqref{eq:c3-explicit-eps-scale}.  The upper bound in \eqref{eq:E-equivalence}
follows from the Taylor expansion, the positivity of \(\Kbar\), and
\(\Hbar=\eps I-\Kbar\), together with the remainder estimate
\eqref{eq:RE-bound}; after possibly decreasing \(r_1\), this yields
\eqref{eq:c2-upper-eps-scale}.
\end{proof}

\subsection*{Local PL, strict dissipativity, and forced localization}

Define the local dissipation by
\begin{equation}\label{eq:dissipation}
\Dissipation(\mu)\coloneqq \int_\dSphe  |\nabla \Phi_\mu|^2\d\mu.
\end{equation}

\begin{lemma}[dissipation controls the $L^2$ distance]\label{lem:D-lower-h}
There exists $r_2\in(0,r_1]$ and $c_D>0$ such that, whenever $\mu=(1+h)\mubar$ and $\Vab{h}_{L^\infty}\le r_2$,
\begin{equation}\label{eq:D-lower-h}
\Dissipation(\mu)\ge c_D\Vab{h}_{L^2(\mubar)}^2.
\end{equation}
\end{lemma}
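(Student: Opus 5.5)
We need $\Dissipation(\mu)=\int|\nabla\Phi_\mu|^2\d\mu\ge c_D\Vab{h}^2_{L^2(\mubar)}$. Here $\Phi_\mu\in\Xred$ and $\mu=(1+h)\mubar$.

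The first step is to compare the measure $\mu$ with $\mubar$. Since $1+h\ge 1-r_2\ge\tfrac12$ in the chart, we have
\[
\Dissipation(\mu)\ge\tfrac12\int|\nabla\Phi_\mu|^2\d\mubar=-\tfrac12\aab{\Pbar\Phi_\mu,\Phi_\mu}_{L^2(\mubar)}.
\]
The right-hand side is the weighted Dirichlet form, by \eqref{eq:wlap-form-adjoint}.

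The second step applies the weighted Poincaré inequality, \zcref{prop:poincare}, to $\Phi_\mu$. This is legitimate because $\Phi_\mu$ has $\mubar$-mean zero by construction (\eqref{eq:Phi-mu}), and $\Phi_\mu$ is smooth, being built from $\log\rho$ and a convolution with a smooth kernel. It gives
\[
\Dissipation(\mu)\ge\tfrac{\kappa_P(\eps)}{2}\Vab{\Phi_\mu}^2_{L^2(\mubar)}.
\]

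The third step invokes \zcref{cor:quadratic-equivalence}, specifically \eqref{eq:Phi-lower}: for $\Vab{h}_{L^\infty}\le r_1$ we have $\Vab{\Phi_\mu}_{L^2(\mubar)}\ge c_3\Vab{h}_{L^2(\mubar)}$. Taking $r_2\le\min\{r_1,1/2\}$ then yields
\[
c_D=\tfrac12\kappa_P(\eps)c_3^2.
\]
For explicit scaling, \zcref{prop:one-well-poincare-scale} and \eqref{eq:c3-explicit-eps-scale} give $c_D\gtrsim \eps^{-1}\cdot\eps^2\gamma_A^2 c_{\mathrm{1w}}=c\,\eps\,\gamma_A^2\lambda_1\e^{-\lambda_1}$. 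This is consistent with \zcref{rmk:rate_a}, taking $\gamma_A\sim 1-\lambda_2/\lambda_1$.

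The main (mild) obstacle is justifying that $\Phi_\mu\in D(\Pbar)\cap\Xred$ and that the Poincaré inequality applies to it. The chart only gives $h\in L^\infty$ with $\Vab{h}_{L^\infty}$ small, not smoothness of $h$. One could use the chart definition in \eqref{eq:local-chart-basins-def}, which assumes $h\in C^\infty$. Alternatively, it suffices to know $\Phi_\mu\in H^1$, since the Poincaré inequality only needs the form domain $H^1\cap\Xred$ by density. If $\nabla\Phi_\mu\notin L^2$, the right-hand side of the claim is finite while $\Dissipation=+\infty$, so the inequality holds trivially. The key routine check is that \eqref{eq:Phi-lower} indeed uses the Hessian coercivity \eqref{eq:H-coercive} with the remainder $R_F$ absorbed, which is already done in the corollary.
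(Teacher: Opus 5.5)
Your proposal is correct and takes the same route as the paper's proof. Both compare measures via $(1-r_2)\,\d\mubar\le\d\mu$, apply the weighted Poincaré inequality of \zcref{prop:poincare} to $\Phi_\mu\in\Xred$, and conclude with \eqref{eq:Phi-lower}, giving $c_D=(1-r_2)\kappa_P(\eps)c_3^2$. Your remark that the Poincaré step only needs the form domain $H^1\cap\Xred$ makes explicit a regularity point the paper leaves implicit.
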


\begin{proof}
If $\Vab{h}_{L^\infty}\le r_2<1$, then
\[
(1-r_2)\d\mubar\le \d \mu\le (1+r_2)\d\mubar.
\]
Hence
\[
\Dissipation(\mu)=\int_\dSphe  |\nabla \Phi_\mu|^2\d\mu
\ge
(1-r_2)\int_\dSphe  |\nabla \Phi_\mu|^2\d\mubar.
\]
By \zcref{prop:poincare},
\[
\int_\dSphe  |\nabla \Phi_\mu|^2\d\mubar
=-\aab{\Pbar \Phi_\mu,\Phi_\mu}_{L^2(\mubar)}
\ge \kappa_P(\eps)\Vab{\Phi_\mu}_{L^2(\mubar)}^2.
\]
Now use \eqref{eq:Phi-lower} from \zcref{cor:quadratic-equivalence}.
\end{proof}

\begin{theorem}[local PL / local entropy-dissipation]\label{thm:local-PL}
There exist $r_2\in(0,r_1]$ and $c_{\textup{PL}}>0$ such that, whenever $\mu=(1+h)\mubar$ and $\Vab{h}_{L^\infty}\le r_2$,
\begin{equation}\label{eq:local-PL}
\Dissipation(\mu)\ge c_{\textup{PL}}\,\Energygap(\mu).
\end{equation}
\end{theorem}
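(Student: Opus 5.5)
The plan is to chain the two local estimates already established, so that the proof reduces to combining \zcref{lem:D-lower-h} with the upper half of \zcref{cor:quadratic-equivalence}. Take $r_2\in(0,r_1]$ to be the radius furnished by \zcref{lem:D-lower-h}. For $\mu=(1+h)\mubar$ with $\Vab{h}_{L^\infty}\le r_2$, that lemma gives
\[
\Dissipation(\mu)\ge c_D\Vab{h}_{L^2(\mubar)}^2 .
\]
Because $r_2\le r_1$, the upper bound in \eqref{eq:E-equivalence} also applies:
\[
\Energygap(\mu)\le c_2\Vab{h}_{L^2(\mubar)}^2 .
\]
Eliminating $\Vab{h}_{L^2(\mubar)}^2$ between these two inequalities yields \eqref{eq:local-PL} with $c_{\textup{PL}}\coloneqq c_D/c_2$. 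If $h=0$, both sides vanish and there is nothing to prove.

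It is worth tracking how $c_{\textup{PL}}$ depends on $\eps$, because this constant later feeds the turnpike rate (see \zcref{rmk:rate_a}). From the proof of \zcref{lem:D-lower-h},
\[
c_D=(1-r_2)\,\kappa_P(\eps)\,c_3^2 .
\]
By \zcref{prop:one-well-poincare-scale}, $\kappa_P(\eps)\ge c_{\mathrm{1w}}/\eps$, and by \eqref{eq:c3-explicit-eps-scale}, $c_3=\gamma_A\eps/4$. Hence $c_D\gtrsim c_{\mathrm{1w}}\gamma_A^2\eps$. Combining this with $c_2\le C_{\mathrm{up}}\eps$ from \eqref{eq:c2-upper-eps-scale}, the powers of $\eps$ cancel, and
\[
c_{\textup{PL}}\gtrsim \frac{c_{\mathrm{1w}}\gamma_A^2}{C_{\mathrm{up}}},
\]
which is uniform in small $\eps$. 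With $\gamma_A\sim 1-\lambda_2/\lambda_1$, this gives the scale $(1-\lambda_2/\lambda_1)^2\lambda_1\e^{-\lambda_1}$ quoted in \zcref{rmk:rate_a}.

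The argument has no genuine obstacle, since all the analytic content sits in the earlier lemmas. The one point to check carefully is that the dissipation in \eqref{eq:dissipation} is the same object as the one in the proof sketch of \zcref{lem:same-component-entry}, where it is written with $\nabla\fdv{\Energy}{\mu}$. These agree: $\fdv{\Energy}{\mu}(\mubar)$ is constant and the projection $\Pi_0^\mubar$ only removes constants, so $\nabla\Phi_\mu=\nabla\fdv{\Energy}{\mu}[\mu]$. Therefore the local PL inequality holds for the actual Wasserstein dissipation that appears in the energy identity along the flow.
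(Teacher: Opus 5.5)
Your proposal is correct and matches the paper's argument: the paper proves \zcref{thm:local-PL} by combining \zcref{lem:D-lower-h} with the upper bound in \eqref{eq:E-equivalence}, which gives $c_{\textup{PL}}=c_D/c_2$. Your $\eps$-tracking, $c_{\textup{PL}}\ge (1-r_2)\kappa_P(\eps)c_3^2/c_2$ with the powers of $\eps$ cancelling, is also the same as the paper's computation in \zcref{rem:explicit-localization-exponent-onewell}.
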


\begin{proof}
Combine \zcref{lem:D-lower-h} with the upper bound in \eqref{eq:E-equivalence}.
\end{proof}

\begin{proposition}[controlled strict dissipativity]\label{prop:strict-diss}
Let $(\mu_t,W_t)$ solve \eqref{eq:ODE-Net}, and assume $\mu_t$ remains in the neighborhood from \zcref{thm:local-PL}. Then
\begin{equation}\label{eq:energy-identity}
\frac{\d }{\d t}\Energygap(\mu_t)
=
-\Dissipation(\mu_t)
+
\int_\dSphe  \nabla\frac{\delta\Energy}{\delta\mu}(\mu_t)\cdot u_{W_t}\d\mu_t,
\end{equation}
and hence
\begin{equation}\label{eq:strict-diss}
\frac{\d }{\d t}\Energygap(\mu_t)
\le
-\frac12\Dissipation(\mu_t)+\frac12\Vab{\sigma}_{L^\infty(\dSphe)}^2\Vab{W_t}_{\frob}^2
\le
-a\,\Energygap(\mu_t)+C_\sigma\Vab{W_t}_{\frob}^2,
\end{equation}
where $a\coloneqq c_{\textup{PL}}/2$ and $C_\sigma\coloneqq \frac12\Vab{\sigma}_{L^\infty(\dSphe)}^2$.
\end{proposition}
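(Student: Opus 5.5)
The plan is to compute the time derivative of $\Energygap(\mu_t)$ along the controlled equation, using the gradient-flow form \eqref{eq:gradient_flow_form}, and then absorb the control term by Young's inequality. Along a smooth solution, the chain rule for the linear functional derivative gives
\[
\frac{\d}{\d t}\Energy(\mu_t)=\int_\dSphe \fdv{\Energy}{\mu}(\mu_t)\,\partial_t\mu_t .
\]
Substituting $\partial_t\mu_t=-\Div\bigl(\mu_t\Proj(-\nabla\fdv{\Energy}{\mu}+u_{W_t})\bigr)$ and integrating by parts on the closed manifold $\dSphe$ (there are no boundary terms) gives
\[
\int \nabla\fdv{\Energy}{\mu}(\mu_t)\cdot\Proj\bigl(-\nabla\fdv{\Energy}{\mu}(\mu_t)+u_{W_t}\bigr)\d\mu_t .
\]
Since $\nabla$ is the spherical gradient, $\nabla\fdv{\Energy}{\mu}$ is tangent and $\Proj$ acts as the identity on it. Moreover $\fdv{\Energy}{\mu}(\mu_t)$ and $\Phi_{\mu_t}$ differ only by a constant, because $\fdv{\Energy}{\mu}(\mubar)$ is constant and $\Pi_0^\mubar$ subtracts a constant. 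Hence $\nabla\fdv{\Energy}{\mu}(\mu_t)=\nabla\Phi_{\mu_t}$, the first term equals $-\Dissipation(\mu_t)$, and $\Energy(\mubar)$ is time-independent. This yields \eqref{eq:energy-identity}.

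For rigor I would first justify the computation for smooth positive densities. Parabolic regularization from $\eps>0$, together with membership in the chart $\Vab{h}_{L^\infty}\le r_2$, gives $\rho_t$ bounded away from $0$ and smooth for $t>0$, so the entropy part $\eps\int\rho\log\rho$ can be differentiated. If needed, I would pass through the integrated form on $[s,t]$ and let $s\downarrow 0$. I expect this regularity step to be the only delicate point; the rest is algebra.

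For \eqref{eq:strict-diss}, bound the control term pointwise by Cauchy--Schwarz and Young, $|\nabla\Phi\cdot W\sigma|\le\frac12|\nabla\Phi|^2+\frac12|W\sigma|^2$. The inequality $|W_t\sigma(x)|\le\Vab{W_t}_\frob|\sigma(x)|\le\Vab{W_t}_\frob\Vab{\sigma}_{L^\infty}$ holds since $\mu_t$ is a probability measure. Integrating gives
\[
\frac{\d}{\d t}\Energygap\le-\frac12\Dissipation+\frac12\Vab{\sigma}_{L^\infty}^2\Vab{W_t}_\frob^2 .
\]
Finally, apply the local PL inequality of \zcref{thm:local-PL}, which is valid because $\mu_t$ stays in its neighborhood. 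This gives $-\frac12\Dissipation\le-\frac{c_{\textup{PL}}}{2}\Energygap$, and with $a=c_{\textup{PL}}/2$ and $C_\sigma$ as defined, the second inequality follows.
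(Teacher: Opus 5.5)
Your proposal is correct and follows the paper's proof essentially step for step. You differentiate $\Energy(\mu_t)$ along the gradient-flow form and integrate by parts on the closed sphere to get \eqref{eq:energy-identity}, then bound the mixed term by Young's inequality together with $|u_{W}(x)|\le\Vab{W}_{\frob}|\sigma(x)|$, and insert the local PL inequality of \zcref{thm:local-PL}. Two details you make explicit are left implicit in the paper: that $\nabla\fdv{\Energy}{\mu}(\mu_t)=\nabla\Phi_{\mu_t}$, so the local and global definitions of $\Dissipation$ agree, and that the chart condition keeps $\rho_t$ bounded below, which justifies differentiating the entropy.
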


\begin{proof}
Differentiate $\Energy(\mu_t)$ along \eqref{eq:gradient_flow_form}. Since $\partial_t\mu_t+\Div(\mu_t v_t)=0$ with
\[
v_t=-\nabla\frac{\delta\Energy}{\delta\mu}(\mu_t)+u_{W_t},
\]
the chain rule and integration by parts give
\[
\frac{\d }{\d t}\Energy(\mu_t)
=
\int_\dSphe  \nabla\frac{\delta\Energy}{\delta\mu}(\mu_t)\cdot v_t\d\mu_t,
\]
which is exactly \eqref{eq:energy-identity}. Apply Young's inequality to the mixed term:
\[
\int_\dSphe  \nabla\fdv{\Energy}{\mu}(\mu_t)\cdot u_{W_t}\d\mu_t
\le
\frac12\Dissipation(\mu_t)+\frac12\int_\dSphe  |u_{W_t}|^2\d\mu_t.
\]
Since $|u_W(x)|=|P_x^\perp W\sigma(x)|\le\Vab{W}_{\frob}\,\Vab{\sigma(x)}$, one has
\[
\int_\dSphe  |u_{W_t}|^2\d\mu_t\le \Vab{\sigma}_{L^\infty(\dSphe)}^2\Vab{W_t}_{\frob}^2.
\]
Insert \eqref{eq:local-PL}.
\end{proof}

\begin{corollary}[forced state localization, not yet turnpike]\label{cor:forced-localization}
Under the assumptions of \zcref{prop:strict-diss}, every $0\le s\le t\le T$ satisfies
\begin{equation}\label{eq:forced-localization}
\Energygap(\mu_t)\le \e^{-a(t-s)}\Energygap(\mu_s)+C_\sigma\int_s^t \e^{-a(t-\tau)}\Vab{W_\tau}_{\frob}^2\d\tau.
\end{equation}
\end{corollary}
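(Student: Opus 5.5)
The estimate \eqref{eq:forced-localization} is obtained by integrating the differential inequality \eqref{eq:strict-diss} of \zcref{prop:strict-diss} with Grönwall's lemma in its integrating-factor form. Set \(f(t)\coloneqq\Energygap(\mu_t)\). Under the standing assumption that \(\mu_t\) stays in the chart neighborhood of \zcref{thm:local-PL}, \zcref{prop:strict-diss} gives, for a.e.\ \(t\),
\[
f'(t)\le -a f(t)+C_\sigma\Vab{W_t}_{\frob}^2 .
\]

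The first step is to justify that \(f\) is absolutely continuous on \([s,t]\), so that this inequality holds in the a.e.\ sense and may be integrated. Inside the chart, \(\mu_\tau=(1+h_\tau)\mubar\) with \(h\) bounded and smooth for positive times, by parabolic regularity. The energy identity \eqref{eq:energy-identity} then expresses \(f'\) as an \(L^1_{\mathrm{loc}}\) function, since \(\Dissipation(\mu_\tau)\) is integrable and \(\Vab{W}_{\frob}^2\in L^1(0,T)\).

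The second step is the integrating factor. Consider \(g(\tau)\coloneqq\e^{a\tau}f(\tau)\). It is also absolutely continuous, and it satisfies
\[
g'(\tau)=\e^{a\tau}\pab{f'(\tau)+af(\tau)}\le C_\sigma\e^{a\tau}\Vab{W_\tau}_{\frob}^2 .
\]
Integrating from \(s\) to \(t\) gives
\[
\e^{at}f(t)\le \e^{as}f(s)+C_\sigma\int_s^t\e^{a\tau}\Vab{W_\tau}_{\frob}^2\d\tau .
\]
Multiplying both sides by \(\e^{-at}\) yields exactly \eqref{eq:forced-localization}.

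The only step needing care is the regularity in the first step. The right-hand forcing is only \(L^1\) in time, because \(W\in L^2\), so \(f\) need not be \(C^1\). To handle this, I would either argue with the integrated form of \eqref{eq:energy-identity} directly, or approximate \(W\) by smooth controls and pass to the limit using continuity of \(\Energygap\) along the solution map. Everything else is routine.
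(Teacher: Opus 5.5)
Your proposal is correct and follows the same route as the paper, whose proof is simply to apply Gr\"onwall's lemma to \eqref{eq:strict-diss}; your integrating-factor computation with $\e^{a\tau}\Energygap(\mu_\tau)$ is exactly that argument. The absolute-continuity check you add, which uses $\Dissipation(\mu_\tau)$ and $\Vab{W_\tau}_{\frob}^2$ in $L^1$, is a detail the paper leaves implicit.
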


\begin{proof}
Apply Gronwall to \eqref{eq:strict-diss}.
\end{proof}

\begin{remark}[explicit safe scale of the localization exponent]
\label{rem:explicit-localization-exponent-onewell}
Tracing the proof of \zcref{thm:local-PL} gives
\begin{equation}\label{eq:cPL-traced-onewell}
c_{\textup{PL}}\ge \frac{(1-r_2)\kappa_P(\eps)c_3^2}{c_2},
\qquad
 a=\frac12c_{\textup{PL}}.
\end{equation}
The parameter \(r_2\) is only the radius of the relative-density chart:
\begin{equation}\label{eq:r2-chart-meaning}
\mu=(1+h)\mubar,
\qquad
\int_\dSphe h\d\mubar=0,
\qquad
\Vab{h}_{L^\infty(\mubar)}\le r_2.
\end{equation}
It enters the estimate through the measure comparison
\begin{equation}\label{eq:r2-measure-comparison}
(1-r_2)\d\mubar\le \d \mu\le (1+r_2)\d\mubar.
\end{equation}
Thus \(1-r_2\) is a chart-safety factor rather than a spectral or dynamical
parameter.  By shrinking the local neighborhood one may make \(r_2\) smaller,
at the cost of requiring a more localized initial state.

With the choice \(c_3=\gamma_A\eps/4\) from
\eqref{eq:c3-explicit-eps-scale}, \zcref{prop:one-well-poincare-scale}
gives
\begin{equation}\label{eq:a-onewell-before-c2}
a\ge
\frac{(1-r_2)c_{\mathrm{1w}}(A,d)\gamma_A^2\eps}{32c_2}.
\end{equation}
Using the upper quadratic scale \eqref{eq:c2-upper-eps-scale}, we get
\begin{equation}\label{eq:a-onewell-explicit-scale}
a\ge
 a_{\textup{safe}}
\coloneqq 
\frac{(1-r_2)c_{\mathrm{1w}}(A,d)\gamma_A^2}{32C_{\mathrm{up}}}
\ge
\frac{(1-r_2)c_d}{32C_{\mathrm{up}}}
\left(1-\frac{\lambda_2}{\lambda_1}\right)^2
\lambda_1\e^{-\lambda_1}.
\end{equation}
This is a lower bound on the uniform global state-localization exponent in the
relative-density chart.  It should not be read as a sharp equivalence for the
bulk relaxation rate near the bottom of the selected well.
\end{remark}

\begin{remark}[uniform localization time from the energy gap]
\label{rem:uniform-localization-time}
Assume first that the control is absent, \(W_t\equiv0\), and that the trajectory
remains in the local chart of \zcref{thm:local-PL}.  Set
\begin{equation}\label{eq:Delta-init-local}
\Delta\coloneqq \Energygap(\mu_0)
=\Energy(\mu_0)-\Energy(\mubar).
\end{equation}
Then \zcref{cor:forced-localization} gives
\begin{equation}\label{eq:unforced-localization-gap}
\Energygap(\mu_t)\le \e^{-at}\Delta.
\end{equation}
Consequently, for any target level \(0<\eta<\Delta\), the sufficient uniform
localization time
\begin{equation}\label{eq:Tuniform-def}
T_{\textup{uniform}}(\eta)
\coloneqq 
\inf\{t\ge0:\ \Energygap(\mu_s)\le\eta\ \text{for all }s\ge t\}
\end{equation}
satisfies
\begin{equation}\label{eq:Tuniform-a}
T_{\textup{uniform}}(\eta)
\le
\frac1a\log\frac{\Delta}{\eta}
\le
\frac1{a_{\textup{safe}}}\log\frac{\Delta}{\eta}.
\end{equation}
Using \eqref{eq:a-onewell-explicit-scale}, this yields the rough scale
\begin{equation}\label{eq:Tuniform-rough-scale}
T_{\textup{uniform}}(\eta)
\lesssim
\frac{\e^{\lambda_1}}
{\left(1-\frac{\lambda_2}{\lambda_1}\right)^2\lambda_1}
\log\frac{\Delta}{\eta},
\end{equation}
where the suppressed constant depends on the chart constants and on the
one-well Poincare comparison constants.  This is a worst-case global
relative-density estimate.  It may overestimate the effective relaxation time
for perturbations supported in the bulk of the selected well.

\end{remark}

At this stage the local one-well theory above is a toolkit around one fixed minimizer branch.  The global problem is different: the free energy has an antipodal symmetry, and a branch-specific Wasserstein statement is not stable under this symmetry.  The robust target is therefore the scalar free-energy gap.

Recall from \zcref{prop:energy-landscape} that, for the fixed value of \(\eps\),
\begin{equation}\label{eq:minimizer-set}
\Minimizers
=
\argmin_{\mu\in\PdSphe}\Energy(\mu)
=
\{\mubar^+,\mubar^-\}.
\end{equation}
We use the global energy gap
\begin{equation}\label{eq:global-energy-gap-def}
\Energygap(\mu)
:=
\Energy(\mu)-\min_{\nu\in\PdSphe}\Energy(\nu).
\end{equation}
Since the previously fixed branch \(\mubar\) belongs to \(\Minimizers\), this definition of \(\Energygap\) agrees with the local branch definition used above.  Globally, \(\Energygap\) measures distance to the minimizer set in energy and not distance to a chosen element of that set.

\begin{remark}[main theorem stated after the hypotheses]\label{rem:main-theorem-stated-later}
The proof of \zcref{thm:exp_turnpike} is given after the weak two-component
decomposition, the parabolic chart-entry lemma, the slope-gap condition, and the
same-branch bootstrap constants have been introduced.
\end{remark}

\begin{remark}[why the fixed-branch Wasserstein statement was too strong]\label{rem:fixed-branch-too-strong}
Let $S:\dSphe\to\dSphe$ be the antipodal map $S(x)=-x$.  Since
\[
(-x)^\top A(-y)=x^\top Ay,
\]
one has
\[
\Energy(S_\#\mu)=\Energy(\mu).
\]
Thus, if $\mubar^+$ is the branch concentrating near $e_1$, then $\mubar^-:=S_\#\mubar^+$ is also a minimizer and
\[
\Energygap(\mubar^-)=0,
\qquad
\Dissipation (\mubar^-)=0.
\]
For small $\eps$, $W_2(\mubar^-,\mubar^+)$ is order one.  Hence any global localization statement aimed at the single branch $\mubar^+$ is false unless an additional branch-selection mechanism is imposed.
\end{remark}

\subsection*{Static two-well landscape}

\zcref{thm:package} now supplies the finite-\(\eps\) two-well minimizer
statement.  The zero-temperature equality case gives only separation into the
two caps; the branch count itself is obtained in the proof of
\zcref{thm:package} from the Euler--Lagrange difference identity and the
chord estimate \eqref{eq:chord-K-op-main}.  The next proposition records the part of the
static two-well package that is now proved, rather than assumed.  The local
basins below are density-chart basins around the two smooth minimizers.

\begin{proposition}[static two-well local package]\label{prop:two-well-static}
For every sufficiently small fixed \(\eps>0\), the following hold.
\begin{enumerate}[label=(\roman*),leftmargin=7mm]
\item The global minimizer set consists of exactly two antipodal branches:
\begin{equation}\label{eq:two-minimizers}
\Minimizers=
\{\mubar^+,\mubar^-\},
\qquad
\mubar^-=S_\#\mubar^+,
\qquad
\mubar^+=\mubar.
\end{equation}

\item There exist disjoint local density-chart basins \(\mathcal U_+\) and
\(\mathcal U_-\) of \(\mubar^+\) and \(\mubar^-\), and constants
\(a_{\mathrm{loc}}>0\), \(b_{\mathrm{loc}}>0\), such that the local
PL/strict-dissipativity estimate above holds on both basins:
\begin{equation}\label{eq:local-dissipativity-two-well}
\frac{d}{dt}\Energygap(\mu_t)
\le
-a_{\mathrm{loc}}\,\Energygap(\mu_t)
+b_{\mathrm{loc}}\Vab{W_t}_{\frob}^2
\end{equation}
whenever the controlled solution remains in one of the two basins.  By
symmetry one may take the constants from \zcref{prop:strict-diss} on
the selected branch, namely
\(a_{\mathrm{loc}}=a\) and \(b_{\mathrm{loc}}=C_\sigma\).

\item On each basin there is a branchwise metric-energy comparison
\begin{equation}\label{eq:branchwise-W2-from-E-two}
W_2(\mu,\mubar^\pm)^2
\le
C_{W_2,\Energygap}\,\Energygap(\mu)
\qquad \forall \mu\in\mathcal U_\pm.
\end{equation}

\item There exists \(\eta_{\mathrm{loc}}>0\) such that, for every
\(0<\eta<\eta_{\mathrm{loc}}\), the branchwise chart sublevels
\[
\{\Energygap \le \eta\}\cap\mathcal U_+,
\qquad
\{\Energygap \le \eta\}\cap\mathcal U_-
\]
are connected.
\end{enumerate}
\end{proposition}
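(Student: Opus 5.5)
Part (i) is already contained in \zcref{thm:package}(iv): for every sufficiently small fixed \(\eps\) the global minimizer set is exactly \(\{\mubar^+,\mubar^-\}\), with \(\mubar^-=S_\#\mubar^+\). The remaining items will be proved on the selected branch \(\mubar=\mubar^+\) and then transported to \(\mubar^-\) by the antipodal map \(S\). This works because \(\Energy(S_\#\mu)=\Energy(\mu)\), because \(S\) is an isometry of \(\dSphe\) and of \(W_2\), and because \(S\) carries the controlled dynamics with field \(W\sigma\) to the controlled dynamics with field \(x\mapsto -W\sigma(-x)\). Pulling back the control in this way does not change \(\Vab{u_W}_{L^\infty}\). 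Hence the Young-inequality bound in \zcref{prop:strict-diss} keeps the same constant \(C_\sigma\) on the negative branch, since it only uses \(|u_W|\le\Vab{W}_\frob\Vab{\sigma}_{L^\infty}\).

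For (ii), I will define \(\mathcal U_+\) to be the chart \(\{(1+h)\mubar: h\in\Xred,\ \Vab{h}_{L^\infty}<r_2\}\), with \(r_2\) taken from \zcref{thm:local-PL}, and set \(\mathcal U_-\coloneqq S_\#\mathcal U_+\). The two basins are disjoint as soon as \(r_2<1\). The reason is that the tail estimate \eqref{eq:tail-main} puts mass at least \(1-C\e^{-c/\eps}\) of \(\mubar^+\) in a small cap around \(e_1\). Any \((1+h)\mubar^+\) with \(|h|<1\) keeps mass at least \((1-r_2)(1-C\e^{-c/\eps})>1/2\) there, while every element of \(\mathcal U_-\) has mass less than \(1/2\) in that cap. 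The dissipation inequality \eqref{eq:local-dissipativity-two-well} is then exactly \zcref{prop:strict-diss} on \(\mathcal U_+\), and the symmetry argument above gives it on \(\mathcal U_-\).

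For (iii), I will go through a functional inequality. Since \(\mu=(1+h)\mubar\) and \(\mubar\) satisfies a Poincaré inequality with constant \(\kappa_P(\eps)\) by \zcref{prop:poincare}, the standard weighted \(H^{-1}\) comparison (Peyre / Loeper type) gives \(W_2(\mu,\mubar)^2\le C\Vab{h\mubar}_{\dot H^{-1}(\mubar)}^2\le (C/\kappa_P)\Vab{h}_{L^2(\mubar)}^2\). This is legitimate because the densities of \(\mu\) and \(\mubar\) are comparable up to the factor \((1\pm r_2)\). An alternative route is Talagrand's inequality \(T_2\), which follows from a log-Sobolev inequality for the Gibbs measure \(\mubar\); that inequality holds on the compact sphere with smooth positive density via Holley–Stroock, and combined with \(\KL(\mu|\mubar)\le\Vab{h}_{L^2}^2\) it gives the same bound. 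In either case I then apply the lower bound \(\Energygap\ge c_1\Vab{h}_{L^2}^2\) from \zcref{cor:quadratic-equivalence} and obtain \(C_{W_2,\Energygap}=C/(\kappa_Pc_1)\).

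For (iv), which I expect to be the delicate step, I will show that \(\{\Energygap\le\eta\}\cap\mathcal U_+\) is star-shaped with respect to \(\mubar\) along the segments \(h\mapsto sh\), \(s\in[0,1]\). All these measures stay in \(\mathcal U_+\), because the chart is convex in \(h\). Writing \(f(s)=\Energygap((1+sh)\mubar)\), the Taylor expansion of \zcref{lem:energy-taylor} gives
\[
f'(s)=\aab{\Phi_{(1+sh)\mubar},h}_{L^2(\mubar)}=s\aab{\Hbar h,h}+\aab{R_F(sh),h}.
\]
Using \eqref{eq:RF-bound} and \zcref{thm:K-op}, this is at least \(s\eps(\gamma_A/2-C_Fr_2)\Vab{h}^2\ge0\) once \(r_2\) is shrunk. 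So \(f\) is nondecreasing in \(s\), and if \(f(1)\le\eta\) the whole segment lies in the sublevel set, which is therefore path-connected. This monotonicity argument works for every \(\eta>0\), and I will take \(\eta_{\mathrm{loc}}\) to be any level below which the sublevel set stays inside the chart in \(L^\infty\). The main obstacle is keeping the chart radius consistent across (ii)–(iv): each step may shrink \(r_2\), so I will fix the smallest radius once at the start and use it throughout.
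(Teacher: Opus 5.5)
Your proposal is correct. Items (i) and (ii) match the paper's proof: the paper also cites \zcref{thm:package}(iv), takes a density chart of radius at most $r_2$, and transfers the dissipativity estimate to $\mathcal U_-$ by antipodal symmetry. Your variant, which pushes the dynamics forward to the field $x\mapsto -W\sigma(-x)$, is fine, because only $|u|\le\|W\|_{\mathrm F}\|\sigma\|_{L^\infty}$ enters the estimate.

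The real differences are in (iii) and (iv).

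\textbf{Item (iii).} The paper runs Benamou--Brenier along $\mu_s=(1+sh)\mubar^+$, with the potential $\phi_s$ solving $-\operatorname{div}(\mu_s\nabla\phi_s)=h\mubar^+$. Your first route combines Peyre's inequality $W_2\lesssim\|\cdot\|_{\dot H^{-1}(\mubar)}$ with the Poincar\'e gap of \zcref{prop:poincare}. This is the same mechanism packaged as a lemma. It has the advantage of making the constant explicit, $C_{W_2,\Energygap}\lesssim 1/(\kappa_P(\eps)c_1)$. Combined with \zcref{prop:one-well-poincare-scale} and $c_1\gtrsim\eps$, this constant is even bounded uniformly in small $\eps$. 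Your Talagrand/Holley--Stroock alternative is valid for fixed $\eps$. However, its constant carries a factor $\exp(\operatorname{osc}\log\rhobar)$, so it degrades exponentially in $1/\eps$.

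\textbf{Item (iv).} Along an arbitrary chord, the paper computes the exact second derivative
$f''(t)=\eps\int g^2/(1+h_t)\,\mathrm{d}\mubar-\iint K_A(x,y)g(x)g(y)\,\mathrm{d}\mubar(x)\,\mathrm{d}\mubar(y)$.
It then uses \zcref{thm:K-op} with $1/(1+r_*)>\lambda_2/\lambda_1$ to get convexity of $\Energy$ along every chord in the chart. Hence the chart sublevel sets are convex in the linear density structure.

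You prove only star-shapedness about $\mubar$, via the first-order bound $f'(s)\ge s\eps(\gamma_A/2-C_Fr_2)\|h\|^2$. This is a special case of the paper's statement, since a convex $f$ with $f'(0)=0$ is nondecreasing. It relies on the remainder estimate for $R_F$ rather than an exact identity, but it is fully sufficient for path-connectedness.

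\textbf{Two small corrections.}
\begin{enumerate}
\item Your closing choice of $\eta_{\mathrm{loc}}$, as ``a level below which the sublevel set stays inside the chart in $L^\infty$'', does not exist. Tall spikes of small mass leave every $L^\infty$ chart while having arbitrarily small energy gap; this is the point of \zcref{rem:static-status}. The error is harmless: the set in (iv) is already intersected with $\mathcal U_+$, and your monotonicity argument works for every $\eta>0$. So $\eta_{\mathrm{loc}}$ may be taken arbitrary, exactly as in the paper.
\item In (ii), disjointness should not be tested against the threshold $1/2$, which fails for $r_2$ near $1$. The right comparison is between two cap masses. Elements of $\mathcal U_+$ have mass at least $(1-r_2)(1-C\e^{-c/\eps})$ in the cap. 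Elements of $\mathcal U_-$ have mass at most $(1+r_2)C\e^{-c/\eps}$ there. With this comparison, disjointness holds for any $r_2<1$ once $\eps$ is small.
\end{enumerate}
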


\begin{proof}
Item (i) is exactly \zcref{thm:package}(iv).  Choose
\(r_*>0\) so small that
\[
 r_*\le r_2,
 \qquad
 \frac{1}{1+r_*}>\frac{\lambda_2}{\lambda_1},
\]
where \(r_2\) is the radius in \zcref{thm:local-PL}, and then decrease
\(r_*\) if necessary so that the two relative-density charts below are
disjoint.  Define
\begin{equation}\label{eq:local-chart-basins-def-app}
\mathcal U_+
:=
\left\{(1+h)\mubar^+:
 h\in L^\infty(\mubar^+),
 \int_\dSphe h\d \mubar^+=0,
 \Vab{h}_{L^\infty}<r_*\right\},
\end{equation}
and set \(\mathcal U_-:=S_\#\mathcal U_+\).  By \zcref{thm:local-PL} and
\zcref{prop:strict-diss}, the dissipativity estimate
\eqref{eq:local-dissipativity-two-well} holds on \(\mathcal U_+\).  The
antipodal symmetry transfers the same estimate, with the same constants, to
\(\mathcal U_-\).

For the metric-energy comparison on \(\mathcal U_+\), write
\(\mu=(1+h)\mubar^+\) with \(\Vab{h}_{L^\infty}<r_*\), and connect
\(\mubar^+\) to \(\mu\) by the linear density path
\(\mu_s=(1+sh)\mubar^+\).  Since \(\mu_s\) is uniformly equivalent to the
smooth positive measure \(\mubar^+\), the elliptic problem
\[
-\Div(\mu_s\nabla\phi_s)=h\mubar^+,
\qquad
\int_\dSphe\phi_s\d \mu_s=0,
\]
has the standard estimate
\[
\int_\dSphe|\nabla\phi_s|^2\d \mu_s
\le C\Vab{h}_{L^2(\mubar^+)}^2,
\]
with \(C\) uniform for \(0\le s\le1\).  The Benamou--Brenier formula along this
path gives
\[
W_2(\mu,\mubar^+)^2
\le C\Vab{h}_{L^2(\mubar^+)}^2.
\]
Combining this with the local quadratic lower bound
\eqref{eq:E-equivalence} gives \eqref{eq:branchwise-W2-from-E-two}.  The
negative branch is again identical by symmetry.

It remains to prove the connectedness of small branchwise chart sublevels.  Let
\(\mu_i=(1+h_i)\mubar^+\in\mathcal U_+\), \(i=0,1\), and set
\(h_t=(1-t)h_0+th_1\), \(g=h_1-h_0\).  The chart is convex, so
\((1+h_t)\mubar^+\in\mathcal U_+\).  For
\(f(t):=\Energy((1+h_t)\mubar^+)\), the exact second derivative is
\[
f''(t)
=
\eps\int_\dSphe\frac{g^2}{1+h_t}\d \mubar^+
-
\iint K_A(x,y)g(x)g(y)\d \mubar^+(x)\d \mubar^+(y).
\]
Since \(\int g\d \mubar^+=0\), \zcref{thm:K-op} yields
\[
f''(t)
\ge
\left(\frac{\eps}{1+r_*}-\Vab{\Kbar}_{\op}\right)
\Vab{g}_{L^2(\mubar^+)}^2
\ge c\eps\Vab{g}_{L^2(\mubar^+)}^2
\]
for small \(\eps\).  Hence \(\Energy\) is convex along every density
chord in \(\mathcal U_+\), and every sublevel intersection
\(\{\Energygap \le\eta\}\cap\mathcal U_+\) is connected.  The proof in
\(\mathcal U_-\) is the antipodal image of this one.  Taking
\(\eta_{\mathrm{loc}}>0\) arbitrary, or smaller if desired for later uses,
finishes the proof.
\end{proof}

\begin{remark}[local chart basins versus weak wells]\label{rem:static-status}
\zcref{prop:two-well-static} replaces the old static two-well
assumption for the minimizer count, the branchwise local PL estimate, the
branchwise \(W_2\)-energy comparison, and the connectedness of local
\emph{chart} sublevels.  However, the density-chart basins \(\mathcal U_\pm\)
are deliberately strong neighborhoods.  They should not be used for global
sublevel capture: arbitrarily small mass perturbations can leave an
\(L^\infty\)-density chart while converging to \(\mubar^\pm\) in total variation
and in energy.

The correct global topology is therefore formulated with weak, equivalently
\(W_2\), neighborhoods of the minimizers.  The local charts \(\mathcal U_\pm\)
remain the regions where Taylor expansion, local PL, and branchwise
\(W_2^2\lesssim\Energygap \) are available.
\end{remark}

\begin{proposition}[weak two-well barrier]\label{prop:weak-two-well-barrier}
Fix a sufficiently small \(\eps>0\).  Let \(V_+^{\rm stat}\) and
\(V_-^{\rm stat}\) denote the two static wells from
\zcref{thm:package}, and choose disjoint narrow open neighborhoods
\(\mathcal V_+\), \(\mathcal V_-\) of \(\mubar^+\), \(\mubar^-\) such that
\[
S_\#\mathcal V_+=\mathcal V_-,
\qquad
\mathcal U_\pm\subset\mathcal V_\pm,
\qquad
\overline{\mathcal V_+}\subset V_+^{\rm stat},
\qquad
\overline{\mathcal V_-}\subset V_-^{\rm stat}.
\]
This choice is possible after decreasing the chart radius in
\zcref{prop:two-well-static}, because the relative-density charts
then lie in an arbitrarily small total-variation, hence \(W_2\), neighborhood
of the corresponding minimizer.
Then
\begin{equation}\label{eq:weak-two-well-barrier}
\beta_{\mathrm w}
:=
\inf_{\PdSphe\setminus(\mathcal V_+\cup\mathcal V_-)}\Energygap 
>
0.
\end{equation}
\end{proposition}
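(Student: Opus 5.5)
We argue by contradiction, using narrow compactness of \(\PdSphe\) and narrow lower semicontinuity of \(\Energy\). Suppose \(\beta_{\mathrm w}=0\). Then there is a sequence \(\nu_n\in\PdSphe\setminus(\mathcal V_+\cup\mathcal V_-)\) with \(\Energygap(\nu_n)\to0\). Since \(\dSphe\) is compact, \(\PdSphe\) is narrowly compact, so after passing to a subsequence \(\nu_n\rightharpoonup\nu\) for some \(\nu\in\PdSphe\).

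The set \(\PdSphe\setminus(\mathcal V_+\cup\mathcal V_-)\) is narrowly closed, because \(\mathcal V_\pm\) are open. Hence \(\nu\notin\mathcal V_+\cup\mathcal V_-\).

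Next we pass to the limit in the energy. As recorded in Step~1 of the proof of \zcref{thm:package}, the entropy \(\Ent\) is narrowly lower semicontinuous and \(I_A\) is narrowly continuous, since \(K_A\) is continuous on the compact set \(\dSphe\times\dSphe\). Therefore
\[
\Energy(\nu)\le\liminf_{n\to\infty}\Energy(\nu_n)=m_\eps ,
\]
so \(\nu\) is a global minimizer of \(\Energy\). By \zcref{prop:two-well-static}(i), equivalently \zcref{thm:package}(iv), this forces \(\nu\in\{\mubar^+,\mubar^-\}\subset\mathcal V_+\cup\mathcal V_-\), which contradicts the previous paragraph. Hence \(\beta_{\mathrm w}>0\).

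The contradiction argument itself is routine; the substantive input is the exact identification of the minimizer set in \zcref{thm:package}(iv). The only point that needs care is the admissibility of the neighborhoods \(\mathcal V_\pm\) posited in the statement. Their existence follows from three facts:
\begin{itemize}
\item For a density chart with \(\Vab{h}_{L^\infty}<r_*\), the total variation distance to \(\mubar^\pm\) is at most \(r_*\).
\item Total variation convergence implies narrow convergence, so shrinking \(r_*\) places \(\mathcal U_\pm\) inside any prescribed narrow neighborhood of \(\mubar^\pm\).
\item \(\mubar^\pm\) lie in the disjoint open static wells \(V_\pm^{\rm stat}\), so one can take \(\mathcal V_\pm\) to be small narrow (for instance \(W_2\)) balls around \(\mubar^\pm\) whose closures lie in \(V_\pm^{\rm stat}\), with \(\mathcal V_-=S_\#\mathcal V_+\) because \(S\) is an isometry for \(W_2\).
\end{itemize}
Neither the Hessian bounds nor the critical-level gap \(\Delta_0\) is used; only compactness, lower semicontinuity, and the two-point minimizer set enter. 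As a consequence, the constant \(\beta_{\mathrm w}\) is not explicit. A quantitative version would instead go through \zcref{prop:explicit-small-temperature-excited-critical-gap} together with a slope argument.
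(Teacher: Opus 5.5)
Your proposal is correct and follows essentially the same argument as the paper: contradiction via narrow compactness of $\PdSphe$, narrow lower semicontinuity of $\Energy$, and the two-point identification of the minimizer set in \zcref{prop:two-well-static}(i), with openness of $\mathcal V_\pm$ yielding the contradiction. Your additional justification that admissible neighborhoods $\mathcal V_\pm$ exist expands the one-line remark already contained in the statement, and it is sound.
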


\begin{proof}
The functional \(\Energy\) is narrowly lower semicontinuous: the entropy
is narrowly lower semicontinuous and the interaction term is narrowly
continuous because \(K_A\) is continuous and bounded.  Since \(\dSphe\) is compact,
\(\PdSphe\) is narrowly compact.  If \(\beta_{\mathrm w}=0\), there is a
sequence \(\mu_n\notin\mathcal V_+\cup\mathcal V_-\) such that
\(\Energygap(\mu_n)\to0\).  Passing to a subsequence,
\(\mu_n\rightharpoonup\mu_*\).  Lower semicontinuity gives \(\Energygap(\mu_*)=0\),
so \zcref{prop:two-well-static}(i) gives
\(\mu_*\in\{\mubar^+,\mubar^-\}\).  Since \(\mathcal V_\pm\) are open, this
contradicts \(\mu_n\notin\mathcal V_+\cup\mathcal V_-\) for all large \(n\).
\end{proof}

\begin{lemma}[free-energy gradient-flow deformation inside one weak well]
\label{lem:gf-deformation-one-well}
Let \(0<\eta<\min\{\beta_{\mathrm w},\eta_{\mathrm{crit}}\}\), where
\(\eta_{\mathrm{crit}}\) is the one-well critical-point uniqueness threshold in
\zcref{thm:package}(iv).  Define
\[
\mathcal C_\pm(\eta)
:=
\{\mu\in\PdSphe:\Energygap(\mu)\le\eta\}\cap\mathcal V_\pm.
\]
Then each \(\mathcal C_\pm(\eta)\) is path connected.  More precisely, every
\(\mu\in\mathcal C_\pm(\eta)\) can be joined to \(\mubar^\pm\) by a continuous
path contained in \(\mathcal C_\pm(\eta)\).
\end{lemma}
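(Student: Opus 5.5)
The plan is to run the (uncontrolled) gradient flow of \(\Energy\),
\[
\partial_t\mu+\Div\bigl(\mu\,\Proj(-\nabla\tfrac{\delta\Energy}{\delta\mu})\bigr)=0,
\]
which is \eqref{eq:ODE-Net} with \(W\equiv0\), starting from an arbitrary \(\mu\in\mathcal C_+(\eta)\). We then concatenate this flow path with a short local path near \(\mubar^+\). The case of \(\mathcal C_-(\eta)\) follows by applying \(S_\#\), since \(\Energy\) is \(S\)-invariant.

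Let \((\mu_t)_{t\ge0}\) denote the flow from \(\mu\). First, by the energy identity \eqref{eq:energy-identity} with \(W=0\), \(t\mapsto\Energygap(\mu_t)\) is nonincreasing, so \(\Energygap(\mu_t)\le\eta\) for all \(t\). Second, \(t\mapsto\mu_t\) is narrowly continuous. If the entropy is infinite at \(t=0\), parabolic smoothing makes \(\mu_t\) a smooth positive density for \(t>0\), and continuity at \(t=0\) still holds.

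Third, we show the path cannot leave \(\mathcal V_+\). Suppose it exits at some first time \(t_0\). Then \(\mu_{t_0}\in\partial\mathcal V_+\), which lies outside \(\mathcal V_+\cup\mathcal V_-\) because the closures are disjoint. Proposition~\ref{prop:weak-two-well-barrier} then gives \(\Energygap(\mu_{t_0})\ge\beta_{\mathrm w}>\eta\), a contradiction. So the whole path stays in \(\mathcal C_+(\eta)\).

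Fourth, we identify the limit. Since \(\int_0^\infty \Dissipation(\mu_t)\,\d t\le\eta<\infty\), there are times \(t_n\to\infty\) with \(\Dissipation(\mu_{t_n})\to0\). By narrow compactness and parabolic regularity, a subsequence converges to a critical point \(\mu_\ast\in\Crit\Energy\). This limit lies in \(\overline{\mathcal V_+}\subset V_+^{\rm stat}\) and satisfies \(\Energygap(\mu_\ast)\le\eta<\eta_{\mathrm{crit}}\). By the one-well uniqueness in Theorem~\ref{thm:package}(iv), \(\mu_\ast=\mubar^+\).

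To upgrade subsequential convergence to convergence of the whole trajectory, we use the local chart. Parabolic smoothing from the \(C^\infty\) Gibbs structure improves the narrow convergence \(\mu_{t_n}\to\mubar^+\) to \(L^\infty\) convergence of densities. Hence some \(\mu_{t_n}\) lies in the chart \(\mathcal U_+\) with small \(\Vab{h}_{L^\infty}\). From that time on, the local PL inequality (Theorem~\ref{thm:local-PL}) and Corollary~\ref{cor:forced-localization} with \(W=0\) give \(\Energygap(\mu_t)\to0\) exponentially. Proposition~\ref{prop:two-well-static}(iii) then gives \(W_2(\mu_t,\mubar^+)\to0\), so the trajectory converges to \(\mubar^+\).

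We close the path as follows. Reparametrize \([0,\infty]\) to \([0,1]\), setting the endpoint value to \(\mubar^+\). Continuity at the endpoint is exactly the convergence just established. Alternatively, we may stop at a time \(t_n\) where the trajectory is already in \(\mathcal U_+\) and join it to \(\mubar^+\) by the linear density chord \((1+sh)\mubar^+\). Along this chord \(\Energy\) is convex by Proposition~\ref{prop:two-well-static}(iv), so \(\Energygap\le\eta\) throughout, and the chord stays in \(\mathcal U_+\subset\mathcal V_+\). In either version every \(\mu\in\mathcal C_+(\eta)\) is joined to \(\mubar^+\) inside \(\mathcal C_+(\eta)\), and path connectedness follows.

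The main obstacle is Step four. The energy bound alone gives only narrow precompactness, and we need to obtain convergence of the whole trajectory rather than along a subsequence. We also need to enter the \(L^\infty\) chart from merely narrow proximity, which requires a uniform parabolic smoothing estimate for the nonlocal Fokker--Planck flow. To get it, we would write \(\mu_t\) as the solution of a linear Fokker--Planck equation with smooth drift \(\chi_A[\mu_t]\), uniformly bounded in \(C^k\) because \(K_A\) is smooth, and use heat-kernel bounds on \(\dSphe\) over a unit time window.
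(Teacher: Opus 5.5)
Your skeleton is the paper's. You run the uncontrolled flow and use monotonicity of $\Energygap$ together with the barrier $\beta_{\mathrm w}>\eta$ to confine the path to $\mathcal V_+$; your first-exit argument is equivalent to the paper's connectedness argument. You then identify the subsequential limit as $\mubar^+$ via \zcref{thm:package}(iv). The differences are all in how you close the path.

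\textbf{Your first closing has a gap.} Both \zcref{cor:forced-localization} and \zcref{prop:two-well-static}(iii) require $\mu_t\in\mathcal U_+$ for all $t\ge t_n$. You never show the trajectory stays in this $L^\infty$ chart after entering it, and energy decay only controls $\|h_t\|_{L^2(\mubar)}$, not $\|h_t\|_{L^\infty}$.

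The PL step is also unnecessary, and the ``main obstacle'' you flag largely goes away. Your uniform convergence $\rho_{t_n}\to\rhobar^+>0$ gives $\Energygap(\mu_{t_n})\to0$; so does the Fisher-information compactness of \zcref{prop:slope-gap-from-critical-exclusion}. Monotonicity then gives $\Energygap(\mu_t)\to0$ along the whole trajectory. Narrow compactness, lower semicontinuity, and $\overline{\mathcal V_+}\cap\{\mubar^+,\mubar^-\}=\{\mubar^+\}$ then force $\mu_t\rightharpoonup\mubar^+$, with no chart or PL input.

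\textbf{Your second closing is correct and genuinely different from the paper.} You stop at a time $t_n$ with $\mu_{t_n}\in\mathcal U_+$ and join it to $\mubar^+$ by the density chord, along which $\Energy$ is convex, so $\Energygap\le\eta$ throughout. This needs only a single entry into the chart, not convergence of the whole trajectory. The price is the uniform positive-time H\"older bound (\zcref{lem:positive-time-holder} with $W=0$), which upgrades narrow proximity to $L^\infty$ proximity.

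The paper instead stays in the weak topology. It asserts smooth precompactness of $\{G_t\mu\}_{t\ge t_0}$ and that every $\omega$-limit point is critical, then reparametrizes $[0,\infty]$ onto $[0,1]$. It does not spell out the LaSalle-type step; here the monotonicity observation above supplies it, since the limiting energy equals $\Energy(\mubar^+)$, so every $\omega$-limit point is a minimizer in $\overline{\mathcal V_+}$.
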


\begin{proof}
We prove the statement in the positive well.  Let \(G_t\mu\) be the
uncontrolled Wasserstein gradient flow of \(\Energy\), equivalently the
smooth solution for \(t>0\) of
\begin{equation}\label{eq:uncontrolled-gf-static-topology}
\partial_t\nu_t+\Div\bigl(\nu_t P_x^\perp\chi_A[\nu_t]\bigr)=\eps\Laplacian\nu_t,
\qquad
\nu_{t=0}=\mu.
\end{equation}
For finite-energy initial data this uniformly parabolic equation is well posed
for positive times, is continuous in the narrow topology down to \(t=0\), and
satisfies the energy identity
\begin{equation}\label{eq:uncontrolled-energy-identity}
\frac{d}{dt}\Energygap(G_t\mu)=-\Dissipation (G_t\mu)
\end{equation}
for \(t>0\).  These are the standard parabolic-gradient-flow facts for entropy
plus a smooth interaction on a compact sphere; see, for example,
\parencite{BOGACHEV20193681} for related Fokker--Planck--Kolmogorov stability
and convergence arguments.

Since energy is nonincreasing, \(G_t\mu\in\{\Energygap \le\eta\}\) for all \(t\ge0\).
Because \(\eta<\beta_{\mathrm w}\), \zcref{prop:weak-two-well-barrier}
gives
\[
G_t\mu\in\mathcal V_+\cup\mathcal V_-
\qquad\forall t\ge0.
\]
The path \(t\mapsto G_t\mu\) is continuous and starts in \(\mathcal V_+\); the
sets \(\mathcal V_+\) and \(\mathcal V_-\) are disjoint open sets whose union
contains the path.  Hence connectedness of \([0,T]\) implies
\[
G_t\mu\in\mathcal V_+
\qquad\forall t\ge0.
\]

It remains to identify the limit as \(t\to\infty\).  From
\eqref{eq:uncontrolled-energy-identity},
\(\int_0^\infty\Dissipation (G_t\mu)\,dt<\infty\).  Parabolic regularity gives
relative compactness of \(\{G_t\mu:t\ge t_0\}\) in smooth topologies for every
\(t_0>0\).  Thus every \(\omega\)-limit point is a critical point of
\(\Energy\), has energy at most \(\eta\), and belongs to
\(\overline{\mathcal V_+}\subset V_+^{\rm stat}\).  Since
\(\eta<\eta_{\mathrm{crit}}\), \zcref{thm:package}(iv) implies that the
only such critical point is \(\mubar^+\).  Therefore
\[
G_t\mu\rightharpoonup\mubar^+
\qquad\text{as }t\to\infty.
\]
On compact \(\dSphe\), narrow convergence is equivalent to \(W_2\)-convergence.
Thus
\[
\gamma(s):=
\begin{cases}
G_{s/(1-s)}\mu, & 0\le s<1,\\
\mubar^+, & s=1,
\end{cases}
\]
is a continuous path in \(\mathcal C_+(\eta)\) joining \(\mu\) to \(\mubar^+\).
The negative well follows by applying the antipodal map.
\end{proof}

\begin{proposition}[low-energy sublevel has exactly two weak components]
\label{prop:two-low-components}
For every sufficiently small fixed \(\eps>0\), there exists
\(\eta_{\mathrm{sep}}>0\) such that, for every
\(0<\eta<\eta_{\mathrm{sep}}\),
\begin{equation}\label{eq:two-components-decomposition}
\{\mu:\Energygap(\mu)\le \eta\}
=
\mathcal C_+(\eta)\,\dot\cup\,\mathcal C_-(\eta),
\end{equation}
where
\begin{equation}\label{eq:components-def}
\mathcal C_\pm(\eta)
:=
\{\mu:\Energygap(\mu)\le\eta\}\cap\mathcal V_\pm.
\end{equation}
Moreover \(\mathcal C_+(\eta)\) and \(\mathcal C_-(\eta)\) are precisely the
two connected components of the low-energy sublevel.
\end{proposition}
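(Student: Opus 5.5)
Set \(\eta_{\mathrm{sep}}\coloneqq\min\{\beta_{\mathrm w},\eta_{\mathrm{crit}}\}\), with \(\beta_{\mathrm w}\) the weak barrier of \zcref{prop:weak-two-well-barrier} and \(\eta_{\mathrm{crit}}\) the one-well uniqueness threshold of \zcref{thm:package}(iv). Fix \(0<\eta<\eta_{\mathrm{sep}}\).

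The decomposition \eqref{eq:two-components-decomposition} comes first. If \(\Energygap(\mu)\le\eta<\beta_{\mathrm w}\), then by the definition of \(\beta_{\mathrm w}\) as an infimum over \(\PdSphe\setminus(\mathcal V_+\cup\mathcal V_-)\), \(\mu\) must lie in \(\mathcal V_+\cup\mathcal V_-\). Since \(\mathcal V_+\) and \(\mathcal V_-\) were chosen disjoint, the sublevel set \(\{\Energygap\le\eta\}\) is the disjoint union of \(\mathcal C_+(\eta)\) and \(\mathcal C_-(\eta)\). Both pieces are nonempty, because \(\mubar^\pm\in\mathcal C_\pm(\eta)\).

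Next, each piece is connected. Because \(\eta<\min\{\beta_{\mathrm w},\eta_{\mathrm{crit}}\}\), \zcref{lem:gf-deformation-one-well} applies: every point of \(\mathcal C_\pm(\eta)\) is joined to \(\mubar^\pm\) by a continuous path staying inside \(\mathcal C_\pm(\eta)\). Hence each \(\mathcal C_\pm(\eta)\) is path connected, and in particular connected.

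Finally, the two pieces are separated inside the sublevel set. Both \(\mathcal C_+(\eta)\) and \(\mathcal C_-(\eta)\) are relatively open in \(\{\Energygap\le\eta\}\) for the narrow topology, since each is the sublevel set intersected with an open set \(\mathcal V_\pm\). They are disjoint and cover the sublevel set, so each is also relatively closed, i.e.\ clopen. A connected clopen subset is a full connected component. Therefore \(\mathcal C_+(\eta)\) and \(\mathcal C_-(\eta)\) are exactly the two connected components of \(\{\Energygap\le\eta\}\), which proves the proposition.

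The step needing most care is the use of \zcref{lem:gf-deformation-one-well}. That lemma rests on two facts: the uncontrolled gradient flow cannot cross from \(\mathcal V_+\) to \(\mathcal V_-\) (this is where \(\eta<\beta_{\mathrm w}\) enters), and its \(\omega\)-limit in \(\overline{\mathcal V_+}\) is unique (this is where \(\eta<\eta_{\mathrm{crit}}\) enters). Both are guaranteed by the choice of \(\eta_{\mathrm{sep}}\). Beyond that, the remaining point to check is that the topology used for connectedness, the narrow (equivalently \(W_2\)) topology, is the same one in which the \(\mathcal V_\pm\) are open and the gradient-flow paths are continuous.
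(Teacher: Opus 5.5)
Your proposal is correct and follows essentially the same route as the paper's proof: the weak barrier $\beta_{\mathrm w}$ forces the sublevel set into $\mathcal V_+\cup\mathcal V_-$, the gradient-flow deformation lemma (applicable since $\eta<\min\{\beta_{\mathrm w},\eta_{\mathrm{crit}}\}$) gives path connectedness of each piece, and the disjoint open sets $\mathcal V_\pm$ separate them. The only difference is that you spell out the relatively-clopen argument for why the two pieces are exactly the connected components, which the paper leaves implicit.
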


\begin{proof}
Let \(\eta_{\mathrm{crit}}=\eta_{\mathrm{crit}}(\eps)\) be the critical-point
uniqueness threshold from \zcref{thm:package}(iv).  Set
\[
\eta_{\mathrm{sep}}
<
\min\{\beta_{\mathrm w},\eta_{\mathrm{crit}}\}.
\]
Then any \(\mu\) with \(\Energygap(\mu)\le\eta<\eta_{\mathrm{sep}}\) lies in
\(\mathcal V_+\cup\mathcal V_-\) by
\zcref{prop:weak-two-well-barrier}.  This proves the disjoint
decomposition \eqref{eq:two-components-decomposition}.  The two sets are
nonempty because they contain \(\mubar^+\) and \(\mubar^-\).  They are path
connected by \zcref{lem:gf-deformation-one-well}.  Since they are separated
inside the sublevel by the disjoint open sets \(\mathcal V_+\), \(\mathcal V_-\),
they are exactly the two connected components.
\end{proof}

\begin{lemma}[well-distance modulus on weak low-energy components]
\label{lem:well-distance-modulus}
For \(0<\eta<\eta_{\mathrm{sep}}\), define
\begin{equation}\label{eq:well-distance-modulus-def}
\Theta_{\mathrm{well}}(\eta)
:=
\max_{\varsigma\in\{+,-\}}
\sup_{\mu\in\mathcal C_\varsigma(\eta)}
W_2(\mu,\mubar^\varsigma).
\end{equation}
Then
\begin{equation}\label{eq:well-distance-modulus-vanish}
\Theta_{\mathrm{well}}(\eta)\to0
\qquad\text{as }\eta\downarrow0.
\end{equation}
\end{lemma}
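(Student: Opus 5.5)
We argue by contradiction, using narrow compactness of \(\PdSphe\) together with the fact that \(\mubar^\pm\) are the only zero-energy states (\zcref{prop:two-well-static}(i)). First, \(\Theta_{\mathrm{well}}\) is nondecreasing in \(\eta\), because the sets \(\mathcal C_\varsigma(\eta)\) are nested. It is also finite, since \(W_2\le\pi\) on the sphere. So it suffices to exclude the possibility \(\lim_{\eta\downarrow0}\Theta_{\mathrm{well}}(\eta)=\theta_0>0\).

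Suppose this limit is positive. Then there are \(\eta_n\downarrow0\), a sign \(\varsigma\) (which we may fix by passing to a subsequence, and by the antipodal symmetry \(S_\#\mathcal C_+(\eta)=\mathcal C_-(\eta)\) we may take \(\varsigma=+\)), and measures \(\mu_n\in\mathcal C_+(\eta_n)\) with \(W_2(\mu_n,\mubar^+)\ge\theta_0/2\). By narrow compactness of \(\PdSphe\) we extract a subsequence with \(\mu_n\rightharpoonup\mu_*\). Since \(\Energy\) is narrowly lower semicontinuous (the entropy is lsc, the interaction term is continuous because \(K_A\) is bounded and continuous), we get \(\Energygap(\mu_*)\le\liminf\eta_n=0\). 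Hence \(\mu_*\in\{\mubar^+,\mubar^-\}\).

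To rule out \(\mu_*=\mubar^-\), we use the construction of \zcref{prop:weak-two-well-barrier}: \(\mu_n\in\mathcal V_+\), so \(\mu_*\in\overline{\mathcal V_+}\subset V_+^{\mathrm{stat}}\). Because \(\overline{\mathcal V_-}\subset V_-^{\mathrm{stat}}\) and \(V_\pm^{\mathrm{stat}}\) are disjoint, \(\mubar^-\in\mathcal V_-\) does not lie in \(V_+^{\mathrm{stat}}\). Therefore \(\mu_*=\mubar^+\). On the compact sphere, narrow convergence is equivalent to \(W_2\)-convergence, so \(W_2(\mu_n,\mubar^+)\to0\), which contradicts the lower bound \(\theta_0/2\).

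The only delicate point is the sign identification in the third step, where the narrow limit must stay on the same branch. This is why the closures of \(\mathcal V_\pm\) were chosen inside the disjoint static wells. All other ingredients (lower semicontinuity, compactness, and uniqueness of the minimizers) have already been established.
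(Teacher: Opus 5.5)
Your proof is correct and follows the same route as the paper: argue by contradiction, extract a narrowly convergent subsequence, use narrow lower semicontinuity of the energy to force the limit to be a global minimizer, and conclude from the equivalence of narrow and \(W_2\) convergence on the compact sphere. Your explicit check that the limit stays on the same branch, via \(\overline{\mathcal V_+}\subset V_+^{\mathrm{stat}}\) and the disjointness of the static wells, fills in a step that the paper only asserts.
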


\begin{proof}
If not, then for some \(\delta>0\) there are \(\eta_n\downarrow0\), signs
\(\varsigma_n\), and \(\mu_n\in\mathcal C_{\varsigma_n}(\eta_n)\) such that
\(W_2(\mu_n,\mubar^{\varsigma_n})\ge\delta\).  Passing to a subsequence fixes
the sign and gives \(\mu_n\rightharpoonup\mu_*\).  Lower semicontinuity implies
\(\Energygap(\mu_*)=0\), hence \(\mu_*=\mubar^{\varsigma}\).  Since \(\dSphe\) is compact,
narrow convergence implies \(W_2\)-convergence, contradicting the lower bound
by \(\delta\).
\end{proof}

For later use define the constants
\begin{equation}\label{eq:Gamma-constants-two-well}
\Gamma_{\mathrm{ctr}}:=C_{\mathrm{ctr}}C_{W_2,\Energygap},
\qquad
\Gamma_{\mathrm{pt}}:=1+b_{\mathrm{loc}}\Gamma_{\mathrm{ctr}},
\qquad
\Gamma_{\mathrm{int}}:=\frac{1+b_{\mathrm{loc}}\Gamma_{\mathrm{ctr}}}{a_{\mathrm{loc}}}.
\end{equation}
The constant $C_{\mathrm{ctr}}$ is introduced in \zcref{ass:ctrl-samewell-app} below.

\subsection*{Global energy budget and set-valued entry}

The part of the local dissipativity argument that does not use the local PL inequality is global.  Define the Wasserstein slope dissipation
\begin{equation}\label{eq:global-dissipation}
\Dissipation (\mu)
:=
\int_\dSphe
\left|\nabla\frac{\delta\Energy}{\delta\mu}(\mu)\right|^2\d \mu.
\end{equation}
For every smooth controlled solution of \eqref{eq:ODE-Net}, the chain rule and Young inequality give
\begin{equation}\label{eq:global-EDI-forced}
\frac{d}{dt}\Energygap(\mu_t)
=
-\Dissipation (\mu_t)
+
\int_\dSphe
\nabla\frac{\delta\Energy}{\delta\mu}(\mu_t)\cdot u_{W_t}\d \mu_t
\le
-\frac12\Dissipation (\mu_t)+C_\sigma\Vab{W_t}_{\frob}^2,
\end{equation}
where
\[
C_\sigma=\frac12\Vab{\sigma}_{L^\infty(\dSphe)}^2.
\]

Since \(\operatorname{osc}\ell<\infty\) in the standing problem formulation, comparing the optimal control with the zero control gives the $T$-uniform bound
\begin{equation}\label{eq:global-control-budget}
\frac{\regparam}{2}\int_0^T\Vab{W_t^{*,T}}_{\frob}^2\d t
\le
\operatorname{osc}\ell.
\end{equation}
For later use set
\begin{equation}\label{eq:Ropt-def}
R_{\mathrm{opt}}
:=
\frac{2}{\regparam}\operatorname{osc}\ell,
\end{equation}
so that every subinterval of an optimal trajectory has control energy at most
\(R_{\mathrm{opt}}\).  Combining \eqref{eq:global-EDI-forced} and
\eqref{eq:global-control-budget} yields
\begin{equation}\label{eq:Emax-def}
\sup_{t\in[0,T]}\Energygap(\mu_t^{*,T})
\le
\mathfrak E_{\max}
:=
\Energygap(\mu_0)+\frac{2C_\sigma}{\regparam}\operatorname{osc}\ell,
\end{equation}
\begin{equation}\label{eq:Dmax-def}
\int_0^T\Dissipation (\mu_t^{*,T})\d t
\le
\mathfrak D_{\max}
:=
2\Energygap(\mu_0)+\frac{4C_\sigma}{\regparam}\operatorname{osc}\ell.
\end{equation}
These constants are independent of the horizon.

For $0<\eta<\eta_{\mathrm{sep}}$, define the low-energy neighborhood of the minimizer set
\begin{equation}\label{eq:Neta-def}
\mathcal N_\eta
:=
\mathcal C_+(\eta)\cup\mathcal C_-(\eta).
\end{equation}
The correct global slope gap is a gap outside this set:
\begin{equation}\label{eq:set-gap-def}
d_{\mathrm{set}}(\eta)
:=
\inf\Bigl\{
\Dissipation (\mu):\;
\Energygap(\mu)\le\mathfrak E_{\max},
\mu\notin\mathcal N_\eta
\Bigr\}.
\end{equation}

\paragraph{Consequences of \zcref{ass:nolargegap}.}
Set
\begin{equation}\label{eq:QA-Delta0-main}
Q_A\coloneqq\max\{\e^{\lambda_2},\cosh\lambda_1\},
\qquad
\Delta_0\coloneqq\frac12(\e^{\lambda_1}-Q_A)>0.
\end{equation}
We use the first-coordinate order parameter and the finite-temperature sign-changing barrier
\begin{equation}\label{eq:eta-w-def-main}
m_1(\mu)\coloneqq \int_\dSphe x_1\d \mu(x),
\qquad
\eta_{\mathrm w,\eps}\coloneqq\inf\{\Energygap(\mu):m_1(\mu)=0\}.
\end{equation}
By \zcref{ass:nolargegap} and \eqref{eq:Emax-def},
\begin{equation}\label{eq:explicit-barrier-margin-main}
\gamma_{\mathrm{bar}}
\coloneqq
\Delta_0-\mathfrak E_{\max}>0.
\end{equation}
The second inequality in \zcref{ass:nolargegap} is
\begin{equation}\label{eq:terminal-oscillation-switch-small}
\operatorname{osc}\ell
<
\frac{\regparam}{8C_\sigma}\Delta_0.
\end{equation}

\begin{remark}[how the slope gap and no-switching are obtained]\label{rem:set-gap-condition}
The bootstrap below uses a threshold \(\eta\in(0,\eta_{\mathrm{sep}})\) for which
\begin{equation}\label{eq:set-gap-positive}
d_{\mathrm{set}}(\eta)>0.
\end{equation}
This is no longer left as a primitive landscape hypothesis.  \zcref{prop:explicit-small-temperature-excited-critical-gap,prop:slope-gap-from-critical-exclusion,cor:slope-gap-from-explicit-barrier} show that the first inequality in \zcref{ass:nolargegap} excludes all non-minimal Gibbs critical points with \(0<\Energygap \le\mathfrak E_{\max}\), for all sufficiently small fixed \(\eps\).  A fixed-temperature compactness argument then gives \eqref{eq:set-gap-positive} for every \(0<\eta<\eta_{\mathrm{sep}}\).  The second inequality in \zcref{ass:nolargegap}, together with the sign-changing barrier in \zcref{cor:finite-temp-sign-changing-barrier}, replaces the dynamic same-branch selection hypothesis.
\end{remark}

\subsection*{Sign barrier and slope-gap inputs}
\begin{lemma}[zero-temperature sign-changing barrier]
\label{lem:zero-temp-sign-changing-barrier}
Let
\[
\mathcal P_0\coloneqq \{\mu\in\mathcal P(\dSphe):m_1(\mu)=0\}.
\]
Then
\begin{equation}\label{eq:zero-temp-sign-changing-barrier}
\sup_{\mu\in\mathcal P_0} I_A(\mu)
=
Q_A.
\end{equation}
\end{lemma}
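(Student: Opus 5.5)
The plan is to prove the two inequalities $\sup_{\mathcal P_0} I_A\ge Q_A$ and $\sup_{\mathcal P_0} I_A\le Q_A$ separately.

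\emph{Lower bound (sharpness).} I would use the same explicit witnesses as in \zcref{prop:sharp-first-excited-max-support-bound}. The measure $\delta_{e_2}$ has $m_1=0$ and $I_A(\delta_{e_2})=\e^{\lambda_2}$. The measure $\sigma_1=\frac12(\delta_{e_1}+\delta_{-e_1})$ has $m_1=0$ and $I_A(\sigma_1)=\frac12\e^{\lambda_1}+\frac12\e^{-\lambda_1}=\cosh\lambda_1$. Together these give $\sup_{\mathcal P_0}I_A\ge Q_A$.

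\emph{Upper bound.} Since $\mathcal P_0$ is narrowly closed and convex, and $I_A$ is narrowly continuous and convex (the kernel $K_A$ is positive definite, as shown in \zcref{lem:support-minimality-Gmax}), the supremum is attained at an extreme point of $\mathcal P_0$. The set $\mathcal P_0$ is the intersection of the simplex with the single linear constraint $\int x_1\d\mu=0$. A Dubins/Winkler-type argument therefore identifies its extreme points as measures supported on at most two points. These are either $\delta_x$ with $x_1=0$, or $t\delta_x+(1-t)\delta_y$ with $x_1=\alpha>0$, $y_1=-\beta<0$, and $t=\beta/(\alpha+\beta)$.

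The first case gives $I_A(\delta_x)=\e^{x^\top Ax}\le\e^{\lambda_2}$, because $x\perp e_1$.

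The second case is exactly the trial weight $t_0$ used in the proof of \zcref{lem:two-point-bound-across-equator}. That proof bounds $I_A(t_0\delta_x+(1-t_0)\delta_y)$ by $(1-\alpha\beta)\e^{\lambda_2}+\alpha\beta\cosh\lambda_1\le Q_A$. This is the key point: that lemma's estimate is really a bound at the $m_1$-balancing weight, not merely at the minimizing $t$, so it applies directly here.

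Convexity reduces the supremum over $\mathcal P_0$ to these two cases, which proves $\sup_{\mathcal P_0}I_A\le Q_A$.

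\emph{Main obstacle.} The delicate step is justifying the reduction to extreme points in infinite dimensions. I would handle it by Choquet's theorem, or more concretely: first approximate any $\mu\in\mathcal P_0$ narrowly by finitely supported measures in $\mathcal P_0$, adjusting the weights to keep $m_1=0$. Then, on the finite-dimensional polytope of weights with fixed support, maximize the convex function $I_A$. A maximum is attained at a vertex, and a vertex has at most two nonzero weights because there are two equality constraints (total mass and $m_1=0$). Continuity of $I_A$ then passes the bound to general $\mu$. If the weight adjustment preserving $m_1=0$ is awkward, I would instead perturb by mixing with $\sigma_1$ with small weight, which changes $I_A$ by $o(1)$.
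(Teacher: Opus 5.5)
Your proposal is correct and follows the paper's proof essentially step for step. The lower bound uses the same witnesses $\delta_{e_2}$ and $\sigma_1$. The upper bound uses the same chain: convexity of $I_A$ from positive definiteness of $K_A$, reduction to extreme points of the one-moment set $\mathcal P_0$ (at most two atoms), and the observation that the estimate in \zcref{lem:two-point-bound-across-equator} is obtained at the $m_1$-balancing weight $t_0=\beta/(\alpha+\beta)$.

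Your justification of the extreme-point reduction is more careful than the paper's bare assertion. The one slip is the fallback of mixing with $\sigma_1$: this only rescales $m_1$ by $1-s$, so it cannot repair a discretization error in $m_1$. Mix with $\delta_{e_1}$ or $\delta_{-e_1}$ instead, or keep the two atoms of $\sigma_1$ and rebalance their weights.
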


\begin{proof}
The lower bound follows from the two admissible measures \(\delta_{e_2}\) and
\(\frac12(\delta_{e_1}+\delta_{-e_1})\), whose interaction energies are
\(\e^{\lambda_2}\) and \(\cosh\lambda_1\), respectively.  For the upper bound,
use that \(K_A\) is positive definite, hence \(I_A\) is convex in \(\mu\).  The
set \(\mathcal P_0\) is convex and compact.  The maximum of a continuous convex
functional over this moment set is attained at an extreme point.  Since
\(\mathcal P_0\) is described by the two affine moment constraints
\(\int1\d\mu=1\) and \(\int x_1\d\mu=0\), its extreme points are supported
on at most two points.  Hence it suffices to consider
\(\mu=t\delta_x+(1-t)\delta_y\) with \(t x_1+(1-t)y_1=0\).  If this measure is a
single Dirac mass, then its support point satisfies \(x_1=0\), and
\(I_A(\delta_x)\le\e^{\lambda_2}\).  Otherwise \(x_1y_1\le0\), and the value of
\(t\) is the top-coordinate-cancelling weight used in
\zcref{lem:two-point-bound-across-equator}.  Therefore
\[
I_A(t\delta_x+(1-t)\delta_y)\le Q_A.
\]
This proves the upper bound.
\end{proof}

\begin{corollary}[finite-temperature sign-changing barrier]
\label{cor:finite-temp-sign-changing-barrier}
Let \(\eta_{\mathrm w,\eps}\) be defined by \eqref{eq:eta-w-def-main}.  Then
there exists \(C_{\mathrm w}<\infty\), depending only on the fixed model data,
such that, for all sufficiently small \(\eps>0\),
\begin{equation}\label{eq:finite-temp-sign-changing-barrier}
\eta_{\mathrm w,\eps}
\ge
\Delta_0-C_{\mathrm w}\eps|\log\eps|.
\end{equation}
In particular, after decreasing \(\eps_0\),
\begin{equation}\label{eq:finite-sign-barrier-main-use}
\eta_{\mathrm w,\eps}\ge\frac34\Delta_0.
\end{equation}
\end{corollary}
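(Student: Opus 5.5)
Since $\Energygap(\mu)=\eps\KL(\mu\mid\omega)+\tfrac12\bigl(-I_A(\mu)\bigr)-m_\eps$ and $\KL\ge0$, the plan is to drop the entropy term and compare with the zero-temperature barrier of \zcref{lem:zero-temp-sign-changing-barrier}. For any $\mu\in\mathcal P_0$,
\[
\Energygap(\mu)\ge -\tfrac12 I_A(\mu)-m_\eps\ge -\tfrac12 Q_A-m_\eps .
\]
This reduces the claim to a lower bound on $-m_\eps$.

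The remaining input is an upper bound on the minimal value, namely $m_\eps\le -\tfrac12\e^{\lambda_1}+C\eps|\log\eps|$. This is exactly \eqref{eq:min-upper-static-proof}, obtained in Step~2 of the proof of \zcref{thm:package} from the test cap $\nu_\eps$ uniform on $B_{r_\eps}(e_1)$ with $r_\eps=\sqrt{\eps|\log\eps|}$. For that test measure, $\KL(\nu_\eps\mid\omega)=O(|\log\eps|)$, because the cap has $\omega$-measure of order $r_\eps^{d-1}$. Also $I_A(\nu_\eps)\ge\e^{\lambda_1}-Cr_\eps^2$, since $x^\top Ay\ge\lambda_1-C(|x-e_1|^2+|y-e_1|^2)$ near $(e_1,e_1)$.

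Substituting this bound gives
\[
\Energygap(\mu)\ge\tfrac12(\e^{\lambda_1}-Q_A)-C\eps|\log\eps|=\Delta_0-C_{\mathrm w}\eps|\log\eps|
\]
uniformly in $\mu\in\mathcal P_0$. Taking the infimum yields \eqref{eq:finite-temp-sign-changing-barrier}. If $\mu$ has no density the entropy is $+\infty$ and the bound is trivial.

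For \eqref{eq:finite-sign-barrier-main-use}, use that $\eps|\log\eps|\to0$: choose $\eps_0$ small enough that $C_{\mathrm w}\eps|\log\eps|\le\Delta_0/4$. The only step needing care is the constant in the test-cap estimate. Its dependence on $d$ and $A$ comes only through $\omega(B_{r}(e_1))\gtrsim r^{d-1}$ and the second-order Taylor bound on $K_A$. Both are fixed model data, so $C_{\mathrm w}$ depends only on $(A,d)$.
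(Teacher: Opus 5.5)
Your proposal is correct and matches the paper's proof. Both drop the nonnegative entropy term, bound $I_A(\mu)\le Q_A$ on $\{m_1(\mu)=0\}$ using the zero-temperature sign-changing barrier lemma, and combine this with the test-cap upper bound $m_\eps\le-\tfrac12\e^{\lambda_1}+C\eps|\log\eps|$ from Step~2 of the static package proof. Your check of the test-cap constants (cap volume of order $r_\eps^{d-1}$ and the quadratic lower bound on $x^\top Ay$ near $(e_1,e_1)$) fills in details the paper leaves implicit.
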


\begin{proof}
Let \(\mu\) satisfy \(m_1(\mu)=0\).  By
\zcref{lem:zero-temp-sign-changing-barrier}, \(I_A(\mu)\le Q_A\).  Since
\(\eps\KL(\mu\mid\omega)\ge0\),
\[
\Energy(\mu)\ge -\frac12Q_A.
\]
On the other hand, the test-cap construction in the proof of
\zcref{thm:package} gives
\[
m_\eps\le -\frac12\e^{\lambda_1}+C_{\mathrm w}\eps|\log\eps|.
\]
Therefore
\[
\Energygap(\mu)=\Energy(\mu)-m_\eps
\ge
\frac12(\e^{\lambda_1}-Q_A)-C_{\mathrm w}\eps|\log\eps|
=
\Delta_0-C_{\mathrm w}\eps|\log\eps|.
\]
Taking the infimum over \(m_1(\mu)=0\) proves the first claim.  The second claim
follows by decreasing \(\eps_0\).
\end{proof}

\begin{proposition}[fixed-temperature slope gap from critical-point exclusion]
\label{prop:slope-gap-from-critical-exclusion}
Fix \(0<\eps\) and \(0<\eta<\eta_{\mathrm{sep}}\).  Suppose that
\begin{equation}\label{eq:no-budget-critical-points}
\Crit\Energy\cap
\{\mu:0<\Energygap(\mu)\le\mathfrak E_{\max}\}
\subset \Minimizers.
\end{equation}
Equivalently, there is no non-minimal stationary Gibbs state with
\(0<\Energygap\le\mathfrak E_{\max}\).  Then
\begin{equation}\label{eq:slope-gap-from-critical-exclusion}
d_{\mathrm{set}}(\eta)>0.
\end{equation}
\end{proposition}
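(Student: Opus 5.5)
We argue by contradiction combined with narrow compactness of \(\PdSphe\). Suppose \(d_{\mathrm{set}}(\eta)=0\). Then there is a sequence \(\mu_n\) with \(\Energygap(\mu_n)\le\mathfrak E_{\max}\), \(\mu_n\notin\mathcal N_\eta\), and \(\Dissipation(\mu_n)\to0\). Since \(\Energy\) is bounded on the sublevel set, the entropy \(\Ent(\mu_n)\) is uniformly bounded. The plan is to pass to a limit \(\mu_\ast\) and show three things: it is a Gibbs critical point, it still satisfies \(\Energygap(\mu_\ast)\le\mathfrak E_{\max}\), and it lies outside the interior of \(\mathcal N_\eta\), in an appropriate sense. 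Hypothesis \eqref{eq:no-budget-critical-points} then forces \(\mu_\ast\in\Minimizers\), which we will show contradicts the separation of \(\mu_n\) from \(\mathcal N_\eta\).

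\textbf{Step 1: identify the limit as critical.} Write \(\mu_n=\rho_n\omega\) and \(W_n\coloneqq K_A*\mu_n\). The dissipation is the Fisher-information-type quantity
\[
\Dissipation(\mu_n)=\int\vab{\eps\nabla\log\rho_n-\nabla W_n}^2\rho_n\d\omega .
\]
Set \(f_n\coloneqq\sqrt{\rho_n}\e^{-W_n/(2\eps)}\). Then \(\eps\nabla\log\rho_n-\nabla W_n=2\eps\nabla f_n/f_n\), so
\[
\Dissipation(\mu_n)=4\eps^2\int\vab{\nabla f_n}^2\e^{W_n/\eps}\d\omega .
\]
The weights \(\e^{W_n/\eps}\) are bounded above and below uniformly in \(n\), because \(K_A\) is bounded; the family \(W_n\) is also precompact in \(C^k\) for every \(k\). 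Hence \(f_n\) is bounded in \(H^1(\dSphe)\) and \(\nabla f_n\to0\) in \(L^2\). Along a subsequence \(W_n\to W_\ast\) uniformly, and \(f_n\to c\) strongly in \(L^2\) for some constant \(c\) by Rellich compactness. It follows that \(\rho_n\to c^2\e^{W_\ast/\eps}\) in \(L^1\), so \(\mu_n\to\mu_\ast\) in total variation. Because \(W_n\to W_\ast\), the limit satisfies \(W_\ast=K_A*\mu_\ast\), and therefore \(\mu_\ast\in\Crit\Energy\). The interaction term converges, and the entropy converges as well, since \(\rho_n\log\rho_n\) converges in \(L^1\) by the uniform two-sided bounds on \(f_n\) in the limit. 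Hence \(\Energygap(\mu_\ast)=\lim\Energygap(\mu_n)\le\mathfrak E_{\max}\).

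\textbf{Step 2: reach the contradiction.} By \eqref{eq:no-budget-critical-points}, either \(\Energygap(\mu_\ast)=0\) or \(\mu_\ast\in\Minimizers\); in both cases \(\mu_\ast=\mubar^\pm\). Consequently \(\Energygap(\mu_n)\to0<\eta\), and \(\mu_n\to\mubar^\pm\) narrowly. Since \(\mathcal V_\pm\) is narrowly open, \(\mu_n\in\mathcal V_\pm\) for all large \(n\), and hence \(\mu_n\in\mathcal C_\pm(\eta)\subset\mathcal N_\eta\). This contradicts \(\mu_n\notin\mathcal N_\eta\).

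The main obstacle is Step 1, specifically the passage to the limit in the entropy. Narrow lower semicontinuity of \(\Ent\) only gives \(\Energygap(\mu_\ast)\le\liminf\Energygap(\mu_n)\), which is already enough to keep \(\mu_\ast\) inside the budget. However, the final contradiction needs energy convergence. The \(H^1\) control of \(f_n\) supplies this: \(f_n^2\to c^2\) in \(L^1\) together with continuity of \(\rho\mapsto\rho\log\rho\) on bounded densities. If that route proves delicate, a cleaner fallback is available. It uses only the narrow convergence \(\mu_n\to\mubar^\pm\) and the openness of \(\mathcal V_\pm\), together with lower semicontinuity. This gives \(\Energygap(\mu_\ast)=0\) provided \(\lim\Energygap(\mu_n)\) exists. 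Otherwise one bounds \(\limsup\Energygap(\mu_n)\) using the log-Sobolev-type control that \(\Dissipation\) provides on the fixed-temperature sphere.
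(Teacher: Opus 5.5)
Your argument is correct and has the same skeleton as the paper's proof: contradiction, Fisher-information compactness, identification of the limit as a Gibbs critical point, and continuity of the energy along the sequence. Two steps are carried out differently.

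\textbf{Identifying the limit.} The paper only extracts $H^1$-boundedness of $\sqrt{\rho_n}$ from the bound $\eps^2\int|\nabla\rho_n|^2/\rho_n\,\d\omega\le 2\Dissipation(\mu_n)+2C_A^2$. It then passes to the limit in the weak stationarity relation to get $\Div(\eps\nabla\rho_*-\rho_*\nabla(K_A*\mu_*))=0$. Finally it invokes elliptic regularity and the strong maximum principle to conclude that $\mu_*$ is a positive Gibbs state. Your substitution $f_n=\sqrt{\rho_n}\,\e^{-W_n/(2\eps)}$ turns $\Dissipation(\mu_n)$ into a Dirichlet energy with weight $\e^{W_n/\eps}\ge 1$. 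Hence $\nabla f_n\to0$ in $L^2$, and the limit density is directly $c^2\e^{W_*/\eps}$ with $W_*=K_A*\mu_*$. No regularity theory is needed, which is a genuine simplification.

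\textbf{Closing the contradiction.} The paper uses $\eta<\eta_{\mathrm{sep}}$ and the two-component decomposition of the sublevel set to get $\Energygap(\mu_n)>\eta$. This gives $\Energygap(\mu_*)\ge\eta>0$, contradicting the exclusion hypothesis. You instead let the hypothesis place $\mu_*$ in $\Minimizers$. Narrow openness of $\mathcal V_\pm$ together with $\Energygap(\mu_n)\to0$ then forces $\mu_n\in\mathcal N_\eta$, which bypasses the two-component proposition. Both closings need genuine convergence of $\Ent(\mu_n)$, not just lower semicontinuity.

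\textbf{Two corrections.} First, your stated reason for entropy convergence, ``uniform two-sided bounds on $f_n$'', is not available when $d-1\ge 2$, since $H^1$ does not embed into $L^\infty$. What you actually have is $f_n\to c$ strongly in $H^1(\dSphe)$: Rellich gives $L^2$ convergence, and $\nabla f_n\to0$. By Sobolev embedding this gives convergence in $L^p$ for some $p>2$. Hence $\rho_n=f_n^2\e^{W_n/\eps}\to\rho_*$ in $L^{p/2}$ with $p/2>1$. This yields uniform integrability of $\rho_n\log\rho_n$ and therefore $\Ent(\mu_n)\to\Ent(\mu_*)$; this is the content of the paper's ``uniform integrability'' remark.

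Second, drop the fallback. Lower semicontinuity gives $\Energygap(\mu_*)\le\liminf\Energygap(\mu_n)$. That is the wrong direction for concluding $\Energygap(\mu_n)\le\eta$ for large $n$, so it cannot replace energy convergence.
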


\begin{proof}
Suppose not.  Then there are \(\mu_n=\rho_n\omega\) such that
\[
\Energygap(\mu_n)\le\mathfrak E_{\max},
\qquad
\mu_n\notin\mathcal N_\eta,
\qquad
\Dissipation (\mu_n)\to0.
\]
Since \(0<\eta<\eta_{\mathrm{sep}}\), \zcref{prop:two-low-components}
implies that \(\mu_n\notin\mathcal N_\eta\) forces \(\Energygap(\mu_n)>\eta\).  Passing
to a subsequence, \(\Energygap(\mu_n)\to L\in[\eta,\mathfrak E_{\max}]\).

Set \(b_n\coloneqq \nabla(K_A*\mu_n)\).  Smoothness of \(K_A\) on the compact set \(\dSphe\times\dSphe\)
gives \(\|b_n\|_{L^\infty}\le C_A\).  From the definition of \(\Dissipation \),
\[
\eps^2\int_\dSphe  \frac{|\nabla\rho_n|^2}{\rho_n}\d\omega
=
\int_\dSphe  |\eps\nabla\log\rho_n|^2\d\mu_n
\le 2\Dissipation (\mu_n)+2C_A^2.
\]
Thus \(\sqrt{\rho_n}\) is bounded in \(H^1(\dSphe)\).  After passing to a subsequence,
\(\sqrt{\rho_n}\to g\) strongly in \(L^2(\dSphe)\), and hence
\(\rho_n\to\rho_*\coloneqq g^2\) strongly in \(L^1(\dSphe)\).  Put
\(\mu_*\coloneqq \rho_*\omega\).  The entropy is continuous along this sequence by
uniform integrability, and the interaction is continuous under narrow
convergence; hence \(\Energygap(\mu_*)=L\in[\eta,\mathfrak E_{\max}]\).

For every \(\phi\in C^\infty(\dSphe)\),
\[
\left|
\int_\dSphe  \nabla\phi\cdot(\eps\nabla\log\rho_n-b_n)\d\mu_n
\right|
\le
\|\nabla\phi\|_{L^\infty}\Dissipation (\mu_n)^{1/2}\to0.
\]
Passing to the limit, using \(b_n\to b_*\coloneqq \nabla(K_A*\mu_*)\) uniformly, gives
\[
-\eps\int_\dSphe  \Delta\phi\d\mu_*
-
\int_\dSphe  \nabla\phi\cdot b_*\d\mu_*=0.
\]
Thus
\[
\Div(\eps\nabla\rho_* - \rho_*\nabla(K_A*\mu_*))=0
\]
in distributions.  Elliptic regularity and the strong maximum principle imply
\(\rho_*\in C^\infty(\dSphe)\) and \(\rho_*>0\).  Therefore
\[
\eps(\log\rho_*+1)-K_A*\mu_*
\]
is constant on \(\dSphe\), i.e. \(\mu_*\in\Crit\Energy\).  Since
\(\Energygap(\mu_*)\in[\eta,\mathfrak E_{\max}]\), this contradicts
\eqref{eq:no-budget-critical-points}.  Hence \(d_{\mathrm{set}}(\eta)>0\).
\end{proof}

\begin{corollary}[slope gap from the explicit barrier]
\label{cor:slope-gap-from-explicit-barrier}
Assume that there exists \(\gamma_{\mathrm{bar}}>0\) such that
\[
\mathfrak E_{\max}\le\Delta_0-\gamma_{\mathrm{bar}}.
\]
Then, after decreasing \(\eps_0\) depending on \(\gamma_{\mathrm{bar}}\), for every
\(0<\eps<\eps_0\) and every \(0<\eta<\eta_{\mathrm{sep}}\),
\[
d_{\mathrm{set}}(\eta)>0.
\]
\end{corollary}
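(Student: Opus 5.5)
The plan is to reduce the corollary to \zcref{prop:slope-gap-from-critical-exclusion}. That means verifying the critical-point exclusion hypothesis \eqref{eq:no-budget-critical-points} for all sufficiently small \(\eps\), and the input for this is \zcref{prop:explicit-small-temperature-excited-critical-gap}. Apply that proposition in its finite form \eqref{eq:explicit-small-temperature-excited-critical-gap-finite} with \(\gamma\coloneqq\gamma_{\mathrm{bar}}/2\). This gives \(\eps_\gamma>0\) such that, for \(0<\eps<\eps_\gamma\), every \(\mu\in\Crit\Energy\setminus\Minimizers\) satisfies
\[
\Energygap(\mu)\ge\Delta_0-\tfrac12\gamma_{\mathrm{bar}}>\Delta_0-\gamma_{\mathrm{bar}}\ge\mathfrak E_{\max}.
\]
We then replace \(\eps_0\) by \(\min\{\eps_0,\eps_\gamma\}\). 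Note that this new threshold depends only on \(\gamma_{\mathrm{bar}}\) and the model data \(A,d\), as the statement requires.

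Next, fix such an \(\eps\). Suppose \(\mu\in\Crit\Energy\) satisfies \(0<\Energygap(\mu)\le\mathfrak E_{\max}\). If \(\mu\) were not in \(\Minimizers\), the display above would give \(\Energygap(\mu)>\mathfrak E_{\max}\), a contradiction. Hence \(\mu\in\Minimizers\). But then \(\Energygap(\mu)=0\), contradicting \(\Energygap(\mu)>0\). So the set on the left of \eqref{eq:no-budget-critical-points} is in fact empty, and the inclusion holds trivially.

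Finally, for any \(0<\eta<\eta_{\mathrm{sep}}\) (with \(\eta_{\mathrm{sep}}=\eta_{\mathrm{sep}}(\eps)\) from \zcref{prop:two-low-components}, which requires \(\eps\) small, already ensured by the threshold), \zcref{prop:slope-gap-from-critical-exclusion} yields \(d_{\mathrm{set}}(\eta)>0\).

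The only point needing care is the order of quantifiers. \(\mathfrak E_{\max}\) depends on \(\eps\) through \(\Energygap(\mu_0)\), since \(m_\eps\) moves with \(\eps\). However, the hypothesis \(\mathfrak E_{\max}\le\Delta_0-\gamma_{\mathrm{bar}}\) is assumed with a fixed \(\gamma_{\mathrm{bar}}\) for the \(\eps\) under consideration. The smallness threshold \(\eps_\gamma\) from the Gibbs-limit argument is independent of \(\mu_0\) and \(\ell\), so the comparison above is uniform and no further obstacle arises. The substantive work (compactness of Gibbs critical points and the max-support bound \(Q_A\)) is entirely contained in \zcref{prop:explicit-small-temperature-excited-critical-gap} and \zcref{prop:slope-gap-from-critical-exclusion}.
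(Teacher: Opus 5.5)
Your proposal is correct and matches the paper's proof: you invoke \zcref{prop:explicit-small-temperature-excited-critical-gap} with \(\gamma=\gamma_{\mathrm{bar}}/2\), so every non-minimal Gibbs critical point has excess energy at least \(\Delta_0-\gamma_{\mathrm{bar}}/2>\mathfrak E_{\max}\), and this verifies the exclusion hypothesis of \zcref{prop:slope-gap-from-critical-exclusion}. Your extra remarks, that the excluded set is actually empty and that the threshold \(\eps_\gamma\) does not depend on \(\mu_0\) or \(\ell\), are harmless refinements of the same argument.
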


\begin{proof}
Apply \zcref{prop:explicit-small-temperature-excited-critical-gap} with
\(\gamma=\gamma_{\mathrm{bar}}/2\).  For all sufficiently small \(\eps\), every
non-minimal Gibbs critical point has excess energy at least
\(\Delta_0-\gamma_{\mathrm{bar}}/2>\mathfrak E_{\max}\).  Hence the critical-point
exclusion hypothesis in \zcref{prop:slope-gap-from-critical-exclusion}
holds, and the slope gap follows.
\end{proof}

\subsection*{Parabolic chart entry and exclusion of branch switching}

\begin{lemma}[parabolic weak-to-chart upgrade for fixed \(\eps\)]
\label{lem:parabolic-chart-entry}
Fix the local density-chart basins \(\mathcal U_\pm\) from
\zcref{prop:two-well-static}.  Let
\(\tau_{\mathrm{reg}}>0\) and \(R_{\mathrm{reg}}<\infty\) be fixed.  Then there
exists
\[
0<\eta_{\mathrm{chart}}=\eta_{\mathrm{chart}}(\eps,\tau_{\mathrm{reg}},
R_{\mathrm{reg}},\mathcal U_\pm)<\eta_{\mathrm{sep}},
\]
possibly very small as \(\eps\downarrow0\), such that the following holds.  Let
\(\mu_t=\rho_t\omega\) be a controlled solution of \eqref{eq:ODE-Net} on
\([t-\tau_{\mathrm{reg}},t]\).  If, for some \(\varsigma\in\{+,-\}\),
\begin{equation}\label{eq:parabolic-chart-entry-hyp}
\int_{t-\tau_{\mathrm{reg}}}^t\Vab{W_s}_{\frob}^2\d s
\le R_{\mathrm{reg}},
\qquad
\mu_t\in\mathcal C_\varsigma(\eta_{\mathrm{chart}}),
\end{equation}
then
\begin{equation}\label{eq:parabolic-chart-entry-concl}
\mu_t\in\mathcal U_\varsigma.
\end{equation}
\end{lemma}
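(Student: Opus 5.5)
Our plan is a compactness-and-contradiction argument at fixed $\eps$, combined with parabolic smoothing over the window $[t-\tau_{\mathrm{reg}},t]$. Suppose the claim fails for every small threshold. Then there exist $\eta_n\downarrow0$, controlled solutions $\mu^n$ on windows $[t_n-\tau_{\mathrm{reg}},t_n]$ (shifted to $[0,\tau_{\mathrm{reg}}]$), and controls $W^n$ with $\int_0^{\tau_{\mathrm{reg}}}\Vab{W^n_s}_{\frob}^2\d s\le R_{\mathrm{reg}}$. These satisfy $\mu^n_{\tau_{\mathrm{reg}}}\in\gC_\varsigma(\eta_n)$ but $\mu^n_{\tau_{\mathrm{reg}}}\notin\gU_\varsigma$. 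By symmetry under $S$ we may fix $\varsigma=+$. \zcref{lem:well-distance-modulus} gives $W_2(\mu^n_{\tau_{\mathrm{reg}}},\mubar^+)\to0$, so the terminal measures converge narrowly to $\mubar^+$. The task is therefore to upgrade this narrow convergence to $L^\infty$ convergence of the relative density $\rho^n_{\tau_{\mathrm{reg}}}/\rhobar^+-1$. Once that is done, $\Vab{h_n}_{L^\infty}<r_\ast$ for large $n$, which contradicts $\mu^n_{\tau_{\mathrm{reg}}}\notin\gU_+$. Smoothness of $h_n$ is automatic by parabolic regularity, and $\int h_n\d\mubar^+=0$ holds by mass conservation.

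The upgrade rests on uniform parabolic estimates. Equation \eqref{eq:ODE-Net} is a uniformly parabolic equation on the compact manifold $\dSphe$, with diffusion $\eps\Laplacian$ and drift $b^n_s=\Proj(\chi_A[\mu^n_s]+W^n_s\sigma)$. The attention part of the drift is $C^k$-bounded uniformly in $n$ and $s$, since $K_A$ is smooth. The control part is bounded by $\Vab{W^n_s}_{\frob}\Vab{\sigma}_{C^k}$, which is uniformly bounded in $L^2_s$ with values in $C^k$. We will apply $L^1\to L^\infty$ (Nash/Moser) smoothing on the half-window $[\tau_{\mathrm{reg}}/2,\tau_{\mathrm{reg}}]$, which works for drifts in $L^2_tL^\infty_x$ after a Gronwall-type bound. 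We will then apply Schauder or $H^s$ energy estimates for the divergence-form equation, with coefficients smooth in $x$ and square-integrable in time. Together these give $\sup_n\Vab{\rho^n_{\tau_{\mathrm{reg}}}}_{C^{1}(\dSphe)}\le C(\eps,\tau_{\mathrm{reg}},R_{\mathrm{reg}})$. Arzelà--Ascoli then yields uniform convergence of a subsequence of $\rho^n_{\tau_{\mathrm{reg}}}$. Since these densities converge narrowly to $\rhobar^+\omega$, the uniform limit must be $\rhobar^+$. Because $\inf\rhobar^+>0$, we conclude $\Vab{\rho^n_{\tau_{\mathrm{reg}}}/\rhobar^+-1}_{L^\infty}\to0$, as required.

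We expect the main obstacle to be the uniform higher-regularity estimate with a control that is only $L^2$ in time and not bounded. We will first try a weighted energy method on the half-window. Multiplying by a cutoff $\theta(s)$ vanishing at $s=\tau_{\mathrm{reg}}/2$ and running energy estimates for $\Vab{\rho_s}_{H^m}^2$, the drift terms contribute $C(1+\Vab{W_s}_{\frob})\Vab{\rho_s}_{H^m}\Vab{\rho_s}_{H^{m+1}}$. Young's inequality absorbs these into the diffusion term at the cost of $C(1+\Vab{W_s}_{\frob}^2)\Vab{\rho_s}_{H^m}^2$. Gronwall then closes the estimate, because the exponent involves only $\int\Vab{W_s}_{\frob}^2\d s\le R_{\mathrm{reg}}$. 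We start the induction from the $L^1$ mass bound, with an $L^2$ bound at a slightly later time obtained from the Nash inequality, and iterate over $m$ on nested half-windows up to $m>(d-1)/2+1$. Sobolev embedding then gives the $C^1$ bound. The resulting $\eta_{\mathrm{chart}}$ depends on $\eps$ through the parabolic constants, consistent with the remark in the statement that it may be very small as $\eps\downarrow0$.
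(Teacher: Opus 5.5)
Your proposal is correct and follows essentially the paper's argument: a contradiction sequence with $\eta_n\downarrow0$, a uniform positive-time regularity bound on the endpoint densities over the control window, Arzelà--Ascoli, identification of the uniform limit as $\rhobar^+$ via \zcref{lem:well-distance-modulus}, and $\min\rhobar^+>0$ to place $\mu_t$ in the chart. The only difference is how the regularity input is obtained. The paper gets a $C^\alpha$ bound from a parametrix expansion around the heat kernel of $\eps\Laplacian$ (\zcref{lem:positive-time-holder}). Your Nash step followed by the cutoff $H^m$ energy bootstrap, with Gronwall in $\int\Vab{W_s}_{\frob}^2$, instead gives a $C^1$ (indeed $C^\infty$) bound; this is the more elementary and robust route for controls that are only square-integrable in time.
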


\begin{remark}[why \zcref{lem:parabolic-chart-entry} is fixed-\(\eps\)]
The lemma uses that \(\rhobar^\varsigma\) is smooth and strictly positive on the
compact sphere for fixed \(\eps>0\).  The threshold
\(\eta_{\mathrm{chart}}\) may therefore deteriorate very badly as \(\eps\downarrow0\),
because \(\inf_{\dSphe}\rhobar^\varsigma\) is typically exponentially small in \(1/\eps\).
This is compatible with the fixed-\(\eps\), long-horizon formulation, where such a
loss is absorbed into \(T^*_\eps\).
\end{remark}

\subsection*{Positive-time regularization for the chart-entry lemma}

This section proves the regularization input used in
\zcref{lem:parabolic-chart-entry}.  The heat-kernel input is recalled in
\zcref{rem:grigoryan-heat-kernel-input}, following Grigor'yan
\parencite{Grigoryan_1999}.  For related nonlinear Fokker--Planck--Kolmogorov
stability and convergence results, see \parencite{BOGACHEV20193681}.  The drift
is treated by an explicit parametrix expansion.  Throughout this section
\[
m:=d-1,
\]
and \(\dSphe\) is equipped with its standard Riemannian metric and Riemannian volume
\(\omega\).  All constants below may depend on the fixed value of \(\eps>0\), but
not on the horizon \(T\).

\begin{remark}[heat-kernel input from Grigor'yan]
\label{rem:grigoryan-heat-kernel-input}
Let \(p_\eps(t,x,y)\) be the heat kernel of \(\eps\Laplacian\), so that
\[
P_t^\eps f(x):=\int_\dSphe p_\eps(t,x,y)f(y)\,\d\omega(y)
\]
is the heat semigroup.  Grigor'yan constructs the heat kernel on Riemannian
manifolds as the positive fundamental solution of the heat equation and proves
its basic semigroup, positivity, and Gaussian-estimate properties
\parencite{Grigoryan_1999}.  On the compact
sphere, the same construction, together with local parabolic regularity in finitely
many coordinate charts, yields the following fixed-time derivative and H\"older
bounds.  For every \(\alpha\in(0,1)\) and every \(\tau_0>0\), there are constants
\(G_{0,\alpha},G_{1,\alpha}<\infty\), depending only on \(\dSphe,\alpha,\tau_0\), such that
for \(0<s\le \tau_0\),
\begin{equation}\label{eq:grigoryan-base-holder}
\sup_{x\in\dSphe}\Vab{p_\eps(s,x,\cdot)}_{C^\alpha(\dSphe)}
\le
G_{0,\alpha}(\eps s)^{-(m+\alpha)/2},
\end{equation}
\begin{equation}\label{eq:grigoryan-gradient-holder}
\sup_{x\in\dSphe}\Vab{\nabla_y p_\eps(s,x,\cdot)}_{C^\alpha(\dSphe)}
\le
G_{1,\alpha}(\eps s)^{-(m+1+\alpha)/2}.
\end{equation}
These estimates are not meant to be sharp.  They are the only heat-kernel input
used below.  The drift-dependent estimates are derived by the parametrix series.
\end{remark}

\begin{lemma}[controlled drift size]
\label{lem:controlled-drift-size}
Let
\[
b_s(x)\coloneqq P_x^\perp\bigl(\chi_A[\mu_s](x)+W_s\sigma(x)\bigr)
\]
be the drift in the controlled Fokker--Planck equation
\begin{equation}\label{eq:holder-appendix-FP}
\partial_s\rho_s
=
\eps\Laplacian\rho_s-\Div(\rho_s b_s).
\end{equation}
There are model constants \(B_0,B_1<\infty\), depending only on
\(A,\sigma\) and \(\dSphe\), such that for every controlled trajectory,
\begin{equation}\label{eq:drift-size-pointwise}
\Vab{x\mapsto P_x^\perp\chi_A[\mu_s](x)}_{C^2(\dSphe)}\le B_0,
\qquad
\Vab{x\mapsto P_x^\perp(W_s\sigma(x))}_{C^2(\dSphe)}\le B_1\Vab{W_s}_{\frob},
\end{equation}
and hence, if
\begin{equation}\label{eq:control-window-bound-holder}
\int_{t-\tau}^{t}\Vab{W_s}_{\frob}^2\,\d s\le R,
\end{equation}
then
\begin{equation}\label{eq:drift-integral-constants}
\Lambda_1(\tau,R)
:=
\int_{t-\tau}^{t}\Vab{b_s}_{C^2(\dSphe)}\,\d s
\le
B_0\tau+B_1\sqrt{R\tau}.
\end{equation}
\end{lemma}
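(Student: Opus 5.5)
The statement to prove is \zcref{lem:controlled-drift-size}: two pointwise-in-time $C^2$ bounds on the drift pieces, and then an integrated bound. We handle each piece separately and combine them by Cauchy--Schwarz in time.

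\emph{Step 1: the attention drift.} For the first bound in \eqref{eq:drift-size-pointwise}, write
\[
P_x^\perp\chi_A[\mu_s](x)=\int_\dSphe \bigl(Ay-\aab{x,Ay}x\bigr)\e^{\aab{x,Ay}}\d\mu_s(y).
\]
The integrand $F(x,y)\coloneqq(Ay-\aab{x,Ay}x)\e^{\aab{x,Ay}}$ is the restriction to $\dSphe\times\dSphe$ of a smooth function on $\R^d\times\R^d$. Since $\dSphe\times\dSphe$ is compact, the $x$-derivatives of $F$ up to order two are bounded uniformly in $y$, by a constant depending only on $A$ and $d$. We then differentiate under the integral sign, which is justified by dominated convergence because $\mu_s$ is a probability measure and the derivatives are bounded. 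Using $\mu_s(\dSphe)=1$, this gives the bound $B_0\coloneqq\sup_{y}\Vab{F(\cdot,y)}_{C^2(\dSphe)}$, which is independent of $\mu_s$ and hence of the trajectory.

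One notational point needs checking. $C^2(\dSphe)$ norms of tangent fields can be defined either through covariant derivatives or by viewing the field as an $\R^d$-valued map and using tangential derivatives of its smooth extension. These two choices differ only by terms involving the curvature and the projection $P_x^\perp$, all of which are bounded on the compact sphere. So $B_0$ changes by at most a fixed factor, and we simply fix one convention.

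\emph{Step 2: the FFN drift.} The map $W\mapsto P_x^\perp(W\sigma(x))$ is linear. Writing $W=\sum_{i,j}W_{ij}E_{ij}$ in the standard matrix basis gives
\[
\Vab{P_\cdot^\perp(W\sigma)}_{C^2}\le\sum_{i,j}|W_{ij}|\,\Vab{P_\cdot^\perp(E_{ij}\sigma)}_{C^2}.
\]
Each term on the right is finite because $\sigma$ is smooth and $P_x^\perp$ is smooth in $x$. By Cauchy--Schwarz over the $dp$ indices, $\sum_{i,j}|W_{ij}|\le\sqrt{dp}\,\Vab{W}_{\frob}$. This yields the second bound in \eqref{eq:drift-size-pointwise} with
\[
B_1\coloneqq\sqrt{dp}\,\max_{i,j}\Vab{P_\cdot^\perp(E_{ij}\sigma)}_{C^2}.
\]

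\emph{Step 3: integrate in time.} By the triangle inequality, $\Vab{b_s}_{C^2}\le B_0+B_1\Vab{W_s}_{\frob}$. Integrating over $[t-\tau,t]$ and applying Cauchy--Schwarz in time,
\[
\int_{t-\tau}^t\Vab{W_s}_{\frob}\d s\le\sqrt{\tau}\Bigl(\int_{t-\tau}^t\Vab{W_s}_{\frob}^2\d s\Bigr)^{1/2}\le\sqrt{R\tau}
\]
under \eqref{eq:control-window-bound-holder}. This gives \eqref{eq:drift-integral-constants}.

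None of these steps is a genuine obstacle. The only care needed is the bookkeeping in Step 1: the bound must be uniform over all probability measures $\mu_s$, and the $C^2$ norm convention must be fixed once.
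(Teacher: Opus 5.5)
Your proposal is correct and follows essentially the same route as the paper. The paper also obtains the $\mu$-independent $C^2$ bound on the attention drift from uniform bounds on $x$-derivatives of the smooth kernel over the compact set $\dSphe\times\dSphe$. It phrases this as a tangent projection of the gradient of the interaction potential, using derivatives of $K_A$ up to order three, which amounts to your direct differentiation of $(Ay-\langle x,Ay\rangle x)\e^{\langle x,Ay\rangle}$. It then gets the second bound from linearity in $W$ and smoothness of $\sigma$, and the integrated estimate from Cauchy--Schwarz in time; your explicit constant $B_1=\sqrt{dp}\,\max_{i,j}\|P_\cdot^\perp(E_{ij}\sigma)\|_{C^2}$ just makes explicit what the paper leaves implicit.
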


\begin{proof}
The map \(x\mapsto K_A(x,y)\) is smooth on the compact set \(\dSphe\times\dSphe\), so all
spatial derivatives up to order three of \(K_A\) are uniformly bounded.  Therefore
\(x\mapsto P_x^\perp\chi_A[\mu](x)\), which is obtained by tangent projection of the gradient of a smooth interaction potential, has a uniform \(C^2\)-bound independent of \(\mu\).  The feature map \(\sigma\) is
smooth on compact \(\dSphe\), and \(x\mapsto P_x^\perp(W\sigma(x))\) is linear in \(W\),
so the second estimate in \eqref{eq:drift-size-pointwise} follows.  The integral
bound is Cauchy--Schwarz.
\end{proof}

\begin{lemma}[parametrix H\"older bound with explicit dependence]
\label{lem:parametrix-holder-input}
Fix \(\eps,\tau>0\), \(R<\infty\), and \(\alpha\in(0,1)\).  Let
\(p^b(r,x;s,y)\) be the transition density associated with
\[
\partial_s\rho_s=\eps\Laplacian\rho_s-\Div(\rho_s b_s)
\]
on \([t-\tau,t]\), where \(b\) satisfies \eqref{eq:drift-integral-constants}.  Then
there is a constant \(C_{P,\alpha}=C_{P,\alpha}(\dSphe,m,\alpha,\tau)>0\) such that
\begin{equation}\label{eq:parametrix-holder-bound}
\sup_{x\in\dSphe}\Vab{p^b(t-\tau,x;t,\cdot)}_{C^\alpha(\dSphe)}
\le
C_{\mathrm{reg}}(\eps,\tau,R),
\end{equation}
where one may take
\begin{equation}\label{eq:Creg-explicit}
C_{\mathrm{reg}}(\eps,\tau,R)
:=
C_{P,\alpha}(\eps\tau)^{-(m+\alpha)/2}
\exp\left\{
C_{P,\alpha}
\left(
1+\frac{\Lambda_1(\tau,R)}{\sqrt{\eps\tau}}
+\frac{\Lambda_1(\tau,R)^2}{\eps}
\right)
\right\}.
\end{equation}
\end{lemma}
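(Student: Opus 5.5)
We plan to use a parametrix (Levi) expansion around the heat kernel \(p_\eps\) of \(\eps\Laplacian\). Write the forward Fokker--Planck equation in Duhamel form: for \(t-\tau\le r<s\le t\),
\[
p^b(r,x;s,y)=p_\eps(s-r,x,y)+\int_r^s\int_\dSphe p^b(r,x;u,z)\,b_u(z)\cdot\nabla_z p_\eps(s-u,z,y)\,\d\omega(z)\d u .
\]
This identity follows by testing \eqref{eq:holder-appendix-FP} against \(p_\eps(s-u,\cdot,y)\) and integrating by parts the divergence term. Iterating it produces the series \(p^b=\sum_{k\ge0}p^{(k)}\), with \(p^{(0)}=p_\eps\) and \(p^{(k+1)}(r,x;s,y)=\int_r^s\int p^{(k)}(r,x;u,z)\,b_u(z)\cdot\nabla_z p_\eps(s-u,z,y)\d\omega\d u\). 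We then estimate each term in \(C^\alpha\) in the final variable \(y\). The only heat-kernel inputs are \eqref{eq:grigoryan-base-holder}--\eqref{eq:grigoryan-gradient-holder}; the drift enters only through \(\Lambda_1(\tau,R)\) from \zcref{lem:controlled-drift-size}.

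First, we bound the terms in \(L^1\) in \(z\). The \(L^1(\d\omega(z))\) mass of \(p^{(k)}(r,x;u,\cdot)\) is controlled using the Gaussian gradient bound \(\Vab{\nabla_z p_\eps(v,\cdot,y)}_{L^1}\lesssim(\eps v)^{-1/2}\). This bound is obtained from Grigor'yan's Gaussian estimates as recalled in \zcref{rem:grigoryan-heat-kernel-input}; alternatively, one moves the derivative onto \(b\,p^{(k)}\) and uses the \(C^1\) bound on \(b\). One step of the iteration then costs a factor \(\int\Vab{b_u}_{C^2}(\eps(s-u))^{-1/2}\d u\). Splitting the time interval into the half near \(s\), where the singularity is integrable against a bounded density, and the half away from \(s\), where \((\eps(s-u))^{-1/2}\le(\eps\tau/2)^{-1/2}\) after subdividing, yields per-step factors of order \(\Lambda_1/\sqrt{\eps\tau}\). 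Near-diagonal pieces give factors \(\Lambda_1^2/\eps\), obtained by Cauchy--Schwarz in time against \((s-u)^{-1/2}\). Summing the series with the usual \(1/k!\) simplex-volume gain, or Gamma-function gain, gives an exponential of \(C(1+\Lambda_1/\sqrt{\eps\tau}+\Lambda_1^2/\eps)\).

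Second, we obtain the Hölder norm in \(y\) at the final time \(s=t\). Split \([t-\tau,t]\) at the midpoint. On the second half, the outermost kernel \(\nabla_z p_\eps(t-u,z,\cdot)\) is measured in \(C^\alpha\) via \eqref{eq:grigoryan-gradient-holder}, while the inner factor \(p^{(k)}\) is measured in \(L^1\) from the first step. The resulting singularity \((\eps(t-u))^{-(m+1+\alpha)/2}\) is not integrable, so near \(u=t\) we instead use the Chapman--Kolmogorov/semigroup trick. Concretely, we write \(p^b(t-\tau,x;t,\cdot)=\int p^b(t-\tau,x;t-\tau/2,z)\,p^b(t-\tau/2,z;t,\cdot)\d\omega(z)\). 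We then apply the Duhamel identity only on the last half-window with base kernel \(p_\eps(\tau/2)\), controlled by \eqref{eq:grigoryan-base-holder}, which gives the prefactor \((\eps\tau)^{-(m+\alpha)/2}\). The \(\alpha\)-Hölder seminorm in \(y\) of the correction terms is handled by moving the \(y\)-difference onto the heat kernel and interpolating between the \(L^\infty\) and gradient bounds.

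The main obstacle is the time singularity of \(\nabla p_\eps\) at \(u\to s\) in \(C^\alpha\): naive bounds diverge. Our first approach is the standard parametrix device of estimating \(p^{(k)}\) in the weighted norm \(\sup_u(\eps(u-r))^{(m+\alpha)/2}\Vab{p^{(k)}(r,x;u,\cdot)}_{C^\alpha}\). We use \(b\in C^2\) to integrate by parts once, so that the derivative falls on \(b\,p^{(k)}\) instead, with \(\Vab{\Div(b\,p)}_{C^{\alpha-1}}\)-type control. This reduces the kernel singularity to \((\eps(s-u))^{-1/2}\), which is integrable. The resulting Beta-integrals produce the factorial decay, and the constants collected in the process give \eqref{eq:Creg-explicit}.
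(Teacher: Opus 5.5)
Your route is the same as the paper's. You use a Levi/parametrix series around $p_\eps$, with \eqref{eq:grigoryan-base-holder}--\eqref{eq:grigoryan-gradient-holder} as the only heat-kernel input, and sum a Beta/Gamma gain into an exponential. The step that fails is the one producing the per-insertion factor $\Lambda_1/\sqrt{\eps\tau}$.

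\textbf{The gap.} The hypothesis controls $b$ only through $\Lambda_1=\int\|b_u\|_{C^2}\,\mathrm{d}u$. So the time profile $u\mapsto\|b_u\|_{C^2}$ is merely integrable, not bounded. In the application it is bounded by $B_0+B_1\|W_u\|_{\frob}$, i.e.\ it is only square-integrable. Consequently $\int_r^s\|b_u\|_{C^2}(\eps(s-u))^{-1/2}\,\mathrm{d}u$ is not controlled by $\Lambda_1$ at all. If $\|b_u\|$ carries mass $\Lambda$ on $[s-\delta,s]$, this integral is of size $\Lambda/\sqrt{\eps\delta}$. The $n$-fold simplex bound is then of size $(C\Lambda/\sqrt{\eps\delta})^n/\Gamma(1+n/2)$, and both blow up as $\delta\to0$.

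Your Cauchy--Schwarz step against $(s-u)^{-1/2}$ would need $\int(s-u)^{-1}\,\mathrm{d}u<\infty$. In fact even the $L^2$-in-time control available here is exactly critical. The profile $\|b_u\|=(s-u)^{-1/2}|\log(s-u)|^{-1}$ is square-integrable near $s$, yet $\int\|b_u\|(s-u)^{-1/2}\,\mathrm{d}u=\infty$.

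The same loss reappears in your third paragraph, since the $C^{\alpha-1}\to C^\alpha$ smoothing still costs $(\eps(s-u))^{-1/2}$ against $\|b_u\|_{C^1}$. That weighted iteration also breaks at $u=r$. There $\int_r^s(s-u)^{-1/2}(u-r)^{-(m+\alpha)/2}\,\mathrm{d}u$ diverges once $m+\alpha\ge2$; this part is repairable by using your $L^1$ bound on the early half. The midpoint Chapman--Kolmogorov split does not help either, because the correction terms on $[t-\tau/2,t]$ still carry the singularity at $u\to t$.

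\textbf{Relation to the paper and possible repairs.} The paper's sketch asserts the same term bound \eqref{eq:parametrix-term-bound} (``the singular factors in time are integrable in the iterated simplex'') without addressing this. So you are not missing an ingredient the paper supplies. Still, neither argument as written yields a bound depending on $b$ only through $\Lambda_1$. To close the gap you need to exploit the transport structure, rather than treat $\operatorname{div}(b\rho)$ as a perturbation costing half a derivative per insertion. Two options:
\begin{itemize}
\item \emph{Lagrangian coordinates.} Change variables along the flow of $b$, whose $C^2$ norm is $\lesssim\exp(C\Lambda_1)$. The drift then disappears. What remains is $\eps$ times a uniformly elliptic operator whose coefficients are $C^1$ in space and merely measurable in time, with bounds depending only on $\Lambda_1$. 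Parabolic estimates for such coefficients then give the H\"older bound.
\item \emph{Energy estimates.} Commutator bounds give $|\langle\operatorname{div}(b\rho),\rho\rangle_{H^k}|\lesssim\|b\|_{C^{k+1}}\|\rho\|_{H^k}^2$ with no derivative loss. Gronwall then works in terms of $\int\|b_u\|_{C^{k+1}}\,\mathrm{d}u$, which is finite here because $K_A$ and $\sigma$ are smooth. Nash-type smoothing and Sobolev embedding finish the argument.
\end{itemize}
If you instead add the assumption $\sup_u\|b_u\|_{C^1}<\infty$, your scheme does work: treat the early half of each time integral in $L^1$ and the late half in $C^\alpha$. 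Even then, a $(s-u)^{-1/2}$ kernel yields only the $\Gamma(1+n/2)^{-1}$ gain, not $1/k!$. That gain sums to $\exp(C\,\Xi^2)$ with $\Xi=\sup_u\|b_u\|_{C^1}\sqrt{\tau/\eps}$. So the exponent in \eqref{eq:Creg-explicit} should come from $\Xi^2$, not from a per-step factor containing $\Lambda_1^2/\eps$, which would put $\Lambda_1^4/\eps^2$ in the exponent.
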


\begin{proof}
We spell out the parametrix estimate because this is the point where we want to
use only the unperturbed heat-kernel theory.  Write
\[
H(r,x;s,y):=p_\eps(s-r,x,y),
\qquad r<s.
\]
The kernel of the drifted equation is represented by the series
\begin{equation}\label{eq:parametrix-series}
p^b(r,x;s,y)
=
H(r,x;s,y)+\sum_{n\ge1}P_n(r,x;s,y),
\end{equation}
where
\[
P_{n+1}(r,x;s,y)
:=
\int_r^s\!\int_\dSphe
P_n(r,x;u,z)
\,\mathcal B_u^z H(u,z;s,y)
\,\d\omega(z)\d u,
\]
and
\[
\mathcal B_u^{(z)} H(u,z;s,y)
:=-\operatorname{div}_z\bigl(b_u(z)H(u,z;s,y)\bigr).
\]
Equivalently, after integrating by parts in \(z\), the derivative may be placed on
\(P_n\).  The two forms are identical on the compact boundaryless manifold \(\dSphe\).
Using \eqref{eq:grigoryan-base-holder}--\eqref{eq:grigoryan-gradient-holder}, the
semigroup property, and the compactness of \(\dSphe\), there is a constant
\(C=C(\dSphe,m,\alpha,\tau)\) such that the standard parametrix estimate gives
\begin{equation}\label{eq:parametrix-term-bound}
\sup_{x\in\dSphe}\Vab{P_n(t-\tau,x;t,\cdot)}_{C^\alpha(\dSphe)}
\le
C(\eps\tau)^{-(m+\alpha)/2}
\frac{\Xi(\eps,\tau,R)^n}{\Gamma(1+n/2)},
\end{equation}
where
\begin{equation}\label{eq:Xi-parametrix-def}
\Xi(\eps,\tau,R)
:=
C\left(
\frac{\Lambda_1(\tau,R)}{\sqrt{\eps\tau}}
+
\frac{\Lambda_1(\tau,R)^2}{\eps}
\right).
\end{equation}
Here the factor \(\Gamma(1+n/2)^{-1}\) is the usual Beta-integral gain from the
successive heat-kernel convolutions.  More explicitly, each insertion of the drift
contributes one spatial derivative of the heat kernel and one factor
\(\Vab{b_u}_{C^1}\).  The singular factors in time are integrable in the iterated
simplex, and the total contribution is bounded by the Beta-integral factor in
\eqref{eq:parametrix-term-bound}.  The additional quadratic term in
\eqref{eq:Xi-parametrix-def} is a convenient way to dominate the commutator terms
where derivatives hit the spatially varying drift.

Summing \eqref{eq:parametrix-term-bound} over \(n\ge0\) and using
\[
\sum_{n\ge0}\frac{\Xi^n}{\Gamma(1+n/2)}
\le
C_\alpha \exp\{C_\alpha(1+\Xi^2)\}
\]
yields \eqref{eq:parametrix-holder-bound}--\eqref{eq:Creg-explicit}, after changing
\(C_{P,\alpha}\).  The construction is local in coordinate charts, but no boundary
terms appear because \(\dSphe=\dSphe\) is compact and without boundary; a partition of
unity patches the local estimates.
\end{proof}

\begin{lemma}[positive-time H\"older regularization]
\label{lem:positive-time-holder}
Fix \(0<\eps\), \(\tau>0\), and \(R<\infty\).  Let
\(s\mapsto\mu_s=\rho_s\omega\) be a controlled solution of \eqref{eq:ODE-Net} on
\([t-\tau,t]\), started from an arbitrary probability measure at time \(t-\tau\),
and assume \eqref{eq:control-window-bound-holder}.  Then \(\mu_t\) has a density
\(\rho_t\in C^\alpha(\dSphe)\), with \(\alpha\) as in
\zcref{lem:parametrix-holder-input}, and
\begin{equation}\label{eq:positive-time-holder-final}
\Vab{\rho_t}_{C^\alpha(\dSphe)}
\le C_{\mathrm{reg}}(\eps,\tau,R).
\end{equation}
\end{lemma}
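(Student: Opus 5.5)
The plan is to write \(\rho_t\) through the transition density of the linear Fokker--Planck equation obtained by \emph{freezing} the drift. Given the trajectory \((\mu_s)_{s\in[t-\tau,t]}\), set \(b_s(x)=P_x^\perp(\chi_A[\mu_s](x)+W_s\sigma(x))\). With \(b\) fixed, \eqref{eq:holder-appendix-FP} is a linear, uniformly parabolic equation on the compact boundaryless manifold \(\dSphe\). Its drift is \(C^2\) in space and integrable in time by \zcref{lem:controlled-drift-size}. In particular, \(\Lambda_1(\tau,R)\le B_0\tau+B_1\sqrt{R\tau}\) under \eqref{eq:control-window-bound-holder}. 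The nonlinear solution \(\mu\) is then a (weak, distributional) solution of this linear equation with initial datum \(\mu_{t-\tau}\). By uniqueness of measure-valued solutions of linear Fokker--Planck equations with bounded smooth drift, it is given by the Chapman--Kolmogorov representation
\[
\rho_t(y)=\int_{\dSphe}p^b(t-\tau,x;t,y)\,\d\mu_{t-\tau}(x).
\]

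The Hölder bound then follows directly. Because \(\mu_{t-\tau}\) is a probability measure, Minkowski's inequality for the \(C^\alpha\) seminorm and sup norm gives
\[
\Vab{\rho_t}_{C^\alpha(\dSphe)}\le\sup_{x\in\dSphe}\Vab{p^b(t-\tau,x;t,\cdot)}_{C^\alpha(\dSphe)}.
\]
The right-hand side is bounded by \(C_{\mathrm{reg}}(\eps,\tau,R)\) from \zcref{lem:parametrix-holder-input}. Integrating against \(\d\mu_{t-\tau}\) is legitimate because \(x\mapsto p^b(t-\tau,x;t,y)\) is continuous, which follows from the uniform convergence of the parametrix series \eqref{eq:parametrix-series}. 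The same representation shows that \(\mu_t\) is absolutely continuous with this density. No regularity of \(\mu_{t-\tau}\) is used, which is why the statement allows an arbitrary initial probability measure.

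The main obstacle is the identification step: justifying that the given (weak) controlled solution coincides with the one represented by the kernel \(p^b\). I would handle it by duality. For a smooth test function \(f\), solve the backward Kolmogorov equation
\[
\partial_s\psi_s+\eps\Laplacian\psi_s+b_s\cdot\nabla\psi_s=0,\qquad \psi_t=f,
\]
which has a classical solution built from the same parametrix. Testing the weak formulation of \eqref{eq:holder-appendix-FP} against \(\psi_s\) gives \(\int f\d\mu_t=\int\psi_{t-\tau}\d\mu_{t-\tau}\). This is exactly \(\int f\rho_t\d\omega\) with \(\rho_t\) as above. Time-measurability of \(W_s\) is only \(L^2\), so the backward equation must be taken in the Carathéodory (a.e.-in-time) sense. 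The \(C^2\) spatial bounds keep this routine.
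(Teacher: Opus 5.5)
Your proposal is correct and follows the paper's proof: the paper likewise writes \(\rho_t(y)=\int p^b(t-\tau,x;t,y)\,\d\mu_{t-\tau}(x)\) for the frozen-drift kernel, takes the \(C^\alpha_y\)-norm using that \(\mu_{t-\tau}\) is a probability measure, and concludes from \zcref{lem:parametrix-holder-input}. The only difference is that the paper simply invokes the transition-kernel representation, whereas you justify it by duality with the backward Kolmogorov equation (in the Carathéodory sense, since \(W\) is only \(L^2\) in time), which is a sound way to fill in that step.
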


\begin{proof}
By the transition-kernel representation for uniformly parabolic equations with
bounded smooth drift,
\[
\rho_t(y)=\int_\dSphe p^b(t-\tau,x;t,y)\,\d\mu_{t-\tau}(x).
\]
Taking the \(C^\alpha_y\)-norm and using that \(\mu_{t-\tau}\) is a probability
measure gives \eqref{eq:positive-time-holder-final} directly from
\eqref{eq:parametrix-holder-bound}.
\end{proof}

\begin{proof}[Proof of \zcref{lem:parabolic-chart-entry}]
We prove the positive branch.  Fix \(\tau_{\mathrm{reg}}>0\) and
\(R_{\mathrm{reg}}<\infty\).  \zcref{lem:positive-time-holder} gives a uniform
bound
\begin{equation}\label{eq:chart-entry-Calpha-bound}
\Vab{\rho_t}_{C^\alpha(\dSphe)}
\le
C_{\mathrm{reg}}(\eps,\tau_{\mathrm{reg}},R_{\mathrm{reg}})
\end{equation}
for every endpoint of a controlled window satisfying
\eqref{eq:parabolic-chart-entry-hyp}.

Assume, for contradiction, that no positive \(\eta_{\mathrm{chart}}\) exists.  Then
there are \(\eta_n\downarrow0\) and endpoint measures
\(\mu_n=\rho_n\omega\in\mathcal C_+(\eta_n)\) satisfying the window control bound,
but \(\mu_n\notin\mathcal U_+\).  By \eqref{eq:chart-entry-Calpha-bound},
\(\rho_n\) is bounded in \(C^\alpha(\dSphe)\).  Passing to a subsequence, \(\rho_n\)
converges uniformly to some continuous density \(\rho_\infty\).  On the other hand,
\zcref{lem:well-distance-modulus} gives
\[
W_2(\mu_n,\mubar^+)\le \Theta_{\mathrm{well}}(\eta_n)\to0.
\]
Hence \(\mu_n\rightharpoonup\mubar^+=\rhobar^+\omega\), and the uniform limit must
satisfy \(\rho_\infty=\rhobar^+\).  Therefore
\[
\Vab{\rho_n-\rhobar^+}_{L^\infty(\dSphe)}\to0.
\]
For fixed \(\eps>0\), \(\rhobar^+\) is smooth and strictly positive on compact
\(\dSphe\).  If
\[
m_\eps^+:=\min_{\dSphe}\rhobar^+>0,
\]
and if \(r_*\) is the density-chart radius in the definition of \(\mathcal U_+\), then
for all large \(n\),
\[
\Vab{\rho_n-\rhobar^+}_{L^\infty(\dSphe)}<r_*m_\eps^+.
\]
Thus \(h_n:=\rho_n/\rhobar^+-1\) satisfies
\(\Vab{h_n}_{L^\infty}<r_*\) and \(\int h_n\,\d\mubar^+=0\), so
\(\mu_n=(1+h_n)\mubar^+\in\mathcal U_+\), a contradiction.  The negative branch
follows by applying the antipodal map.
\end{proof}

For definiteness, and to make all constants functions of the fixed data, we set
\begin{equation}\label{eq:reg-parameter-choice}
\tau_{\mathrm{reg}}:=1,
\qquad
R_{\mathrm{reg}}:=R_{\mathrm{opt}}+1.
\end{equation}
Let \(\eta_{\mathrm{chart}}\) be the threshold supplied by
\zcref{lem:parabolic-chart-entry} with the choices in
\eqref{eq:reg-parameter-choice}, and set
\begin{equation}\label{eq:eta-effective-def}
\eta_{\mathrm{eff}}
:=
\min\{\eta_{\mathrm{sep}},\eta_{\mathrm{chart}}\}.
\end{equation}

\begin{lemma}[regularized entry into the weak wells]\label{lem:regularized-set-entry}
Fix \(0<\eta<\eta_{\mathrm{sep}}\) satisfying \eqref{eq:set-gap-positive}.  If
\begin{equation}\label{eq:L-regularized-entry-condition}
L>\tau_{\mathrm{reg}}+\frac{\mathfrak D_{\max}}{d_{\mathrm{set}}(\eta)},
\end{equation}
and \(T>L+\tau_{\mathrm{reg}}\), then every global minimizer satisfies
\begin{equation}\label{eq:regularized-set-entry-times}
\exists s_0^-\in[\tau_{\mathrm{reg}},L],
\qquad
\exists s_0^+\in[T-L,T]
\quad\text{such that}\quad
\mu_{s_0^-}^{*,T},\mu_{s_0^+}^{*,T}\in\mathcal N_\eta.
\end{equation}
Moreover, if \(\eta\le\eta_{\mathrm{chart}}\) and one of the selected times belongs
to \(\mathcal C_\varsigma(\eta)\), then the corresponding state belongs to
\(\mathcal U_\varsigma\).
\end{lemma}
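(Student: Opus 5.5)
The argument is a pigeonhole estimate on the time-integrated dissipation. The inputs are the horizon-uniform budget \eqref{eq:Dmax-def}, the slope gap \eqref{eq:set-gap-positive}, and the a priori bound \eqref{eq:Emax-def}. Fix an optimal pair $(\mu^{*,T},W^{*,T})$ and write $\mu_t=\mu^{*,T}_t$. By \eqref{eq:Emax-def}, $\Energygap(\mu_t)\le\mathfrak E_{\max}$ for all $t\in[0,T]$. So whenever $\mu_t\notin\mathcal N_\eta$, the definition \eqref{eq:set-gap-def} gives $\Dissipation(\mu_t)\ge d_{\mathrm{set}}(\eta)>0$.

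Consider the left window $[\tau_{\mathrm{reg}},L]$, which has length $L-\tau_{\mathrm{reg}}>\mathfrak D_{\max}/d_{\mathrm{set}}(\eta)$ by \eqref{eq:L-regularized-entry-condition}. Suppose $\mu_t\notin\mathcal N_\eta$ for every $t$ in this window. Then
\[
\int_0^T\Dissipation(\mu_t)\d t\ge\int_{\tau_{\mathrm{reg}}}^{L}\Dissipation(\mu_t)\d t\ge (L-\tau_{\mathrm{reg}})\,d_{\mathrm{set}}(\eta)>\mathfrak D_{\max},
\]
which contradicts \eqref{eq:Dmax-def}. Hence some $s_0^-$ in the window has $\mu_{s_0^-}\in\mathcal N_\eta$. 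The right window $[T-L,T]$ is handled the same way: it has length $L>L-\tau_{\mathrm{reg}}$ and lies in $[0,T]$ because $T>L$. The same contradiction produces $s_0^+$.

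The estimate needs the trajectory to be regular enough for $\Dissipation(\mu_t)$ to be defined and for \eqref{eq:global-EDI-forced} to integrate to \eqref{eq:Dmax-def}. For $t>0$ this holds by parabolic smoothing at fixed $\eps$. Since both windows lie in $(0,T]$ (for the left one because $\tau_{\mathrm{reg}}>0$), no issue arises at $t=0$. If $\Dissipation(\mu_t)$ is only measurable in $t$, the same argument applies with ``for a.e.\ $t$'' in place of ``for every $t$'', and it still yields a positive-measure set of entry times.

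For the ``moreover'' part, suppose $\eta\le\eta_{\mathrm{chart}}$ and $\mu_{s}\in\mathcal C_\varsigma(\eta)\subset\mathcal C_\varsigma(\eta_{\mathrm{chart}})$ at a selected time $s\in\{s_0^-,s_0^+\}$. The sets are nested because the sublevel sets increase with $\eta$. We apply \zcref{lem:parabolic-chart-entry} on the window $[s-\tau_{\mathrm{reg}},s]$. This window lies in $[0,T]$: for $s_0^-$ because $s_0^-\ge\tau_{\mathrm{reg}}$, and for $s_0^+$ because $T-L\ge\tau_{\mathrm{reg}}$ (from $T>L+\tau_{\mathrm{reg}}$). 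The lemma's control hypothesis follows from \eqref{eq:global-control-budget}: $\int\Vab{W_s}_\frob^2\le R_{\mathrm{opt}}<R_{\mathrm{reg}}$ by \eqref{eq:reg-parameter-choice}. The lemma then gives $\mu_s\in\mathcal U_\varsigma$.

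The only delicate point is the regularity needed to integrate the dissipation inequality along an optimal trajectory. I would take it from the standing parabolic well-posedness at fixed $\eps$, which is already used to derive \eqref{eq:Dmax-def}.
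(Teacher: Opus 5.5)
Your proposal is correct and matches the paper's proof: the paper also obtains the contradiction $(L-\tau_{\mathrm{reg}})\,d_{\mathrm{set}}(\eta)>\mathfrak D_{\max}$ on each window, and also proves the ``moreover'' part with the parabolic chart-entry lemma, using the budget $R_{\mathrm{opt}}\le R_{\mathrm{reg}}$ on a past window of length $\tau_{\mathrm{reg}}$. Your extra checks are harmless refinements that the paper leaves implicit: the nesting $\mathcal C_\varsigma(\eta)\subset\mathcal C_\varsigma(\eta_{\mathrm{chart}})$, that both past windows lie in $[0,T]$, and the a.e.-in-time remark.
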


\begin{proof}
If \(\mu_t^{*,T}\notin\mathcal N_\eta\) for every
\(t\in[\tau_{\mathrm{reg}},L]\), then \eqref{eq:Emax-def} and
\eqref{eq:set-gap-def} imply
\[
\int_0^T\Dissipation (\mu_t^{*,T})\d t
\ge
(L-\tau_{\mathrm{reg}})d_{\mathrm{set}}(\eta),
\]
contradicting \eqref{eq:Dmax-def} and
\eqref{eq:L-regularized-entry-condition}.  The proof on \([T-L,T]\) is the same averaging argument.  Since \(T>L+\tau_{\mathrm{reg}}\), every
\(s_0^+\in[T-L,T]\) has a past regularizing window of length
\(\tau_{\mathrm{reg}}\).  The final assertion follows from
\zcref{lem:parabolic-chart-entry}, because the global control bound
\eqref{eq:Ropt-def} gives
\[
\int_{t-\tau_{\mathrm{reg}}}^{t}\Vab{W_s^{*,T}}_{\frob}^2\d s
\le R_{\mathrm{opt}}\le R_{\mathrm{reg}}.
\]
\end{proof}

\begin{lemma}[same-branch entry from the sign-changing barrier]
\label{lem:same-branch-from-sign-barrier}
Assume \zcref{ass:nolargegap}.  Suppose that
\begin{equation}\label{eq:finite-sign-barrier-hyp}
\eta_{\mathrm w,\eps}\ge \frac34\Delta_0,
\end{equation}
and let \(0<\eta\le\Delta_0/4\).  Let
\((\mu^{*,T},W^{*,T})\) be a global minimizer of \eqref{eq:mf-optimal-control}.  If
\[
 s_-<s_+,
 \qquad
 \mu_{s_-}^{*,T},\mu_{s_+}^{*,T}\in\mathcal N_\eta,
\]
then \(\mu_{s_-}^{*,T}\) and \(\mu_{s_+}^{*,T}\) belong to the same connected
component of \(\mathcal N_\eta\).
\end{lemma}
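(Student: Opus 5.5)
We argue by contradiction, following the sketch after \zcref{lem:same-component-entry}. Suppose $\mu_{s_-}^{*,T}\in\mathcal C_+(\eta)$ and $\mu_{s_+}^{*,T}\in\mathcal C_-(\eta)$ (the other case follows by the antipodal map). Along the trajectory the order parameter $t\mapsto m_1(\mu_t^{*,T})$ is continuous, since $\mu^{*,T}\in C([0,T];\PdSphe)$ narrowly and $x\mapsto x_1$ is continuous. We first check that $m_1>0$ on $\mathcal C_+(\eta)$ and $m_1<0$ on $\mathcal C_-(\eta)$. Indeed, by \eqref{eq:finite-sign-barrier-hyp} every $\mu$ with $m_1(\mu)=0$ has $\Energygap(\mu)\ge\frac34\Delta_0>\eta$, so $m_1$ never vanishes on the sublevel $\{\Energygap\le\eta\}$. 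Each $\mathcal C_\pm(\eta)$ is path connected by \zcref{lem:gf-deformation-one-well}, and $m_1(\mubar^+)>0$ by the Laplace localization near $e_1$ in \zcref{thm:package}. Hence $m_1$ has constant sign on each component, with $m_1(\mubar^-)=-m_1(\mubar^+)$. The intermediate value theorem then yields $t_0\in(s_-,s_+)$ with $m_1(\mu_{t_0}^{*,T})=0$, and therefore $\Energygap(\mu_{t_0}^{*,T})\ge\frac34\Delta_0$.

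Next we integrate the forced energy inequality \eqref{eq:global-EDI-forced} from $s_-$ to $t_0$, dropping the dissipation term:
\[
\tfrac34\Delta_0-\eta\le\Energygap(\mu_{t_0}^{*,T})-\Energygap(\mu_{s_-}^{*,T})\le C_\sigma\int_{s_-}^{t_0}\Vab{W_t^{*,T}}_{\frob}^2\d t .
\]
Since $\eta\le\Delta_0/4$, the left side is at least $\Delta_0/2$. So the control energy spent on this window is at least $\Delta_0/(2C_\sigma)$.

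Finally we compare this with the budget. The global bound \eqref{eq:global-control-budget} gives $\frac{\regparam}{2}\int_0^T\Vab{W^{*,T}_t}_{\frob}^2\d t\le\osc\ell$. Combined with the previous step,
\[
\frac{\regparam\Delta_0}{4C_\sigma}\le\osc\ell,
\]
which contradicts the second inequality of \zcref{ass:nolargegap}, $\osc\ell<\regparam\Delta_0/(8C_\sigma)$. The margin has a factor of two to spare. This closes the argument.

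The main point to verify is the sign identification of the components, in particular that $m_1$ is strictly positive on all of $\mathcal C_+(\eta)$ rather than only near $\mubar^+$. The path-connectedness of $\mathcal C_+(\eta)$ combined with the nonvanishing of $m_1$ handles this cleanly. A secondary check is that \eqref{eq:global-EDI-forced} may be integrated on $[s_-,t_0]$. Here $\mu_t$ is smooth for $t>0$ by parabolicity, and if $s_-=0$ we pass to the limit using narrow continuity and the lower semicontinuity of $\Energygap$.
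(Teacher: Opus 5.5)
Your proof is correct and follows the paper's argument: the intermediate value theorem for $m_1$, the barrier $\eta_{\mathrm w,\eps}\ge\frac34\Delta_0$, the forced energy inequality \eqref{eq:global-EDI-forced}, and a cost comparison with the same threshold $\regparam\Delta_0/(4C_\sigma)$. You differ in two minor ways: you obtain the sign separation of $\mathcal C_\pm(\eta)$ from the barrier plus connectedness instead of choosing $\mathcal V_\pm\subset\{m_1\gtrless0\}$, and you close with the global budget \eqref{eq:global-control-budget} instead of the competitor that switches the control off after $s_-$. One small correction: in the edge case $s_-=0$, lower semicontinuity points the wrong way, since you need $\limsup_{s\downarrow0}\Energygap(\mu_s^{*,T})\le\Energygap(\mu_0)$; cite instead the integrated energy inequality from $t=0$, which the paper already uses for \eqref{eq:Emax-def} (in the application $s_-\ge\tau_{\mathrm{reg}}$ anyway).
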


\begin{proof}
Assume, to get a contradiction, that
\(\mu_{s_-}^{*,T}\in\mathcal C_+(\eta)\) and
\(\mu_{s_+}^{*,T}\in\mathcal C_-(\eta)\), up to exchanging the two signs.  By
choosing the weak neighborhoods \(\mathcal V_\pm\) inside the sign regions
\(\{m_1>0\}\) and \(\{m_1<0\}\), respectively, we have
\[
m_1(\mu_{s_-}^{*,T})>0,
\qquad
m_1(\mu_{s_+}^{*,T})<0.
\]
Since \(t\mapsto m_1(\mu_t^{*,T})\) is continuous, there is
\(r\in(s_-,s_+)\) such that \(m_1(\mu_r^{*,T})=0\).  Hence
\(\Energygap(\mu_r^{*,T})\ge\eta_{\mathrm w,\eps}\).  Using
\eqref{eq:global-EDI-forced} without the dissipative term gives
\[
\eta_{\mathrm w,\eps}-\eta
\le
\Energygap(\mu_r^{*,T})-\Energygap(\mu_{s_-}^{*,T})
\le
C_\sigma\int_{s_-}^{r}\Vab{W_t^{*,T}}_{\frob}^2\d t.
\]
By \eqref{eq:finite-sign-barrier-hyp} and \(\eta\le\Delta_0/4\), the control
cost on \([s_-,r]\) is at least
\[
\frac{\regparam}{2}\int_{s_-}^{r}\Vab{W_t^{*,T}}_{\frob}^2\d t
\ge
\frac{\regparam}{2C_\sigma}\left(\frac34\Delta_0-\frac14\Delta_0\right)
=
\frac{\regparam\Delta_0}{4C_\sigma}.
\]
Now keep the optimal trajectory on \([0,s_-]\) and set the control to zero on
\([s_-,T]\).  This gives an admissible competitor whose terminal cost can be
worse by at most \(\operatorname{osc}\ell\), while its control cost is smaller by
at least \(\regparam\Delta_0/(4C_\sigma)\).  This contradicts optimality because
the second inequality in \zcref{ass:nolargegap} is stronger than
\(\operatorname{osc}\ell<\regparam\Delta_0/(4C_\sigma)\).  Hence the two entry
states must lie in the same component.
\end{proof}

\subsection*{Same-branch strip comparison}

The strip competitor is clean once the two strip endpoints lie in the same well.

\begin{assumption}[same-well local strip controllability, restated]\label{ass:ctrl-samewell-app}
There exist $\tau_{\mathrm{ctr}}>0$ and $C_{\mathrm{ctr}}>0$ such that, for each sign $\varsigma\in\{+,-\}$ and every pair $\nu^-,\nu^+\in\mathcal U_\varsigma$, there is a control on the original state equation steering $\nu^-$ to $\nu^+$ in time $\tau_{\mathrm{ctr}}$ and satisfying
\begin{equation}\label{eq:ctrl-cost-samewell}
\int_0^{\tau_{\mathrm{ctr}}}\Vab{W_t}_{\frob}^2\d t
\le
C_{\mathrm{ctr}}
\Bigl(
W_2(\nu^-,\mubar^\varsigma)^2+W_2(\nu^+,\mubar^\varsigma)^2
\Bigr).
\end{equation}
Equivalently, by waiting at $\mubar^\varsigma$ with zero control, the same estimate holds on every interval of length $L\ge\tau_{\mathrm{ctr}}$.
\end{assumption}

\begin{lemma}[admissible parameter selection]\label{lem:admissible-parameter-selection}
Assume that there exists a slope-gap compatible threshold
\begin{equation}\label{eq:eta-slopegap-compatible}
\eta_{\mathrm{sg}}
\in
\left(
0,
\min\left\{
\eta_{\mathrm{chart}},
\frac{\eta_{\mathrm{eff}}}{2\Gamma_{\mathrm{pt}}},
\frac{\Delta_0}{4}
\right\}
\right)
\quad\text{such that}\quad
 d_{\mathrm{set}}(\eta_{\mathrm{sg}})>0.
\end{equation}
Define
\begin{equation}\label{eq:eta-choice-def}
\eta:=\eta_{\mathrm{sg}},
\end{equation}
and set
\begin{equation}\label{eq:eta-strip-def}
\eta_{\mathrm{strip}}
:=
\frac12\left(
2\eta+\frac{\eta_{\mathrm{eff}}}{\Gamma_{\mathrm{pt}}}
\right).
\end{equation}
For \(L>0\), set
\begin{equation}\label{eq:q-def-energy-only}
q(L):=\frac{2\Gamma_{\mathrm{pt}}\Gamma_{\mathrm{int}}}{L}.
\end{equation}
Finally define
\begin{equation}\label{eq:L-choice-explicit}
L
:=
2\left(
1+
\max\left\{
\tau_{\mathrm{ctr}},
\tau_{\mathrm{reg}}+\frac{\mathfrak D_{\max}}{d_{\mathrm{set}}(\eta)},
2\Gamma_{\mathrm{pt}}\Gamma_{\mathrm{int}}
\right\}
\right),
\end{equation}
and
\begin{equation}\label{eq:Tstar-choice-explicit}
T^*_{\eps}
:=
\max\{4L,\,L+\tau_{\mathrm{reg}}\}.
\end{equation}
Since in \eqref{eq:reg-parameter-choice} we fixed \(\tau_{\mathrm{reg}}=1\),
and since \(b_{\mathrm{loc}}=C_\sigma\), \eqref{eq:Gamma-constants-two-well}
gives
\begin{equation}\label{eq:L-choice-data-dependence}
L_\eps:=L
=
2\left(
1+
\max\left\{
\tau_{\mathrm{ctr}},
1+\frac{\mathfrak D_{\max}}{d_{\mathrm{set}}(\eta)},
\frac{2\bigl(1+C_\sigma C_{\mathrm{ctr}}C_{W_2,\Energygap}\bigr)^2}{a_{\mathrm{loc}}}
\right\}
\right).
\end{equation}
This is the form of the layer-block length used in the final turnpike
constants.
Then
\begin{equation}\label{eq:eta-entry-small-for-strip}
0<\eta\le\eta_{\mathrm{chart}},
\qquad
2\eta<\eta_{\mathrm{strip}}<\frac{\eta_{\mathrm{eff}}}{\Gamma_{\mathrm{pt}}},
\end{equation}
and
\begin{equation}\label{eq:L-choice-energy-only}
L>\tau_{\mathrm{ctr}},
\qquad
L>\tau_{\mathrm{reg}}+\frac{\mathfrak D_{\max}}{d_{\mathrm{set}}(\eta)},
\qquad
q(L)<1.
\end{equation}
In particular all threshold and length restrictions used in the bootstrap argument
are satisfied.  All quantities above are determined by the fixed data and the single
remaining slope-gap choice \(\eta_{\mathrm{sg}}\).
\end{lemma}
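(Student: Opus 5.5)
This is a bookkeeping lemma: every conclusion should follow by elementary comparison once we record the positivity of the basic constants. First we collect the standing facts. \(\eta_{\mathrm{eff}}=\min\{\eta_{\mathrm{sep}},\eta_{\mathrm{chart}}\}>0\) by \zcref{prop:two-low-components} and \zcref{lem:parabolic-chart-entry}. \(\Gamma_{\mathrm{pt}}=1+b_{\mathrm{loc}}\Gamma_{\mathrm{ctr}}\ge1\) and \(\Gamma_{\mathrm{int}}=\Gamma_{\mathrm{pt}}/a_{\mathrm{loc}}>0\) by \eqref{eq:Gamma-constants-two-well}, since \(a_{\mathrm{loc}},b_{\mathrm{loc}},C_{\mathrm{ctr}},C_{W_2,\Energygap}>0\). \(\mathfrak D_{\max}<\infty\) by \eqref{eq:Dmax-def}. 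Finally, \(d_{\mathrm{set}}(\eta)>0\) by the hypothesis \eqref{eq:eta-slopegap-compatible} with \(\eta=\eta_{\mathrm{sg}}\). Hence \(L\) in \eqref{eq:L-choice-explicit} is a finite positive number, and \(T^*_\eps\) is well defined.

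Next we verify \eqref{eq:eta-entry-small-for-strip}. From \eqref{eq:eta-slopegap-compatible} we have \(0<\eta<\eta_{\mathrm{chart}}\), which gives the first claim. Writing \(\eta_{\mathrm{strip}}\) as the midpoint of \(2\eta\) and \(\eta_{\mathrm{eff}}/\Gamma_{\mathrm{pt}}\), the strict inequalities \(2\eta<\eta_{\mathrm{strip}}<\eta_{\mathrm{eff}}/\Gamma_{\mathrm{pt}}\) are equivalent to \(2\eta<\eta_{\mathrm{eff}}/\Gamma_{\mathrm{pt}}\). That is exactly the bound \(\eta<\eta_{\mathrm{eff}}/(2\Gamma_{\mathrm{pt}})\) in \eqref{eq:eta-slopegap-compatible}.

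For \eqref{eq:L-choice-energy-only}, let \(M\) denote the max in \eqref{eq:L-choice-explicit}, so \(L=2(1+M)>M\). Then \(L>\tau_{\mathrm{ctr}}\) and \(L>\tau_{\mathrm{reg}}+\mathfrak D_{\max}/d_{\mathrm{set}}(\eta)\) follow at once. Also \(L>2\Gamma_{\mathrm{pt}}\Gamma_{\mathrm{int}}\), which gives \(q(L)=2\Gamma_{\mathrm{pt}}\Gamma_{\mathrm{int}}/L<1\).

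The data-dependent form \eqref{eq:L-choice-data-dependence} is obtained by substitution. We set \(\tau_{\mathrm{reg}}=1\) and \(b_{\mathrm{loc}}=C_\sigma\), and use \(2\Gamma_{\mathrm{pt}}\Gamma_{\mathrm{int}}=2\Gamma_{\mathrm{pt}}^2/a_{\mathrm{loc}}=2(1+C_\sigma C_{\mathrm{ctr}}C_{W_2,\Energygap})^2/a_{\mathrm{loc}}\).

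The closing sentence also asks that all length restrictions of the bootstrap hold. In particular we need \(T>L+\tau_{\mathrm{reg}}\), which \zcref{lem:regularized-set-entry} requires whenever \(T>T^*_\eps\); this holds by \eqref{eq:Tstar-choice-explicit}. We also need the entry condition \eqref{eq:L-regularized-entry-condition}, which was shown above. There is no real obstacle here. The only point requiring care is confirming that every constant entering \(\Gamma_{\mathrm{pt}},\Gamma_{\mathrm{int}}\) is finite and positive for the fixed small \(\eps\), and that \(\eta\) inherits both \(d_{\mathrm{set}}(\eta)>0\) and \(\eta<\eta_{\mathrm{sep}}\) (the latter since \(\eta<\eta_{\mathrm{eff}}\le\eta_{\mathrm{sep}}\)). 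Both are guaranteed by the earlier propositions together with the assumed existence of \(\eta_{\mathrm{sg}}\).
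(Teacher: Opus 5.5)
Your verification is correct. The paper states this lemma without a written proof and treats it as direct bookkeeping from the definitions, which is what you supply: the midpoint observation reduces \(2\eta<\eta_{\mathrm{strip}}<\eta_{\mathrm{eff}}/\Gamma_{\mathrm{pt}}\) to the hypothesis \(\eta<\eta_{\mathrm{eff}}/(2\Gamma_{\mathrm{pt}})\); \(L=2(1+M)>M\) gives all three length conditions, including \(q(L)<1\); and \(\Gamma_{\mathrm{int}}=\Gamma_{\mathrm{pt}}/a_{\mathrm{loc}}\) yields the data-dependent form of \(L_\eps\). For the closing sentence you could also state that \(T^*_\eps\ge 4L\) (needed for the first strip to contain \([L,T-L]\)) and \(\eta<\Delta_0/4\) (needed in the same-branch lemma) are likewise immediate from the definitions and the hypothesis on \(\eta_{\mathrm{sg}}\).
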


\begin{proposition}[same-branch strip estimate under parabolic chart entry]\label{prop:samebranch-strip}
Assume \zcref{ass:ctrl-samewell-app}.  Fix \(L\ge\tau_{\mathrm{ctr}}\).
Let \(I=[\tau_1,\tau_2]\subset[0,T]\) satisfy
\(\tau_2-\tau_1\ge2L\) and \(\tau_1\ge\tau_{\mathrm{reg}}\).  Assume that, for
one sign \(\varsigma\in\{+,-\}\),
\[
\mu_{\tau_1}^{*,T},\mu_{\tau_2}^{*,T}\in\mathcal U_\varsigma,
\qquad
E_1+E_2\le\eta_{\mathrm{strip}},
\]
where \(E_i:=\Energygap(\mu_{\tau_i}^{*,T})\).  Then the optimal trajectory remains in
\(\mathcal U_\varsigma\), hence in the same weak low-energy component, on \(I\),
and
\begin{equation}\label{eq:samebranch-control-bound}
\frac{\regparam}{2}\int_{\tau_1}^{\tau_2}\Vab{W_t^{*,T}}_{\frob}^2\d t
\le
\frac{\regparam}{2}\Gamma_{\mathrm{ctr}}(E_1+E_2),
\end{equation}
\begin{equation}\label{eq:samebranch-pointwise-bound}
\sup_{t\in I}\Energygap(\mu_t^{*,T})
\le
\Gamma_{\mathrm{pt}}(E_1+E_2),
\end{equation}
\begin{equation}\label{eq:samebranch-integral-bound}
\int_I\Energygap(\mu_t^{*,T})\d t
\le
\Gamma_{\mathrm{int}}(E_1+E_2).
\end{equation}
\end{proposition}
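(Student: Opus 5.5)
The plan is a strip-competitor argument in the spirit of \parencite[Theorem 2.1]{Esteve-Yague_2022}. I would first obtain the control bound \eqref{eq:samebranch-control-bound} by optimality. Then I would derive the pointwise bound from the global forced energy inequality \eqref{eq:global-EDI-forced}, use it to show the trajectory stays in $\mathcal U_\varsigma$, and finally integrate the local strict-dissipativity inequality \eqref{eq:local-dissipativity-two-well} to get the integral bound.

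\emph{Step 1 (competitor on the strip).} Since $\mu^{*,T}_{\tau_1},\mu^{*,T}_{\tau_2}\in\mathcal U_\varsigma$, I apply \zcref{ass:ctrl-samewell-app} twice, on the pairs $(\mu^{*,T}_{\tau_1},\mubar^\varsigma)$ and $(\mubar^\varsigma,\mu^{*,T}_{\tau_2})$. This gives a path that:
\begin{itemize}
\item steers $\mu^{*,T}_{\tau_1}$ to $\mubar^\varsigma$ on $[\tau_1,\tau_1+\tau_{\mathrm{ctr}}]$;
\item rests at the stationary state $\mubar^\varsigma$ with $W=0$;
\item steers $\mubar^\varsigma$ to $\mu^{*,T}_{\tau_2}$ on $[\tau_2-\tau_{\mathrm{ctr}},\tau_2]$.
\end{itemize}
This is admissible because $\tau_2-\tau_1\ge 2L\ge 2\tau_{\mathrm{ctr}}$. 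Its cost is at most $C_{\mathrm{ctr}}\bigl(W_2(\mu^{*,T}_{\tau_1},\mubar^\varsigma)^2+W_2(\mu^{*,T}_{\tau_2},\mubar^\varsigma)^2\bigr)$. By the branchwise comparison \eqref{eq:branchwise-W2-from-E-two} this is at most $C_{\mathrm{ctr}}C_{W_2,\Energygap}(E_1+E_2)=\Gamma_{\mathrm{ctr}}(E_1+E_2)$. I concatenate it with $W^{*,T}$ outside $I$. The state at $\tau_2$ is unchanged, so by uniqueness for \eqref{eq:ODE-Net} the terminal state, and hence $\ell(\mu_T)$, is unchanged. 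Optimality of $W^{*,T}$ then forces the regularization cost on $I$ to be no larger than the competitor's, which is \eqref{eq:samebranch-control-bound}.

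\emph{Step 2 (pointwise bound and chart retention).} I integrate \eqref{eq:global-EDI-forced} from $\tau_1$ to $t\in I$, drop the dissipation term, and insert Step 1. This gives
\[
\Energygap(\mu_t^{*,T})\le E_1+C_\sigma\Gamma_{\mathrm{ctr}}(E_1+E_2)\le\Gamma_{\mathrm{pt}}(E_1+E_2),
\]
using $b_{\mathrm{loc}}=C_\sigma$; this is \eqref{eq:samebranch-pointwise-bound}. It does not yet use membership in $\mathcal U_\varsigma$. By \eqref{eq:eta-entry-small-for-strip}, $\Gamma_{\mathrm{pt}}\eta_{\mathrm{strip}}<\eta_{\mathrm{eff}}\le\min\{\eta_{\mathrm{sep}},\eta_{\mathrm{chart}}\}$.

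\begin{itemize}
\item By \zcref{prop:two-low-components}, the whole arc $\{\mu_t^{*,T}\}_{t\in I}$ lies in $\mathcal C_+(\eta_{\mathrm{eff}})\sqcup\mathcal C_-(\eta_{\mathrm{eff}})$, and these are separated by the disjoint open sets $\mathcal V_\pm$.
\item The arc is narrowly continuous and starts in $\mathcal U_\varsigma\subset\mathcal V_\varsigma$, so connectedness keeps it in $\mathcal C_\varsigma(\eta_{\mathrm{eff}})$.
\item For every $t\in I$ we have $t\ge\tau_1\ge\tau_{\mathrm{reg}}$, so $t$ has a regularizing window $[t-\tau_{\mathrm{reg}},t]$ with control energy at most $R_{\mathrm{opt}}\le R_{\mathrm{reg}}$ by \eqref{eq:global-control-budget}.
\item \zcref{lem:parabolic-chart-entry} therefore gives $\mu_t^{*,T}\in\mathcal U_\varsigma$ for all $t\in I$.
\end{itemize}

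\emph{Step 3 (integral bound).} Now that the trajectory stays in $\mathcal U_\varsigma$ on $I$, \eqref{eq:local-dissipativity-two-well} holds there. Integrating over $I$ and discarding $E_2\ge0$ gives
\[
a_{\mathrm{loc}}\int_I\Energygap\,\d t\le E_1+b_{\mathrm{loc}}\Gamma_{\mathrm{ctr}}(E_1+E_2)\le(1+b_{\mathrm{loc}}\Gamma_{\mathrm{ctr}})(E_1+E_2).
\]
Dividing by $a_{\mathrm{loc}}$ yields \eqref{eq:samebranch-integral-bound} with $\Gamma_{\mathrm{int}}$ as defined.

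\emph{Main obstacle.} The delicate part is Step 2, because the local tools (PL, $W_2$–energy comparison) apply only inside the strong $L^\infty$ charts $\mathcal U_\pm$, while energy control gives only weak-well membership. The order of steps matters: the pointwise bound must come from the chart-free global inequality \eqref{eq:global-EDI-forced}, and the upgrade back into $\mathcal U_\varsigma$ must go through parabolic regularization. A secondary check is that the spliced competitor is admissible, i.e.\ that \zcref{ass:ctrl-samewell-app} can be applied with $\mubar^\varsigma$ as an endpoint. This holds since $\mubar^\varsigma\in\mathcal U_\varsigma$ (take $h=0$).
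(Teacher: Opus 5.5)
Your proof is correct and has the same architecture as the paper's. You build a competitor routed through $\mubar^\varsigma$ to get \eqref{eq:samebranch-control-bound}, obtain a pointwise bound by discarding dissipation, keep the trajectory in $\mathcal U_\varsigma$ via the two-component separation and \zcref{lem:parabolic-chart-entry}, and finally integrate \eqref{eq:local-dissipativity-two-well}.

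The one genuine difference is in the pointwise step:
\begin{itemize}
\item The paper takes the maximal interval $J\subset I$ on which the trajectory stays in $\mathcal U_\varsigma$. It derives the energy bound on $J$ from the local inequality \eqref{eq:local-dissipativity-two-well}, and then argues that the trajectory cannot leave, so $J=I$.
\item You derive the bound on all of $I$ at once from the chart-free inequality \eqref{eq:global-EDI-forced}. You then apply the weak-well separation and chart entry pointwise in $t$.
\end{itemize}

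Since $b_{\mathrm{loc}}=C_\sigma$, both routes give the same constant $\Gamma_{\mathrm{pt}}$. Your ordering removes the continuation step, which is the delicate part of the paper's version. Ruling out an exit from $\mathcal U_\varsigma$ just after $\sup J$ requires an energy bound at times where the local inequality is no longer available. The paper therefore implicitly needs continuity of $t\mapsto\Energygap(\mu_t^{*,T})$ together with the strict inequality $\Gamma_{\mathrm{pt}}(E_1+E_2)<\eta_{\mathrm{eff}}$.

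The only extra input your route uses is the forced energy inequality along the trajectory outside the chart. The paper already establishes this in \eqref{eq:global-EDI-forced}, and it holds on $I$ because $\tau_1\ge\tau_{\mathrm{reg}}>0$ gives parabolic smoothness there.
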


\begin{proof}
By \zcref{ass:ctrl-samewell-app}, steer $\mu_{\tau_1}^{*,T}$ to $\mubar^\varsigma$ in the left boundary layer of length $L$, wait at $\mubar^\varsigma$ with zero control, and steer $\mubar^\varsigma$ to $\mu_{\tau_2}^{*,T}$ in the right boundary layer.  The resulting competitor matches the optimal trajectory at the two strip endpoints.  Therefore global optimality gives \eqref{eq:samebranch-control-bound}, using \eqref{eq:branchwise-W2-from-E-two}.

Let $J\subset I$ be the maximal time interval, containing $\tau_1$, on which the optimal trajectory stays in the local chart basin $\mathcal U_\varsigma$.  On $J$, \eqref{eq:local-dissipativity-two-well} and \eqref{eq:samebranch-control-bound} give, for every $t\in J$,
\[
\Energygap(\mu_t^{*,T})
\le
E_1+b_{\mathrm{loc}}\int_I\Vab{W_s^{*,T}}_{\frob}^2\d s
\le
\Gamma_{\mathrm{pt}}(E_1+E_2)
<
\eta_{\mathrm{eff}},
\]
where the last inequality follows from \eqref{eq:eta-strip-def}.  The bound also keeps the trajectory inside the weak component attached to the sign \(\varsigma\), by \zcref{prop:two-low-components} applied at any level strictly between \(\Gamma_{\mathrm{pt}}(E_1+E_2)\) and \(\eta_{\mathrm{sep}}\).  Since every time in \(J\) has a past regularizing window of length \(\tau_{\mathrm{reg}}\) and \eqref{eq:Ropt-def} gives the required control budget on that window, \zcref{lem:parabolic-chart-entry} prevents exit from \(\mathcal U_\varsigma\) while the energy is below \(\eta_{\mathrm{eff}}\).  Hence $J=I$, and
\eqref{eq:samebranch-pointwise-bound} follows.  Integrating
\eqref{eq:local-dissipativity-two-well} on $I$ and using
\eqref{eq:samebranch-control-bound} gives \eqref{eq:samebranch-integral-bound}.
\end{proof}

\begin{lemma}[averaging on same-branch strips]\label{lem:samebranch-averaging}
Under the assumptions of \zcref{prop:samebranch-strip}, there exist times
\[
s_I^-\in[\tau_1,\tau_1+L],
\qquad
s_I^+\in[\tau_2-L,\tau_2]
\]
such that
\begin{equation}\label{eq:samebranch-good-times}
\Energygap(\mu_{s_I^-}^{*,T})+\Energygap(\mu_{s_I^+}^{*,T})
\le
\frac{\Gamma_{\mathrm{int}}}{L}(E_1+E_2)
\le
\frac{2\Gamma_{\mathrm{int}}}{L}\sup_{t\in I}\Energygap(\mu_t^{*,T}).
\end{equation}
Moreover, if \(\Gamma_{\mathrm{pt}}(E_1+E_2)<\eta_{\mathrm{eff}}\), then these times lie in the same weak component.  Since they occur after time \(\tau_{\mathrm{reg}}\) in the bootstrap application, \zcref{lem:parabolic-chart-entry} upgrades them to the same local chart basin \(\mathcal U_\varsigma\).
\end{lemma}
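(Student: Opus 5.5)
The argument is pure averaging: integrate the energy gap over the two boundary layers of the strip and use the integral bound \eqref{eq:samebranch-integral-bound} from \zcref{prop:samebranch-strip}. Since $\tau_2-\tau_1\ge 2L$, the intervals $[\tau_1,\tau_1+L]$ and $[\tau_2-L,\tau_2]$ are disjoint subintervals of $I$ (up to a common endpoint). Write $m_-\coloneqq\min_{t\in[\tau_1,\tau_1+L]}\Energygap(\mu_t^{*,T})$ and $m_+\coloneqq\min_{t\in[\tau_2-L,\tau_2]}\Energygap(\mu_t^{*,T})$. Both minima are attained: $t\mapsto\Energygap(\mu^{*,T}_t)$ is continuous on $I$ because the trajectory stays in the chart $\mathcal U_\varsigma$ by \zcref{prop:samebranch-strip}, where densities are smooth and the energy identity \eqref{eq:energy-identity} holds. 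If continuity is in doubt, taking near-minimizers with an arbitrarily small slack suffices, since the inequality is not strict.

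Then
\[
L(m_-+m_+)\le\int_{\tau_1}^{\tau_1+L}\Energygap\,\d t+\int_{\tau_2-L}^{\tau_2}\Energygap\,\d t\le\int_I\Energygap(\mu_t^{*,T})\,\d t\le\Gamma_{\mathrm{int}}(E_1+E_2).
\]
Choosing $s_I^\pm$ as the minimizers gives the first inequality in \eqref{eq:samebranch-good-times}. The second inequality is immediate, because $E_1,E_2\le\sup_I\Energygap$ as $\tau_1,\tau_2\in I$.

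For the component claim, \eqref{eq:samebranch-pointwise-bound} gives $\sup_I\Energygap\le\Gamma_{\mathrm{pt}}(E_1+E_2)<\eta_{\mathrm{eff}}\le\eta_{\mathrm{sep}}$. By \zcref{prop:two-low-components}, applied at a level $\eta'$ strictly between $\Gamma_{\mathrm{pt}}(E_1+E_2)$ and $\eta_{\mathrm{sep}}$, the whole path $t\mapsto\mu_t^{*,T}$ on $I$ lies in $\mathcal C_+(\eta')\sqcup\mathcal C_-(\eta')$. These are separated by the disjoint open sets $\mathcal V_\pm$, and the path is narrowly continuous, so it stays in a single component, namely $\mathcal C_\varsigma(\eta')$ since $\mu_{\tau_1}^{*,T}\in\mathcal U_\varsigma\subset\mathcal V_\varsigma$. (Alternatively, \zcref{prop:samebranch-strip} already asserts the trajectory remains in $\mathcal U_\varsigma$ on $I$.)

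For the chart upgrade, the times satisfy $s_I^\pm\ge\tau_1\ge\tau_{\mathrm{reg}}$, so each has a past window of length $\tau_{\mathrm{reg}}$ whose control cost is at most $R_{\mathrm{opt}}\le R_{\mathrm{reg}}$ by \eqref{eq:Ropt-def}. In the bootstrap application $\Energygap(\mu_{s_I^\pm}^{*,T})$ is at most $\eta_{\mathrm{chart}}$ (via $\eta_{\mathrm{eff}}\le\eta_{\mathrm{chart}}$ and the smallness in \zcref{lem:admissible-parameter-selection}), so \zcref{lem:parabolic-chart-entry} places both states in $\mathcal U_\varsigma$.

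The only delicate point is bookkeeping. One must check that the energy level at the selected times is below $\eta_{\mathrm{chart}}$, not merely below $\eta_{\mathrm{eff}}$. This follows because $\eta_{\mathrm{eff}}=\min\{\eta_{\mathrm{sep}},\eta_{\mathrm{chart}}\}$, so no genuine obstacle is expected.
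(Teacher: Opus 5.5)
Your proposal is correct and follows the paper's own proof. The first claim is the mean-value argument on the two disjoint boundary layers $[\tau_1,\tau_1+L]$ and $[\tau_2-L,\tau_2]$ using \eqref{eq:samebranch-integral-bound}, and the same-component and chart statements come from \eqref{eq:samebranch-pointwise-bound} together with the two-component separation of \zcref{prop:two-low-components}, exactly as you argue. Your continuity and near-minimizer aside is not needed: on each layer there is always a time at which $\Energygap(\mu_t^{*,T})$ is at most its average over that layer.
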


\begin{proof}
The first claim is the mean-value argument on the two boundary layers, using \eqref{eq:samebranch-integral-bound}.  The same-component statement follows from \eqref{eq:samebranch-pointwise-bound} and the two-component separation of \zcref{prop:two-low-components}.
\end{proof}

\subsection*{Recursive bootstrap from same-branch entries}

In this subsection \(\eta\), \(\eta_{\mathrm{strip}}\), \(L\), and \(q(L)\) are the
explicit quantities fixed in \zcref{lem:admissible-parameter-selection}.

\begin{proposition}[energy bootstrap from sign-barrier entries]\label{prop:energy-bootstrap-samebranch}
Assume \eqref{eq:set-gap-positive}, \zcref{ass:ctrl-samewell-app}, and \zcref{ass:nolargegap}.  Let \(L\) and \(\eta\) satisfy
\eqref{eq:eta-entry-small-for-strip} and \eqref{eq:L-choice-energy-only}, and assume
\(T>L+\tau_{\mathrm{reg}}\).  Suppose also that \eqref{eq:finite-sign-barrier-main-use} holds.  Let
\(s_0^-\) and \(s_0^+\) be the two entry times given by
\zcref{lem:regularized-set-entry}.  Then, by
\zcref{lem:same-branch-from-sign-barrier}, there exists
\(\varsigma\in\{+,-\}\) such that
\[
s_0^-\in[\tau_{\mathrm{reg}},L],
\qquad
s_0^+\in[T-L,T],
\qquad
\mu_{s_0^-}^{*,T},\mu_{s_0^+}^{*,T}
\in\mathcal U_\varsigma\cap\mathcal C_\varsigma(\eta).
\]
For \(n\ge1\) with \(2nL\le T\), define
\[
I_n:=[nL,T-nL],
\qquad
K_n:=\sup_{t\in I_n}\Energygap(\mu_t^{*,T}).
\]
Then
\begin{equation}\label{eq:energy-bootstrap-est}
K_n
\le
2\Gamma_{\mathrm{pt}}\eta\,q(L)^{n-1}.
\end{equation}
Consequently, for every \(t\in[L,T-L]\),
\begin{equation}\label{eq:energy-interior-exp}
\Energygap(\mu_t^{*,T})
\le
C_{\mathrm{mid}}(L)\,\eta
\bigl(\e^{-a(L)t}+\e^{-a(L)(T-t)}\bigr),
\end{equation}
where
\begin{equation}\label{eq:aL-CL-energy-only}
a(L):=\frac1L\log\frac1{q(L)},
\qquad
C_{\mathrm{mid}}(L):=2\Gamma_{\mathrm{pt}}q(L)^{-2}.
\end{equation}
\end{proposition}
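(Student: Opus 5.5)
The plan is an induction on a nested family of same-branch strips. Each step combines the strip estimate of \zcref{prop:samebranch-strip} with the averaging \zcref{lem:samebranch-averaging}, and the energy budget at the strip endpoints shrinks geometrically.

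\emph{Initialization.} By \zcref{lem:regularized-set-entry}, which applies since $L>\tau_{\mathrm{reg}}+\mathfrak D_{\max}/d_{\mathrm{set}}(\eta)$ and $T>L+\tau_{\mathrm{reg}}$, we obtain $s_0^-\in[\tau_{\mathrm{reg}},L]$ and $s_0^+\in[T-L,T]$ with both states in $\mathcal N_\eta$.
\zcref{lem:same-branch-from-sign-barrier} puts them in one component $\mathcal C_\varsigma(\eta)$. Its hypotheses are \eqref{eq:finite-sign-barrier-main-use} and $\eta\le\Delta_0/4$, the latter from \eqref{eq:eta-slopegap-compatible}.
Since $\eta\le\eta_{\mathrm{chart}}$, the last clause of \zcref{lem:regularized-set-entry} upgrades both states to $\mathcal U_\varsigma$. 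This proves the displayed entry claim.

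Now set $\tau_1^{(1)}=s_0^-$, $\tau_2^{(1)}=s_0^+$ and $S_1\coloneqq E_1+E_2\le 2\eta<\eta_{\mathrm{strip}}$, by \eqref{eq:eta-entry-small-for-strip}. The strip $[\tau_1^{(1)},\tau_2^{(1)}]$ has length at least $T-2L\ge 2L$ because $T\ge 4L$, and it starts after $\tau_{\mathrm{reg}}$. Hence \zcref{prop:samebranch-strip} applies. Because $I_1\subset[\tau_1^{(1)},\tau_2^{(1)}]$, it gives $K_1\le\Gamma_{\mathrm{pt}}S_1\le 2\Gamma_{\mathrm{pt}}\eta$.

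\emph{Inductive step.} Suppose we have a strip $[\tau_1^{(n)},\tau_2^{(n)}]$ with the following properties:
\[
\tau_1^{(n)}\le nL,\qquad \tau_2^{(n)}\ge T-nL,\qquad \text{endpoints in }\mathcal U_\varsigma,\qquad \text{endpoint energy sum } S_n\le\eta_{\mathrm{strip}}.
\]
\zcref{lem:samebranch-averaging} then produces times $s^-\in[\tau_1^{(n)},\tau_1^{(n)}+L]$ and $s^+\in[\tau_2^{(n)}-L,\tau_2^{(n)}]$, both in the same basin $\mathcal U_\varsigma$. This uses $\Gamma_{\mathrm{pt}}S_n<\eta_{\mathrm{eff}}$, which holds by \eqref{eq:eta-entry-small-for-strip}. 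Their energies satisfy
\[
S_{n+1}\le\frac{\Gamma_{\mathrm{int}}}{L}S_n\le q(L)S_n,
\]
where the last inequality uses $\Gamma_{\mathrm{pt}}\ge1$. Since $q(L)<1$, we get $S_{n+1}\le S_n\le\eta_{\mathrm{strip}}$. The new endpoints satisfy $s^-\le(n+1)L$ and $s^+\ge T-(n+1)L$.

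Applying \zcref{prop:samebranch-strip} on the new strip gives
\[
K_{n+1}\le\Gamma_{\mathrm{pt}}S_{n+1}\le 2\Gamma_{\mathrm{pt}}\eta\,q(L)^{n},
\]
which is \eqref{eq:energy-bootstrap-est}.

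\emph{Main obstacle.} The delicate point is keeping the hypothesis $\tau_2-\tau_1\ge 2L$ of \zcref{prop:samebranch-strip} valid all the way up to the last admissible $n$ with $2nL\le T$. The new strip is only guaranteed to contain $[(n+1)L,T-(n+1)L]$.

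I would handle this in two ways. First, run the recursion only while the strip length stays at least $2L$. Second, for the final one or two indices, bound $K_n$ by $K_{n-1}$, using the monotonicity $I_n\subset I_{n-1}$. This costs one extra factor $q(L)^{-1}$, which is absorbed into $C_{\mathrm{mid}}(L)=2\Gamma_{\mathrm{pt}}q(L)^{-2}$.

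Alternatively, one can note that the pointwise bound \eqref{eq:samebranch-pointwise-bound} on the last valid strip already covers the whole middle interval. In either variant one must also check at each step that $\tau_1\ge\tau_{\mathrm{reg}}$, which is automatic since the endpoints only move inward from $s_0^-\ge\tau_{\mathrm{reg}}$.

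\emph{Exponential form.} For $t\in[L,T-L]$, set $n\coloneqq\lfloor\min\{t,T-t\}/L\rfloor\ge1$, so that $t\in I_n$. Then
\[
q(L)^{n-1}\le q(L)^{\min\{t,T-t\}/L-2}=q(L)^{-2}\e^{-a(L)\min\{t,T-t\}}.
\]
Combining this with $\e^{-a\min\{t,T-t\}}\le\e^{-at}+\e^{-a(T-t)}$ yields \eqref{eq:energy-interior-exp} with the constants in \eqref{eq:aL-CL-energy-only}.
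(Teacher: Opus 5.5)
Your argument follows the paper's proof. Both use nested same-branch strips, \zcref{prop:samebranch-strip} for the sup bound, \zcref{lem:samebranch-averaging} for the contraction by $q(L)$, and the same floor argument with $n(t)=\lfloor\min\{t,T-t\}/L\rfloor$. The one difference in bookkeeping is that you carry the endpoint sums $S_n$ and average on the previous strip, whereas the paper averages on $I_n$ itself and bounds the result by $\frac{2\Gamma_{\mathrm{int}}}{L}K_n$. Your variant is slightly cleaner, because the hypotheses of the averaging lemma (endpoints in $\mathcal U_\varsigma$, endpoint sum at most $\eta_{\mathrm{strip}}$) come directly from the induction. You are also right that the condition $\tau_2-\tau_1\ge 2L$ can fail at the last index; the paper's proof does not check it.

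Your repair of that point, however, does not deliver the stated constants. Put $N\coloneqq\lfloor T/(2L)\rfloor$. The stage-$N$ strip is only guaranteed length $T-2NL<2L$, so you get $K_N\le K_{N-1}\le 2\Gamma_{\mathrm{pt}}\eta\,q(L)^{N-2}$. That is not \eqref{eq:energy-bootstrap-est} at $n=N$.

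The lost factor is not absorbed into $C_{\mathrm{mid}}(L)$ either. Your estimate $q^{n-1}\le q^{-2}\e^{-a\min\{t,T-t\}}$ already uses both powers of $q^{-1}$: one for the exponent $n-1$ and one for the floor. At $t=T/2$ with $T/(2L)-N$ close to $1$, your bound exceeds the right-hand side of \eqref{eq:energy-interior-exp} by a factor close to $q(L)^{-1}/2>1$. So your argument gives \eqref{eq:energy-interior-exp} only with $2\Gamma_{\mathrm{pt}}q(L)^{-3}$ in place of $C_{\mathrm{mid}}(L)$. This does not affect \zcref{thm:exp_turnpike}, whose constant is unspecified, but it is not the proposition as stated.

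A clean fix is to average over layers of length $L/2$ instead of $L$:
\begin{itemize}
\item By \eqref{eq:L-choice-explicit}, $L/2\ge 1+\tau_{\mathrm{ctr}}$, so \zcref{prop:samebranch-strip,lem:samebranch-averaging} may be applied with $L/2$ in place of $L$.
\item The strip bounds do not involve this length, and the contraction factor from averaging becomes $\Gamma_{\mathrm{int}}/(L/2)=q(L)/\Gamma_{\mathrm{pt}}\le q(L)$.
\item The endpoints now move inward by at most $L/2$ per step. Hence the $n$th strip satisfies $\tau_1^{(n)}\le (n+1)L/2\le nL$ and $\tau_2^{(n)}\ge T-(n+1)L/2$.
\item It therefore contains $I_n$ and has length at least $T-(n+1)L\ge L$ for every $n$ with $2nL\le T$ (for $n=1$ this uses $T\ge 4L$).
\end{itemize}
Then \eqref{eq:energy-bootstrap-est} holds for all admissible $n$, and your floor argument gives \eqref{eq:energy-interior-exp} with $C_{\mathrm{mid}}(L)=2\Gamma_{\mathrm{pt}}q(L)^{-2}$.
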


\begin{proof}
The strip $[s_0^-,s_0^+]$ contains $I_1=[L,T-L]$ whenever $T\ge4L$, and its endpoint energy sum is at most $2\eta\le\eta_{\mathrm{strip}}$.  \zcref{prop:samebranch-strip} gives
\[
K_1\le2\Gamma_{\mathrm{pt}}\eta.
\]
Assume the estimate has been proved at step $n$.  \zcref{lem:samebranch-averaging} gives times
\[
s_n^-\in[nL,(n+1)L],
\qquad
s_n^+\in[T-(n+1)L,T-nL]
\]
in the same low-energy component such that
\[
\Energygap(\mu_{s_n^-}^{*,T})+\Energygap(\mu_{s_n^+}^{*,T})
\le
\frac{2\Gamma_{\mathrm{int}}}{L}K_n.
\]
The strip $[s_n^-,s_n^+]$ contains $I_{n+1}$.  Applying \zcref{prop:samebranch-strip} again yields
\[
K_{n+1}
\le
\Gamma_{\mathrm{pt}}
\bigl(\Energygap(\mu_{s_n^-}^{*,T})+\Energygap(\mu_{s_n^+}^{*,T})\bigr)
\le
q(L)K_n.
\]
This proves \eqref{eq:energy-bootstrap-est}.  For the passage from the discrete estimate to \eqref{eq:energy-interior-exp}, take
\[
n(t):=
\min\left\{
\left\lfloor\frac{t}{L}\right\rfloor,
\left\lfloor\frac{T-t}{L}\right\rfloor
\right\},
\]
use $t\in I_{n(t)}$, and absorb the two lost integer steps into $q(L)^{-2}$.
\end{proof}

\begin{proof}[Proof of \zcref{thm:exp_turnpike}]
The parameter restrictions needed by the bootstrap are precisely
\eqref{eq:eta-entry-small-for-strip} and \eqref{eq:L-choice-energy-only}, which
hold by \zcref{lem:admissible-parameter-selection}.  \zcref{ass:nolargegap} gives both the slope-gap margin \(\gamma_{\mathrm{bar}}>0\) and the switching-cost bound; after decreasing \(\eps_0\), \zcref{cor:slope-gap-from-explicit-barrier,cor:finite-temp-sign-changing-barrier} provide the hypotheses required by \zcref{prop:energy-bootstrap-samebranch}.  Assume
\(T>T^*_{\eps}\), with \(T^*_{\eps}\) defined in
\eqref{eq:Tstar-choice-explicit}.  \zcref{prop:energy-bootstrap-samebranch}
gives \eqref{eq:energy-interior-exp} on \([L,T-L]\).  For
\(t\in[0,L]\cup[T-L,T]\), the right-hand side of the desired estimate in \zcref{thm:exp_turnpike} is bounded from below by \(\e^{-a(L)L}\), and
the global bound \eqref{eq:Emax-def} is absorbed by enlarging the prefactor.
Thus the desired estimate in \zcref{thm:exp_turnpike} holds with
\[
a_\eps:=a(L),
\qquad
C_\eps:=
\max\left\{
\mathfrak E_{\max}\e^{2a(L)L},
C_{\mathrm{mid}}(L)\eta
\right\}.
\]
These constants depend on \(\eps\) and on the chosen thresholds, but not on
\(T\).  This proves the theorem.
\end{proof}

\begin{remark}[explicit size of the turnpike constants]
\label{rem:explicit-turnpike-constants}
The proof above gives one admissible set of constants explicitly in terms of the
auxiliary length \(L_\eps\) in \eqref{eq:L-choice-data-dependence}.  Namely, with
\[
q_\eps:=q(L_\eps)=\frac{2\Gamma_{\mathrm{pt}}\Gamma_{\mathrm{int}}}{L_\eps},
\]
we have, by the definition of \(L_\eps\),
\[
0<q_\eps<\frac12.
\]
The exponential rate used in \zcref{prop:energy-bootstrap-samebranch}
is
\begin{equation}\label{eq:explicit-aeps-bound}
a_\eps
=
\frac1{L_\eps}\log\frac1{q_\eps}
=
\frac1{L_\eps}
\log\left(\frac{L_\eps}{2\Gamma_{\mathrm{pt}}\Gamma_{\mathrm{int}}}\right)
\ge
\frac{\log2}{L_\eps}.
\end{equation}
Since \(\tau_{\mathrm{reg}}=1\) and \(L_\eps\ge2\), the horizon threshold in
\eqref{eq:Tstar-choice-explicit} may be taken as
\begin{equation}\label{eq:explicit-Tepsstar-bound}
T_\eps^*=\max\{4L_\eps,L_\eps+1\}=4L_\eps.
\end{equation}
Finally, using \(C_{\mathrm{mid}}(L)=2\Gamma_{\mathrm{pt}}q(L)^{-2}\) and
\(\e^{2a(L)L}=q(L)^{-2}\), the prefactor in the theorem can be taken as
\begin{equation}\label{eq:explicit-Ceps-bound}
C_\eps
=
q_\eps^{-2}\max\left\{\mathfrak E_{\max},2\Gamma_{\mathrm{pt}}\eta\right\}
=
\left(\frac{L_\eps}{2\Gamma_{\mathrm{pt}}\Gamma_{\mathrm{int}}}\right)^2
\max\left\{\mathfrak E_{\max},2\Gamma_{\mathrm{pt}}\eta\right\}.
\end{equation}
Thus the depth needed to observe the turnpike is controlled by the maximum of
three quantities: the local controllability time \(\tau_{\mathrm{ctr}}\), the
entry time \(1+\mathfrak D_{\max}/d_{\mathrm{set}}(\eta)\), and the inverse
local contraction scale
\[
\frac{2(1+C_\sigma C_{\mathrm{ctr}}C_{W_2,\Energygap})^2}{a_{\mathrm{loc}}}.
\]
In particular, a larger controllability constant \(C_{\mathrm{ctr}}\), a smaller
local PL rate \(a_{\mathrm{loc}}\), or a smaller dissipation gap
\(d_{\mathrm{set}}(\eta)\) increases the required depth scale and decreases the
turnpike rate.
\end{remark}

%% file: contents/numerical_added.tex
\section{Supplementary numerical experiments}

\subsection{Implementation details}
\label{app:numerical-details}

We solve the particle dynamics in \eqref{eq:noisy-particle-dynamics} with
Diffrax \parencite{kidger2022neuraldifferentialequations} and optimize the
control parameters with Adam \parencite{KingmaB14}.  Unless otherwise stated,
we use horizon \(T=80\), inverse temperature \(\beta=1\), and \(N=64\)
particles in the projected \(d=2\) setting with
\(A=\operatorname{diag}(1,\lambda_2)\).  The control input \(u_W\) is
piecewise constant with bin width
\(\Delta t_{\textup{control}}=0.25\), trained for 600 steps with learning rate
\(0.01\).  For the numerical optimization runs reported here we set
\(\regparam=0\); this choice is used only for the exploratory particle
simulations and is separate from the positive-regularization assumptions in the
turnpike theorem.

The initial particles are sampled by first drawing independent standard normal
vectors \(\xi_i\), then projecting
\[
  X_i(0)=\frac{e_1+s\xi_i}{\|e_1+s\xi_i\|}
\]
onto the sphere, with \(s=0.3\).
Thus, the initial empirical law is
concentrated in a cap around \(e_1\).  For trajectory diagnostics on \(\Sphe^1\),
we choose the stationary reference direction of each run and represent each
particle by its signed angle \(\theta_i(t)\) relative to that direction.
The energy plots use \(\Energy\) from \zcref{eq:gradient_flow_form}.

For the rate comparison panels, we fit the late-time energy lift to
\(C\exp(-a(T-t))\).  The theoretical rate is computed by solving the stationary
problem on an \(\Sphe^1\) grid, linearizing the dynamics around the stationary
density, and evaluating the rate diagnostic from \zcref{prop:one-step-escape}.
For numerical stability, the rate-comparison runs and the theoretical rate calculation use \(\eps=0.1\), while the trajectory and energy-overlay panels use \(\eps=10^{-4}\) unless otherwise stated.
All experiments were run on an Apple M4 Max MacBook with 128GB RAM.
Code will be released upon publication.

\subsection{Experiment 0: Learning induces a turnpike}
\label{app:numerical-experiment0}
We verify that the turnpike profile is a learned effect rather than a property of the uncontrolled dynamics.
Panels (a)--(c) of \zcref{fig:numerical-turnpike} plot every particle trajectory on $S^1$ in the signed angular coordinate $\theta$, measured relative to a stationary reference angle.
Panel~(a) shows the untrained setting, while panels~(b) and~(c) show the trained counterparts: (b) the align-target trajectory and (c) the BoW trajectory. All three panels share the common parameters $\lambda_2=0.65$ and $\eps=10^{-4}$.
The BoW run additionally uses $\kappa_{\textup{cand}}=2$.
In the no-train case, the particles start from a scattered initial configuration, rapidly concentrate to a single location, and subsequently remain near $\theta=0$.
In contrast, the align-target trajectory shows particles that, after concentrating, remain clustered until the late layers before undergoing a coherent displacement at the terminal time.
The BoW trajectory likewise shows an initial concentration followed by a dispersion near the terminal time.
Panels (d)--(f) display the corresponding energy $\Energy$ in \zcref{eq:gradient_flow_form}.
Panel (d) shows the no-train energy trajectory, while panels (e) and (f) compare the observed energy curve with the theoretical bound in \zcref{prop:one-step-escape}.
Here, to ensure numerical stability, the lower bound in \zcref{prop:one-step-escape} is computed at $\eps=0.1$.
For readability, we display normalized values for both the observed energy and the theoretical bound.
These results demonstrate that training gives rise to the turnpike profile.

\begin{figure}[t]
  \centering
  \begin{subfigure}[t]{0.32\linewidth}
    \centering
    \includegraphics[width=\linewidth]{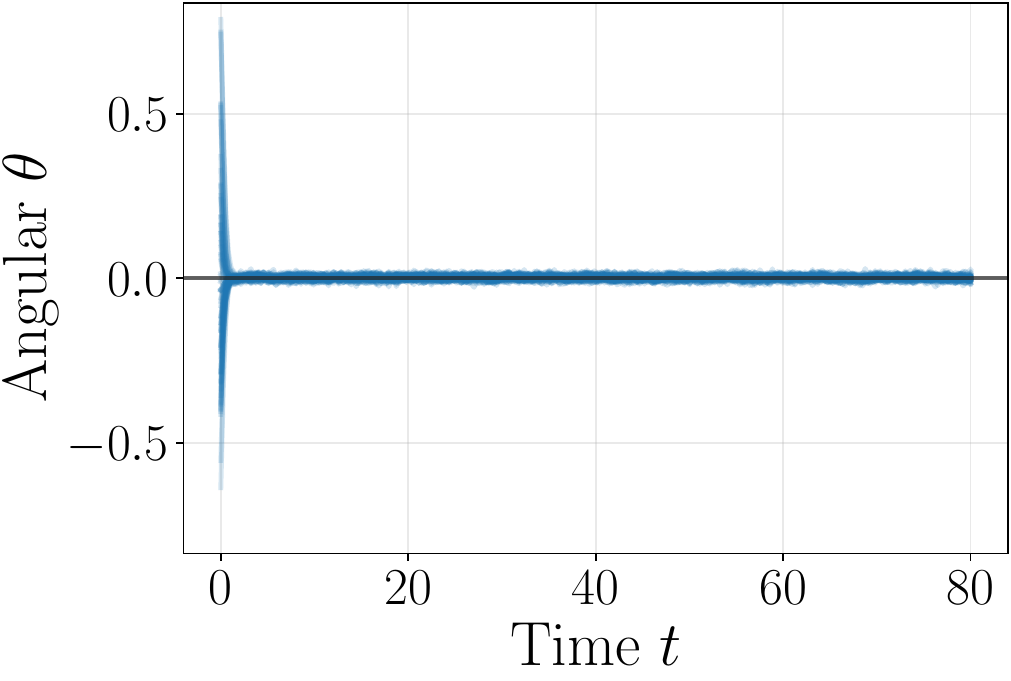}
    \caption{Trajectory: no-train.}
  \end{subfigure}\hfill
  \begin{subfigure}[t]{0.32\linewidth}
    \centering
    \includegraphics[width=\linewidth]{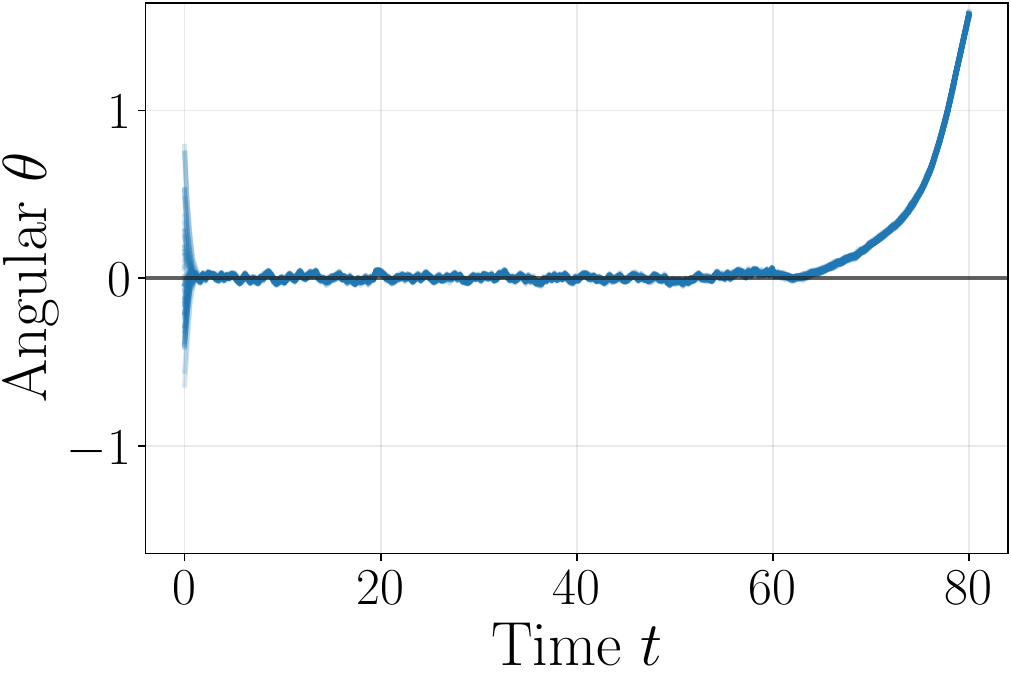}
    \caption{Trajectory: align-target.}
  \end{subfigure}\hfill
  \begin{subfigure}[t]{0.32\linewidth}
    \centering
    \includegraphics[width=\linewidth]{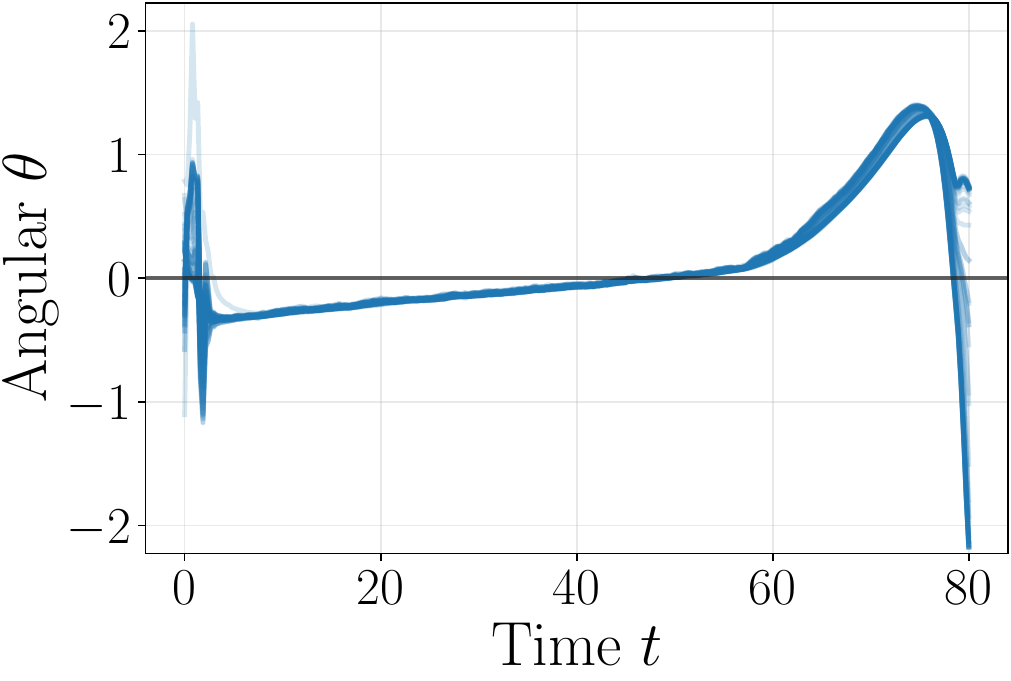}
    \caption{Trajectory: BoW.}
  \end{subfigure}
  \\
  \vspace{0.5em}
  \begin{subfigure}[t]{0.32\linewidth}
    \centering
    \includegraphics[width=\linewidth]{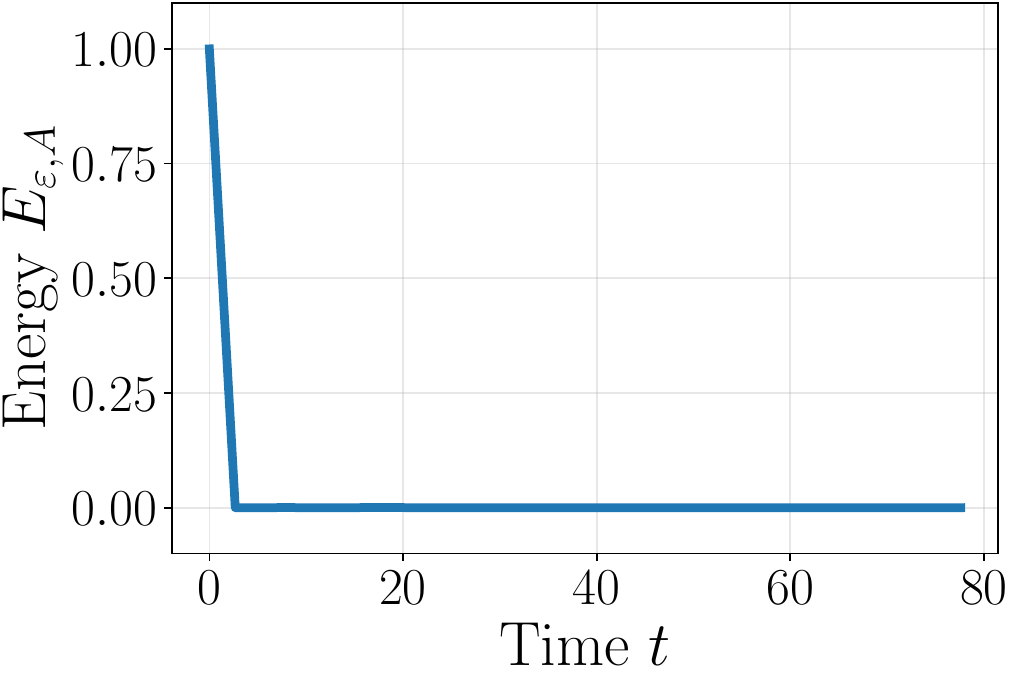}
    \caption{Energy: no-train.}
  \end{subfigure}\hfill
  \begin{subfigure}[t]{0.32\linewidth}
    \centering
    \includegraphics[width=\linewidth]{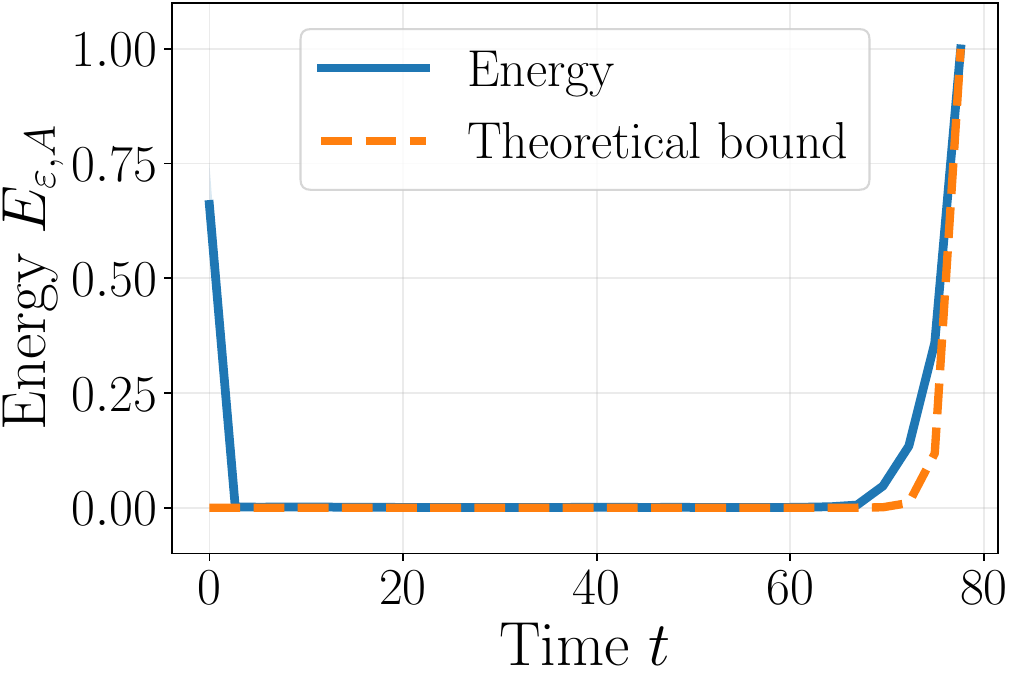}
    \caption{Energy: align-target.}
  \end{subfigure}\hfill
  \begin{subfigure}[t]{0.32\linewidth}
    \centering
    \includegraphics[width=\linewidth]{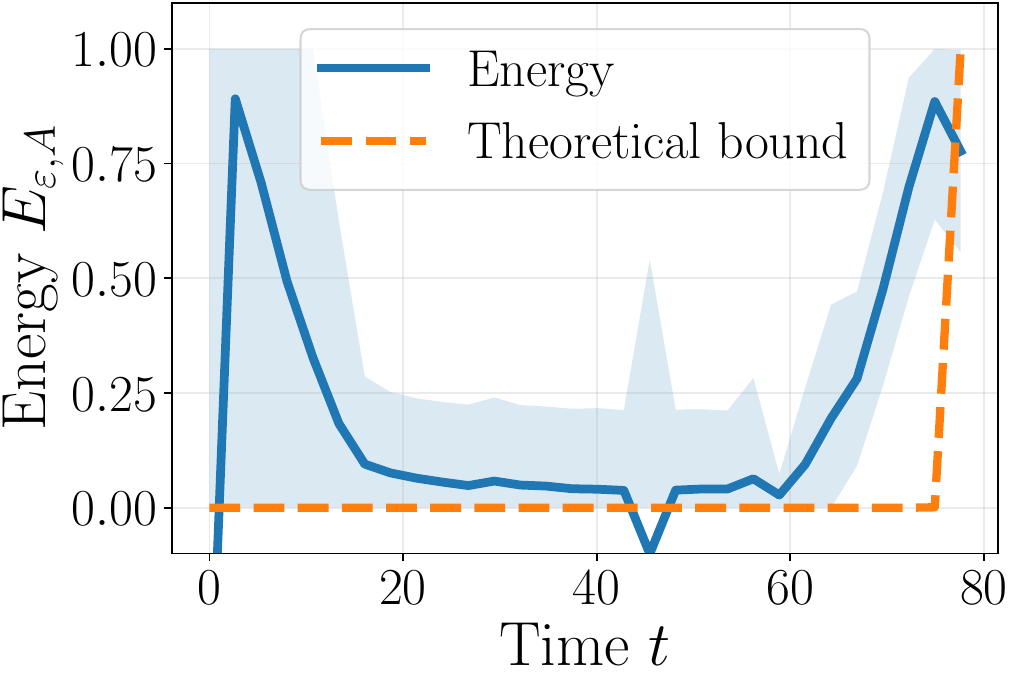}
    \caption{Energy: BoW.}
  \end{subfigure}
  \caption{Particle trajectories and energy time evolution. Panels (a)--(c) show all-particle angular trajectories in stationary-reference coordinates for no-train, align-target, and BoW runs. Panel (d) shows the corresponding no-train energy trajectory. Panels (e) and (f) show the normalized observed energy against the theoretical bound for representative align-target and BoW runs.}
  \label{fig:numerical-turnpike}
\end{figure}